\documentclass[sigconf,nonacm]{acmart}

\usepackage{balance} %
\usepackage{graphicx} %
\usepackage{amsmath}
\usepackage{amsthm}
\usepackage{xcolor}
\usepackage{comment}
\usepackage{xargs}
\usepackage{tikz}
\usepackage{cancel}
\usepackage{caption}
\usepackage{bm}
\usepackage[capitalise]{cleveref}
\usepackage{nicefrac}
\usepackage{thm-restate}

\usepackage{algorithm}
\usepackage{algpseudocode}
\usepackage{todonotes}
\usepackage{enumitem}

\usepackage{tikz}
\usetikzlibrary{patterns,arrows,arrows.meta,calc,shapes,shadows,decorations.pathmorphing,decorations.pathreplacing,automata,shapes.multipart,positioning,shapes.geometric,fit,circuits,trees,shapes.gates.logic.US,fit,backgrounds,decorations.text,topaths}

\Crefname{equation}{Equation}{Equations}
\crefname{equation}{Equation}{Equations}
\Crefname{figure}{Figure}{Figures}
\crefname{figure}{Figure}{Figures}
\Crefname{remark}{Remark}{Remarks}
\crefname{remark}{Remark}{Remarks}
\Crefname{section}{Section}{Sections}
\crefname{section}{Section}{Sections}
\Crefname{subsection}{Section}{Sections}
\crefname{subsection}{Section}{Sections}
\Crefname{appendix}{Appendix}{Appendices}
\crefname{appendix}{Appendix}{Appendices}
\Crefname{example}{Example}{Examples}
\crefname{example}{Example}{Examples}
\Crefname{definition}{Definition}{Definitions}
\crefname{definition}{Definition}{Definitions}
\Crefname{lemma}{Lemma}{Lemmas}
\crefname{lemma}{Lemma}{Lemmas}
\Crefname{theorem}{Theorem}{Theorems}
\crefname{theorem}{Theorem}{Theorems}
\Crefname{proposition}{Proposition}{Propositions}
\crefname{proposition}{Proposition}{Propositions}

\makeatletter
\gdef\@copyrightpermission{
  \begin{minipage}{0.2\columnwidth}
   \href{https://creativecommons.org/licenses/by/4.0/}{\includegraphics[width=0.90\textwidth]{by}}
  \end{minipage}\hfill
  \begin{minipage}{0.8\columnwidth}
   \href{https://creativecommons.org/licenses/by/4.0/}{This work is licensed under a Creative Commons Attribution International 4.0 License.}
  \end{minipage}
  \vspace{5pt}
}
\makeatother

\copyrightyear{2027}
\acmYear{2027}
\acmDOI{}
\acmISBN{}

\title[On the Complexity Gap Between POMDPs and Multi-Environment POMDPs]{On the Complexity Gap Between POMDPs and Multi-Environment POMDPs}

\author{Eline M. Bovy}
\authornote{Shared first authorship, ordered alphabetically.}
\affiliation{
  \institution{Radboud University}
  \city{Nijmegen}
  \country{The Netherlands}}

\author{Caleb Probine}
\authornotemark[1]
\affiliation{
  \institution{The University of Texas at Austin}
  \city{Austin}
  \country{USA}}

\author{Marnix Suilen}
\affiliation{
  \institution{University of Antwerp
  }
  \city{Antwerp}
  \country{Belgium}}

\author{Ufuk Topcu}
\affiliation{
  \institution{The University of Texas at Austin}
  \city{Austin}
  \country{USA}}

\author{Nils Jansen}
\affiliation{
  \institution{Ruhr-University Bochum}
  \city{Bochum}
  \country{Germany}}
\affiliation{
  \institution{Radboud University}
  \city{Nijmegen}
  \country{The Netherlands}}

\begin{abstract}
\emph{Multi-environment} POMDPs (ME-POMDPs) generalize POMDPs by encoding a finite set of possible environments, capturing model uncertainty.
The finite-horizon robust reward maximization problem for a ME-POMDP asks for a policy that maximizes reward in the worst environment, and recent work has shown this problem to be $\PSPACE$-complete.
We expand the complexity landscape of ME-POMDPs by identifying which subclasses retain this $\PSPACE$-completeness.
In particular, we take \emph{subclasses} of POMDPs that are tractable in the single-environment case, and investigate the computational complexity of these classes when extended to the multi-environment setting.
We first prove the $\mathsf{NP}$-completeness of multi-environment \emph{blind MDPs} and \emph{deterministic POMDPs} using a certain closure-under-mixture condition.
We then prove $\mathsf{PSPACE}$-completeness for a range of subclasses that do not satisfy that closure condition, yielding new complexity results for multi-observation POMDPs. 
Finally, we show \emph{existential theory of the reals} ($\ETR$)-completeness for numerous ME-POMDP problems under time-dependent Markovian policies or finite-state controllers that are $\NP$-complete for standard POMDPs.
These results reveal a subtle relationship between POMDP and ME-POMDP complexity that depends heavily on the subclasses involved.

\end{abstract}

\keywords{POMDPs, Multi-environment, Robustness}

\newcommand{\BibTeX}{\rm B\kern-.05em{\sc i\kern-.025em b}\kern-.08em\TeX}

\newcommand{\RR}{\ensuremath{\mathbb{R}}}
\newcommand{\NN}{\ensuremath{\mathbb{N}}}

\newcommand{\EE}{\ensuremath{\mathbb{E}}}
\newcommand{\calM}{\mathcal{M}}

\newcommand{\calH}{\mathcal{H}}

\newcommand{\calI}{\mathcal{I}}
\newcommand{\calL}{\mathcal{L}}

\newcommand{\calC}{\mathcal{C}}
\newcommand{\calN}{\mathcal{N}}
\newcommand{\calA}{\mathcal{A}}
\newcommand{\calZ}{\mathcal{Z}}
\newcommand{\calS}{\mathcal{S}}
\newcommand{\calP}{\mathcal{P}}
\newcommand{\calU}{\mathcal{U}}

\newcommand{\supp}{\mathsf{supp}}

\newcommand{\polDet}{\Pi_{\mathsf{det}}}
\newcommand{\polDetObs}{\Pi_{\mathsf{det,obs}}}
\newcommand{\polMR}{\Pi_{\mathsf{MR}}}  %

\newcommand{\vecat}[2]{#1[#2]}
\newcommand{\dist}[1]{\Delta(#1)}
\newcommand{\unit}[1]{\mathbf{e}_{#1}}

\newcommand{\bit}{\mathsf{b}}

\newcommand{\ie}{\emph{i.e.}}

\newcommand{\wlogg}{\emph{w.l.o.g.}}

\newcommand{\NP}{\mathsf{NP}}
\newcommand{\PSPACE}{\mathsf{PSPACE}}
\newcommand{\Ptime}{\mathsf{P}}
\newcommand{\ETR}{\exists \RR}

\newtheorem{remark}{Remark}

\newtheorem{problem}{Problem}

\newenvironment{proofsketch}
  {\begin{proof}[Proof Sketch]}
  {\end{proof}}

\definecolor{color-deter-trns}{HTML}{FFC107}
\definecolor{color-deter-obs}{HTML}{E26BBA}

\definecolor{color-normal}{HTML}{0CA5F3}
\definecolor{color-deter}{HTML}{2F17C7}
\definecolor{color-memoryless}{HTML}{FF2727}

\begin{document}

\maketitle

\section{Introduction}
Partially observable Markov decision processes (POMDPs) are a canonical model for sequential decision-making problems under partial information~\cite{DBLP:journals/ai/KaelblingLC98}.
POMDPs capture problems in which a decision-maker must reason about an unknown state of the world, with applications ranging from robotics to healthcare~\cite{DBLP:journals/arcras/Kurniawati22,DBLP:journals/artmed/HauskrechtF00}.

POMDP planning algorithms, however, rely on the restrictive assumption that a single exact model of the environment exists. 
Multi-environment POMDPs (ME-POMDPs) remove this assumption and instead assume a finite set of possible environment models~\cite{DBLP:conf/nips/BovyPSTJ25}.
A ME-POMDP comprises a discrete set of POMDPs with shared state, observation, and action spaces.
The robust reward-maximization problem for ME-POMDPs then asks for a policy that maximizes expected reward in the \emph{worst-case} environment, chosen by an adversary.
Like POMDPs, ME-POMDPs admit value-iteration-based~\cite{DBLP:conf/nips/BovyPSTJ25} and gradient-ascent-based~\cite{DBLP:conf/ijcai/GaleslootA0J025} algorithms for reward maximization, as well as reinforcement learning algorithms for reach-reward objectives \cite{DBLP:conf/ifaamas/HudakGTKJC26}.
Meanwhile, in terms of complexity, finite-horizon reward maximization is  $\PSPACE$-complete in ME-POMDPs~\cite{DBLP:journals/corr/abs-2605-07537}, just as in standard POMDPs~\cite{DBLP:journals/mor/PapadimitriouT87}, while the discounted infinite-horizon setting is already undecidable for the latter~\cite{DBLP:journals/ai/MadaniHC03}.

Significant effort has been poured into finding more tractable subclasses of POMDPs. %
These classes rely on additional assumptions to reduce the computational complexity of finite-horizon reward maximization or to achieve decidability for many objectives, including infinite-horizon discounted rewards and logic objectives such as reachability and parity~\citep{asadi2026revealing, DBLP:conf/aaai/BellyFG00V25}.
For finite-horizon rewards on single-environment models, only complexity results for general POMDPs, fully observable MDPs, and blind MDPs are known~\cite{DBLP:journals/mor/PapadimitriouT87}.
In the multi-environment setting, results are only known for multi-environment MDPs~\cite{DBLP:conf/fsttcs/RaskinS14} and general ME-POMDPs~\cite{DBLP:journals/corr/abs-2605-07537}, as \Cref{fig:complexity_overview_before} illustrates, leaving a large gap between single-environment subclasses of POMDPs and their extension to multiple environments.

\begin{figure}[t]
    \centering
    \resizebox{\columnwidth}{!}{\begin{tikzpicture}[state/.append style={shape = circle}, >=stealth, bobbel/.style={minimum size=1mm,inner sep=0pt,fill=black,circle}]
    \fill[lightgray!20] (0.6,1.4) rectangle (5,-1.85);
    \fill[lightgray!5] (5,1.4) rectangle (9,-1.85);
    \draw[fill=color-deter,opacity=0.05] (0.6,-1.85) to[out=90,in=-90] (0.6,0.3) to (5,0.3) to[out=-90,in=90] (5,-1.85);
    \draw[fill=color-memoryless,opacity=0.05] (0.6,-1.85) to[out=90,in=-90] (0.6,0.25) to[out=0,in=180] (5,0.25) to[out=-90,in=90] (5,-1.85);
    \draw[fill=color-normal,opacity=0.05] (0.6,-1.85) to[out=90,in=-90] (0.6,-0.4) to[out=0,in=180] (3.8,-0.4) to[out=0,in=110] (5,-1.05) to[out=-70,in=180] (6.2,-1.7) to[out=0,in=180] (9,-1.7) to[out=-90,in=90] (9,-1.85);
    \node[] (pomdp1) at (2.8,0) {POMDP};
    \node[] (mdp1) at ($(pomdp1) + (-1.3,-1.5)$) {MDP};
    \node[] (blind_mdp1) at ($(pomdp1) + (1.3,-1.5)$) {BlindMDP};
    \draw[opacity=0] (mdp1) to node[sloped,opacity=1] {$\subseteq$} (pomdp1);
    \draw[opacity=0] (blind_mdp1) to node[sloped,opacity=1,xscale=-1] {$\subseteq$} (pomdp1);
    \node[] (pomdp2) at (7,0) {POMDP};
    \node[] (mdp2) at ($(pomdp2) + (0,-1.5)$) {MDP};
    \draw[opacity=0] (mdp2) to node[sloped,opacity=1] {$\subseteq$} (pomdp2);
    \fill[lightgray!40] (0.6,1.4) rectangle (9,0.6);
    \node[] (models1) at ($(pomdp1) + (0,0.97)$) {Single-environment models};
    \node[] (models2) at ($(pomdp2) + (0,1)$) {Multi-environment models};
    \draw[-,loosely dashed] (0.6,0.6) to (9,0.6);
    \draw[-,loosely dashed] (5,1.4) to (5,-1.85);
    \draw[color-deter] (-2,0.3) to (5,0.3);
    \node[color-deter, draw=none,anchor=west] at (-2.1,0.42) {\footnotesize{PSPACE-hard}};
    \node[color-deter, draw=none,anchor=west] at (-2.1,0.13) {\footnotesize{in NP}};
    \draw[color-deter] (-1.35,-1.1) to[out=0,in=180] (1.1,-1.1) to[out=0,in=140] (2.8,-1.45) to[out=-40,in=180] (4.3,-1.8) to[out=0,in=180] (5,-1.8);
    \node[color-deter, draw=none,anchor=west] at (-1.45,-0.93) {\footnotesize{NP-hard}};
    \node[color-deter, draw=none,anchor=west] at (-1.45,-1.27) {\footnotesize{in P}};
    \draw[color-normal] (-0.9,-0.4) to[out=0,in=180] (3.8,-0.4) to[out=0,in=110] (5,-1.05) to[out=-70,in=180] (6.8,-1.7) to[out=0,in=180] (9,-1.7);
    \node[color-normal, draw=none,anchor=west] at (-1,-0.28) {\footnotesize{PSPACE-hard}};
    \node[color-normal, draw=none,anchor=west] at (-1,-0.52) {\footnotesize{in NP}};
    \draw[color-normal] (-2.325,-1.05) to[out=0,in=180] (1.1,-1.05) to[out=0,in=140] (2.8,-1.4) to[out=-40,in=180] (4.3,-1.75) to[out=0,in=180] (9,-1.75);
    \node[color-normal, draw=none,anchor=west] at (-2.425,-0.93) {\footnotesize{NP-hard}};
    \node[color-normal, draw=none,anchor=west] at (-2.425,-1.27) {\footnotesize{in P}};
    \draw[color-memoryless] (-0.5,0.25) to[out=0,in=180] (5,0.25);
    \node[color-memoryless, draw=none,anchor=west] at (-0.6,0.42) {\footnotesize{ETR-hard}};
    \node[color-memoryless, draw=none,anchor=west] at (-0.6,0.13) {\footnotesize{in NP}};
    \draw[color-memoryless] (-0.375,-1.15) to[out=0,in=180] (1.1,-1.15) to[out=0,in=140] (2.8,-1.5) to[out=-40,in=180] (4.3,-1.85) to[out=0,in=180] (9,-1.85);
    \node[color-memoryless, draw=none,anchor=west] at (-0.475,-0.93) {\footnotesize{NP-hard}};
    \node[color-memoryless, draw=none,anchor=west] at (-0.475,-1.27) {\footnotesize{in P}};
    \coordinate (rec_nw) at (-2.325,-2);
    \draw[black, very thin,fill = white] (rec_nw) rectangle +(11.325,-0.44);
    \draw[color-normal,thick] ($(rec_nw) + (0.5,-0.22)$) to +(0.25,0);
    \node[draw=none,anchor=west] at ($(rec_nw) + (0.75,-0.2)$) {No restrictions};
    \draw[color-deter,thick] ($(rec_nw) + (3.6,-0.22)$) to +(0.25,0);
    \node[draw=none,anchor=west] at ($(rec_nw) + (3.85,-0.2)$) {Deterministic};
    \draw[color-memoryless,thick] ($(rec_nw) + (6.4,-0.22)$) to +(0.25,0);
    \node[draw=none,anchor=west] at ($(rec_nw) + (6.65,-0.233)$) {Markovian randomized policies};
\end{tikzpicture}}
    \caption{
    \emph{Known complexity results for finite-horizon ME-POMDPs. } 
    Dark blue and red lines correspond to problems with deterministic dynamics and Markovian policies, respectively, while bright blue lines correspond to problems with no dynamics or policy restrictions.
    An annotation ``$\mathsf{X}$-hard'' atop a line encodes that problems above that line are hard for the complexity class $\mathsf{X}$. 
    An annotation ``in $\mathsf{X}$'' below a line encodes that problems below the line are decidable in $\mathsf{X}$. 
    }
    \label{fig:complexity_overview_before}
\end{figure}

\begin{figure*}[t]
    \centering
    \resizebox{\textwidth}{!}{\input{Tikz/Complexity_overview}}
    \caption{\emph{Overview of ME-POMDP complexity including results we present.}
    Dark blue and red lines correspond to problems with deterministic dynamics and Markovian policies, respectively, while bright blue lines correspond to problems with no dynamics or policy restrictions.
    An annotation ``$\mathsf{X}$-hard'' atop a line encodes that problems above that line are hard for the complexity class $\mathsf{X}$. 
    An annotation ``in $\mathsf{X}$'' below a line encodes that problems below the line are decidable in $\mathsf{X}$. 
    }
    \label{fig:complexity_overview}
\end{figure*}

\subsection*{Contributions}
We investigate the complexity of finite-horizon reward maximization in multi-environment POMDPs across a large set of subclasses.
We consider these subclasses in full generality without further restrictions, and also examine special cases with deterministic transition and observation dynamics, as well as restricted policy classes such as finite-memory policies. 
Our results establish a significantly expanded complexity landscape for ME-POMDPs in \Cref{fig:complexity_overview} compared with \Cref{fig:complexity_overview_before}.
Our specific contributions are as follows.

\begin{enumerate}
    \item We show that POMDPs where all states are hidden through the same observation (Blind MDPs) and POMDPs with deterministic transition and observation functions (deterministic POMDPs) remain $\NP$-complete when extended to the multi-environment setting.
    These results rely on both subclasses respecting a certain \emph{closure under mixture} condition: 
    Allowing the adversary to randomly choose an environment from a fixed set of POMDPs in one of these subclasses is equivalent to the agent acting in a new POMDP in that subclass.
    \item We analyze a range of multi-environment models that lack the closure property above, yielding hardness results for a range of \emph{multi-observation} POMDP problems.
    We show that simple observation uncertainty, in particular uncertainty about state labelings or the probability that the true state is revealed, leads to $\PSPACE$-completeness, as in general ME-POMDPs.
    We also obtain novel complexity results for many single-environment POMDP subclasses.
    \item We characterize the complexity of ME-POMDPs with restricted policies.
    We show that numerous problems involving time-dependent Markovian policies or finite-state controllers that are $\NP$-complete for POMDPs become \emph{existential theory of the reals} ($\ETR$)-complete
    for ME-POMDPs.
\end{enumerate}
\paragraph{Outline}
After related work and preliminaries, \Cref{sec:single:env:models} details all involved subclasses. 
\Cref{sec:complexity_stays_prior_universal} presents the complexity results for ME-POMDPs using hidden-mixture reasoning.
Hardness results for ME-POMDPs built from subclasses that are not closed under hidden mixtures are in \Cref{sec:jumps_subclasses}.
Results on restricted policy classes for ME-POMDPs are in \Cref{sec_jumps}.

\section{Related Work}\label{sec_related_work}

A range of works provide algorithmic and complexity results for Multi-Environment MDPs (ME-MDPs).
In particular, for the robust setting, in which we assume no distribution on the environments, previous work characterizes the complexity of various qualitative problems \cite{chatterjee2025Value,DBLP:conf/icalp/BordaisR26,DBLP:conf/fsttcs/RaskinS14,DBLP:conf/concur/SuilenVJ24,DBLP:conf/tacas/VegtJJ23}, and provides algorithmic approaches for the reward-maximization problem \cite{DBLP:journals/cor/AhluwaliaSD21,DBLP:journals/iiset/SteimleAKD21}.
For finite-horizon reward maximization, the problem is $\PSPACE$-complete for history-dependent policies \cite{le2007robust,DBLP:journals/corr/abs-2605-07537} and $\NP$-hard when policies are restricted to be time-dependent Markovian \cite{le2007robust,DBLP:journals/orl/Li26}.
While we study ME-POMDPs, which strictly generalize ME-MDPs, we also tighten existing ME-MDP results by showing that the reward-maximization problem with time-dependent Markovian policies is $\ETR$-complete, where $\ETR$ lies between $\NP$ and $\PSPACE$. %
Single-environment POMDP complexity results provide trivial lower bounds on ME-POMDP complexity.
The $\PSPACE$-hardness of POMDPs \cite{DBLP:journals/mor/PapadimitriouT87}, along with the $\NP$-hardness of subclasses such as Blind MDPs \cite{DBLP:journals/mor/PapadimitriouT87} and deterministic POMDPs \cite{littman1996algorithms}, thus also holds for ME-POMDPs.
The closest related work is \cite{DBLP:journals/corr/abs-2605-07537}, which shows that finite-horizon reward maximization in ME-POMDPs is $\PSPACE$-complete, but it considers only general POMDPs, not subclasses.

The interaction between the agent and adversary in a ME-POMDP can be encoded as a one-sided zero-sum partially observable stochastic game (POSG) \cite{DBLP:conf/nips/BovyPSTJ25}.
As ME-POMDPs have a highly restricted adversary structure, complexity lower bounds for POSGs do not immediately transfer to ME-POMDPs.
While upper bounds for POSGs naturally transfer to ME-POMDPs, existing results already separate the two, with finite-horizon reward-maximization being $\PSPACE$-complete for deterministic POSGs\footnote{This fact follows from the proof of \cite{DBLP:journals/mor/PapadimitriouT87}.} but $\NP$-complete for deterministic ME-POMDPs, as we show.
Like POSGs, \emph{extensive-form games} also involve partial information and have well-known complexity results \cite{koller1992complexity}.
However, they are strictly more general than ME-POMDPs in the structure of the adversary's actions, and naive encodings of ME-POMDPs as imperfect-information games lead to an exponential blow-up in representation size.

\section{Preliminaries}\label{sec_prelims}

The sets of real numbers and natural numbers are  $\RR$ and $\NN$, respectively.
We denote the set of integers $\{1,\ldots,n\}$ as $[n]$.
We use $\dist{B}$ to denote the set of distributions on elements of a set $B$.
For a distribution $q \in \dist{X}$, $\supp(q)$ is the set of elements $x \in X$ such that $q(x) > 0$.
For $x \in X$, $\delta_x \in \dist{X}$ is the Dirac distribution which places all probability mass on $x$.
For a vector $\mathbf{u}$ we use $\vecat{\mathbf{u}}{i}$ or $\mathbf{u}(i)$ to denote the $i^{\text{th}}$ element, and $\mathbf{u}^\top$ denotes its transpose.

We assume basic familiarity with standard complexity classes $\mathsf{P}, \NP$, and $\PSPACE$, as well as the classical $\NP$-complete problem 3SAT, its $\PSPACE$-complete variant QSAT~\cite{DBLP:books/daglib/0072413}, and their \emph{conjunctive normal form} (CNF).
We also consider the complexity class $\ETR$.
The first-order theory of the reals over $(\mathbb{R}, 0, 1, +, \cdot, <)$ is the logic used to express whether a polynomial system of (in)equalities has a solution in $\mathbb{R}$. 
The \emph{existential} theory of the reals is the fragment that asks whether $\exists x_1, \dots, x_n. \phi(x_1,\dots,x_n)$ has a solution for a given quantifier-free formula $\phi$.
The truth problem for this fragment defines the complexity class $\ETR$, with $\NP \subseteq \ETR \subseteq \PSPACE$~\cite{DBLP:journals/corr/abs-2407-18006}.

\subsection{POMDPs}

A partially observable Markov decision process (POMDP) models an environment in which an agent takes actions to maximize reward under partial information on the true state of the world. 
\begin{definition}[POMDP]
    A partially observable Markov decision process (POMDP) is a tuple $(S,A,Z,T,O,R,b,H)$, where 
    $S$ is a finite set of states,
    $A$ is a finite set of actions,
    $Z$ is a finite set of observations,
    $T\colon S \times A \rightarrow \dist{S} $ is a transition function,
    $O\colon  S \rightarrow \dist{Z}$ is an observation function,
    $R\colon  S\times A \rightarrow \RR$ is a reward function,
    $b \in \dist{S}$ is an initial state distribution, and
    $H \in \mathbb{N}$ is the horizon.
\end{definition}
We write $T(s,a,s') = T(s,a)(s')$ and similarly, $O(s,z) = O(s)(z)$.
The state $s_0$ at time $0$ is drawn according to $b$.
For every $t \geq 0$, the observation $z_t$ is drawn from $O(s_t)$. 
The agent then observes $z_t$, takes an action $a_t$, and the next state is drawn from $T(s_t,a_t)$.

An agent acts in a POMDP using a policy that maps observation histories to distributions over actions:
$\pi\colon\cup_{t=0}^{H-1} Z(AZ)^{t} \rightarrow \dist{A}$.
We denote by $\Pi^{Z,A,H}$ the set of all such history-dependent randomized policies for a given observation space, action space, and horizon. 
When the POMDP is clear from context, we will simply write $\Pi$. 
A policy $\pi$ is deterministic if, for all histories $h$ in $\cup_{t=0}^{H-1} Z(AZ)^{t}$, $\pi(h)$ is a Dirac distribution, and we use $\polDet$ to denote the set of deterministic policies.
A \emph{mixed policy} is a distribution over deterministic policies, and the agent randomly draws one such policy before playing.
In the finite-horizon setting, any mixed policy $\pi'$ has an equivalent behavioral policy $\pi$ that induces the same path distribution in all POMDPs~\cite{kuhn1953extensive}.

The finite-horizon expected reward of a policy $\pi$ in a POMDP $\calM$ is $V_\calM^\pi = \EE_\calM^\pi\left[\sum_{t=0}^{H-1} R(s_t,a_t)\right]$, where $s_t$ and $a_t$ are random variables representing the state and action respectively at time $t$, for $t$ in the set $\{0,1,\ldots, H-1\}$, and we use use $\EE_\calM^\pi$ to denote the expectation under the measure induced by the policy $\pi$ and the POMDP $\calM$.

\subsection{Multi-Environment POMDPs}

A multi-environment POMDP (ME-POMDP) comprises a finite set of POMDPs with common state, action, and observation spaces, and a common horizon, but potentially different dynamics.
\begin{definition}[ME-POMDP]
    A multi-environment POMDP (ME-POMDP) is a tuple $(S,A,Z,(T_i)_{i \in [n]}, (O_i)_{i\in [n]}, (R_i)_{i\in [n]}, (b_i)_{i\in [n]}, H)$.
    Each index $i \in [n]$ defines a POMDP $\calM_i = (S,A,Z,T_i,O_i,R_i,b_i,H)$.
\end{definition}
We emphasize that we define ME-POMDPs \emph{without} a distribution over environments; thus, ME-POMDPs are not a subclass of POMDPs.
We define the set of history-dependent policies in the same manner as we do for POMDPs. 
We evaluate policies in ME-POMDPs in a \emph{worst-case} fashion.
For a policy $\pi$ in a ME-POMDP $(\calM_i)_{i\in[n]}$, we define its value as
\begin{equation}\label{eq:ME-POMDP:value}
    V_{(\calM_i)_{i\in [n]}}^\pi = \min_{i \in [n]} V^\pi_{\calM_i} =  \min_{i \in [n]} \mathbb{E}^\pi_{\calM_i}\left[ \sum_{t=0}^{H-1} R(s_t,a_t)\right].
\end{equation}

A \emph{multi-observation POMDP} (MO-POMDP) is a ME-POMDP in which the environments differ only in their observation functions.
That is, for all $i,j$ in $[n]$, we have $T_i = T_j$, $R_i = R_j$ and $b_i = b_j$. 
A MO-POMDP additionally satisfies the \emph{fixed support} condition if, for all states $s \in S$ and environments $i,j \in [n]$, the set of possible observations is fixed, i.e., $\supp(O_i(s)) = \supp(O_j(s))$.

\subsection{Decision Problems and Encodings}

We study the complexity of the following decision problem. 
\begin{problem}[Value problem]\label{prob:value:problem}
Given a ME-POMDP $(\calM_i)_{i\in [n]}$, and a rational number $B$, the 
value problem asks whether there exists a policy $\pi$ such that $V_{(\calM_i)_{i\in [n]}}^\pi \geq B$.
\end{problem}
\noindent
The value problem for restricted policy classes is defined analogously.
Problem~\ref{prob:value:problem} considers history-dependent randomized policies $\pi \in \Pi$, unless noted otherwise.
The value problem for single-environment models follows from \Cref{eq:ME-POMDP:value} with $n=1$.

We encode each ME-POMDP following the standard conventions for single-environment POMDPs~\cite{DBLP:journals/mor/PapadimitriouT87}. 
All transition, observation, and reward functions, as well as the initial distributions, are given as tables, where each entry is a rational number encoded as a ratio of two integers given in binary. 
The horizon is encoded in unary.

\section{Single-Environment Models}\label{sec:single:env:models}
In this section, we introduce POMDP subclasses we consider and the complexities of their reward-maximization problems. 
\Cref{fig:model_overview} summarizes the subclasses we consider.
We give complete proofs for all novel single-environment complexity results 
in~\Cref{sec:single_environment_full_proofs}.

\subsection{Subclasses of POMDPs}\label{sec_subclasses}
Recall a POMDP is a tuple $(S,A,Z,T,O,R,b,H)$.
We consider the following subclasses of POMDPs, ordered by increasing restriction.

\paragraph{Revealing POMDPs (RevPOMDPs)}
In a RevPOMDP, each state has a non-zero probability of revealing the true state through a unique observation \cite{DBLP:conf/aaai/BellyFG00V25,asadi2026revealing}.
Formally, a RevPOMDP is a POMDP where $Z$ contains a subset $\{\top_s \mid s \in S\}$\footnote{The subset of $Z$ can be any $|S|$ observations as long as the observation function maps a unique state to each observation.} of observations such that 
$    \forall s\in S, O(s,\top_s) > 0 \land \forall s'\in S, s' \neq s \implies O(s',\top_s) = 0$.
This definition encodes a variant of the strongly revealing POMDPs of \cite{DBLP:conf/aaai/BellyFG00V25} or revealing POMDPs of \cite{asadi2026revealing} with state-based observations.
RevPOMDPs are decidable for almost-sure, limit-sure, and quantitative reachability and parity objectives~\cite{DBLP:conf/aaai/BellyFG00V25,asadi2026revealing}, in contrast to general POMDPs where most objectives are undecidable~\cite{DBLP:conf/mfcs/ChatterjeeDH10,DBLP:journals/ai/MadaniHC03}.

\paragraph{Semi-Observable MDPs (SOMDPs)}
In a SOMDP, each state has a revealing observation and is otherwise unobservable \cite{DBLP:conf/iros/BasichPZ22}.
The revelation probability may vary across states, and is encoded by a likelihood function $\eta \colon S \to [0,1]$. 
Formally, a SOMDP is a POMDP where the observation set comprises revealing observations for each state and a blind observation, i.e., $Z = \{\top_s \mid s \in S\} \cup \{\bot\}$. The observation function then must satisfy
$    O(s) = \{\top_s \mapsto \eta(s), \bot \mapsto 1- \eta(s)\}$
for some function $\eta \colon S \to [0,1]$.
This definition corresponds to a variant of SOMDPs with state-based observations \cite{DBLP:conf/iros/BasichPZ22}.
In SOMDPs, one can track the action-observation history by tracking the last observed state and all actions taken since then \cite{DBLP:conf/iros/BasichPZ22}.

\paragraph{Intermittently-Observable MDPs (IOMDPs)}
In an IOMDP, each state has a revealing observation and is otherwise unobservable \cite{DBLP:journals/tac/ChenL26}, as in a SOMDP.
However, IOMDPs additionally require the revelation probability to be constant across states.
Formally, an IOMDP is a POMDP where $Z = \{\top_s \mid s \in S\} \cup \{\bot\}$, and the observation function satisfies
$    O(s) = \{\top_s \mapsto \lambda, \bot \mapsto 1- \lambda\}$
for some $\lambda \in [0,1]$. 
In an IOMDP, action-observation histories are tracked by the last observed state and all actions since then.

\paragraph{Blind MDPs (BlindMDPs)}
In a BlindMDP, there is only a single observation  \cite{DBLP:journals/mor/PapadimitriouT87,DBLP:journals/ai/MadaniHC03,DBLP:journals/jacm/MundhenkGLA00}.
Formally, it is a POMDP where $Z = \{\bot\}$, and for each state $s$, $O(s,\bot) = 1$.

\paragraph{MDPs}
MDPs are a fully observable subclass of POMDPs, in which each state has a single unique observation \cite{DBLP:books/wi/Puterman94}.
Formally, an MDP is a POMDP where $Z = \{\top_s \mid s \in S\}$ consists of a revealing observation for each state $\top_s$, also called a \emph{state labeling}, and the observation function is deterministic with $O(s,\top_s) = 1$.

\paragraph{Partial orders on subclasses}
In \cref{fig:model_overview}, we arrange the above POMDP subclasses by the inclusion relation.
Note that the subset inclusion of Blind MDPs depends on whether the three middle models, \ie, IOMDPs, SOMDPs, and RevPOMDPs, allow revelation probabilities of $0$.
Also note that allowing RevPOMDPs to include zero revelation probabilities recovers the class of all POMDPs.
\begin{figure}[t]
    \centering
    \resizebox{0.55\columnwidth}{!}{\begin{tikzpicture}[font=\normalsize,state/.append style={shape = circle}, >=stealth, bobbel/.style={minimum size=1mm,inner sep=0pt,fill=black,circle}]
    \node[] (pomdp1) at (0,0) {POMDP};
    \node[] (rev_pomdp1) at ($(pomdp1) + (-1.4,-1)$) {RevPOMDP};
    \node[] (semi_mdp1a) at ($(rev_pomdp1) + (0,-1)$) {SOMDP$_{\eta_s \in (0,1]}$};
    \node[] (inter_mdp1a) at ($(semi_mdp1a) + (0,-1)$) {IOMDP$_{\lambda \in (0,1]}$};
    \node[] (semi_mdp1b) at ($(rev_pomdp1) + (2.8,-1)$) {SOMDP$_{\eta_s \in [0,1]}$};
    \node[] (inter_mdp1b) at ($(semi_mdp1b) + (0,-1)$) {IOMDP$_{\lambda \in [0,1]}$};
    \node[] (mdp1) at ($(inter_mdp1a) + (0,-1)$) {MDP};
    \node[] (blind_mdp1) at ($(inter_mdp1b) + (0,-1)$) {BlindMDP};
    \draw[opacity=0] (rev_pomdp1) to node[sloped,opacity=1] {$\subseteq$} (pomdp1);
    \draw[opacity=0] (semi_mdp1a) to node[sloped,opacity=1] {$\subseteq$} (rev_pomdp1);
    \draw[opacity=0] (inter_mdp1a) to node[sloped,opacity=1] {$\subseteq$} (semi_mdp1a);
    \draw[opacity=0] (mdp1) to node[sloped,opacity=1] {$\subseteq$} (inter_mdp1a);
    \draw[opacity=0] (semi_mdp1b) to node[sloped,opacity=1,xscale=-1] {$\subseteq$} (pomdp1);
    \draw[opacity=0] (inter_mdp1b) to node[sloped,opacity=1] {$\subseteq$} (semi_mdp1b);
    \draw[opacity=0] (inter_mdp1a) to node[sloped,opacity=1] {$\subseteq$} (inter_mdp1b);
    \draw[opacity=0] (semi_mdp1a) to node[sloped,opacity=1] {$\subseteq$} (semi_mdp1b);
    \draw[opacity=0] (blind_mdp1) to node[sloped,opacity=1] {$\subseteq$} (inter_mdp1b);
\end{tikzpicture}}
    \caption{Overview of the models considered in this paper.}
    \label{fig:model_overview}
\end{figure}

\paragraph{Deterministic dynamics}
For each subclass introduced above, we also consider a variant with deterministic transition and observation functions.
Note that, when the observation function is deterministic, RevPOMDPs, SOMDPs with $\eta_s \in (0,1]$, and IOMDPs with $\lambda \in (0,1]$ all collapse to MDPs as each revelation probability becomes $1$.
Also, IOMDPs with $\lambda \in [0,1]$ collapse to either MDPs or Blind MDPs, as the revelation probability is constant.

\paragraph{Subclasses not considered}
We focus on the subclasses above as they prove sufficient to uncover interesting complexity behavior.
However, we also note that there are numerous other interesting subclasses of POMDPs which we do not characterize, such as posterior-deterministic POMDPs \cite{DBLP:journals/corr/abs-2602-07473}, quasi-deterministic POMDPs \cite{besse2009quasi} or observable POMDPs \cite{golowich2022planning}. 
We leave exploration of multi-environment versions of these subclasses to future work.

\subsection{Complexity of Single-Environment Models}\label{sec_base_results}

We discuss the complexity of finite-horizon reward maximization in single-environment models.
The first column of \cref{fig:complexity_overview} summarizes the results.
First, recall that this problem for MDPs without any restrictions is in $\mathsf{P}$ \cite{DBLP:journals/mor/PapadimitriouT87}, and for Blind MDPs it is $\NP$-complete \cite{DBLP:journals/mor/PapadimitriouT87,DBLP:journals/jacm/MundhenkGLA00}.
We now show that for all models between IOMDPs and SOMDPs the problem is $\NP$-complete.
\begin{restatable}[]{proposition}{propIOMDPsBase}\label{prop:IOMDPs_base}
    The value problem for IOMDPs with $\lambda \in (0,1]$ is $\NP$-hard.
\end{restatable}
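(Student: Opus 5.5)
The approach is a polynomial reduction from 3SAT in the style of Papadimitriou and Tsitsiklis' $\NP$-hardness proof for Blind MDPs, adapted so that the revelations forced by $\lambda>0$ do not help the agent. One point about the reading of the statement first. For $\lambda = 1$ every state is revealed with certainty, so the IOMDP is an MDP and the value problem is in $\Ptime$. Hence the statement cannot assert hardness for the single value $\lambda=1$ (unless $\Ptime=\NP$). I will therefore prove hardness for every fixed constant $\lambda\in(0,1)$, with $\lambda$ independent of the formula. This implies hardness of the class of IOMDPs with $\lambda\in(0,1]$.

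\emph{Construction.} Let $\varphi$ be a 3CNF formula over $x_1,\dots,x_n$ with clauses $C_1,\dots,C_m$. The plan is to make the clause index $j$ part of the hidden state only \emph{after} every assignment decision has been made, so that a revealed observation never carries information about $j$ at a decision point. Concretely, there are two phases.
\begin{itemize}
\item \emph{Assignment phase.} The process walks through a clause-independent chain $v_1,\dots,v_n$. At $v_i$ the agent picks $a\in\{\mathsf{T},\mathsf{F}\}$. The state after $v_i$ is not deterministic: with probability $1/2$ it stays on the chain, and with probability $1/2$ it jumps to a ``test'' state $u_{i,a}$ that records the value just chosen for $x_i$. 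So $u_{i,a}$ encodes only which variable is being tested and its chosen value.
\item \emph{Test phase.} From $u_{i,a}$, a uniformly random clause $C_j$ containing $x_i$ or $\neg x_i$ is drawn. A reward is then paid according to whether the literal $x_i=a$ satisfies $C_j$. After this the process goes to an absorbing zero-reward sink, and all actions there are irrelevant.
\end{itemize}
Because the only randomness that matters (the clause $j$) is drawn after the relevant decision, revelations with any constant probability $\lambda$ tell the agent nothing it can exploit. What a revelation does disclose, namely that the process is in $u_{i,a}$ or still on the chain, is information the agent could already reconstruct from its own actions together with the fact that it is not yet in the sink. Since decisions are only taken on the chain, I would argue that this information is irrelevant there as well.

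\emph{Main obstacle.} This test checks single literals rather than clauses. The real difficulty is to test whole clauses while keeping the state space polynomial and avoiding information leakage. The Blind MDP reduction carries the clause identity $j$ through all $n$ decisions, and a revelation at probability $\lambda$ would leak $j$. So the step I expect to be hardest is the following. One must either rewrite the check so that the hidden state at every decision point is independent of $j$, for instance by drawing $j$ at the end and verifying against choices stored in some compressed form, or show that the agent's gain from leaked information is bounded by a computable gap.

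The first thing I would try is the second option, via padding. Between consecutive decisions, insert clause-independent ``scrambling'' states, and make the value of each decision depend only on states whose labels are identical across clauses. The aim is that the posterior over $j$ conditioned on any revealed label equals the prior. I would then verify that the value reaches the threshold $B=1$ if and only if $\varphi$ is satisfiable. The proof would check the two directions separately: a satisfying assignment played open-loop attains $B$, while any policy in the unsatisfiable case loses at least $\frac{1}{m}$ times a constant depending only on $\lambda$. Membership in $\NP$ is not part of this statement and is not addressed here.
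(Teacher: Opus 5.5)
Your proposal has a genuine gap: it never produces a reduction that encodes 3SAT. In your literal-test construction, the probability of still being on the chain at $v_i$ does not depend on the actions. So the expected reward is a sum of terms, each depending on a single choice $a_i$. It is maximised variable by variable (pick the polarity that occurs more often), which takes polynomial time and has nothing to do with satisfiability of $\varphi$, as you acknowledge. The suggested repair by padding is not specified, and it cannot work in the form you describe. In an IOMDP each state $s$ has its own revealing observation $\top_s$, which leaves two cases. If the clause index is drawn before the decisions, any state carrying it is disclosed by a revelation. If it is drawn afterwards, the state must essentially record the whole assignment to check a 3-clause, which needs exponentially many states. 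The step you call hardest is therefore left open.

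The missing idea is that leakage does not need to be prevented. Keep the standard Blind MDP $\mathcal{M}'$ for 3SAT: all rewards are $\le 0$, and some action sequence has value $0$ iff $\varphi$ is satisfiable. Add revelations with probability $\lambda=\tfrac12$ without touching transitions or rewards, and use the threshold $B=0$ instead of a gap. Your restriction to a fixed $\lambda\in(0,1)$ is fine; the paper uses $\tfrac12$.

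Suppose a policy $\pi$ (w.l.o.g.\ deterministic) has value $\ge 0$. Split trajectories by whether the state is ever revealed. The revealed part contributes $\le 0$. On the never-revealed part, $\pi$ plays a fixed action sequence $\pi'$, namely its choices on all-$\bot$ histories. Hence
\[
0 \le V^{\pi} \le \Big(\tfrac{1}{2}\Big)^{H} V^{\pi'}_{\mathcal{M}'} ,
\]
so $\pi'$ attains $0$ in $\mathcal{M}'$ and $\varphi$ is satisfiable. Conversely, a satisfying assignment played open-loop, ignoring revelations, attains $0$.

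Leaked information can only help on revealed trajectories. That cannot compensate for any loss on the blind event, because this event has probability $(1-\lambda)^H>0$ along every state trajectory. The threshold also equals the maximal attainable value, so no loss anywhere can be tolerated.
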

\begin{proofsketch}
We reduce 3SAT to the value problem for IOMDPs.
For each 3SAT instance, one can build a BlindMDP with non-positive reward such that the 3SAT instance is satisfiable if and only if the BlindMDP has zero expected reward \cite{DBLP:journals/jacm/MundhenkGLA00}. 
To prove $\NP$-hardness for IOMDPs, we use the same transition structure, but allow the agent to see the true state with probability $\nicefrac{1}{2}$.
With positive probability, the agent never observes the state and, in effect, plays a sequence of actions in a BlindMDP, and we can argue that this sequence of actions also defines a satisfying assignment for the original 3SAT instance.
\end{proofsketch}

Proposition~\ref{prop:IOMDPs_base} implies that finite-horizon reward maximization for all models in the class $\textrm{IOMDP}_{\lambda \in (0,1]}$ is $\NP$-hard. 
We next provide an $\NP$ upper bound for all models below $\textrm{SOMDP}_{\eta_s \in [0,1]}$.

\begin{restatable}[]{proposition}{propSOMDPsBase}\label{prop:SOMDPs_base}
    The value problem for SOMDPs with $\eta_s \in [0,1]$ is in $\NP$.
\end{restatable}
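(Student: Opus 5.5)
We will show membership in $\NP$ with a polynomial-size certificate that can be verified in polynomial time. The key structural fact is the one noted in the definition of SOMDPs. At any time, the relevant information is the last revealed state $s$ together with the sequence of actions taken since that revelation, and after a revelation the past is irrelevant to the future. So we first argue that an optimal policy can be taken deterministic, since in a single POMDP with finite horizon a deterministic policy attains the optimum. We then argue it can be taken to depend only on the triple $(t_0, s, t-t_0)$, where $t_0$ is the time of the last revelation, $s=s_{t_0}$ is the revealed state, and $t$ is the current time. Concretely, after observing $\top_s$ at time $t_0$ the agent is in a ``blind'' phase and commits to an open-loop action sequence until the next revelation. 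This holds because the belief at time $t_0$ is $\delta_s$ regardless of the earlier history.

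The certificate is therefore a table $\sigma$. For each start time $t_0 \in \{0,\dots,H-1\}$ and each state $s \in S$, it records an action sequence $a^{t_0,s}_{0},\dots,a^{t_0,s}_{H-1-t_0}$. For the initial phase before any revelation (the observation $\bot$ at time $0$ with initial belief $b$), it records one additional sequence. The size is $O(H^2|S|)$ actions, which is polynomial because $H$ is encoded in unary. The policy defined by $\sigma$ plays, at time $t$, the action indexed by $t - t_0$ in the sequence for the most recent revelation $(t_0,s)$, or the initial sequence if no revelation has occurred yet.

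To verify, we compute the value of $\sigma$ by backward dynamic programming over ``phase'' states. Let $W(t_0,s)$ be the expected reward from time $t_0$ to $H$ given that $s$ was revealed at $t_0$ and the policy follows $\sigma$ thereafter. To compute $W(t_0,s)$, we propagate an unnormalized sub-distribution $\beta_k$ over states for the event ``no revelation at times $t_0+1,\dots,t_0+k$''. We start from $\beta_0=\delta_s$. At each step we collect the immediate reward $\sum_{s'} \beta_k(s') R(s',a_k)$. We then push $\beta_k$ through $T(\cdot,a_k,\cdot)$ to get $\gamma$. The mass that reveals at the next step, namely $\gamma(s')\eta(s')$, contributes $\gamma(s')\eta(s')\,W(t_0+k+1,s')$. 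The remaining mass is $\beta_{k+1}(s')=\gamma(s')(1-\eta(s'))$. The initial phase is handled the same way, starting from $b$ with the correct split at time $0$ between $\top_s$ (weight $b(s)\eta(s)$, continuing with $W(0,s)$) and $\bot$. Filling the table backwards in $t_0$ takes polynomially many arithmetic operations on rationals, and we accept iff the computed value is at least $B$.

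The main obstacle is bounding the bit-size of these rationals so that the verifier really runs in polynomial time. Each entry is a sum of products of at most $H$ transition, observation, and reward entries, with $H$ in unary. Hence all numbers are polynomial-degree polynomials in the input rationals with polynomially many terms, and their common denominators have polynomial bit-length. We will make this precise by noting that every quantity can be written over the common denominator $D^{cH}$, where $D$ is the product of all input denominators. A smaller point to handle carefully is the justification that history-dependence beyond the last revelation is useless, which follows from the standard belief-sufficiency argument for POMDPs. Zero revelation probabilities ($\eta_s=0$) cause no issue, since such states simply never reveal.
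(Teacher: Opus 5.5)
Your proposal is correct and follows essentially the same route as the paper. Both use the same structural fact: some deterministic optimal policy depends only on the last revealed state, the last revelation time, and the current time, so one guesses this $O(H^2|S|)$-size table and evaluates it. The only difference is the verification step: you run a direct backward DP over revelation phases with a common-denominator bit-size bound, whereas the paper encodes the guessed table as a deterministic time-dependent FSC with memory $(S\cup\{s_\bot\})\times(\{0,\dots,H-1\}\cup\{-\infty\})$ and then applies its product-POMDP lemma and its polynomial-time evaluation lemma for deterministic time-dependent Markovian policies.
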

\begin{proofsketch}
    We show that the optimal policy in a SOMDP can be guessed and evaluated in polynomial time.
    This policy is belief-based and deterministic, tracking the last observed state, the time of the last observation, and the current time.
    The belief in an SOMDP collapses every time a state is revealed.
    Until the next state revelation, all observations are blind, so the decision-maker has only the actions and number of time steps that have passed as available information.
    Since the actions are chosen deterministically, tracking the last revelation time and the current time step suffices to track the belief.
\end{proofsketch}

We complete the complexity results for single-environment models without deterministic dynamics by proving that reward maximization in RevPOMDPs is $\PSPACE$-hard. 
\begin{restatable}[]{proposition}{propRevPOMDPsBase}\label{prop:revPOMDPs_base}
    The value problem for revealing POMDPs is $\PSPACE$-hard.
\end{restatable}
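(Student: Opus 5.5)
We reduce from QSAT, following the $\PSPACE$-hardness construction of \cite{DBLP:journals/mor/PapadimitriouT87} for finite-horizon POMDPs. Given a quantified formula $\exists x_1 \forall x_2 \cdots Q x_m.\, \phi$ in CNF with clauses $C_1,\dots,C_k$, that construction builds a POMDP in two steps. The initial distribution picks a clause $C_j$ uniformly at random, and this choice stays hidden from the agent. The variables are then processed in order. At an existential variable the agent chooses its truth value. At a universal variable nature flips a fair coin for the value, and this value is observed. The state secretly records whether the current assignment has already satisfied $C_j$, and the agent earns reward $1$ at the end if it has. The formula is true if and only if some policy achieves expected reward $1$, which is the maximum possible.

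The only obstacle to landing in the class of RevPOMDPs is the revealing requirement. Every state $s$ must emit its own observation $\top_s$ with positive probability, and in the original construction revealing the state would leak the hidden clause index $j$.

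To handle this, I will modify the POMDP as follows. Every state $s$ emits $\top_s$ with some small probability $\epsilon>0$, and otherwise emits its original observation. Revealing is then made harmless by sending the process into a terminal region whenever $\top_s$ fires. More precisely, each state is duplicated into a ``fresh'' copy, with the transition out of a revealed copy going to an absorbing sink. Alternatively, and more simply, the reward is adjusted so that on any run where a revelation has occurred the agent receives a fixed reward that it cannot influence.

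On reflection the cleanest version keeps the original dynamics unchanged and just analyses values. Let $p$ be the probability that no revelation ever occurs along the horizon. This probability is independent of the policy, because $\epsilon$ is the same for every state. On the event that a revelation happens, I give a constant reward $c$ regardless of the agent's behaviour. To implement this, I change the reward so that it is collected only at the final step, and only in states tagged ``no revelation so far.'' Tracking this tag needs a revealed copy of each state, and each copy has its own revealing observation, so the result is still a RevPOMDP.

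Conditioned on no revelation, the agent faces exactly the original POMDP. It might seem that knowledge gained after a revelation could help, but that knowledge only arrives on runs already tagged as revealed, which are worthless. Hence the optimal value is $p\cdot V^*_{\text{orig}}$. The threshold is then $B=p$, which is rational and computable in polynomial time as $(1-\epsilon)^{\text{number of steps}}$ with $\epsilon=\nicefrac12$ and a unary horizon.

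The main step to check carefully is this conditioning argument. I need to verify that after a revelation the tagged branch truly cannot affect reward. I also need to verify that the policy's behaviour on non-revealed histories is an arbitrary policy of the original POMDP. This holds because those histories coincide with the original histories. Together these give that the formula is true if and only if the value is at least $B$. The construction is polynomial in size, since it at most doubles the states and adds one observation per state.
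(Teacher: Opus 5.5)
There is a genuine gap at exactly the step you flag: the ``no revelation so far'' tag cannot be maintained by the state. In this model $T$ and $R$ depend only on states and actions. The next state $s_{t+1}$ depends on the emitted observation $z_t \sim O(s_t)$ only through the agent's own action. So the state can record whether a revealing observation was emitted only if that observation is a deterministic function of the state.

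The RevPOMDP condition rules this out for the copies you want to tag as unrevealed. Every state, including each such copy, must emit its own $\top_s$ with positive probability. Hence there are always positive-probability runs on which the agent sees some $\top_s$, and so learns the hidden clause index, while the state is still tagged as unrevealed and still collects the final reward. Your other variants (an absorbing sink after a revelation, a fixed reward on revealed runs) fail for the same reason: neither dynamics nor rewards can condition on which observation was emitted. So the identity $V^* = p\cdot V^*_{\text{orig}}$ is unjustified. With threshold $B=p$, the extra reward from revealed-but-still-tagged runs can push the value to $p$ or above even when the formula is false.

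The repair, which is what the paper does, is not to neutralize revelations at all. Keep the original dynamics, let every state reveal itself with probability $1/2$, and put the threshold at the maximum achievable value. The paper uses non-positive rewards and threshold $0$; in your formulation this is reward in $\{0,1\}$ and threshold $1$.

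Revelations can then only help the agent. But the never-revealed event has probability $2^{-H}>0$ under every policy, and on it the agent's histories coincide with histories of the original POMDP. Let $\pi'$ be the restriction of $\pi$ to non-revealing histories. Then $V^{\pi} \le 2^{-H} V^{\pi'}_{\text{orig}} + (1-2^{-H})$, so reaching the threshold forces $V^{\pi'}_{\text{orig}} = 1$, which gives a winning QSAT strategy. Conversely, a winning policy for the original POMDP can be played by replacing any $\top_s$ with a sample from the original observation distribution of $s$.

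Your observation that non-revealing histories are exactly the original histories is the key ingredient. It is the extra tagging layer that breaks the argument.
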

\begin{proofsketch}
We reduce from QSAT. %
For each QSAT instance, one can build a POMDP with non-positive reward such that the QSAT instance is satisfiable if and only if the POMDP has zero expected reward \cite{DBLP:journals/jacm/MundhenkGLA00}. 
To prove $\PSPACE$-hardness for RevPOMDPs, we use the same transition structure, but allow the agent to see the true state with probability $\nicefrac{1}{2}$.
With positive probability, the agent never observes the state and, in effect, plays a sequence of actions in a POMDP, and we can argue that this sequence of actions also defines a satisfying assignment for the original QSAT instance.    
\end{proofsketch}

\paragraph{Deterministic Single-Environment Models}
Next, we discuss subclasses with deterministic dynamics.
Deterministic MDPs are a subset of MDPs, and are thus decidable in $\Ptime$.
Deterministic RevPOMDPs collapse to MDPs because they must have a positive probability of revealing each state. 
Therefore, every model that is a subset of RevPOMDPs lies in $\Ptime$ for deterministic dynamics.
For BlindMDPs, the reduction used to prove $\NP$-hardness in \cite{DBLP:journals/jacm/MundhenkGLA00} uses deterministic transitions. As such, BlindMDPs, and any superset model, remain $\NP$-hard in the deterministic case.
Finally, the general deterministic POMDP (DetPOMDP) problem lies in
$\NP$ \cite{littman1996algorithms,DBLP:conf/uai/Bonet09}.

\subsection{Complexity of Policy Restrictions}
\label{sec:restricted_policy_single_env}

Restricting policy classes also reduces POMDP complexity, and we next discuss the complexity of the associated decision problems.
Under a time-dependent Markovian policy, an agent chooses an action using only their last observation and the current time, i.e.,
$\pi \colon \{0,\ldots, H-1\} \times Z \rightarrow \dist{A}$.
We write $\polMR$ for the set of time-dependent Markovian randomized policies. 
A policy is \emph{stationary} if it only depends on the observation, and not the time step. 

The value problem under time-dependent Markovian policies is $\NP$-complete for all POMDP subclasses besides MDPs.
By~\cite{DBLP:journals/jacm/MundhenkGLA00}, the value problem for time-dependent Markovian policies lies in $\NP$ when restricted to deterministic policies, and these deterministic policies are optimal among time-dependent Markovian policies~\cite{DBLP:conf/nips/BagnellKNS03}.
Meanwhile, $\NP$-hardness for Blind MDPs follows by observing that history-dependent and time-dependent Markovian policies coincide in this subclass.
For IOMDPs with $\lambda \in (0,1]$, we can also prove $\NP$-hardness using similar arguments to those used in proving $\NP$-hardness for Blind MDPs.
\begin{restatable}[]{proposition}{propIOMDPsMemoryless}\label{prop:memoryless_IOMDP}
    The value problem under time-dependent Markovian policies for IOMDPs is $\NP$-hard, even when $\lambda \in (0,1]$.
\end{restatable}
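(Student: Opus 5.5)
We reduce from 3SAT, reusing the construction behind \Cref{prop:IOMDPs_base}. Given a 3SAT instance with variables $x_1,\dots,x_m$ and clauses $C_1,\dots,C_k$, the BlindMDP of \cite{DBLP:journals/jacm/MundhenkGLA00} starts uniformly in one of the clause states. In round $j$ it asks for the value of $x_j$: action $\top$ or $\bot$. If the chosen literal satisfies the current clause, the run moves deterministically to an absorbing zero-reward ``satisfied'' state. Otherwise it stays on the clause's track. A clause track that is never satisfied by round $m$ receives reward $-1$. Hence the instance is satisfiable iff some action sequence achieves value $0$.

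We take the same transitions and rewards and add the IOMDP observation function with $\lambda = \nicefrac{1}{2}$. The threshold is $B = 0$.

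\textbf{Soundness.} If the formula is satisfiable, let $\alpha$ be a satisfying assignment. The time-dependent Markovian policy that plays $\alpha(x_j)$ at time $j$, regardless of the observation, is admissible. It yields value $0$ because every clause track reaches the satisfied state.

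\textbf{Completeness.} Suppose some $\pi \in \polMR$ achieves value $\geq 0$. Rewards are non-positive, so every path with positive probability under $\pi$ must avoid the $-1$ reward. Consider the event that the agent observes $\bot$ at every step up to $m$. This event has probability $2^{-(m+1)} > 0$ and is independent of the state, since $\lambda$ is constant across states. On this event the agent's Markovian inputs are exactly $(t,\bot)$, so its action distribution at time $t$ is $\pi(t,\bot)$, the same for every clause track.

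Pick any action sequence $a_0,\dots,a_{m-1}$ with $\pi(t,\bot)(a_t) > 0$ for all $t$, and let $\alpha$ be the assignment it defines. Take any clause $C_i$. The path starting in $C_i$, seeing $\bot$ throughout and playing $a_0,\dots,a_{m-1}$, has positive probability. So this path cannot incur reward $-1$, and $\alpha$ satisfies $C_i$. Since $C_i$ was arbitrary, $\alpha$ satisfies the whole formula.

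The main obstacle is one subtlety: the transitions must not depend on what the agent observes, and a revealing observation must never let a path escape the penalty. In this construction, observations influence only actions, never dynamics. So it suffices to exhibit one positive-probability blind path per clause that uses a common action sequence, which the time-dependence of $\pi(t,\bot)$ guarantees.

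One must also check that the reduction is polynomial: $\lambda = \nicefrac12$ is encoded in constant size and the horizon is $m+1$ in unary. If the base construction has states that are not clause-specific, they are handled identically, because a constant $\lambda$ makes the all-$\bot$ event independent of the state trajectory.
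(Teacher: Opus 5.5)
Your proof is correct and follows the paper's argument: you use the same 3SAT BlindMDP equipped with a constant revelation probability $\lambda = \nicefrac{1}{2}$, play a satisfying assignment obliviously in one direction, and extract the blind-branch actions $\pi_t(\bot)$ in the other. The only difference is minor: the paper first makes the Markovian policy deterministic and bounds $V^\pi_{\calM} \le (\nicefrac{1}{2})^H V^{\pi'}_{\calM'}$, whereas you pick an action sequence in the support of $(\pi_t(\bot))_t$ and use non-positivity of rewards path by path, which avoids the determinization step.
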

\begin{proofsketch}
    For both time-dependent Markovian and history-dependent policies, the agent's policy on the event that it never observes the state is a sequence of actions.
    Analogously to the proof of \Cref{prop:IOMDPs_base}, we can establish a correspondence between these action sequences and satisfying assignments of a 3SAT instance, which we in turn use to establish $\NP$-hardness.
\end{proofsketch}

While not shown in \Cref{fig:complexity_overview}, we also explore finite-state controllers \cite{DBLP:conf/nips/PoupartB03,DBLP:conf/uai/MeuleauKKC99} for POMDPs and their multi-environment counterparts.
A generalization of a time-dependent Markovian policy is a time-dependent FSC, in which an agent can use a set of memory states to store information about past actions and observations.
\begin{definition}[Time-dependent FSC]
    For a (ME-)POMDP with observations $Z$, actions $A$, and horizon $H$, a time-dependent FSC is a tuple
    $(Q, \iota, (\gamma_t)_{t \in \{0,\ldots, H-1\}}, (\eta_t)_{t \in \{0,\ldots, H-1\}})$,
    where 
    $Q$ is a set of memory states,
    $\iota \in \dist{Q}$ is an initial memory-state distribution,
    $\gamma_t\colon Q \times Z \rightarrow \dist{A}$ is a time-dependent action mapping, and  %
    $\eta_t \colon Q \times Z \times A \rightarrow \dist{Q}$ is a time-dependent memory update mapping.
\end{definition}
A time-dependent FSC is equivalent to a standard time-independent FSC with memory nodes $Q' = Q \times \{0,\ldots, H-1\}$ where one always updates $(q,t)$ to $(q',t+1)$.
The value problem for FSCs mirrors Problem~\ref{prob:value:problem}, but takes an additional integer $|Q|$ as input to encode the memory size the agent may use.
\begin{problem}[FSC value problem]\label{prob:fsc:value:problem}
        Given a ME-POMDP $(\calM_i)_{i\in [n]}$, a rational number $B$, and a memory bound $|Q|$ polynomial in the ME-POMDP size, the FSC value problem asks whether there exists a time-dependent FSC $\pi$ such that $V_{(\calM_i)_{i\in [n]}}^\pi \geq B$.
\end{problem}

\begin{restatable}[]{lemma}{lemPOMDPsFSCs}\label{fac:fsc_rep_fac}
    For any POMDP $\calM$, and fixed $|Q|$, there exists a deterministic optimal time-dependent FSC for $ \calM$.  
    Consequently, the time-dependent FSC value problem for POMDPs is $\NP$-complete. 
\end{restatable}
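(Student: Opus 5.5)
The plan is to prove the first claim by a product construction together with a mixed-policy argument, and then derive $\NP$-completeness from it.

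\emph{Existence of a deterministic optimal FSC.} Fix $\calM$ and $|Q|$. A time-dependent FSC $(Q,\iota,(\gamma_t)_t,(\eta_t)_t)$ is a collection of independent randomizations: the initial node drawn from $\iota$, and, for each triple $(t,q,z)$, a draw from $\gamma_t(q,z)$, and for each $(t,q,z,a)$, a draw from $\eta_t(q,z,a)$. Since the horizon is finite and each such local decision is encountered at most once along a run, the FSC is equivalent to the following mixed FSC. First sample, independently, an initial node $q_0 \sim \iota$, an action $\hat\gamma_t(q,z) \sim \gamma_t(q,z)$ for every $(t,q,z)$, and a successor $\hat\eta_t(q,z,a) \sim \eta_t(q,z,a)$ for every $(t,q,z,a)$. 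Then play the resulting deterministic FSC with the same node set $Q$.

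Both processes induce the same distribution over state--memory--action paths of $\calM$, so they have the same value. This is exactly Kuhn's argument in reverse, as in the preliminaries; it is the step that needs the most care, and I would verify it by induction on $t$ on the joint law of $(s_t,q_t,z_t)$ and the history. Because every local decision is encountered at most once per run, the pre-sampled draws are independent of everything that happened before they are used, which makes the induction go through.

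Hence $V^\pi_\calM$ is a convex combination of the values of finitely many deterministic FSCs with $|Q|$ nodes. At least one of these values is $\geq V^\pi_\calM$. Applying this to an optimal FSC shows that a deterministic FSC is also optimal. Existence of an optimum among randomized FSCs follows because the value is continuous in the FSC parameters on a compact set. Alternatively, one can simply take the best of the finitely many deterministic FSCs.

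\emph{Membership in $\NP$.} By the first part, the certificate is a deterministic FSC. It consists of $q_0$, the maps $\hat\gamma_t$, and the maps $\hat\eta_t$, which together take polynomial size because $H$ is unary and $|Q|$ is polynomial. Its value is computed exactly by forward propagation of the joint distribution over $S\times Q$ for $H$ steps, accumulating the expected reward. This uses polynomially many rational operations on numbers of polynomial bit-length, since each step multiplies by fixed input entries. We then compare the result with $B$.

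\emph{Hardness.} Take $|Q|=1$. A time-dependent FSC with one node is exactly a time-dependent Markovian policy, so $\NP$-hardness transfers from the Markovian value problem. For example, for Blind MDPs the Markovian and history-dependent policies coincide, and $\NP$-hardness is known from \cite{DBLP:journals/jacm/MundhenkGLA00}; alternatively one can use \Cref{prop:memoryless_IOMDP}.
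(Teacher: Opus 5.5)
Your proof is correct, but it takes a different route from the paper's.

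\textbf{The paper's route.} The paper never derandomizes the FSC directly. It passes to the product POMDP of \Cref{lem:fsc_product_pomdp}, with states $\{s_\bot\}\cup S\times Q$, actions $A\times Q$, observations $Z\times Q$ and horizon $H+1$, in which time-dependent FSCs correspond to time-dependent Markovian policies. It then invokes the known optimality of deterministic time-dependent Markovian policies \cite{DBLP:conf/nips/BagnellKNS03} and translates a deterministic Markovian policy back into a deterministic FSC. $\NP$ membership comes from guessing that Markovian policy and evaluating it with \Cref{lem:dtd_pol_eval_rep_and_time}.

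\textbf{Your route.} You apply the Kuhn-style pre-sampling directly to the controller. This works because each index $(t,q,z)$ or $(t,q,z,a)$ is consulted at most once per run, since $t$ is part of the index; the same mechanism underlies the cited Markovian result.

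\textbf{What each buys.} Your version is more self-contained and avoids both the product construction and the external citation. It also yields a slightly stronger, model-independent fact. Every randomized time-dependent FSC induces the same path distribution as a fixed mixture of deterministic FSCs on the same node set, simultaneously in all POMDPs with the same $Z$, $A$, $H$. The converse is what fails, and that gap is what the paper exploits in \Cref{sec_jumps}. The paper's route buys modularity and reuses lemmas it needs elsewhere, in particular the explicit bit-length bound of \Cref{lem:dtd_pol_eval_rep_and_time}.

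\textbf{A routine detail.} Your claim that forward propagation stays polynomial ``since each step multiplies by fixed input entries'' should be backed by the common-denominator argument of that lemma. Put all input rationals over one denominator $C$; then the time-$t$ probabilities have denominator $C^{2t+1}$. This is routine, not a gap.

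Your hardness argument via $|Q|=1$ coincides with the paper's.
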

\begin{proofsketch}
    We can construct a product POMDP $\calM'$ with state space $S \times Q$~\cite{junges2018finite}.
    Time-dependent FSCs in the original POMDP then correspond to time-dependent Markovian policies in this product, and \emph{deterministic} time-dependent Markovian policies are already optimal \cite{DBLP:conf/nips/BagnellKNS03}. 
    Thus, we can simply guess and evaluate such policies in $\calM'$ to decide the FSC value problem.
    $\NP$-hardness follows from the time-dependent Markovian case, which we can encode by restricting $|Q|$ to be one.
\end{proofsketch}

\section{Upper Bounds via Hidden Mixtures}
\label{sec:complexity_stays_prior_universal}
The complexity results for subclasses of POMDPs naturally raise the question of whether they extend to the multi-environment setting.
We next provide a general method for characterizing when this extension is possible by characterizing subclass behavior under hidden mixtures.
Full proofs reside in 
\Cref{sec:proofs_complexity_stays_prior_universal}.

Many positive POMDP subclass complexity results follow due to optimal policy structure.
Indeed, if one can guess a policy quickly, then a natural $\NP$ algorithm exists for determining the value problem.
BlindMDPs and DetPOMDPs exhibit this structure. 
Indeed, any deterministic policy in a BlindMDP is simply an action sequence, while optimal policies in DetPOMDPs have concise tree representations \cite{littman1996algorithms}.

We show that BlindMDPs and DetPOMDPs share another favorable property: they are \emph{closed under hidden mixtures}, and this property lets us transfer complexity results to ME-POMDPs.
Informally, a subclass $\calC$ is closed under hidden mixtures if allowing the adversary to randomly choose one of $n$ POMDPs from $\calC$ yields a new POMDP in $\calC$.
As we show in this section, ME-POMDPs built from subclasses that (1) satisfy this hidden-mixture condition, and (2) admit structured optimal policies, also admit efficient algorithms.

We first abstract the notion of a policy class.
A policy class defines a set of policies for each POMDP depending on its observation, state, and action space and horizon.
Let $\calS, \calA,$ and $\calZ$ be sets encoding all possible state spaces, action spaces, and observation spaces.\footnote{We remark that while the definition of the set of all state sets seems unrigorous, we give a more formal justification in 
Appendix~\ref{sec:policy_class_soas}.}
\begin{definition}[Policy class]
    A policy class is a family of policy sets $\Pi_{\calC} = (\Pi_\calC^{S,Z,A,H})_{S \in \calS, A \in \calA, Z \in \calZ, H \in \mathbb{N}}$ such that
    \begin{equation}
        \Pi_\calC^{S,Z,A,H} \subset \Pi^{Z,A,H} \quad \forall Z \in \calZ, A \in \calA, H \in \mathbb{N}.
    \end{equation}
\end{definition}
While state-space-dependence may seem unnecessary, it handles optimal policies for DetPOMDPs, whose representation depends on $|S|$.
We next need to tie optimal policy classes for POMDP subclasses to their multi-environment counterparts.
We bridge this gap by introducing \emph{hidden-mixture} POMDPs.

A \emph{hidden mixture of a finite set of POMDPs} is a POMDP in which the environment is randomly drawn before the agent executes their policy.
Fix a set $(\calM_i)_{i\in[n]}$ of POMDPs and a distribution $\mathbf{p} \in \dist{[n]}$.
In the resulting hidden mixture POMDP, the adversary uses $\mathbf{p}$ to draw an environment, and the dynamics unfold according to environment $i$. 
We use mixtures to define new POMDP subclasses.
\begin{definition}[Class of hidden mixture POMDPs]
    Given a subclass $\calC$ of POMDPs, $\mathsf{Mix}(\calC)$ is the set of POMDPs formed by mixing over a finite number of POMDPs from $\calC$.
\end{definition}
Hidden mixture POMDPs are structurally distinct from ME-POMDPs, as they assume a distribution over environments and, in the terminology of \cite{DBLP:conf/icalp/BordaisR26}, have \emph{prior semantics}.
Hidden mixture POMDPs nonetheless serve as a powerful tool for reasoning about ME-POMDP complexity, as our next result shows.

The optimal policy in a ME-POMDP built from a subclass $\calC$ randomizes over a small set of policies that are optimal in $\mathsf{Mix}(\calC)$.
\begin{restatable}[]{proposition}{propPriorUniversal}\label{prop:prior_universal}
    Fix a class $\calC$ of POMDPs, and suppose $\Pi_\calC$ is a policy class that contains a deterministic optimal policy for every POMDP in $\mathsf{Mix}(\calC)$.
    If $(\calM_i)_{i \in [n]}$ is a ME-POMDP with state, action and observation sets $S,A,Z$ such that each POMDP lies in $\calC$, then there exists an optimal policy that mixes over $n$ elements of $\Pi_{\calC}^{(S\times [n]), Z,A , H}$.
\end{restatable}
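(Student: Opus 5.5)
The plan is to view the ME-POMDP value problem as a finite zero-sum game between the agent, who picks a policy, and the adversary, who picks an environment index $i\in[n]$. We then apply a minimax argument over mixed strategies of both players, and read off the structure of the agent's optimal strategy from the adversary's best response.

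\textbf{Step 1: the hidden mixture.} Fix $\mathbf{p}\in\dist{[n]}$. Let $\calM_{\mathbf{p}}$ be the hidden-mixture POMDP with state space $S\times[n]$, initial distribution $\mathbf{p}(i)\,b_i(s)$ on $(s,i)$, and dynamics $T_i$, $O_i$, $R_i$ acting on the first component while the index $i$ is kept fixed. By construction $\calM_{\mathbf{p}}\in\mathsf{Mix}(\calC)$. Every policy $\pi\in\Pi^{Z,A,H}$ satisfies $V^\pi_{\calM_{\mathbf{p}}}=\sum_i \mathbf{p}(i)V^\pi_{\calM_i}$, because the agent only sees $Z$ and the index is hidden. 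By hypothesis, $\Pi_\calC^{(S\times[n]),Z,A,H}$ contains a deterministic policy $\pi_{\mathbf{p}}$ that is optimal for $\calM_{\mathbf{p}}$ among all policies.

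\textbf{Step 2: minimax.} Deterministic policies form a finite set $D$, since the horizon and alphabets are finite. Consider the matrix game with payoff $V^\sigma_{\calM_i}$ for $\sigma\in D$ and $i\in[n]$. Mixed agent strategies are mixed policies, which by Kuhn's theorem are equivalent to behavioral policies. Conversely, every behavioral policy induces the same path distributions as some mixed policy. Hence $\max_{\pi}\min_i V^\pi_{\calM_i}$ equals the mixed-strategy value $v^*$ of this finite matrix game, and von Neumann's theorem gives $v^*=\min_{\mathbf{p}}\max_\sigma \sum_i\mathbf{p}(i)V^\sigma_{\calM_i}$. Let $\mathbf{p}^*$ attain the minimum.

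\textbf{Step 3: support reduction.} This is the main obstacle. We need an optimal agent mixture whose support has size at most $n$ and lies in $\Pi_\calC$. The approach is to restrict the game to the agent's strategies in $D_\calC := D\cap\Pi_\calC^{(S\times[n]),Z,A,H}$, which is again finite. For every $\mathbf{p}$, Step 1 gives $\max_{\sigma\in D_\calC}\sum_i\mathbf{p}(i)V^\sigma_{\calM_i}=\max_{\sigma\in D}\sum_i\mathbf{p}(i)V^\sigma_{\calM_i}$, because $\pi_{\mathbf{p}}\in D_\calC$ is optimal over all policies. Applying minimax to both the restricted and the unrestricted game therefore gives the same value $v^*$. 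So some mixture over $D_\calC$ guarantees $v^*$ against every $i$, and such a mixture is optimal.

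To shrink its support, note that the optimal agent mixtures of the restricted game form a polytope: the set of $\mathbf{q}\in\dist{D_\calC}$ with $\sum_\sigma \mathbf{q}(\sigma)V^\sigma_{\calM_i}\ge v^*$ for all $i$. Take a vertex of this polytope. At a vertex, $|D_\calC|$ linearly independent constraints are tight; these come from the $n$ environment inequalities and the normalization equation, together with constraints of the form $\mathbf{q}(\sigma)=0$. Hence at most $n+1$ coordinates are nonzero, which falls short of the claimed $n$.

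We would try two ways to get down to $n$. The first is the standard Shapley--Snow argument: pick a basic optimal solution of the LP that maximizes $v$ over the pair $(\mathbf{q},v)$, where $v$ is a free variable. This LP has $n+1$ rows besides nonnegativity, and the free variable $v$ occupies one basic slot, leaving at most $n$ positive $\mathbf{q}$ entries. The second is a Carathéodory argument applied to the payoff vectors $(V^\sigma_{\calM_i})_i\in\RR^n$ on the boundary face at level $v^*$, which also yields $n$ points.

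Converting the resulting mixture back into a single (randomized) policy that mixes over these $n$ elements of $\Pi_\calC^{(S\times[n]),Z,A,H}$ completes the argument.
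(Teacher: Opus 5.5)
Your proposal is correct and follows essentially the same route as the paper. You apply minimax over the finite set of deterministic policies, replace the inner maximum by $\Pi_\calC^{S\times[n],Z,A,H}$ using optimality in the hidden-mixture POMDP on $S\times[n]$, apply minimax again, and then count at a vertex of the LP in $(\mathbf{q},v)$ with $v$ free. That count is exactly the paper's rank argument, giving at most $n$ nonzero weights, and the paper also names your Carath\'eodory alternative.

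Your first count of $n+1$ on the fixed-level polytope is an unnecessary detour. The basic optimal solution you invoke needs the feasible region to have an extreme point, which the paper checks explicitly; it holds because the region contains no line, as $\mathbf{q}$ lies in a simplex and $v$ is bounded above.
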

\begin{proofsketch}
    Let $(\calM_i)_{i\in [n]}$ be a ME-POMDP with POMDPs in $\calC$ where each POMDP has state, action and observation spaces $S$, $A$, and $Z$. The value satisfies the following chain of equalities.
    \begin{multline*}
        \max_{\pi \in \Pi} \min_{i \in [n]} V_{\calM_i}^\pi = \min_{\mathbf{p} \in \dist{[n]}} \max_{\pi \in \polDet^{Z,A, H}} \sum_{i \in [n]} \vecat{\mathbf{p}}{i} V^\pi_{\calM_i}
        \\ = \min_{\mathbf{p} \in \dist{[n]}} \max_{\pi \in \Pi_{\calC}^{S\times [n], Z, A, H}} \sum_{i \in [n]} \vecat{\mathbf{p}}{i} V^\pi_{\calM_i}
        = \max_{\pi \in \mathsf{Mix}\left( \Pi_{\calC}^{S\times [n], Z,A, H}\right)} \min_{i \in [n]} V_{\calM_i}^\pi.
    \end{multline*}
    The first and last equalities apply the min-max theorem for zero-sum matrix games~\cite{basar1998dynamic}, which holds because $\polDet$ is finite and the number of environments is finite. 
    The middle inequality uses the optimality of policies in $\Pi_\calC$ for mixture POMDPs, along with the fact that each mixture POMDP has state space $S \times [n]$.
    To deduce that the optimal policy mixes over at most $n$ policies in $\Pi_\calC$, we can then either use a linear programming result or appeal to Carath\'eodory's theorem, as in \cite{DBLP:journals/corr/abs-2605-07537}.
\end{proofsketch}

Using \Cref{prop:prior_universal}, we can show that the value problem for both deterministic ME-POMDPs and ME-BlindMDPs lies in $\NP$.
Critically, these classes satisfy $\calC = \mathsf{Mix}(\calC)$. That is, they are \emph{closed} under mixtures. 
Thus, by \Cref{prop:prior_universal}, we can solve the \emph{ME}-POMDP by guessing $n$ policies for the original subclass and verifying with a linear program (LP) that we can mix over these policies.

\begin{restatable}[]{corollary}{corDetMEPOMDP}\label{cor:det_MEPOMDP}
    The value problem for ME-DetPOMDPs is in $\NP$.
\end{restatable}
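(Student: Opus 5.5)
The plan is to instantiate \Cref{prop:prior_universal} with $\calC$ the class of DetPOMDPs. Three things are needed: closure of this class under hidden mixtures, a policy class of polynomial-size deterministic policies that is optimal for DetPOMDPs, and a polynomial-time verification step.

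\emph{Closure under mixtures.} Given DetPOMDPs $(\calM_i)_{i\in[n]}$ and $\mathbf{p}\in\dist{[n]}$, I would build the hidden mixture on state space $S\times[n]$. It has transitions $T((s,i),a)=\delta_{(T_i(s,a),i)}$, observations $O((s,i))=O_i(s)$, rewards $R((s,i),a)=R_i(s,a)$, and initial distribution $b(s,i)=\mathbf{p}(i)\,b_i(s)$.

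Transitions and observations remain deterministic, and only the initial distribution is stochastic. This is allowed in the DetPOMDP setting of \cite{littman1996algorithms}, where the initial belief is arbitrary. Hence $\mathsf{Mix}(\calC)=\calC$.

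\emph{Policy class.} Following \cite{littman1996algorithms}, in a DetPOMDP every reachable belief is the restriction of the initial belief to a subset of the support, conditioned on the history. Each new observation either leaves this subset unchanged or partitions it. Consequently, along any history the belief support shrinks at most $|S|-1$ times, and the set of reachable beliefs at time $t$ forms a partition of $\supp(b)$.

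I would therefore take $\Pi_\calC^{S,Z,A,H}$ to be the deterministic policies representable as a tree of size polynomial in $|S|\cdot H$. Such a tree branches only where an observation splits the belief support, giving at most $|S|$ branches per layer. This class contains an optimal deterministic policy, namely the belief-based optimal policy, since belief-based optimal policies exist and there are at most $|S|$ distinct beliefs per time step. Its representation depends on $|S|$, which is why the policy class is state-space dependent; applied to $S\times[n]$ it has size polynomial in $n|S|H$.

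\emph{NP algorithm.} By \Cref{prop:prior_universal}, there is an optimal policy mixing over $n$ policies $\pi_1,\dots,\pi_n$ from $\Pi_\calC^{S\times[n],Z,A,H}$. The certificate is these $n$ compact trees. The verifier first computes $V^{\pi_j}_{\calM_i}$ for all $i,j$ by forward propagation, which takes polynomial time because each $\calM_i$ is deterministic and the tree tells the agent the action at each reachable history. It then solves the LP
\[
\max_{\mathbf{q}\in\dist{[n]},\,v} v \quad\text{s.t.}\quad \sum_j \mathbf{q}(j)V^{\pi_j}_{\calM_i}\ge v\ \ \forall i,
\]
and accepts iff the optimum is at least $B$. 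Since the mixed policy converts to an equivalent behavioral policy \cite{kuhn1953extensive}, soundness holds, and completeness follows from \Cref{prop:prior_universal}.

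\emph{Main obstacle.} The step needing most care is the policy class. A naive history-dependent tree is exponential, so I must argue carefully that in a DetPOMDP on $S\times[n]$ the number of distinct observation histories reachable with positive probability under a fixed deterministic policy is at most $n|S|$ per time step. This holds because each reachable history corresponds to a nonempty cell of a partition of $\supp(b)$. With that bound, a policy only needs to be specified on reachable histories, giving a certificate of size $O(n|S|H)$.
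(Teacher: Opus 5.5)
Your proposal is correct and follows essentially the same route as the paper. Both arguments close DetPOMDPs under hidden mixtures on $S\times[n]$ and use a compact optimal policy class with at most $n|S|$ nodes per stage. The paper obtains that class by citing Lemma~6.3 of \cite{littman1996algorithms}, phrased as policy-DAGs, whereas you re-derive it from the partition-of-support argument. Both then guess $n$ such policies, evaluate each by simulating the unique trajectory from every initial state, and decide the mixture with an LP.

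One minor point to tidy: make your trees total, for example with a default action or child for unreached observation branches. The paper's DAG formalism gets this for free through the total maps $\mathsf{NodeUpd}_t$.
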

\begin{proofsketch}
    When we mix over a set of DetPOMDPs $(\calM_i)_{i \in [n]}$, we get another DetPOMDP $\tilde{\calM}$ with state space $S \times [n]$. 
    By \cite{littman1996algorithms}, this POMDP $\tilde{\calM}$ will have an optimal policy with representation-size polynomial in $n|S|$, $|A|$, $|Z|$ and $H$. 
    These polynomially sized policies form the set $\Pi_\calC$ of policies for the class $\calC$ of deterministic POMDPs.
    Thus, to determine if a ME-POMDP with deterministic dynamics has a policy with value at least $B$, we can guess $n$ policies from $\Pi_{\calC}$ and solve an LP, all in polynomial time. 
\end{proofsketch}

\begin{restatable}[]{corollary}{corBlindMEPOMDP}
    \label{cor:blind_mepompd_in_NP}
    The value problem for ME-BlindMDPs is in $\NP$.
\end{restatable}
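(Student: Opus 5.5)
We follow the same template as \Cref{cor:det_MEPOMDP} and apply \Cref{prop:prior_universal} with $\calC$ the class of BlindMDPs. The policy class $\Pi_\calC$ is the set of deterministic open-loop policies, that is, action sequences $(a_0,\ldots,a_{H-1}) \in A^H$. Since the horizon is given in unary, each such sequence has polynomial size.

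\textbf{Closure.} First we check that BlindMDPs are closed under hidden mixtures. Given BlindMDPs $(\calM_i)_{i\in[n]}$ over $S$, $A$, $Z=\{\bot\}$ and a distribution $\mathbf{p}\in\dist{[n]}$, we build the POMDP $\tilde\calM$ as follows.
\begin{itemize}
    \item Its state space is $S\times[n]$ and its initial distribution is $\tilde b(s,i)=\vecat{\mathbf{p}}{i}\, b_i(s)$.
    \item Its transitions are $\tilde T((s,i),a,(s',i)) = T_i(s,a,s')$, and its rewards are $\tilde R((s,i),a)=R_i(s,a)$.
    \item Every state emits $\bot$.
\end{itemize}
This $\tilde\calM$ is again a BlindMDP, and for every policy $\pi$ we have $V^\pi_{\tilde\calM} = \sum_i \vecat{\mathbf{p}}{i} V^\pi_{\calM_i}$.

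\textbf{Optimal policies in the mixture.} Next we argue that every BlindMDP has a deterministic optimal policy that is an action sequence. A blind agent's observation history is determined by its own past actions. So a deterministic policy is exactly a map from time steps to actions, that is, an element of $A^H$. The finite-horizon POMDP always admits a deterministic optimal policy (for instance via backward induction on beliefs), so an optimal action sequence exists. Hence $\Pi_\calC^{S\times[n],Z,A,H}=A^H$ satisfies the hypothesis of \Cref{prop:prior_universal}. It follows that some optimal ME-BlindMDP policy mixes over at most $n$ action sequences.

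\textbf{NP algorithm.} The certificate consists of $n$ action sequences $\sigma_1,\ldots,\sigma_n\in A^H$. To verify it:
\begin{enumerate}
    \item Compute $V^{\sigma_k}_{\calM_i}$ for every pair $k,i$ by propagating the state distribution forward for $H$ steps. These are polynomially many rational matrix-vector products, so this takes polynomial time with polynomial bit size.
    \item Solve the LP that maximizes $v$ subject to $\sum_k q_k V^{\sigma_k}_{\calM_i}\ge v$ for all $i$, $\sum_k q_k=1$ and $q\ge 0$.
    \item Accept iff the optimum satisfies $v\ge B$.
\end{enumerate}
The LP step runs in polynomial time.

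\textbf{Correctness.} A mixed policy has an equivalent behavioral policy, so acceptance implies that a policy in $\Pi$ with value at least $B$ exists. Conversely, \Cref{prop:prior_universal} guarantees that a good certificate exists whenever such a policy exists. The only step needing care is the existence of deterministic optimal policies in the mixture, which is standard. I do not expect any genuine obstacle.
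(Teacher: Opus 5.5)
Your proposal is correct and matches the paper's proof. The paper likewise uses closure of BlindMDPs under hidden mixtures and the optimality of action sequences to invoke \Cref{prop:prior_universal}. It then guesses $n$ action sequences, evaluates them in polynomial time with polynomial bit size (citing \Cref{lem:dtd_pol_eval_rep_and_time}), and solves an LP to check whether some mixture reaches value $B$.
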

\begin{proofsketch}
    Any optimal policy for a BlindMDP comprises a sequence of actions.
    Also, mixing over a set of BlindMDPs $(\calM_i)_{i \in [n]}$ results in another BlindMDP $\tilde{\calM}$.
    Thus, by the same argument as \Cref{cor:det_MEPOMDP}, we can guess and check a randomized policy in polynomial time.
\end{proofsketch}

The above corollaries settle the complexity of all deterministic ME-POMDP subclasses we consider, and we additionally obtain our first instance of an $\NP$-complete ME-POMDP problem with stochastic dynamics.
However, the above mixture reasoning fails for subclasses such as SOMDPs, as they are not closed under mixtures.
Similarly, \Cref{prop:prior_universal} does not apply to restricted policy classes, as a mixture over deterministic time-dependent Markovian policies may not be a time-dependent Markovian policy.

\section{Multi-Observation POMDPs}

\label{sec:jumps_subclasses}

We next present hardness results for subclasses where the mixture reasoning of \cref{sec:complexity_stays_prior_universal} fails.
We identify subclasses that reside in $\Ptime$ or $\NP$ in the single-environment case, but become $\PSPACE$-complete when we examine the relevant \emph{multi-observation} problem.
In identifying these subclasses, we expand the range of $\PSPACE$-hard multi-environment problems beyond known examples such as ME-MDPs~\cite{le2007robust}.
Full proofs reside in 
\Cref{sec:proofs_jumps_subclasses}.

First, we observe that multi-observation MDPs (MO-MDPs) are $\PSPACE$-hard despite the significant structure of their observation functions.
A MO-MDP comprises a set of MDPs with the same dynamics but potentially different state labelings.
We demonstrate that shuffling this state labeling is sufficient to simulate partial observability and thus recover $\PSPACE$-hardness.
\begin{restatable}[]{proposition}{propMOMDPsBase}\label{prop:MOMDPs_base}
    The value problem for MO-MDPs is $\PSPACE$-hard.
\end{restatable}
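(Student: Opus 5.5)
We reduce from QSAT, or more conveniently from the $\PSPACE$-hard value problem for general POMDPs of \cite{DBLP:journals/mor/PapadimitriouT87}, and show how an MO-MDP with permuted state labelings can simulate partial observability. Given a POMDP $\calM=(S,A,Z,T,O,R,b,H)$, we first normalize it in polynomial time so that observations are deterministic. We do this by folding the observation into the state, taking states $(s,z)$ with $T'((s,z),a,(s',z'))=T(s,a,s')O(s',z')$. Each state then emits a single observation label $\ell(s)\in Z$. We may further assume the instance is the one from \cite{DBLP:journals/jacm/MundhenkGLA00}: rewards are non-positive, and the QSAT formula is true iff the optimal value is $0$.

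\textbf{Construction.} The MO-MDP keeps $T$, $R$, and $b$ and introduces $n$ environments. Each environment labels states injectively, as required for an MDP. The goal is that the agent cannot reliably tell apart states that share an $\ell$-label in $\calM$. For environment $i$, the labeling $\top^{(i)}$ maps a state $s$ with $\ell(s)=z$ to the label $(z,\sigma_i(s))$, where $\sigma_i$ is a bijection from the class $\ell^{-1}(z)$ onto $\{1,\dots,|\ell^{-1}(z)|\}$. The family $(\sigma_i)$ is chosen so that, across environments, every state in a class can carry every index; cyclic shifts suffice, giving $n=\max_z|\ell^{-1}(z)|$ environments. Label sets coincide across environments, so this is a legitimate MO-MDP.

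\textbf{Correctness.} For the easy direction, any POMDP policy for $\calM$ uses only the $z$-components and therefore attains the same value in every environment. Hence the MO-MDP value is at least the POMDP value. The converse is the hard part. In a single environment the agent sees the index and may exploit it, so we must stop it from decoding states from the pattern of indices. The issue is that the adversary fixes one environment for the whole run, and consistency of the labeling over time could leak information.

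\textbf{Main obstacle.} My first attempt at the converse is to make the indices carry no information in the Mundhenk et al.\ gadget. In that reduction, the hidden information is which clause is being checked. We could duplicate the clause-tracking states so that each time layer is relabeled with a fresh cyclic shift $\sigma_i$, chosen so that for every clause $c$ some environment makes the index sequence along $c$'s path identical to that along a fixed reference path. If the agent departs from the "uniform" behavior that a single variable assignment would induce, some clause path is then misaligned in some environment. There, a wrong assignment yields negative reward, so the worst case is negative exactly when the formula is false. Concretely, for each clause $c$ the environment $i=c$ labels the states of clause-path $c$ exactly as the states of clause-path $1$ are labeled in environment $1$. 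Every environment then looks identical from the agent's view along "its" clause, while the adversary retains the choice of clause.

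A cleaner fallback, if aligning the labelings proves delicate, is to reduce directly from QSAT. The adversary's environment choice then encodes the clause being checked, mirroring the $\PSPACE$-hardness of ME-MDPs \cite{le2007robust}, while the transitions are shared and the differences are pushed entirely into the labeling. Verifying that this relabeling preserves the "zero reward iff true" equivalence is the step I expect to require the most care.
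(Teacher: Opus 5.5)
Your concrete construction matches the paper's. It keeps the Papadimitriou--Tsitsiklis clause branches with shared transitions, draws the initial clause uniformly, puts reward $-1$ at $s_{\mathsf{fail}}$, and lets environment $k$ give every state of branch $l$ the labels of branch $l+k \bmod m$. No per-layer duplication is needed; one consistent shift per environment suffices. Your easy direction is also fine. The proposal still does not prove the statement, for two reasons.

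First, the generic reduction from an arbitrary POMDP that you lead with is unsound, not merely delicate, and must be dropped. A labeling is fixed for the whole run, so any state whose identity the agent already knows reveals the shift. For example, let the deterministic initial state $u$ and two coin-outcome states $s_a,s_b$ share one observation. Your cyclic shifts give them indices $i,i+1,i+2 \pmod 3$ in environment $i$. The index of $u$ reveals $i$, and the agent then reads off the coin, so the MO-MDP value strictly exceeds the POMDP value. The clause gadget escapes this only because all states of branch $l$ carry the same clause tag and transitions never leave the branch. The labels therefore reveal nothing about the clause beyond its tag, and the tag is exactly what the shifts scramble.

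Second, you leave open the converse for the gadget, which is the only non-trivial direction. Your heuristic (the agent must follow a ``uniform'' single assignment, else some clause path is ``misaligned'') is not the right mechanism. The agent may act completely differently on different tags, and a QSAT witness is a strategy that depends on the universal bits. The missing step is an extraction argument, for $n$ variables and $m$ clauses:
\begin{enumerate}
\item Reduce to a deterministic policy. All rewards are non-positive and the threshold is $0$, so every positive-probability path of a policy with value $\geq 0$ has reward $0$. Fixing at each history one action the policy plays with positive probability therefore gives a deterministic policy with value $0$ in every environment.
\item Define $\sigma$ from this policy's actions at existential decision states, along histories consisting only of clause-$1$ labels, reading $T$/$F$ labels as bits.
\item Suppose $\sigma$ failed some clause $c$ against some universal bits. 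Take the environment in which branch $c$ carries the clause-$1$ labels; there the agent plays exactly $\sigma$ on branch $c$. The falsifying path has probability $\frac{1}{m}2^{-n/2}>0$ and ends in $s_{\mathsf{fail}}$, so that environment has negative value.
\end{enumerate}
Hence $\sigma$ is winning. With this step added, and the generic reduction removed, your proof is the paper's.
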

\begin{proofsketch}
    In the classical proof of $\PSPACE$-hardness for POMDPs~\cite{DBLP:journals/mor/PapadimitriouT87}, 
    one transforms a QSAT instance to a POMDP in which nature randomly selects one of $m$ clauses in the conjunctive normal form (CNF) formula at the initial time step, and then uses randomness to assign values to universal variables \cite{DBLP:journals/mor/PapadimitriouT87}. 
    The agent cannot observe the chosen clause and must therefore use a clause-independent policy to choose existential variables so the formula remains true for any universal variable assignment.
    We simulate this partial observability in MO-MDPs using multiple state labelings.
    We first design a reference environment with the same transition structure as the above POMDP, where states corresponding to clause $l$ emit observations in some set $Z_l$, for $l \in [m]$.
    We then define $m - 1$ new environments, where, in environment $k$, states corresponding to clause $l$ emit observations in the set $Z_{l + k}$.
    We can then extract a clause-independent assignment strategy by examining the agents' behavior on observations in the set $Z_1$, since for any clause $l$, there exists an environment in which the corresponding states emit observations in $Z_1$.
\end{proofsketch}
This shuffling trick fails if we assume the observation function's support is fixed between environments. 
However, as we next demonstrate, fixed-support models can still exhibit $\PSPACE$-hardness.

Even with fixed supports, multi-observation semi-observable MDPs remain $\PSPACE$-complete.
We again prove this result by adapting the proof of \cite{DBLP:journals/mor/PapadimitriouT87}, by letting the agent and the environment alternately choose binary variables in a CNF formula.
We design an environment for each clause, and we design environment $i$ so that satisfying clause $i$ reveals information that helps the agent reach an accepting state.

\begin{restatable}[]{proposition}{propMOSOMDPs}\label{prop:MOSOMDPs}
    The value problem for MO-SOMDPs with fixed support is $\PSPACE$-hard.
\end{restatable}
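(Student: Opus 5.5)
We reduce from QSAT, following the game-based construction of \cite{DBLP:journals/mor/PapadimitriouT87} in the form hinted at above. Let $\Phi = \exists x_1 \forall y_1 \cdots \exists x_k \forall y_k.\ \bigwedge_{l=1}^m C_l$ be a QSAT instance in CNF. All environments share the same transitions, rewards, and initial distribution. There is a chain of ``variable'' stages. At an existential stage the agent picks a value of $x_j$ by its action. At a universal stage nature picks $y_j$ uniformly at random, and the transition records this choice in the state.

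Each state is a pair of a position in the chain and a small amount of recorded information. In environment $l$, this information is whether clause $C_l$ is already satisfied by the literals chosen so far. The state space is the disjoint union over $l$ of such tracking copies, so that the transitions stay environment-independent. Nature uniformly selects a copy $l$ at time $0$, just as in the POMDP proof. At the end, reward $0$ is given if the tracked clause is satisfied, and a negative reward otherwise; we set $B=0$.

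Fixed support is handled by the likelihood function $\eta$. Every state emits both $\top_s$ and $\bot$ with positive probability in every environment, but in environment $i$ the states of copy $i$ reveal with probability $1-\epsilon$, while the states of the other copies reveal with only a small probability $\epsilon$. Thus supports coincide, and only the revelation probabilities differ.

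\textbf{Completeness.} If $\Phi$ is true, the agent ignores any revelation beyond the universal values. The universal values must be observable regardless, so we let nature's choice of $y_j$ be encoded in a state whose revelation is informative in every copy. The agent then plays the QBF winning strategy, which satisfies every clause. Hence it obtains reward $0$ in each environment, so the worst-case value is $0 \ge B$.

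\textbf{Soundness.} This is the main obstacle. Suppose $\Phi$ is false, and let $\pi$ be any policy. Consider the event that no state after the initial clause choice is revealed except for the universal values. This event has positive probability, bounded below by $\epsilon^{\mathrm{poly}}$ uniformly in the clause index. On this event the agent's actions depend only on the universal history, so they define a strategy $\sigma$ for the QBF game. Because $\Phi$ is false, some universal assignment, reached with probability $2^{-k}$, violates some clause $C_{l^\ast}$. Then in environment $l^\ast$ the copy $l^\ast$ is drawn with probability $1/m$, and the accumulated penalty is strictly negative. So the value of $\pi$ in environment $l^\ast$ is strictly below $0 = B$.

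The delicate part is making sure that high revelation of the relevant copy in environment $l^\ast$ does not let the agent escape. Revelation in copy $l^\ast$ tells the agent which clause is tracked, so it could adapt. To rule this out, I would make revelation pointless in copy $i$: revealed states there only expose the clause-satisfaction bit, and that bit is already determined by the agent's own past actions. The intended reading of the construction hint is different, namely that satisfying clause $i$ \emph{reveals} information needed to reach the accepting state. Under that reading, I would restructure the proof as follows. Reaching reward requires a final ``guess'' action that matches a hidden bit. The bit is revealed with high probability in environment $i$ exactly when $C_i$ holds, so the agent wins every environment iff all clauses are satisfied. I would try this second design first, since it makes soundness a direct counting argument over the unrevealed event.

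Membership in $\PSPACE$ follows from the general ME-POMDP upper bound \cite{DBLP:journals/corr/abs-2605-07537}. The construction is polynomial in $|\Phi|$: $\epsilon$ is a constant such as $\nicefrac{1}{2}$, and the probabilities have polynomial bit size.
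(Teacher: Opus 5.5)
Your primary construction has a genuine gap, and it cannot be repaired within its own framework. In a SOMDP the observation $\top_s$ reveals the \emph{whole} state. So any revelation of a state that carries the copy index $l$ tells the agent which clause is being tracked. The universal values can only reach the agent through such revelations, so the agent learns $l$ at the first universal stage and can then play to satisfy that single clause. Your proposed patch, where revealed states expose only the clause-satisfaction bit, is not expressible with SOMDP observations.

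More fundamentally, a reduction with non-positive rewards and threshold $B=0$ cannot give $\PSPACE$-hardness here (unless $\NP=\PSPACE$). Because $T$, $R$, $b$ are shared and $\supp(O_i(s))$ does not depend on $i$, every policy induces exactly the same set of positive-probability paths in every environment. Hence ``value $\geq 0$ in all environments'' is equivalent to ``value $\geq 0$ in $\calM_1$''. That is a single-SOMDP question, which lies in $\NP$ by Proposition~\ref{prop:SOMDPs_base}. Your soundness argument only uses the fact that some bad path has positive probability, and for exactly this reason it never uses the multi-environment structure. The reduction has to exploit the \emph{size} of the revelation probabilities, not just their supports.

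Your second design is the right idea and is essentially the paper's, but the parts that make it work are missing.
\begin{itemize}
\item \textbf{Per-variable guessing stages.} Since transitions are shared, the revelation probability in environment $i$ can depend on the clause only through the current state. A single final guess would therefore need the state to record which clauses are satisfied, which is not polynomial. The paper instead keeps the state space clause-free, so revelations leak nothing about the environment, and inserts a coin-guessing stage after every variable assignment. The coin states following $x_j=1$ (resp.\ $x_j=0$) are revealed with probability $1-\epsilon$ in environment $i$ if $x_j$ (resp.\ $\neg x_j$) occurs in $C_i$, and with probability $\nicefrac{1}{2}$ otherwise. A correct guess leads to $s_{\mathsf{acc}}$. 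A wrong guess moves on to the next variable, and after the last variable to $s_{\mathsf{fail}}$ with reward $-1$. Decision and assignment states are revealed with probability $1$ in every environment, which fixed support allows.
\item \textbf{Threshold.} The revelation probability $1-\epsilon$ must stay below $1$, so even a winning strategy loses with probability up to $\epsilon$. The threshold is therefore $-\nu$ with $\epsilon=\nu=\frac{1}{2m}8^{-n}$, or alternatively a positive-reward gadget plus splitting as in Remark~\ref{rem:splitting_procedure}. It cannot be $B=0$ with constant $\epsilon$.
\item \textbf{Soundness.} You need a quantitative mixing argument rather than a single bad path. Write the policy as a mixture of deterministic observation-based policies. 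Map each one to the QBF strategy it plays on the branch where every coin state emits $\bot$. A policy whose strategy fails $C_i$ reaches $s_{\mathsf{fail}}$ in environment $i$ with probability at least $8^{-n}$. Hence such policies carry total weight at most $\nicefrac{1}{2m}$ per clause, and at least half the weight lies on winning strategies.
\end{itemize}
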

\begin{proofsketch}
    We design an environment corresponding to each clause in a CNF formula with $n$ variables.
    In particular, fix a clause $i$. 
    The POMDP for clause $i$ has $2n$ stages: odd stages correspond to variable assignment, and even stages correspond to a guessing game in which the agent can reach an accepting state.
    At variable assignment stages, either the environment or the agent chooses a variable assignment, as in \cite{DBLP:journals/mor/PapadimitriouT87}.
    At the guessing game stage, an unbiased coin is flipped, which the agent must identify to reach an accepting state.
    If the last variable assignment satisfied clause $i$, the environment reveals the state, and thus the coin value, with high probability.
    Conversely, if the last variable assignment did not set the clause to true, the state remains hidden with high probability.
    In this MO-POMDP, policies that eventually reveal the coin's value with high probability correspond to strategies that satisfy the given QSAT instance. Conversely, given a QSAT solution, we can build a POMDP policy that reveals the information with sufficiently high probability.
    This multi-environment problem is also a MO-SOMDP with fixed support. 
    At the variable assignment stages, we can reveal the true state in all environments.
    In the guessing game states, two observations are possible in all environments: the blind observation and the state's unique observation. 
\end{proofsketch}

Unlike SOMDPs, however, IOMDPs have sufficient structure to avoid $\PSPACE$-hardness. 
In an IOMDP, every state has an equal revelation probability.
This structure in turn gives MO-IOMDPs certain monotonicity properties \cite{DBLP:journals/tac/ChenL26},
which allow us to solve the MO-IOMDP by identifying and solving the worst environment, i.e., the environment with the lowest revelation probability.

\begin{restatable}[]{proposition}{propMOIOMDPs}\label{prop:IOMDPs_in_NP}
    The value problem for MO-IOMDPs with fixed support is in $\NP$.
\end{restatable}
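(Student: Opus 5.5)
The plan is to reduce the MO-IOMDP to a single IOMDP, namely the environment with the smallest revelation probability, and then invoke \Cref{prop:SOMDPs_base}. Since the environments share $T$, $R$ and $b$, environment $i$ is determined by its constant $\lambda_i$. I first use the fixed-support condition to split into cases. The support of $O_i(s)$ is $\{\bot\}$ if $\lambda_i=0$, $\{\top_s\}$ if $\lambda_i=1$, and $\{\top_s,\bot\}$ otherwise. Fixed support therefore forces one of two situations. Either all $\lambda_i$ are equal (all $0$ or all $1$), in which case the MO-IOMDP is a single IOMDP and \Cref{prop:SOMDPs_base} applies directly. Or all $\lambda_i$ lie in $(0,1)$. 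In the latter case let $i^\ast$ minimize $\lambda_i$ and write $\lambda_{\min}=\lambda_{i^\ast}$.

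Trivially, for every policy $\pi$ we have $\min_i V^\pi_{\calM_i}\le V^\pi_{\calM_{i^\ast}}$, so the ME value is at most the optimal value of $\calM_{i^\ast}$. The main step is the reverse inequality. I take $\pi^\ast$ to be the deterministic belief-based optimal policy of $\calM_{i^\ast}$ from the proof of \Cref{prop:SOMDPs_base}, which depends only on the last revealed state $s$, its revelation time $t_0$, and the current time $t$. I then show $V^{\pi^\ast}_{\calM_i}\ge V^{\pi^\ast}_{\calM_{i^\ast}}$ for every $i$. This is the monotonicity property of \cite{DBLP:journals/tac/ChenL26}, and I would reprove it via backward induction. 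Let $U_t(s)$ be the optimal value in $\calM_{i^\ast}$ from time $t$ when the state $s$ has just been revealed. Let $W_t(\beta)$ be the optimal $\calM_{i^\ast}$ continuation value from a blind belief $\beta$ at time $t$; this equals the continuation value of $\pi^\ast$, since $\pi^\ast$ is optimal. Two facts drive the induction.

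First, information can only help: $W_t(\beta)\le \sum_s \beta(s)U_t(s)$. Indeed, any continuation from $\beta$ is also available to an agent knowing $s$, so the revealed-state optimum dominates pointwise, and averaging over $\beta$ gives the bound.

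Second, in environment $i$ a blind step is followed by a revelation with probability $\lambda_i\ge\lambda_{\min}$. I decompose this as the $\lambda_{\min}$-revelation event plus an extra revelation event of probability $\lambda_i-\lambda_{\min}$. On the extra event, $\pi^\ast$ restarts from $U_t(s)$ where in $\calM_{i^\ast}$ it would have continued with the blind continuation. By the first fact, this substitution does not decrease value in expectation.

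Running this argument backward from $t=H$, I would prove the following claim by induction. For every $t$ and every state of the internal memory of $\pi^\ast$, the expected remaining reward of $\pi^\ast$ in $\calM_i$ is at least the corresponding quantity in $\calM_{i^\ast}$. The claim covers both the case where the memory state is a fresh revelation and the case where it is a blind belief. I expect the main obstacle to be bookkeeping here. In environment $i$, the conditional belief after a run of $\bot$ observations differs from the one in $\calM_{i^\ast}$, because a $\bot$ is less likely in environment $i$. One must nevertheless check that the induction hypothesis is applied to the correct belief. I would handle this by stating the induction in terms of the action sequence committed by $\pi^\ast$ since the last revelation, not in terms of the posterior. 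The state distribution over the \emph{unconditioned} sub-probability mass evolves by the same transition kernel $T$ in both environments. The only difference is how much mass is diverted into revelations, and the first fact shows that diverting more mass into revelations is beneficial.

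Given the two inequalities, the ME value equals the optimal value of the single IOMDP $\calM_{i^\ast}$. Moreover, $i^\ast$ can be found in polynomial time by comparing the rationals $\lambda_i$. The value problem therefore reduces to the value problem for an IOMDP, which is in $\NP$ by \Cref{prop:SOMDPs_base}. Concretely, the $\NP$ algorithm guesses the polynomial-size belief-based policy of $\calM_{i^\ast}$ and evaluates it against $B$ in polynomial time.
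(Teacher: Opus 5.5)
Your strategy matches the paper's: reduce to the environment with the smallest revelation probability, use the trivial upper bound, use monotonicity in $\lambda$ to show that environment's optimal policy is no worse elsewhere, and then apply \Cref{prop:SOMDPs_base}. There is one missing step, and it is exactly where the fixed-support hypothesis is needed.

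You assume that the revealing and blind symbols are the same elements of $Z$ in every environment (``environment $i$ is determined by its constant $\lambda_i$''). In a MO-POMDP, however, each environment is an IOMDP under its own identification of the shared set $Z$. A priori, environment $i$ uses its own $z_i^s$ and $\bot_i$. This is not cosmetic. MO-MDPs are MO-IOMDPs with $\lambda\equiv 1$ and permuted labels, and they are $\PSPACE$-hard by \Cref{prop:MOMDPs_base}. Tellingly, with common labels your induction works for any $\lambda_{\min}\in[0,1]$, so fixed support plays no real role in your argument.

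The repair is short, and it is what the paper does. Assume $|S|\ge 2$; the case $|S|=1$ is trivial.
\begin{itemize}
\item Matching supports force all $\lambda_i$ to be $0$, all to be $1$, or all to lie in $(0,1)$. Supports of size $1$ and $2$ cannot match, and $\{\bot_i\}=\{z_j^s\}$ for all $s$ would contradict distinctness of the $z_j^s$.
\item If all $\lambda_i\in(0,1)$, then $\bigcap_s\{z_i^s,\bot_i\}=\{\bot_i\}$. Fixed support gives $\bot_i=\bot_j$, and hence $z_i^s=z_j^s$.
\item If all $\lambda_i$ equal $0$, or all equal $1$, the singleton supports coincide directly.
\end{itemize}

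With that step added, the rest of your argument is correct, and your route to monotonicity differs from the paper's. The paper imports Corollary~1 of \cite{DBLP:journals/tac/ChenL26}, an infinite-horizon result transferred via a time-augmentation sketch. It also uses Theorem~1 of that paper for the upper bound. You instead use the trivial bound $\min_i V^\pi_{\calM_i}\le V^\pi_{\calM_{i^\ast}}$, which suffices, and prove monotonicity by a self-contained finite-horizon backward induction. The induction step does close. The continuation value of $\pi^\ast$ in environment $i$ exceeds the value in $\calM_{i^\ast}$ by at least $(\lambda_i-\lambda_{\min})\bigl(\sum_{s'}\beta'(s')U_{t+1}(s')-W_{t+1}(\beta')\bigr)\ge 0$. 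This uses only optimality of $\pi^\ast$ at revealed states (true for a Bellman-optimal belief policy) and your information inequality. The result is an elementary proof that avoids the horizon conversion.

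Finally, the obstacle you anticipate does not arise. The revelation probability is the same in every state, so a $\bot$ carries no information about the state. The blind belief after a run of $\bot$'s is therefore just the push-forward of $\delta_s$ (or of $b$) under the committed actions, and it is identical in all environments. Hence $\pi^\ast$ run in environment $i$ still acts on its correct belief, and you can state the induction directly on beliefs. Your sub-probability bookkeeping is valid but unnecessary.
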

\begin{proofsketch}
    Increasing the revelation probability in an MO-IOMDP cannot decrease the reward.
    In particular, fix a transition function, and let $\pi_\lambda$ be the optimal policy for a given revelation probability $\lambda$. 
    By Corollary~1 in \cite{DBLP:journals/tac/ChenL26}, increasing the revelation probability to $\lambda'$ increases the expected reward for $\pi_\lambda$.
    Thus, to solve the MO-IOMDP problem, we merely need to guess and check an optimal policy for an IOMDP, which we can do efficiently by following the reasoning used to prove \Cref{prop:SOMDPs_base}.
\end{proofsketch}
This last result provides an instance of a MO-POMDP problem with stochastic dynamics and a non-trivial observation function that avoids $\PSPACE$-hardness. However, the associated monotonicity property is a stringent condition on the MO-POMDP.

\section{Restricted Policy Classes}\label{sec_jumps}

We next demonstrate that a range of restricted policy class problems that are $\NP$-complete for POMDPs are $\ETR$-complete when we move to the ME-POMDP setting. Full proofs reside in 
\Cref{sec:restricted_pol_full_proofs}.

\label{sec:restricted_pol_classes}
\subsection{Markovian Policies}

The difficulty of finding time-dependent Markovian policies for ME-POMDPs lies in reasoning about decoupled randomness.
When finding time-dependent Markovian policies in POMDPs, we can assume the policy is deterministic, i.e., in $\polDet$, and guess a policy in polynomial time.
However, optimal ME-POMDP policies generally require randomness.
One might try to solve the ME-POMDP problem by reasoning about mixtures over deterministic Markovian policies, as in \Cref{sec:complexity_stays_prior_universal}.
However, a mixture of deterministic Markovian policies may not be a Markovian policy, i.e., 
   $ \polMR \subsetneq \dist{\polDet \cap \polMR}$.
In particular, the class of time-dependent Markovian policies requires that the randomness be decoupled across time steps, creating problems that are known across game theory and distributed systems; see, e.g. 
\cite{DBLP:conf/lics/BriceHT26,DBLP:conf/wine/HansenHMS08,borgs2008myth,DBLP:conf/aaai/Celli018}.

For all the subclasses we consider, the time-dependent Markovian policy problem for ME-POMDPs is $\ETR$-hard. 
In a ME-BlindMDP, the agent's strategy comprises a sequence of action distributions, one for each time step.
We can establish hardness results by viewing each time step as a player on a team.
That is, finding a time-dependent Markovian strategy in a ME-BlindMDP means finding a set of mixed strategies for a team to use against an adversary, where team members can not correlate their randomness.
Finding optimal team strategies in such games is $\ETR$-hard.

\begin{restatable}[]{proposition}{propMEBlindPOMDPsMemoryless}\label{prop:mepomdp_blind_timedep}
    The time-dependent Markovian policy problem for deterministic ME-BlindMDPs is $\ETR$-hard.
\end{restatable}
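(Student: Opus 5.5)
We plan to reduce from an $\ETR$-hard problem concerning uncorrelated team strategies. The cleanest choice is the problem of deciding whether a team of $k$ players with independent mixed strategies can guarantee a value at least $B$ against an adversary in a normal-form team game. Equivalently, we can use the $\ETR$-complete problem of deciding whether a multilinear polynomial system, or the max-min value of products of simplices against a finite set of linear objectives, reaches a threshold. Concretely, we would start from the known $\ETR$-hardness of deciding whether there exist $x_1 \in \dist{A},\ldots,x_H \in \dist{A}$ such that $\min_{i\in[n]} \sum_{a_1,\ldots,a_H} c_i(a_1,\ldots,a_H)\prod_t x_t(a_t) \ge B$, where the payoff tensors $c_i$ are given explicitly. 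If only a version with polynomial-size succinct payoffs is available, we would first reduce ETR (via its $\ETR$-complete feasibility variant over the simplex, e.g.\ common zero of quadratics or a max-min of bilinear/multilinear forms as in team-game hardness results) to such a form with few nonzero terms.

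The construction will encode each monomial term as an environment-driven deterministic path. A time-dependent Markovian policy in a BlindMDP is exactly a tuple $(x_0,\ldots,x_{H-1})$ of action distributions, one per step, which are independent across time. We want each environment $i$ to compute the expected payoff $\sum_{\mathbf a} c_i(\mathbf a)\prod_t x_t(a_t)$. Because transitions must be deterministic and there is one initial distribution $b_i$ per environment, we use the initial distribution as the source of randomness. Environment $i$ starts uniformly, or with weights, over ``term chains''. Each chain is a deterministic line of states which, at each step $t$, checks whether the played action equals a target action $a_t$ and otherwise moves to a sink. Reward $c_i(\mathbf a)$, scaled by the initial weight, is collected at the end. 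Rewards may be negative or the terms shifted to be nonnegative, adjusting $B$ accordingly.

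For a quadratic/bilinear ETR source, each term only involves two time steps, so chains only need to test two steps and ignore the others. Under a Markovian blind policy the probability of surviving a chain is then the product of the tested coordinates, by independence across time. Polynomially many chains suffice, and all dynamics are deterministic with rational initial weights. Then $V^\pi_{\calM_i}$ equals the $i$th polynomial evaluated at $(x_t)_t$, and the min over environments reproduces the max-min objective. Strict inequalities or equality constraints are handled by pairing the environments $p$ and $-p$, so that a value $\ge 0$ forces $p=0$.

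The main obstacle is selecting the exact $\ETR$-hard source problem so that it fits these constraints. The source must be posed over products of simplices (not arbitrary reals), with a threshold of the form min over polynomially many polynomials, and each polynomial must be multilinear in distinct time-step variables. The standard route would be to compile a general ETR instance into this form: bounded-domain ETR reduces to feasibility of quadratic equations over the simplex, and we would assign each variable a separate time step, using duplicated copies plus consistency constraints when a variable appears squared. The step we expect to need the most care is this consistency step, namely enforcing that two time-step copies encode the same value using only multilinear constraints $x_t(a)-x_{t'}(a)=0$, combined with the rescaling that keeps coefficients rational and of polynomial bit size. If this becomes messy, we would instead cite directly the $\ETR$-hardness of team max-min in normal-form games with uncorrelated team members and map each team member to a time step, as the discussion preceding the proposition suggests.
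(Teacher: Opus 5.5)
Your reduction is sound, and your fallback source problem is exactly the paper's: it reduces from $2$-vs.-$1$ team matrix games, citing Lemma 24 of Brice et al., or alternatively Hansen's $\ETR$-hardness of $\sum_{j,k} a_{ijk}\mathbf{x}[j]\mathbf{y}[k]=0$ over two simplices, which is where your $p$/$-p$ pairing would enter. Your planned detour, compiling general ETR into multilinear form with duplicated time-step copies and consistency constraints, is therefore unnecessary.

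The real difference is the gadget. You say you need the initial distribution "as the source of randomness" because transitions are deterministic, but this is a misreading of what is required: the agent's own mixed action choice already supplies the randomness. The paper builds each environment as follows:
\begin{itemize}
\item horizon $2$, states $\{s_0\}\cup[n]$, and Dirac initial distribution $\delta_{s_0}$;
\item the deterministic transition $T(s_0,j)=j$, which records the first action in the state;
\item the reward $R_i(j,k)=a_{ijk}$.
\end{itemize}
The value in environment $i$ is then directly $\sum_{j,k}a_{ijk}\pi_0[j]\pi_1[k]$.

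Your term chains instead draw a monomial $(j,k)$ from a non-Dirac initial distribution, send the run to a sink unless the played actions match, and rescale rewards by the inverse weight. This also computes the bilinear form correctly, with $n^2$ chains. It relies on random initial distributions being allowed in the deterministic subclass. That holds under the paper's definitions, which restrict only $T$ and $O$ (compare the mixture $\tilde b$ in the ME-DetPOMDP corollary). The paper's construction, however, would also survive a stricter convention that requires a Dirac initial distribution, and it uses fewer states.

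In exchange, your chain gadget extends directly to sparse multilinear objectives over many time steps, since each chain tests only the steps its monomial involves. The action-recording construction would handle that only by storing tuples of past actions in the state.
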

\begin{proofsketch}
    We reduce $2$-vs.-$1$ team matrix games with two players, known to be $\ETR$-hard~\cite{DBLP:conf/lics/BriceHT26,DBLP:journals/mst/BerthelsenH22,DBLP:journals/mst/Hansen19}, to ME-BlindMDPs.
\end{proofsketch}
In the proof for ME-BlindMDPs, we store the agent's first action in the state, which prevents the result from generalizing to ME-MDPs because the agent can then use this information at the next time step, even with a Markovian policy.
However, we can resolve this issue using a state-relabeling construction.

\begin{restatable}[]{proposition}{propDetMEMDPsMemoryless}
    \label{prop:memoryless_MEMDP}
    The time-dependent Markovian policy problem for deterministic multi-environment MDPs is $\ETR$-hard.
\end{restatable}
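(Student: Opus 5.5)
We plan to reduce from the same $\ETR$-hard problem as in \Cref{prop:mepomdp_blind_timedep}, namely deciding whether a team of two uncorrelated players (with mixed strategies $x \in \dist{A_1}$, $y \in \dist{A_2}$) can guarantee value at least $B$ against an adversary choosing a column $j \in [n]$ of a payoff tensor $U(a_1,a_2,j)$. Write the team guarantee as $\min_j \sum_{a_1,a_2} x(a_1)y(a_2)U(a_1,a_2,j)$. The ME-BlindMDP construction uses a chain in which step $0$ selects $a_1$, step $1$ selects $a_2$, and the state after step $0$ remembers $a_1$ so that the reward at step $1$ in environment $j$ can be $U(a_1,a_2,j)$. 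In an MDP this memory becomes visible. The agent could then play $a_2$ as a function of $a_1$, which amounts to a correlated team strategy, and the reduction breaks.

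To fix this we use the state-relabeling idea from \Cref{prop:MOMDPs_base}, but realized through transitions, since environments are allowed to differ in dynamics. The state space has an initial state $s_0$, middle states $m_1,\dots,m_{|A_1|}$, and a sink. In each environment $(j,\sigma)$, action $a_1$ at $s_0$ moves deterministically to $m_{\sigma(a_1)}$, where $\sigma$ ranges over cyclic shifts of $A_1$. The reward at $m_k$ under action $a_2$ is $U(\sigma^{-1}(k),a_2,j)$. Here $\sigma$ is a deterministic permutation and transitions are deterministic, so each environment is a deterministic MDP. The environment set is $[n]\times\{\text{shifts}\}$, of polynomial size.

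The correctness argument has two directions. For completeness, a team profile $(x,y)$ yields the Markovian policy that plays $x$ at $s_0$ and $y$ at every $m_k$ at time $1$. This policy attains $\sum x(a_1)y(a_2)U(a_1,a_2,j)$ in every environment, independently of $\sigma$. For soundness, a Markovian policy plays some $x$ at time $0$ and distributions $y_k$ at the states $m_k$. In environment $(j,\sigma)$ its value is $\sum_{a_1} x(a_1)\sum_{a_2} y_{\sigma(a_1)}(a_2)U(a_1,a_2,j)$.

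The main obstacle is soundness: the worst case over $(j,\sigma)$ must certify a single uncorrelated $y$ with team value at least $B$. A min over shifts is not obviously at most the value of some fixed $y_k$, because the adversary cannot pick $\sigma$ after seeing which $y_k$ is bad for which $a_1$. We plan two remedies. The first is to strengthen the relabeling so that the adversary can route all of $A_1$ into a single middle state $m_k$, using environments $(j,k)$ in which every $a_1$ maps to $m_k$ while the state still encodes $a_1$. Encoding $a_1$ while keeping only one observation is impossible in an MDP, so this route needs care. The second is to add a gadget that forces the policy to be identical across all $m_k$. For instance, we add extra environments in which $s_0$ with action $a_1$ leads to $m_k$ for an adversarially chosen $k$ and pays a reward that tests $a_2$ against $a_1$. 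Combined with a scaled penalty, this makes deviating from a common $y$ strictly suboptimal. If neither works cleanly, we fall back on first reducing from a symmetric or normalized team-game variant for which averaging over cyclic shifts suffices. By the min-max structure, the shift-averaged value then lower-bounds the value of the uniform mixture $\bar y = \frac{1}{|A_1|}\sum_k y_k$, restricted to problem instances where the payoff is bilinear after symmetrization. This step is where we expect most of the work.
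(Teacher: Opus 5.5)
\textbf{Verdict: the construction is right, but soundness is not proved.} Your construction is exactly the one the paper uses: cyclic shifts of the first action realized in the transitions, with the reward at $m_k$ undoing the shift via $\sigma^{-1}$. Your completeness direction also matches the paper's. The gap is soundness. You leave it open and believe it needs either a new gadget or a symmetric special case, and none of your three remedies closes it as written. Remedy one you yourself note is impossible in an MDP. Remedy two is unspecified. Remedy three would need a separate $\ETR$-hardness proof for the restricted class of team games.

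\textbf{The missing step.} Compare the worst case with the \emph{uniform average over shifts}, not with any single $y_k$. Let $N=|A_1|$, let $\sigma_0,\dots,\sigma_{N-1}$ be the cyclic shifts, and set $\bar y=\frac1N\sum_k y_k$. For each fixed $a_1$, the map $l\mapsto\sigma_l(a_1)$ is a bijection onto the middle states. Hence $\frac1N\sum_l y_{\sigma_l(a_1)}=\bar y$ for every $a_1$, and using your own value formula,
\begin{align*}
\min_{(j,\sigma)} V_{(j,\sigma)}
&\le \min_{j}\frac{1}{N}\sum_{l=0}^{N-1} V_{(j,\sigma_l)}
= \min_j \sum_{a_1} x(a_1)\sum_{a_2}\Bigl(\frac{1}{N}\sum_{l=0}^{N-1} y_{\sigma_l(a_1)}(a_2)\Bigr)U(a_1,a_2,j)\\
&= \min_j \sum_{a_1,a_2} x(a_1)\,\bar y(a_2)\,U(a_1,a_2,j).
\end{align*}

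\textbf{Why this suffices.} If the Markovian policy has worst-case value at least $B$, then the uncorrelated profile $(x,\bar y)$ guarantees at least $B$ in the team game. This holds for an arbitrary payoff tensor, so no symmetrization, normalization, penalty gadget, or min-max theorem is needed. Only the trivial inequality ``minimum $\le$ average'' is used. This is precisely the paper's argument: nature mixes uniformly over the shift index, which collapses the state-dependent second-stage strategies into their average.

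Your worry that the adversary ``cannot pick $\sigma$ after seeing which $y_k$ is bad'' is beside the point. The minimum over environments is bounded by any fixed convex combination of environment values, and the uniform one over shifts already yields an uncorrelated $y$. Also, the shift average does not merely lower-bound the value of $(x,\bar y)$, as your fallback suggests. It \emph{equals} that value, and what you need is that the worst case is \emph{at most} it.

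A small routine point remains. You should fix a single action set shared by both time steps, for example by taking equal-size action sets for both players as the paper does, and send out-of-range actions to a default.
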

\begin{proofsketch}
    Similar to the proof of \Cref{prop:mepomdp_blind_timedep}, we encode team matrix games in ME-MDPs.
    However, we add additional environments that differ in state-space labeling, preventing the agent from using state observations at the second time step to create correlations between actions at different time steps.
\end{proofsketch}
\Cref{prop:memoryless_MEMDP,prop:mepomdp_blind_timedep} establish $\ETR$-hardness for all the multi-environment problems we consider.
We can also prove $\ETR$-hardness for the multi-observation problem by changing the observation function to control the agent's actions and simulate an adversary.
\begin{restatable}[]{proposition}{propMOPOMDPsMemoryless}
    \label{prop:mopomdp_single_reward}
    The time-dependent Markovian policy problem for multi-observation POMDPs is $\ETR$-hard.
\end{restatable}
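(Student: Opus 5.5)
We reduce from the same source as \Cref{prop:mepomdp_blind_timedep}: deciding whether a team of two players, each using an independent mixed strategy, can guarantee value at least $B$ against a third adversarial player in a matrix game. This problem is $\ETR$-hard~\cite{DBLP:conf/lics/BriceHT26,DBLP:journals/mst/BerthelsenH22,DBLP:journals/mst/Hansen19}. Write the team's action sets as $A_1,A_2$, the adversary's as $C$, and the payoff matrix as $u(a_1,a_2,c)$, rescaled to be nonnegative.

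The key constraint is that all environments must share $T$, $R$ and $b$. So the adversary's column $c$ can no longer be encoded in the dynamics or the rewards, as it was for ME-BlindMDPs. Instead we encode it through the observation functions, using one environment per column.

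\textbf{Construction.} Use horizon $H=3$ (or $2$ plus a reward step).
\begin{itemize}
\item At time $0$, the agent is in a fixed state emitting observation $\bot$ in every environment, and chooses $a_1\in A_1$.
\item The transition stores $a_1$ in the state $s_{a_1}$.
\item At time $1$, every state $s_{a_1}$ emits $\bot$ in every environment, so a time-dependent Markovian policy sees only $(1,\bot)$. It therefore plays a distribution independent of $a_1$. This is exactly where Markovianity decouples the randomness, and it is the fix that failed for ME-MDPs.
\item The agent then chooses $a_2\in A_2$ and moves to $s_{a_1,a_2}$.
\item At time $2$, the adversary must act. A shared reward cannot depend on $c$, so we let observations control the agent. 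In environment $c$, state $s_{a_1,a_2}$ emits observation $z_{c'}$ for each $c' \in C$ with probability chosen so that the subsequent reward equals $u(a_1,a_2,c)$.
\end{itemize}

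Concretely, at time $2$ the agent picks an action $d\in D$ and receives $R(s_{a_1,a_2},d)$. The observation in environment $c$ should force a particular action. One option: environment $c$ emits a single observation $o_c$ deterministically, and the shared reward is $R(s_{a_1,a_2},d)=u(a_1,a_2,d)$ with $D=C$. Then the agent would just choose the best $d$. That is the wrong direction, since the adversary should pick.

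So instead we reverse the roles. We make the agent's time-$2$ choice a \emph{guess} that the observation reveals. Environment $c$ reveals a fair coin bit $\beta$ (stored by randomizing at time $1$) only with probability $u(a_1,a_2,c)/u_{\max}$. Otherwise it shows $\bot$. The reward is $1$ for guessing $\beta$ correctly.

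This needs $O_c$ to depend on $(a_1,a_2)$, which it can, since the state stores them. The revelation probabilities are rational and polynomially sized. Observations at the guessing step are $\top_\beta$ or $\bot$; the policy's time and observation are enough to use a revealed bit, and the optimal play on $\bot$ yields $1/2$.

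The expected reward in environment $c$ is
\[
\tfrac12+\tfrac12\,\EE_{a_1\sim x,\,a_2\sim y}\!\left[u(a_1,a_2,c)\right]/u_{\max}.
\]
Hence the worst-case value is at least $\tfrac12+\tfrac{B}{2u_{\max}}$ iff the team can guarantee $B$.

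\textbf{Correctness.} Both directions follow from the displayed formula.
\begin{itemize}
\item Forward: team strategies $(x,y)$ give the Markovian policy that plays $x$ at time $0$, $y$ at $(1,\bot)$, and guesses optimally at time $2$.
\item Backward: any Markovian policy induces independent marginals $x$ at time $0$ and $y$ at $(1,\bot)$. Guessing the revealed bit is optimal, so the value is at most the formula with those marginals.
\end{itemize}
Placing the coin flip so that $\beta$ is hidden until revealed requires $T$ to branch randomly at time $1$. Both branches must emit $\bot$ so the policy cannot condition on $\beta$.

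The main obstacle is ensuring that no observation before the final step leaks $a_1$. Every pre-guess observation must be $\bot$ in every environment. Otherwise the Markovian agent could correlate $a_2$ with $a_1$, as in the ME-MDP obstacle. Since all non-final observations are uniform and identical across environments, this holds by construction.
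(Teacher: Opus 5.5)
Your reduction is correct, but it simulates the adversary through the observations in a genuinely different way from the paper.

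\textbf{The paper's route.} Environment $i$ shows a private label $z_i$ after the two team actions. The agent itself then chooses which payoff row $i'$ to collect; the choice is stored in the state and paid out by the shared reward $a_{i'jk}$. A coin-guessing stage follows, whose coin is revealed only if $i'=i$, with a $-3M$ penalty for a wrong guess. The proof must therefore bound how much the agent could gain by picking a mismatched row.

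\textbf{Your route.} You encode the payoff directly as the environment-dependent revelation probability $u(a_1,a_2,c)/u_{\max}$ of a uniform coin. The value in environment $c$ is then exactly $\tfrac12+\tfrac{1}{2u_{\max}}\,\mathbf{x}^\top U_c\,\mathbf{y}$. On $\bot$ no guess beats $\tfrac12$, because the revelation probability does not depend on $\beta$.

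\textbf{What each buys.} Your construction gives a shorter horizon and an exact affine correspondence of values, with no penalty constant and no deviation analysis. If you shift the revelation probabilities into, say, $[\tfrac14,\tfrac34]$, it even satisfies the fixed-support condition. The paper's construction cannot, because of its environment-specific labels $z_i$. In turn, the paper's gadget keeps every observation probability in $\{0,\tfrac12,1\}$, independent of the game data. It is also a more general device: it lets the adversary steer the agent into an arbitrary branch of the shared dynamics, rather than only reweighting a bounded terminal payoff.

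\textbf{Details to fix in a write-up.}
\begin{itemize}
\item Specify what actions outside $A_1$, $A_2$, or the guess set do. For example, map them to a default as the paper does, so that $\mathbf{x},\mathbf{y}$ are the induced marginals.
\item Make explicit that the coin is flipped into states that emit $\bot$ before time $2$.
\item Drop the exploratory detour about letting the agent choose $d$.
\end{itemize}
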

\begin{proofsketch}
    The proof begins with the same reduction as in the proof of \Cref{prop:mepomdp_blind_timedep}. However, the environment can no longer modify the transition or reward functions. 
    To address this difficulty, we add a component to the problem in which certain observations are revealed \emph{conditional} on the agent taking a certain action in each environment. 
    Adding this component then allows the adversary to control the environment dynamics by incentivizing the agent to take certain actions.
\end{proofsketch}

We can also easily express the time-dependent Markovian policy problem in $\ETR$.
Indeed, we parameterize the policy with $|Z||A|H$ variables and encode the dynamics as bilinear constraints linking $\mathbb{P}[s_{t+1} = s]$ to $(\mathbb{P}[s_t = s, a_t = a])_{a\in A, s \in S}$. 
\begin{restatable}[]{lemma}{lemMEPOMDPsMemoryless}\label{fac:ETR_inclusion_mem}
    The time-dependent Markovian policy problem for ME-POMDPs is in $\ETR$.
\end{restatable}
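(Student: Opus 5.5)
We will give a polynomial-time reduction that, from a ME-POMDP $(\calM_i)_{i\in[n]}$ and a rational threshold $B$, produces an existential first-order sentence over $(\mathbb{R},0,1,+,\cdot,<)$ of polynomial size. The sentence will be true if and only if some $\pi\in\polMR$ satisfies $V^\pi_{(\calM_i)_{i\in[n]}}\ge B$.

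\textbf{Policy variables.} For each $t\in\{0,\ldots,H-1\}$, $z\in Z$ and $a\in A$ we introduce a variable $x_{t,z,a}$, meant to equal $\pi(t,z)(a)$. We impose the linear constraints $x_{t,z,a}\ge 0$ and $\sum_{a\in A}x_{t,z,a}=1$.

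\textbf{Distribution variables.} For each environment $i$, each $t$ and each state $s$, we introduce $y^i_{t,s}$, meant to equal $\mathbb{P}^\pi_{\calM_i}[s_t=s]$. We also introduce $w^i_{t,s,a}$, meant to equal $\mathbb{P}^\pi_{\calM_i}[s_t=s,a_t=a]$. These are tied together by the following constraints:
\begin{align*}
y^i_{0,s} &= b_i(s),\\
w^i_{t,s,a} &= y^i_{t,s}\sum_{z\in Z}O_i(s,z)\,x_{t,z,a},\\
y^i_{t+1,s'} &= \sum_{s\in S}\sum_{a\in A}T_i(s,a,s')\,w^i_{t,s,a}.
\end{align*}
The second constraint holds because, under a Markovian policy, $a_t$ depends only on $t$ and $z_t$, and $z_t\sim O_i(s_t)$. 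It is therefore the only constraint that is bilinear. The third is linear.

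\textbf{Value constraints.} For every $i\in[n]$ we add
\[
\sum_{t=0}^{H-1}\sum_{s,a}R_i(s,a)\,w^i_{t,s,a}\ \ge\ B.
\]
Imposing this for every $i$ expresses the minimum in \Cref{eq:ME-POMDP:value} without any additional variables.

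\textbf{Encoding and correctness.} All model coefficients and $B$ are rationals $p/q$ with binary-encoded $p$ and $q$. We clear denominators by multiplying each constraint by the product of the denominators appearing in it. Alternatively, we add auxiliary variables $c$ with $q\cdot c=p$, where $q$ and $p$ are built by repeated squaring and sums of $1$ in $O(\log|p|+\log|q|)$ operations. Since $H$ is in unary, the number of variables and constraints is polynomial in $|Z||A|H+n|S|H+n|S||A|H$.

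For correctness, we show by induction on $t$ that for any assignment of the $x$-variables meeting the simplex constraints, the $y$- and $w$-variables are uniquely determined and equal the true marginals under the induced policy. The forward direction then takes the true marginals as witnesses. The backward direction reads off $\pi$ from the $x$-values and applies the uniqueness claim.

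\textbf{Main obstacle.} The only delicate point is justifying the bilinear step, namely that $\mathbb{P}[s_t=s,a_t=a]$ factors as $\mathbb{P}[s_t=s]\sum_z O_i(s,z)\pi(t,z)(a)$. This factorisation uses the Markov property of the policy; for history-dependent policies it would fail. The remaining work is the routine encoding of rational constants.
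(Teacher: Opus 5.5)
Your proposal is correct and follows essentially the same route as the paper. The paper writes down the same bilinear system: policy variables $\pi_t(z,a)$, state-occupancy variables $\nu_i(s,t)$, and state-action-occupancy variables $\mu_i(s,a,t)$, linked by the same bilinear step and linear flow equations, with one value constraint per environment. Your remarks on encoding rational constants and on uniqueness of the marginals by induction supply routine details that the paper leaves implicit.
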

\begin{proofsketch}
    This result follows from standard encodings of robust POMDP problems as non-linear constrained optimization problems, for example as in \cite{DBLP:conf/aaai/Cubuktepe0JMST21}. 
    Critically, because the ME-POMDP comprises a finite set of POMDPs, the $\ETR$-encoding is finite.
\end{proofsketch}

\subsection{Policies with Finite Memory}

Next, we show a counter-intuitive relationship between memory and complexity in ME-POMDPs. 
For any memory size, the value problem for time-dependent FSCs in POMDPs lies in $\NP$. 
In contrast, we show that ME-POMDPs exhibit a cutoff phenomenon, in which the complexity of the FSC problem depends on the ratio of memory size to the number of environments.
We adapt the FSC value problem (Problem~\ref{prob:fsc:value:problem}) with a fixed rational parameter $\alpha$.
\begin{problem}[$\alpha$-FSC value problem]\label{problem:alpha:fsc:value:problem}
        Given a ME-POMDP $(\calM_i)_{i\in [n]}$, a rational number $B$, and a memory bound $|Q|$ polynomial in the ME-POMDP size that satisfies $|Q| \geq \alpha n$, the $\alpha$-FSC value problem asks whether there exists a time-dependent FSC $\pi$ such that $V_{(\calM_i)_{i\in [n]}}^\pi \geq B$.
\end{problem}
Setting $\alpha = 1$ gives the set of all FSC-problems where the memory bound is at least the number of environments.
For certain subclasses of ME-POMDPs, increasing $\alpha$ past a certain threshold drops the complexity of the $\alpha$-parameterized problem from $\ETR$-complete to $\NP$-complete, as we show next.

\paragraph{Blind MDPs}
For ME-BlindMDPs, FSC synthesis is $\NP$-complete when we restrict the FSC size to exceed the number of environments, as we recover the problem with no policy constraints.
\begin{restatable}[]{lemma}{lemMEBlindPOMDPsFSCsAlpha}\label{lem:alpha_one_FSC_blind}
    For $\alpha = 1$, the $\alpha$-FSC value problem for ME-BlindMDPs is $\NP$-complete.
\end{restatable}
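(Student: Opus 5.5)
We prove membership in $\NP$ and $\NP$-hardness separately. Hardness is immediate. A single BlindMDP is a ME-BlindMDP with $n=1$, and for $|Q| \geq 1 = \alpha n$ the $\alpha$-FSC value problem contains the case $|Q|=1$. With one memory state, a time-dependent FSC in a BlindMDP is exactly a sequence of action distributions. Since the BlindMDP is a single POMDP, deterministic sequences are optimal, so this coincides with the unrestricted value problem, which is $\NP$-hard~\cite{DBLP:journals/mor/PapadimitriouT87,DBLP:journals/jacm/MundhenkGLA00}. We should double-check that allowing larger $|Q|$ in the instance family does not matter. It does not: hardness only needs some subfamily of instances, and the reduction outputs $n=1$, $|Q|=1$.

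For membership, the key claim is that when $|Q| \geq n$, time-dependent FSCs in a ME-BlindMDP are as powerful as arbitrary history-dependent randomized policies, so the FSC value equals the unrestricted value. By \Cref{prop:prior_universal} together with the closure of BlindMDPs under hidden mixtures (as in \Cref{cor:blind_mepompd_in_NP}), there is an optimal policy that mixes over at most $n$ deterministic blind policies, i.e., $n$ action sequences $\sigma_1,\dots,\sigma_n \in A^H$ with weights $\mathbf{p} \in \dist{[n]}$. We implement this mixture as a time-dependent FSC with memory $Q \supseteq [n]$:
\begin{itemize}
\item $\iota = \mathbf{p}$, padded with zero weight on any extra memory states;
\item $\gamma_t(j,\bot) = \delta_{\sigma_j(t)}$;
\item $\eta_t(j,\bot,a) = \delta_j$.
\end{itemize}
The observation is always $\bot$, so the memory state simply records which sequence was drawn. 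The induced path distribution in every environment therefore equals that of the mixed policy. Conversely, every FSC is a history-dependent policy, so the FSC value cannot exceed the unrestricted optimum. Hence the answer to the $\alpha$-FSC instance equals the answer to the unrestricted value problem, which lies in $\NP$ by \Cref{cor:blind_mepompd_in_NP}. Note that $|Q|$ being polynomial is not even needed for this direction. Concretely, the $\NP$ certificate is the $n$ action sequences, and the verifier evaluates $V^{\sigma_j}_{\calM_i}$ for all $i,j$ and solves an LP for the optimal mixture weights against the worst environment.

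The main obstacle is justifying that mixing over \emph{at most} $n$ deterministic policies suffices. The bound on the support size is exactly what makes $|Q| \geq n$ the threshold. It comes from the Carath\'eodory/LP vertex argument in \Cref{prop:prior_universal}: in the matrix game between $\mathbf{p}$ and deterministic policies, some optimal agent strategy is supported on at most $n$ pure strategies, namely a basic solution of an LP with $n$ environment constraints. We must also check that the deterministic optimal policies of the hidden mixture are plain action sequences, independent of the enlarged state space $S\times[n]$. This holds because the mixture is again blind.
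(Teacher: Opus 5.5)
Your proposal is correct and follows essentially the paper's proof. You use \Cref{prop:prior_universal} to get an optimal mixture over at most $n$ action sequences, encode it in an FSC whose memory records which sequence was drawn (possible since $|Q|\geq n$), and so reduce membership to \Cref{cor:blind_mepompd_in_NP}. Your hardness argument via the $n=1$, $|Q|=1$ subfamily is just a more explicit version of the paper's appeal to the $\NP$-hardness of BlindMDPs.
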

\begin{proof}
    Let $n$ be the number of environments in the ME-BlindMDP. 
    By \Cref{prop:prior_universal}, the ME-BlindMDP has an optimal policy that randomizes over $n$ action sequences of length $H$. 
    We can encode this policy in an FSC with $|Q|$ states, since by assumption $|Q| \geq n$, so we can use the memory to index the sequence and use $\gamma_t$ to encode the $n$ action sequences.
    Thus, solving the memory-constrained ME-BlindMDP problem is equivalent to solving the ME-BlindMDP problem without policy constraints.
    The latter problem is in $\NP$ by \Cref{cor:blind_mepompd_in_NP}, and $\NP$-hard as Blind MDPs are $\NP$-hard \cite{DBLP:journals/mor/PapadimitriouT87}.
\end{proof}
\Cref{lem:alpha_one_FSC_blind} raises the problem of identifying the $\alpha$ value at which complexity drops from $\ETR$ to $\NP$.
As we show below, $\alpha = 1$ is tight, in the sense that the FSC policy existence problem is $\ETR$-complete for any smaller value of $\alpha$.
This result follows from the fact that, when $\alpha < 1$, we can encode $\ETR$-hard low-non-negative-rank feasibility problems in the time-dependent FSC problem.
\begin{restatable}[]{theorem}{thmMEBlindPOMDPsFSCs}
    \label{thm:alpha_FSC_complexity}
    For any fixed rational $\alpha \in (0,1)$, the $\alpha$-FSC value problem for ME-BlindMDPs is $\ETR$-complete.
\end{restatable}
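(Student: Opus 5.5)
Membership and hardness are handled separately.

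\emph{Membership.} For membership in $\ETR$, I will write the FSC value problem as an existential sentence, following the encoding of \Cref{fac:ETR_inclusion_mem}. The variables are $\iota(q)$, $\gamma_t(q,\bot)(a)$ and $\eta_t(q,\bot,a)(q')$ for all $q,q'\in Q$, $a\in A$ and $t<H$. Since $|Q|$ is polynomial, there are polynomially many. Each is constrained to be a probability distribution. For each environment $i$ I then add auxiliary variables $x^i_t(s,q)=\mathbb{P}_i[s_t=s,q_t=q]$, linked by the constraints
\[
x^i_{t+1}(s',q')=\sum_{s,q,a} x^i_t(s,q)\,\gamma_t(q,\bot)(a)\,T_i(s,a,s')\,\eta_t(q,\bot,a)(q').
\]
Finally I require $\sum_{t,s,q,a} x^i_t(s,q)\gamma_t(q,\bot)(a)R_i(s,a)\ge B$ for every $i\in[n]$. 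This is a polynomial-size quantifier-free formula, because the ME-POMDP has finitely many environments, so membership follows.

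\emph{Hardness.} I would reduce from the $\ETR$-hard problem of deciding whether a nonnegative matrix $M\in\mathbb{Q}_{\ge0}^{m\times k}$ has nonnegative rank at most $r$. The idea is that a blind agent using an FSC with $|Q|=r$ states induces a joint action distribution over a two-step experiment of the form $\sum_q \iota(q)\,\gamma_0(q)(a)\,\gamma_1(q)(b)$, up to the memory update. This is a nonnegative rank-$r$ factorization, and conversely every such factorization can be realized. So I would build a deterministic ME-BlindMDP over horizon $2$ (or a constant) with actions $[m]$ at step $0$ and $[k]$ at step $1$. The first action is stored in the state, and the rewards are chosen so that the worst-case value is at least $B$ iff the joint distribution $P(a,b)$ equals $M/\|M\|_1$. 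I expect to implement this as follows. For each entry $(a,b)$ there are two environments, with rewards $P(a,b)-M_{ab}/\|M\|_1$ and its negation. To make the rewards linear in $P$, the reward in each environment is an indicator of the pair $(a,b)$ minus a constant. A threshold $B=0$ then forces equality. This uses $n=2mk$ environments, plus possibly one more.

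The main obstacle is the ratio constraint $|Q|\ge\alpha n$. The number of environments is about $2mk$, but the rank bound $r$ may be much smaller, and we need $r\ge \alpha n$. Since $\alpha<1$ is fixed, I would pad on both sides. On the memory side, I add to $M$ a block-diagonal identity block $I_p$, so that $\mathrm{rank}_+(M\oplus I_p)=\mathrm{rank}_+(M)+p$; this additivity holds for block-diagonal nonnegative matrices. Choosing $p$ large raises both $|Q|=r+p$ and the matrix size. The environment count for the block-diagonal part must not grow by as much. To achieve this, I would encode the off-diagonal zero constraints collectively, using one environment penalizing all off-block mass, and encode the identity block with $O(p)$ environments rather than $O(p^2)$. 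That makes $n\approx 2mk+cp$ while $|Q|= r+p$.

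For any $\alpha<1$, I would try to reduce the constant $c$ below $1/\alpha$. If that fails, I would give each environment several dummy "free" memory states to inflate $|Q|$ without changing the rank. For example, I could require the extra memory states to encode a forced deterministic component that contributes rank exactly equal to the number of added states. The delicate part, which I expect to be hardest, is checking that the block-diagonal padding really forces the FSC's memory onto the blocks, so that no rank can be shared across blocks. With that in place, $|Q|/n\to 1$ as $p\to\infty$, exceeding any fixed $\alpha<1$ with polynomial $p$. This completes the reduction.
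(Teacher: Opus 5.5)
Your membership argument is fine. It is the direct form of what the paper gets from the product construction (Lemma~\ref{lem:fsc_product_pomdp}) combined with Lemma~\ref{fac:ETR_inclusion_mem}. Your hardness route differs from the paper's chain ETR-INV $\to$ ETR-AMI $\to$ anchored constrained factorization: you start from the $\ETR$-hardness of nonnegative rank and pad with a block-diagonal $I_p$. That route is viable, and the paper's $r\times r$ anchor identity block does exactly the job of your $I_p$. The additivity $\mathrm{rank}_+(M\oplus I_p)=\mathrm{rank}_+(M)+p$, which you single out as the delicate part, is immediate. The off-diagonal blocks are zero and all terms are nonnegative, so each rank-one term $\mathbf{u}\mathbf{v}^\top$ of a nonnegative factorization lies inside one diagonal block, and the terms split between the blocks.

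The genuine gap is the step the statement for \emph{every} $\alpha<1$ depends on: how many environments each padding dimension costs. In your gadget each diagonal equality $P(d,d)=c$ takes two environments, one per sign. That gives $n\approx 2mk+1+2p$ against $|Q|=r+p$, so $|Q|/n\to 1/2$, and you only get hardness for $\alpha<1/2$. Writing ``$O(p)$ environments'' does not settle this, because the constant is exactly what matters, and you leave lowering it as something you would try.

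Your fallback does not work. Memory states are shared by all environments, so extra memory states that nothing forces the controller to use raise the nonnegative rank it can realize and can turn NO instances into YES instances. Forcing them to be used is just the identity padding again, at the same cost.

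The missing idea is to pay one environment per constrained entry, using the fact that the joint action distribution $P$ has total mass $1$:
\begin{itemize}
\item Take $T=(M\oplus I_p)/(\|M\|_{1,1}+p)$.
\item For each $(a,b)\in\supp(T)$, add one environment whose step-$1$ reward is the indicator of $(a_0,a_1)=(a,b)$ minus $T(a,b)$, and set the threshold $B=0$.
\item The resulting lower bounds $P(a,b)\ge T(a,b)$ have both sides summing to $1$, so they force $P=T$ exactly, zeros included, with no further environments.
\item Then $n=|\supp(M)|+p$ and $|Q|=r+p$. Choosing $p=\max\{0,\lceil(\alpha|\supp(M)|-r)/(1-\alpha)\rceil\}$ is polynomial for fixed $\alpha$ and gives $|Q|\ge\alpha n$.
\end{itemize}
The paper reaches the same one-environment-per-equality count differently. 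It replaces the equalities with the cyclic chain $\langle A^{(\iota)}-A^{(\iota+1)},X\rangle\ge 0$ plus three extra inequalities (Lemma~\ref{lem:etrinv_to_ineq}). Once you make this fix, your route is shorter than the paper's. It uses NMF hardness as a black box, and the padding removes the ratio obstacle the paper gives as its reason for not taking that route.
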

\begin{proofsketch}
    We prove this result by linking the existence of time-dependent finite-state controllers to solving certain non-negative-rank-constrained feasibility problems.
    We first show a one-to-one correspondence between time-dependent FSC controllers with $|Q|$ states and inter-time-step joint action distributions with non-negative rank less than $|Q| $ \cite{DBLP:journals/siamjo/Vavasis09}.
    We can then build a ME-POMDP that has a non-negative value if and only if a joint action distribution exists satisfying a set of linear constraints, where each action distribution corresponds to a matrix with low non-negative rank. 
    Finally, we adapt existing results on the complexity of non-negative rank-constrained feasibility \cite{DBLP:journals/corr/abs-2603-19976} to show that deciding whether such a rank-constrained distribution exists is $\ETR$-hard.
    Critically, in adapting this result, we must carefully control the relationship between the memory size $|Q|$ and the number $n$ of environments in the instance we build.
\end{proofsketch}
The above results demonstrate a tradeoff between computation at planning and deployment time, whereby we can increase the memory size to reduce the complexity of the planning problem.

\paragraph{General POMDPs}
For general POMDPs, the $\alpha$-FSC value problem is $\ETR$-hard even for $\alpha=1$, and thus generally harder than for BlindMDPs.
In effect, introducing non-trivial observation functions allows one to construct ME-POMDPs where the observations consume all the agent's memory, so the agent cannot couple its actions across time steps at all.
\begin{restatable}[]{proposition}{propMEPOMDPsFSCsAlpha}
    \label{prop:nFSC_general_MEPOMDP}
     For $\alpha = 1$, the $\alpha$-FSC value problem for ME-POMDPs is $\ETR$-complete.
\end{restatable}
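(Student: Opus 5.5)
Membership and hardness are handled separately. For membership in $\ETR$, we will use the product construction from the proof of \Cref{fac:fsc_rep_fac}. For each environment $\calM_i$, form the product POMDP with state space $S \times Q$. A time-dependent FSC with $|Q|$ memory states then corresponds to a time-dependent Markovian policy there, except that it jointly chooses an action and a next memory state. We parameterize $\iota$, $\gamma_t$ and $\eta_t$ by polynomially many real variables subject to the usual simplex constraints. The occupation probabilities $\mathbb{P}_i[s_t=s, q_t=q]$ are encoded by polynomial (bilinear/trilinear) recurrences, exactly as in \Cref{fac:ETR_inclusion_mem}. We add one constraint $V_{\calM_i} \geq B$ per environment. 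Since $n$ and $|Q|$ are polynomial, the resulting existential formula is polynomial size, so the problem is in $\ETR$.

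For hardness, we reduce from the time-dependent Markovian policy problem for ME-POMDPs, shown $\ETR$-hard in \Cref{prop:mepomdp_blind_timedep,prop:mopomdp_single_reward}. Take such an instance with $n$ environments and set $|Q| = n'$, where $n'$ is the number of environments of the new instance. The requirement $|Q| \geq \alpha n'$ with $\alpha=1$ then holds trivially. The memory must be rendered useless, and observations will be used to do this. We prepend a gadget that forces the agent to ``fill'' its memory. Before the original game begins, the agent receives a sequence of observations encoding a hidden token $k \in [|Q|]$, drawn uniformly by nature in every environment. At the very end of the horizon, the agent must announce $k$ in order to collect a large reward $L$; any wrong announcement costs $L$ in expectation. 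If $L$ is large relative to the maximal reward of the original instance, an optimal FSC must map the token injectively into memory. Because exactly $|Q|$ tokens exist, every memory state is fully committed to storing $k$ throughout the original phase. The controller's behavior during that phase therefore reduces to a family, indexed by $k$, of time-dependent Markovian policies acting on the observations.

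The main obstacle is that the agent could still correlate its actions across time steps via the token $k$, since different $k$'s may use different Markovian policies. This amounts to a uniform mixture over $|Q|$ Markovian policies, which is more than $\polMR$. To rule this out, we will tie the token to the environment. We enlarge the set of environments to $n' = n\cdot|Q|$ by pairing each original environment $i$ with each token value $k$; in environment $(i,k)$, nature deterministically draws token $k$. Adding environments increases $n'$ and hence changes $|Q|$, so we have to fix $|Q|$ first and then define $n' \le |Q|$ appropriately. We will resolve this by letting the token alphabet have size $|Q|$ while using only $n' \le |Q|$ environments; for example, we add dummy environments with high reward to pad $n'$ up to exactly $|Q|$. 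Under worst-case semantics, the adversary can then choose both $i$ and $k$. The value becomes $\min_k \min_i V_{\calM_i}^{\pi_k}$, which is at least $B$ iff some Markovian $\pi_k$ achieves $B$ in the original instance, as desired.

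The delicate points are two. First, the announce-$k$ penalty must force injective storage even under the min over environments. Second, no partial mixing that spends memory on correlation can outperform the trade-off; we would quantify this by choosing $L$ larger than the total range of original rewards times $H$.
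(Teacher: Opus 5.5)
Your $\ETR$-membership argument is fine; it is what \Cref{lem:fsc_product_pomdp} combined with \Cref{fac:ETR_inclusion_mem} gives. The hardness reduction, however, breaks at exactly the step you flagged.

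Tying the token to the environment requires an environment $(i,k)$ for every original $i\in[n]$ and every token $k\in[|Q|]$. That is $n|Q|$ environments, which exceeds $|Q|$ whenever $n\geq 2$. (For $n=1$ the source is a single POMDP, whose Markovian problem is in $\NP$.) So the defining constraint $|Q|\geq \alpha n'=n'$ of the $\alpha=1$ problem fails, and padding with dummy environments only enlarges $n'$.

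Nor can you keep $n'\leq |Q|$. If all $|Q|$ tokens occur among at most $|Q|$ environments, each token occurs in a single environment. Storing the token then reveals the environment, so the agent can tailor $\pi_k$ to it and the problem decouples into single-environment problems. If fewer tokens occur, memory is not filled and the spare states can be used for correlation. Reverting to a uniformly random token observed before the game brings back the problem you identified yourself: the token is a shared random seed for the whole original phase, so the FSC realizes a mixture of Markovian policies rather than an element of $\polMR$.

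The missing idea is \emph{where} the random token enters. The paper reduces from deciding whether $\sum_{j,k}b_{ijk}\mathbf{x}[j]\mathbf{y}[k]=0$ for all $i$ has a solution $\mathbf{x},\mathbf{y}\in\dist{[n]}$. In its construction:
\begin{enumerate}
\item the agent first commits to $a_0$;
\item only then does nature draw a uniform $l\in[m]$, independent of $a_0$, which is observed at $t=1$;
\item the agent must announce $l$ at $t=2$;
\item the second game move is $a_3$.
\end{enumerate}
The memory bound is $|Q|=m$, which is simultaneously the number of tokens and the number of environments. Exact announcement gives $F_{\mathsf{trunc}}P=I$ for nonnegative matrices, so $P$ is a permutation and $q_2$ is a bijective image of $l$. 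Since $l$ is independent of $a_0$, this erases everything the memory knew about $a_0$. The token only randomizes $a_3$ without correlating it with $a_0$, so the joint law of $(a_0,a_3)$ is a product, and no environment blow-up is needed.

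Exactness of the storage requirement is also obtained without any quantitative $L$. The tensor is taken closed under negation ($a_{ijk}=-a_{i'jk}$), so every policy has non-positive $t=3$ reward in some environment. The announcement penalty is identical in all environments because they share dynamics. Hence any policy of non-negative value announces correctly with probability one.

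Your large-$L$ trade-off would need comparable care. The source problem has no gap, and approximately faithful storage only yields an approximately product joint distribution.
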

\begin{proofsketch}
    We use the same techniques as in \Cref{prop:mepomdp_blind_timedep}. 
    However, we add a randomly drawn state component to the environment, and we set the rewards so agent must remember this state component to avoid negative reward. 
    This component exhausts the agent's memory so that it must draw actions at other timesteps independently, allowing us to simulate the 2-vs.-1 matrix game.
\end{proofsketch}
It remains an open problem whether there are lower bounds on $|Q|$ in terms of $n$, or other ME-POMDP parameters, such that the $\alpha$-FSC value problem becomes $\NP$-complete.

\section{Conclusion}

We established a range of new complexity results for finite-horizon ME-POMDPs across subclasses of POMDPs.
Our results show that some POMDP subclasses remain tractable when extended to the multi-environment setting. 
Future work will extend complexity characterizations to more subclasses and examine infinite-horizon and logic-based objectives.%

\begin{acks}
Nils Jansen was supported by the ERC Starting Grant DEUCE (101077178).
Marnix Suilen was supported by Research Foundation Flanders (FWO) Junior Postdoctoral Fellowship grant 12ALH27N (IDEAS in POMDPs).
Caleb Probine and Ufuk Topcu were funded in part by the Air Force Office of Scientific Research (AFOSR) under grant number FA9550-22-1-0403, and by the Office of Naval Research (ONR) under grant number N00014-24-1-2797.

\end{acks}

\bibliographystyle{ACM-Reference-Format} 
\bibliography{refs}

\clearpage\newpage
\onecolumn
\appendix
\crefalias{section}{appendix} 
\crefalias{subsection}{appendix} 
\crefalias{subsubsection}{appendix}

\section{Statement on the use of Artificial Intelligence}

\subsection{Expanding literature searches}
We used large language models to supplement search keywords for literature review. In particular, when searching for literature in areas where search terms were unclear, we used generative models to suggest keywords and search strategies that could be useful for finding papers. We then used these keywords in Google Scholar searches to see whether they yielded new papers that added context to our related work. Examples of prompts provided to ChatGPT are as follows.
\begin{enumerate}
    \item `` What are terms for optimization subject to constraints that take the form of ``A must have a non-negative factorization with dimension k'' ''
    \item `` Consider the following setup for a partially observable stochastic game 1. P1 has full observability 2. P1 moves only at the first time step 3. P2 then plays the rest of the game with some partial observability. Such games are naturally a subclass of extensive-form games, partially observable stochastic games (POSGs), and games on graphs. What theoretical complexity results exist for such games, and what might they have been called? ''
\end{enumerate}

LLM-generated text from keyword generation does not directly appear in the paper. Instead, we used the generated keywords in Google Scholar searches to expand the set of papers we examined when contextualizing our work. When generative models suggested papers, we first sourced them via Google Scholar and read them to identify the results they claimed.

\subsection{Identifying folklore results}

We also used language models to identify canonical references for folklore results. For example, it is well-known that many different formalisms for POMDPs that differ slightly are actually equivalent. For example, it is well-known that one can transform a POMDP with stochastic observations into one with deterministic observations. Literature on such folklore results, however, is difficult to find via standard search techniques. While one can easily prove these results, we used ChatGPT to try to identify proofs of such folklore results in the literature so they could be referenced.  An example of a prompt used for this purpose when developing the paper is as follows.
\begin{itemize}
    \item ``Can you find a POMDP paper that gives the polytime reduction from POMDPs with state-action-based observations to state-based observation functions''
\end{itemize}
When ChatGPT suggested such papers, we again confirmed their existence and read them to determine what results they actually contained.

The particular models used to generate the responses include \textsc{GPT-5.6-Luna}, \textsc{GPT-5.5}, and \textsc{GPT-5.5-Mini}.

\section{Additional preliminaries}

For a vector $\mathbf{u}$ we use $\vecat{\mathbf{u}}{i}$ or $\mathbf{u}(i)$ to denote the $i^{\text{th}}$ element, and $\mathbf{u}^\top$ denotes its transpose.
The entry-wise $1$ norm of a matrix $P \in \mathbb{R}^{m \times n}$ is $\lVert P \rVert_{1,1} = \sum_{i \in [m], j \in [n]} |\vecat{P}{i,j}|$, and similarly for a vector
 $\lVert \mathbf{u} \rVert_1 = \sum_{i \in [n]} |\vecat{\mathbf{u}}{i}|$.
We use $\unit{i} \in \RR^n$ to denote the standard basis vector with a one at entry $i$ and zeros elsewhere, and $\mathbf{1}$ is the all-ones vector.
For a pair of matrices $P,Q \in \RR^{m,n}$, $\langle P, Q \rangle$ is the element-wise product $\sum_{i\in [m], j \in [n]} \vecat{P}{i,j}\vecat{Q}{i,j}$.

\paragraph{Paths of length $H$}
Given a horizon $H$, we evaluate the reward on paths of the shape $\rho = s_0,a_0,\dots,s_{H-1},a_{H-1}$.
Although the final state-action pair $(s_{H-1},a_{H-1})$ still induces a distribution on the successor state $s_H$, we ignore this state, as it does not contribute anything to the finite-horizon reward $\sum_{t=0}^{H-1} R(s_t,a_t)$.

\begin{definition}[Belief update]
    Given a belief $b$, an action $a$, and observation $z$, we compute the successor belief $b'$ using the belief update function $b_{upd}\colon \Delta(S) \times A \times Z \to \Delta(S)$, which we define as follows.
    \begin{align*}
    b'(s') &= b_{upd}(b,a,z)(s'),\\
    &= \frac{O(s',z)\sum_{s\in S}b(s)\cdot T(s,a,s')}{\sum_{s'\in S}O(s',z)\sum_{s\in S}b(s)\cdot T(s,a,s')}.
    \end{align*}
\end{definition}

\paragraph{Belief update at $t=0$}
Although we are given an initial belief, we also receive an observation of the initial state.
Upon receiving this observation, we perform a belief update without action:
\begin{align*}
     b_{upd}(b_0,\cdot,z_0)(s) &= \frac{O(s,z_0)b_0(s)}{\sum_{s\in S}O(s,z_0)b_0(s)}.
\end{align*}

\paragraph{Additional policy types}
We next list some additional types of policies that we use in proofs in this appendix.
\begin{definition}[Belief-based policies]
    A time-dependent belief-based policy maps tuples of distributions over states and time to distributions over actions. Formally, a belief-based policy is a function
    \[
        \pi \colon \dist{S} \times \{0,\dots,H-1\} \to \dist{A}.
    \]
\end{definition}
\begin{definition}[Observation-based policies]
    An observation-based policy maps sequences of observations to distributions over actions, i.e.
    \[
        \pi \colon \bigcup_{t=1}^H Z^t \to \dist{A}.
    \]
\end{definition}

\subsection{Supporting lemmas}

We will repeatedly need to evaluate time-dependent Markovian policies as a subroutine in the algorithms we provide, and as such, we provide the following lemma on the evaluation of deterministic time-dependent Markovian policies in POMDPs.
In particular, for an arbitrary POMDP $\calM$, we can evaluate a deterministic time-dependent Markovian policy $\pi$ in polynomial time, and furthermore, the representation of the expected reward of this policy is polynomial in the size of the POMDP.
We remark that existing work details polynomial-time procedures for policy evaluation in POMDPs when probabilities are represented with binary fractions, and observation functions are deterministic \cite{DBLP:journals/jacm/MundhenkGLA00}. 
For completeness, we provide the following lemma, which handles stochastic observation functions and explicitly bounds the reward's representation size.
\begin{lemma}
    \label{lem:dtd_pol_eval_rep_and_time}
    There exists an algorithm that takes a POMDP $\calM = (S,A,Z,T,O,R,b,H)$ and a deterministic time-dependent Markovian policy $(\pi_t)_{t\in\{0,\ldots,H-1\}}$ and evaluates the expected reward $\mathbb{E}_{\calM}^\pi\left[\sum_{t=0}^{H-1}R(s_t,a_t)\right]$ in polynomial time in the representation size $\calN$ of the POMDP $\calM$. 

    Furthermore, the bit lengths of the integers required to represent $\mathbb{E}_{\calM}^\pi\left[\sum_{t=0}^{H-1}R(s_t,a_t)\right]$ are bounded by a polynomial in $\calN$. 
\end{lemma}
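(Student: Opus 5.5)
We evaluate the policy by forward propagation of joint state--observation distributions. Because $\pi$ is deterministic and time-dependent Markovian, the action at time $t$ is the fixed function $a = \pi_t(z_t)$ of the current observation. We therefore never need beliefs or histories, only the unconditional distribution over the pair $(s_t,z_t)$.

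Define $\mu_t(s) = \mathbb{P}[s_t = s]$, starting from $\mu_0 = b$. The expected reward at step $t$ is
\[
r_t = \sum_{s\in S}\sum_{z\in Z}\mu_t(s)\,O(s,z)\,R(s,\pi_t(z)),
\]
and the distribution evolves by
\[
\mu_{t+1}(s') = \sum_{s\in S}\sum_{z\in Z}\mu_t(s)\,O(s,z)\,T(s,\pi_t(z),s').
\]
The output is $\sum_{t=0}^{H-1} r_t$. Each step takes $O(|S|^2|Z|)$ arithmetic operations, and $H$ is given in unary. Hence the number of arithmetic operations is polynomial in $\calN$. It remains to show that the numbers stay polynomially sized.

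Controlling bit lengths is the main obstacle. Naive rational arithmetic with reduction to lowest terms could a priori grow, so we bound sizes via a common denominator. Let $D$ be the product of the denominators of all entries of $T$, $O$, $R$ and $b$. Its bit length is at most the sum of the individual bit lengths, which is at most $\calN$.

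We prove by induction that $D^{2t+1}\mu_t(s)$ is an integer. The base case holds because $D\,b(s)$ is an integer. For the inductive step, each transition multiplies by one $O$ entry and one $T$ entry, and each of these is cleared by a factor of $D$.

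We also bound the numerators. Since $\mu_t(s)\le 1$, the integer $D^{2t+1}\mu_t(s)$ is at most $D^{2t+1}$. Its bit length is therefore at most $(2t+1)\calN \le (2H+1)\calN$, which is polynomial.

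The same argument covers the rewards. Each $r_t$ has the form $N_t/D^{2t+3}$, where $|N_t| \le D^{2t+3}\cdot|S||Z|\max|R|$. Summing over $t$ with the common denominator $D^{2H+1}$ gives an integer numerator and denominator, each of polynomial bit length.

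Every intermediate quantity has polynomial bit length, so each arithmetic operation runs in polynomial time. Together with the polynomial operation count, this proves both claims of \Cref{lem:dtd_pol_eval_rep_and_time}. Alternatively, keeping fractions in lowest terms also works, since reduced forms are never larger than the common-denominator representation.
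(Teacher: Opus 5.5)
Your proposal is correct and follows essentially the same route as the paper: a common denominator $D$ (the paper's $C$) for all input rationals, an induction showing the state marginals have denominator $D^{2t+1}$, and a bound on the numerator of the expected reward over a common power of $D$. Your bookkeeping is slightly tighter (bit length of $D$ at most $\calN$ rather than $\calN^2$, and denominator $D^{2H+1}$ rather than $C^{2H+3}$), but this changes nothing essential.
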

\begin{proof}
    Before proceeding, we argue that we can process any POMDP instance so that all rationals share a common denominator. We also argue that this procedure runs in polynomial time in the POMDP instance size.
    In particular, we can argue in a similar manner to the proof of Lemma $2$ in \cite{DBLP:journals/corr/abs-2605-07537}, and following the notation in \cite{DBLP:journals/corr/abs-2605-07537}, we use $\calN$ to denote the representation size of the ME-POMDP problem. 
    
    We enumerate all rational numbers $(q_k)_{k\in [K]}$ in the POMDP's representation,
    \begin{equation}
        \underbrace{\nicefrac{x_1}{y_1}}_{q_1},\ldots, \underbrace{\nicefrac{x_K}{y_K}}_{q_K},
    \end{equation}
    where $K \leq \mathcal{N}$ is the number of rational numbers in the POMDP's encoding which are used to represent transition functions, observation functions, rewards and initial state distributions. 
    Clearly, we can rewrite the rational number $q_l$ as
    \begin{equation}
        q_l = \frac{x_l\prod_{l' \in [K]: l' \neq l} y_{l'}}{\prod_{l' \in [K]} y_{l'}} = \frac{X_l}{C},
    \end{equation}
    where $X_l \coloneq x_l\prod_{l' \in [K]: l' \neq l} y_{l'}$, and $C \coloneq \prod_{l' \in [K]} y_{l'}$.
    Furthermore, if the POMDP has an $\calN$-bit representation, then we have $|x_l| \leq 2^\calN$ and $|y_l| \leq 2^\calN$ for all $l \in [K]$, which implies that $|X_l| \leq 2^{\calN^2}$ and $|C| \leq 2^{\calN^2}$.
    Computing each $X_l$ along with the constant $C$ comprises multiplying $K \leq \calN$ integers each with bit length bounded above by $\calN$.
    Thus, we can compute this common-denominator representation in polynomial time in the POMDP size.

    To evaluate the memoryless policy $(\pi_t)_{t \in \{0,\ldots,H-1\}}$, we first evaluate the following expressions which encode the probability of being at each state at each time. 
    \begin{align*}
        p_0(s) = b(s) \quad & \forall s \in S \\ 
        p_{t+1}(s) = \sum_{s' \in S} p_t(s')   \sum_{z \in Z}T(s',\pi_t(z))(s) O(s')(z) \quad & \forall s \in S, t \in \{0,\ldots, H-2\}.
    \end{align*}
    We can then argue, by induction, that the probabilities $(p_t(s))_{s\in S}$ have $C^{2t+1}$ as a denominator.
    Trivially, we can represent the probability $p_0(s)$ as a rational with denominator $C$ for each $s \in S$. 
    Then, assume that $p_t(s) = \frac{w_{s}^{(t)}}{C^{2t+1}}$ for all $s \in S$
    , where $w_{s}^{(t)} \in \{0,\ldots, C^{2t+1}\}$ for each $s$.
    Then, for each $s$, we have
    \begin{equation}
        p_{t+1}(s) = \sum_{s' \in S} \frac{w_{s'}^{(t)}}{C^{2t+1}} \sum_{z \in Z} \frac{u_{s'\pi_t(z)s}}{C} \frac{v_{s'z}}{C}
        = \frac{w_{s}^{(t+1)}}{C^{2(t+1) + 1}}
    \end{equation}
    for some $w_{s}^{(t+1)} \in \{0,\ldots, C^{2(t+1) + 1}\}$, where $T(s',a,s) = u_{s'as}/C$ for each $s',s \in S$, and $a\in A$, and $O(s',z) = v_{s'z}/C$ for each $s' \in S,z \in Z$.

    We can also evaluate $(p_{t}(s))_{t \in \{0,\ldots,H_1\}}$ in polynomial time. 
    First, note that $C^{2(t+1) + 1} \leq 2^{\calN^2 (2H + 1)} \leq 2^{3\calN^3}$, which implies that each integer $w_{s}^{(t)}$ has at most $3\calN^3$ bits.
    To compute each of the $|S|(H-1)$ values $w_{s}^{(t+1)}$, we need to compute $\sum_{s' \in S, z \in Z} w_{s'}^{(t)} u_{s'\pi_t(a)s} v_{s' z}$, which can clearly be done in polynomial time, since each $w_{s'}^{(t)}$ has at most $3\calN^3$ bits, and each of 
    the $u$ and $v$ values has at most $\calN^2$ bits.
    
    The expected reward for the memoryless policy is
    \begin{equation}
        \mathbb{E}_{\calM}^{\pi}\left[\sum_{t=0}^{H-1} R(s_t,a_t)\right] = \sum_{t = 0}^{H-1} \sum_{s \in S, z \in Z} p_{t}(s) O(s) (z) R(s,\pi(z)) = \sum_{t = 0}^{H-1} \sum_{s \in S, z \in Z} 
        \frac{C^{2(H-t)} w_{s}^{(t)}}{C^{2H+1}} \frac{v_{sz}}{C} \frac{\kappa_{s\pi_t(z)}}{C},
    \end{equation}
    where we assume $R(s,a) = \nicefrac{\kappa(s,a)}{C}$ for each $s\in S$ and $a \in A$.
    Again, we can compute the sum in polynomial time, as all of the integers in the above expression have representation length bounded by a fixed polynomial in $\calN$.

    We can also observe that the expected reward is equal to $\nicefrac{\xi}{(C^{2H+3})}$ for some integer $\xi$, which we now bound.
    In particular, since the expected reward is bounded above by $H\max_{s,a} |R(s,a)|$, we have
    \begin{equation}
        \left| \frac{\xi}{(C^{2H+3})} \right| \leq H \max_{s \in S,a \in A} |R(s,a)| \leq H \frac{\max_{l \in K}|X_l|}{C},
    \end{equation}
    which implies that 
    \begin{equation*}
        \xi \leq H C^{2H + 2} \max_{l \in [K]} |X_l| \leq H 2^{\calN^2 (2H + 2) } 2^{\calN^2} \leq H 2^{5H \calN^2} \leq 2^{6\calN^3} .
    \end{equation*}
    Thus, from the bounds on $C$ and $\xi$, we deduce the polynomial bit-length condition for the policy's expected reward.
\end{proof}

Next, we discuss a correspondence between FSCs on POMDPs and Markovian policies on a product POMDP.
This correspondence is well known, and mirrors the transformation from \emph{time-independent} FSC-synthesis to memoryless policy synthesis in a product POMDP \cite{junges2018finite}. 
We give a proof here for completeness in handling the time-dependent details.

\begin{lemma}\label{lem:fsc_product_pomdp}
    Fix a POMDP $\calM = (S,A,Z,T,O,R,b,H)$, and a memory size $|Q|$. 
    In deterministic polynomial time, we can construct an extended POMDP $\calM'$ such that there exists a correspondence between
    \begin{enumerate}
        \item the set of time-dependent FSCs with $|Q|$ states in the POMDP $\calM$, and
        \item the set of time-dependent Markovian policies in an expanded POMDP $\calM'$,
    \end{enumerate}
    such that for every time-dependent FSC with $|Q|$ states in the POMDP $\calM$, there exists a time-dependent Markovian policy in an expanded POMDP $\calM'$that achieves the same reward, and vice versa.
    Furthermore, this policy translation procedure can be done in polynomial time in the policy's representation size.

    Furthermore, if the time-dependent Markovian policy $\pi'$ in the expanded POMDP $\calM'$ is {deterministic}, then there exists a deterministic time-dependent FSC with $|Q|$ states in $\calM$ with the same reward.
\end{lemma}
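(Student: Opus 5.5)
The plan is to build the product POMDP explicitly and then translate policies in both directions. The obstacle is that the time-dependent FSC chooses its action from $(q_t,z_t)$ and then updates memory using $(q_t,z_t,a_t)$. A Markovian policy in $\calM'$ sees only the current observation, so the memory choice must be \emph{made an action} and the memory must be \emph{made observable}.

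I would define $\calM'$ with state space $S \times Q \times A$, or more economically $S\times Q$ with alternating time steps; I commit to the interleaved version. Take $S' = (S \times Q) \cup (S \times Q \times Z \times A)$, action set $A' = A \cup Q$, observation set $Z' = (Q \times Z) \cup (Q\times Z \times A)$, and horizon $H' = 2H$.
\begin{itemize}
    \item At even steps $2t$, the state is $(s,q)$ and it emits observation $(q,z)$ with probability $O(s,z)$. Playing $a\in A$ earns $R(s,a)$ and moves deterministically to the intermediate state $(s,q,z,a)$. To encode $z$ in the state, I would make the observation deterministic by first splitting $(s,q)$ according to $O$ at the preceding transition, i.e.\ store $z$ in the state.
    \item At odd steps, the intermediate state emits $(q,z,a)$ deterministically. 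Playing $q'\in Q$ earns $0$ and moves to $(s',q')$ with probability $T(s,a,s')$.
    \item Choosing an action of the wrong type, say a memory state at an even step, gets a large negative penalty (below $-H\max|R|-1$) or is simply mapped to a fixed default action, so that such choices never help. The initial distribution is $b$ combined with an initial dummy step that selects $q_0$ as an action, which encodes $\iota$; this dummy step shifts $H'$ to $2H+1$.
\end{itemize}
All tables have size polynomial in $|S|,|Q|,|A|,|Z|,H$, so the construction is polynomial time.

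For the correspondence, a time-dependent FSC $(Q,\iota,\gamma_t,\eta_t)$ maps to the Markovian policy that plays $\iota$ at the dummy step, $\gamma_t(q,z)$ on observation $(q,z)$ at step $2t$, and $\eta_t(q,z,a)$ on observation $(q,z,a)$ at step $2t+1$. Conversely, a Markovian policy $\pi'$ restricted to the well-typed actions at each parity defines $\iota,\gamma_t,\eta_t$. Any mass $\pi'$ places on ill-typed actions is reassigned to a fixed valid action; with the default-action semantics this changes nothing. Both maps are read off table entries and run in polynomial time. Determinism is preserved because Dirac entries map to Dirac entries, which gives the final claim.

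To show equal reward, I would prove by induction on $t$ that the joint distribution of $(s_t,q_t,z_t,a_t)$ under the FSC in $\calM$ equals the distribution of the corresponding components at step $2t$ in $\calM'$. Summing rewards, which are only collected at even steps, then yields equal values. The main obstacle is ensuring that the observation at step $2t$ reveals exactly $(q_t,z_t)$ and that the observation at step $2t+1$ reveals exactly $(q_t,z_t,a_t)$, while $z_t$ still has the correct conditional law given $s_t$. I would handle this by sampling $z$ into the state during the transition, setting $T'((s,q,z,a),q',(s',q',z')) = T(s,a,s')O(s',z')$, with the initial dummy state sampling $(s_0,z_0)$ from $b(s)O(s,z)$. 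Both emissions are then deterministic functions of the state and the induction is immediate.
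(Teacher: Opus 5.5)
Your construction is correct, but it takes a different route from the paper. The paper keeps one step of $\calM'$ per step of $\calM$: states $\{s_\bot\}\cup S\times Q$, observations $O'((s,q)) = O(s)\times\delta_q$, and the \emph{joint} action set $A\times Q$. On seeing $(z,q)$, the policy draws $(a,q')$ with probability $\gamma_t(q,z)(a)\,\eta_t(q,z,a)(q')$. Action and next memory state are chosen at the same instant the observation is seen. As a result, $z$ never has to be stored in the state, the horizon is just $H+1$, and there are no ill-typed actions. The price comes in the converse direction. An arbitrary joint distribution over $A\times Q$ must be split into a marginal $\gamma_t$ and a conditional $\eta_t$, with an arbitrary default where $\gamma_t(q,z)(a)=0$.

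Your interleaving makes $\iota$, $\gamma_t$, $\eta_t$ literally the decision rules at separate steps. Both translations are then pure relabelings, and preservation of determinism is immediate. The costs are the following:
\begin{itemize}
\item The horizon doubles.
\item The state space grows by roughly a factor $|Z|(|A|+1)$, because the delayed memory decision forces you to carry $z$ and $a$ in the state. You handle this correctly by sampling $(s',z')$ with weight $T(s,a,s')O(s',z')$, so the action-step states are really $S\times Q\times Z$ rather than $S\times Q$ as first written.
\item You need a convention for ill-typed actions.
\end{itemize}
On that last point, commit to the default-action semantics. Under the penalty variant, a Markovian policy that puts mass on ill-typed actions is no longer value-equivalent to the FSC obtained by reassigning that mass. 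The ``vice versa'' part of the lemma would then hold only as an inequality. That suffices for optimality arguments, but not for the equal-reward statement as claimed.
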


\begin{proof}
    
    Define a POMDP $\calM'$ with
    \begin{equation}
        \calM' = (\underbrace{\{s_\bot\} \cup S \times Q}_{S'}, \underbrace{A \times Q}_{A'}, \underbrace{Z \times Q}_{Z'}, T', O', R', b', H+1),
    \end{equation}
    where we denote the augmented state space as $S'$, the augmented action space as $A'$ and the augmented observation space as $Z'$.
    The state $s_\bot$ is a dummy state, included to allow a random initial FSC state.
    The transition function encodes the original POMDP's transition function and the memory update.
    \begin{align*}
        & T'(s_\bot, (a,q)) = b \times \delta_q & \forall (a,q) \in A \times Q \\
        & T'((s,q), (a,q')) = T(s,a) \times \delta_{q'} & \forall (a,q') \in A \times Q, (s,q) \in S\times Q.
    \end{align*}
    The observation function reveals the memory state and encodes the original observation function.
    \begin{align*}
        & O'(s_\bot) = \delta_{(z^{(*)},q^{(*)})} \\
        & O'((s,q)) = O(s) \times \delta_q \quad \forall s \in S, q \in Q.
    \end{align*}
    In this observation function, $z^{(*)}$ and $q^{(*)}$ are an arbitrary observation-memory-state pair.
    The reward function encodes the original reward function outside the dummy state, where the reward is zero.
    \begin{align*}
        & R'(s_\bot, (a,q)) = 0 & \forall (a,q) \in A \times Q \\
        & R'((s,q), (a,q')) = R(s,a) & \forall (a,q') \in A \times Q, (s,q) \in S\times Q.
    \end{align*}
    Finally, the initial state distribution places all mass on the dummy state, i.e., $b' = \delta_{s_\bot}$.

    Clearly, we can construct $\calM'$ from $\calM$ in polynomial time.
    
    Any FSC $(Q, \iota, (\gamma_t)_{t \in \{0,\ldots, H-1\}}, (\eta_t)_{t \in \{0,\ldots, H-1\}})$ for the original POMDP corresponds to a time-dependent Markovian policy in $\calM'$ which attains the same reward.
    Indeed, for each $t \in \{0,\ldots, H-1\}$, let $\xi_t : Q \times Z \rightarrow \dist{A \times Q}$ denote the function such that
    \begin{equation}
        \xi_t((q,z))((a,q')) = \gamma_t(q,z)(a) \eta_t(q,z,a)(q') \quad \forall q,q' \in Q, z \in Z, a \in A.
    \end{equation}
    That is, $\xi_t$ encodes the joint distribution on actions and successor memory states given an original memory state and an observation. 

    Clearly, we can implement $\xi_t$ as a time-dependent Markovian policy $\pi'$ in the expanded POMDP $\calM'$ by setting
    \begin{align*}
        & \pi'_0((z^{(*)},q^{(*)}))((a^{(*)},q)) = \iota(q)  \quad & \forall q \in Q, \\
        & \pi'_{t}((z,q))((a,q')) = \xi_{t-1}((q,z))((a,q')) & \forall q,q' \in Q, z \in Z, a \in A, t \in \{1,\ldots,H\},
    \end{align*}
    where $a^{(*)} \in A$ is an arbitrary fixed action.
    Now, if $h = q_0, s_0, z_0, a_0, \ldots, q_{H-1}, s_{H-1}, z_{H-1}, a_{H-1} \in (Q \times S \times Z \times A)^{H}$ is some trajectory in the original POMDP, then the set of trajectories
    \begin{equation}
        \calH_h := \{s_\bot, (z^{(*)},q^{(*)}), (a^{(*)},q_0), (s_0,q_0), (z_0, q_0), (a_0,q_1),\ldots, (s_{H-1},q_{H-1}), (z_{H-1}, q_{H-1}), (a_{H-1}, q_{H}) | q_{H+1} \in Q\} \subset (S' \times Z' \times A')^{H+1}
    \end{equation}
    satisfies $\mathbb{P}^\pi_\calM[h] = \mathbb{P}^{\pi'}_{\calM'}[\calH_h]$.
    As the two POMDPs give the same reward for trajectories in $h$ and $\calH_h$ respectively, we can deduce that $\pi'$ obtains the same reward in $\calM'$ as $\pi$ does in $\calM$. 

    Conversely, let $\pi'$ be some time-dependent Markovian policy in $\calM'$. We first set $\iota$ such that $\iota(q) = \sum_{a \in A}\pi'_0((z^{(*)}, q^{(*)}))((q,a))$ for all $q \in Q$. 
    Then, we set, for each $t \in \{0,\ldots, H-1\}$, $q \in Q$, $z \in Z$, and $a \in A$,
    \begin{equation}
        \gamma_t(q,z)(a) = \sum_{q' \in Q} \pi'_{t+1}((z,q))((q',a)).
    \end{equation}
    Next, for each $t \in \{0,\ldots, H-1\}$,$q,q' \in Q$, $z \in Z$, and $a \in A$, we set
    \begin{equation}
        \eta_t(q,z,a)(q') = 
        \begin{cases}
            \frac{\pi'_{t+1}((z,q))((q',a))}{\gamma_t(q,z)(a)} & \text{ if $\gamma_t(q,z)(a) > 0$} \\
            \delta_{q^{(*)}} & \text{ otherwise .}
        \end{cases}
    \end{equation}
    In the above expression $q^{(*)}$ is some arbitrary fixed memory-state in $Q$.
    Clearly, we have 
    \begin{equation}
        \pi'_{t+1}((z,q))((q',a)) = \eta_t(q,z,a)(q')\gamma_t(q,z)(a) \quad \forall q,q' \in Q, a \in A, z \in Z, t \in \{0,\ldots, H-1\},
    \end{equation}
    and thus, by the same logic as above, we can deduce that if $h = q_0, s_0, z_0, a_0, \ldots, q_H, s_H, z_H, a_H \in (Q \times S \times Z \times A)^{H}$ is some trajectory in the \emph{original} POMDP $\calM$, then the following set of trajectories in the expanded POMDP $\calM'$ has the same probability and reward under $\pi'$.
    \begin{equation}
        \calH_h := \{s_\bot, (z^{(*)},q^{(*)}), (\tilde{a},q_0), (s_0,q_0), (z_0, q_0), (a_0,q_1),\ldots, (s_H,q_H), (z_H, q_H), (a_H, q_{H+1}) | q_{H+1} \in Q, \tilde{a} \in A\} \subset (S' \times Z' \times A')^{H+1}.
    \end{equation}
    We can then deduce that the policy $\pi$ defined by $(\iota, (\gamma_t)_{t \in \{0,\ldots,H-1\}}, (\eta_t)_{t \in \{0,\ldots,H-1\}})$ attains the same reward as $\pi'$.

    We have now established the correspondence between the two policy classes, and we remark that each direction of the translation between the two policy classes can be done in polynomial time in the size of the policy.

    Furthermore, examining the proof, we see that if a time-dependent Markovian policy $\pi'$ in $\calM'$ is deterministic, we can construct an FSC-based policy in $\calM$ with the same reward that is also deterministic.
\end{proof}

\clearpage\newpage
\section{Figures}
\begin{figure}[h]
    \scalebox{0.78}{\begin{tikzpicture}[state/.append style={shape = circle}, >=stealth, bobbel/.style={minimum size=1mm,inner sep=0pt,fill=black,circle}]
    \fill[lightgray!20] (-0.2,1.4) rectangle (5,-4.7);
    \fill[lightgray!5] (5,1.4) rectangle (10.2,-4.7);
    \fill[lightgray!20] (10.2,1.4) rectangle (15.4,-4.7);
    \fill[lightgray!5] (15.4,1.4) rectangle (21,-4.7);
    \draw[fill=color-normal,opacity=0.05] (-0.2,-4.7) to[out=90,in=-90] (-0.2,-1.6) to[out=0,in=180] (4.3,-1.6) to[out=0,in=100] (5,-3) to[out=-80,in=175] (6.5,-4.4) to[out=-5,in=190] (7.2,-4.4) to[out=10,in=185] (8.8,-3.7) to[out=5,in=170] (9.3,-3.7) to[out=-10,in=180] (10.9,-4.4) to[out=0,in=180] (13,-4.4) to[out=0,in=-105] (15.2,-3.5) to[out=75,in=180] (16.2,-2.65) to[out=0,in=180] (21,-2.65) to[out=-90,in=90] (21,-4.7);
    \draw[fill = color-normal,opacity=0.05] (-0.2,-4.7) to[out=90,in=-90] (-0.2,-3.65) to[out=0,in=180] (1.5,-3.65) to[out=0,in=120] (2.4,-4.2) to[out=-60,in=180] (3.3,-4.6) to[out=0,in=180] (21,-4.6) to[out=-90,in=90] (21,-4.7);
    \node[] (pomdp1) at (2.4,0) {POMDP};
    \node[] (rev_pomdp1) at ($(pomdp1) + (-1.3,-1)$) {RevPOMDP};
    \node[] (semi_mdp1a) at ($(rev_pomdp1) + (0,-1)$) {SOMDP$_{\eta_s \in (0,1]}$};
    \node[] (inter_mdp1a) at ($(semi_mdp1a) + (0,-1)$) {IOMDP$_{\lambda \in (0,1]}$};
    \node[] (semi_mdp1b) at ($(rev_pomdp1) + (2.7,-1)$) {SOMDP$_{\eta_s \in [0,1]}$};
    \node[] (inter_mdp1b) at ($(semi_mdp1b) + (0,-1)$) {IOMDP$_{\lambda \in [0,1]}$};
    \node[] (mdp1) at ($(inter_mdp1a) + (0,-1)$) {MDP};
    \node[] (blind_mdp1) at ($(inter_mdp1b) + (0,-1)$) {BlindMDP};
    \draw[opacity=0] (rev_pomdp1) to node[sloped,opacity=1] {$\subseteq$} (pomdp1);
    \draw[opacity=0] (semi_mdp1a) to node[sloped,opacity=1] {$\subseteq$} (rev_pomdp1);
    \draw[opacity=0] (inter_mdp1a) to node[sloped,opacity=1] {$\subseteq$} (semi_mdp1a);
    \draw[opacity=0] (mdp1) to node[sloped,opacity=1] {$\subseteq$} (inter_mdp1a);
    \draw[opacity=0] (semi_mdp1b) to node[sloped,opacity=1,xscale=-1] {$\subseteq$} (pomdp1);
    \draw[opacity=0] (inter_mdp1b) to node[sloped,opacity=1] {$\subseteq$} (semi_mdp1b);
    \draw[opacity=0] (inter_mdp1a) to node[sloped,opacity=1] {$\subseteq$} (inter_mdp1b);
    \draw[opacity=0] (semi_mdp1a) to node[sloped,opacity=1] {$\subseteq$} (semi_mdp1b);
    \draw[opacity=0] (blind_mdp1) to node[sloped,opacity=1] {$\subseteq$} (inter_mdp1b);
    \node[] (pomdp2) at (7.6,0) {POMDP};
    \node[] (rev_pomdp2) at ($(pomdp2) + (-1.4,-1)$) {RevPOMDP};
    \node[] (semi_mdp2a) at ($(rev_pomdp2) + (0,-1)$) {SOMDP$_{\eta_s \in (0,1]}$};
    \node[] (inter_mdp2a) at ($(semi_mdp2a) + (0,-1)$) {IOMDP$_{\lambda \in (0,1]}$};
    \node[] (semi_mdp2b) at ($(rev_pomdp2) + (2.8,-1)$) {SOMDP$_{\eta_s \in [0,1]}$};
    \node[] (inter_mdp2b) at ($(semi_mdp2b) + (0,-1)$) {IOMDP$_{\lambda \in [0,1]}$};
    \node[] (mdp2) at ($(inter_mdp2a) + (0,-1)$) {MDP};
    \node[] (blind_mdp2) at ($(inter_mdp2b) + (0,-1)$) {BlindMDP};
    \draw[opacity=0] (rev_pomdp2) to node[sloped,opacity=1] {$\subseteq$} (pomdp2);
    \draw[opacity=0] (semi_mdp2a) to node[sloped,opacity=1] {$\subseteq$} (rev_pomdp2);
    \draw[opacity=0] (inter_mdp2a) to node[sloped,opacity=1] {$\subseteq$} (semi_mdp2a);
    \draw[opacity=0] (mdp2) to node[sloped,opacity=1] {$\subseteq$} (inter_mdp2a);
    \draw[opacity=0] (semi_mdp2b) to node[sloped,opacity=1,xscale=-1] {$\subseteq$} (pomdp2);
    \draw[opacity=0] (inter_mdp2b) to node[sloped,opacity=1] {$\subseteq$} (semi_mdp2b);
    \draw[opacity=0] (inter_mdp2a) to node[sloped,opacity=1] {$\subseteq$} (inter_mdp2b);
    \draw[opacity=0] (semi_mdp2a) to node[sloped,opacity=1] {$\subseteq$} (semi_mdp2b);
    \draw[opacity=0] (blind_mdp2) to node[sloped,opacity=1] {$\subseteq$} (inter_mdp2b);
    \node[] (pomdp3) at (12.8,0) {POMDP};
    \node[] (rev_pomdp3) at ($(pomdp3) + (-1.4,-1)$) {RevPOMDP};
    \node[] (semi_mdp3a) at ($(rev_pomdp3) + (0,-1)$) {SOMDP$_{\eta_s \in (0,1]}$};
    \node[] (inter_mdp3a) at ($(semi_mdp3a) + (0,-1)$) {IOMDP$_{\lambda \in (0,1]}$};
    \node[] (semi_mdp3b) at ($(rev_pomdp3) + (2.8,-1)$) {SOMDP$_{\eta_s \in [0,1]}$};
    \node[] (inter_mdp3b) at ($(semi_mdp3b) + (0,-1)$) {IOMDP$_{\lambda \in [0,1]}$};
    \node[] (mdp3) at ($(inter_mdp3a) + (0,-1)$) {MDP};
    \draw[opacity=0] (rev_pomdp3) to node[sloped,opacity=1] {$\subseteq$} (pomdp3);
    \draw[opacity=0] (semi_mdp3a) to node[sloped,opacity=1] {$\subseteq$} (rev_pomdp3);
    \draw[opacity=0] (inter_mdp3a) to node[sloped,opacity=1] {$\subseteq$} (semi_mdp3a);
    \draw[opacity=0] (mdp3) to node[sloped,opacity=1] {$\subseteq$} (inter_mdp3a);
    \draw[opacity=0] (semi_mdp3b) to node[sloped,opacity=1,xscale=-1] {$\subseteq$} (pomdp3);
    \draw[opacity=0] (inter_mdp3b) to node[sloped,opacity=1] {$\subseteq$} (semi_mdp3b);
    \draw[opacity=0] (inter_mdp3a) to node[sloped,opacity=1] {$\subseteq$} (inter_mdp3b);
    \draw[opacity=0] (semi_mdp3a) to node[sloped,opacity=1] {$\subseteq$} (semi_mdp3b);
    \node[] (pomdp4) at (18.2,0) {POMDP};
    \node[] (rev_pomdp4) at ($(pomdp4) + (-1.4,-1)$) {RevPOMDP};
    \node[] (semi_mdp4a) at ($(rev_pomdp4) + (0,-1)$) {SOMDP$_{\eta_s \in (0,1]}$};
    \node[] (inter_mdp4a) at ($(semi_mdp4a) + (0,-1)$) {IOMDP$_{\lambda \in (0,1]}$};
    \node[] (semi_mdp4b) at ($(rev_pomdp4) + (2.8,-1)$) {SOMDP$_{\eta_s \in [0,1]}$};
    \node[] (inter_mdp4b) at ($(semi_mdp4b) + (0,-1)$) {IOMDP$_{\lambda \in [0,1]}$};
    \draw[opacity=0] (rev_pomdp4) to node[sloped,opacity=1] {$\subseteq$} (pomdp4);
    \draw[opacity=0] (semi_mdp4a) to node[sloped,opacity=1] {$\subseteq$} (rev_pomdp4);
    \draw[opacity=0] (inter_mdp4a) to node[sloped,opacity=1] {$\subseteq$} (semi_mdp4a);
    \draw[opacity=0] (semi_mdp4b) to node[sloped,opacity=1,xscale=-1] {$\subseteq$} (pomdp4);
    \draw[opacity=0] (inter_mdp4b) to node[sloped,opacity=1] {$\subseteq$} (semi_mdp4b);
    \draw[opacity=0] (inter_mdp4a) to node[sloped,opacity=1] {$\subseteq$} (inter_mdp4b);
    \draw[opacity=0] (semi_mdp4a) to node[sloped,opacity=1] {$\subseteq$} (semi_mdp4b);
    \fill[lightgray!40] (-0.2,1.4) rectangle (21,0.6);
    \node[] (models1) at ($(pomdp1) + (0,0.97)$) {Single-environment models};
    \node[] (models2) at ($(pomdp2) + (0,1)$) {Multi-environment models};
    \node[] (models3) at ($(pomdp3) + (0,1)$) {Multi-observation models};
    \node[] (models3) at ($(pomdp4) + (0,0.97)$) {Multi-observation models, fixed support};
    \draw[-,loosely dashed] (-0.2,0.6) to (21,0.6);
    \draw[-,loosely dashed] (5,1.4) to (5,-4.7);
    \draw[-,loosely dashed] (10.2,1.4) to (10.2,-4.7);
    \draw[-,loosely dashed] (15.4,1.4) to (15.4,-4.7);
    \draw[color-normal] (-1.7,-1.6) to[out=0,in=180] (4.3,-1.6) to[out=0,in=100] (5,-3) to[out=-80,in=175] (6.5,-4.4) to[out=-5,in=190] (7.2,-4.4) to[out=10,in=185] (8.8,-3.7) to[out=5,in=170] (9.3,-3.7) to[out=-10,in=180] (10.9,-4.4) to[out=0,in=180] (13,-4.4) to[out=0,in=-105] (15.2,-3.5) to[out=75,in=180] (16.2,-2.65) to[out=0,in=180] (21,-2.65);
    \node[color-normal, draw=none,anchor=west] at (-1.8,-1.48) {\footnotesize{PSPACE-hard}};
    \node[color-normal, draw=none,anchor=west] at (-1.8,-1.72) {\footnotesize{in NP}};
    \draw[color-normal] (-1.175,-3.65) to[out=0,in=180] (1.5,-3.65) to[out=0,in=120] (2.4,-4.2) to[out=-60,in=180] (3.3,-4.6) to[out=0,in=180] (21,-4.6);
    \node[color-normal, draw=none,anchor=west] at (-1.275,-3.53) {\footnotesize{NP-hard}};
    \node[color-normal, draw=none,anchor=west] at (-1.275,-3.77) {\footnotesize{in P}};
\end{tikzpicture}}
    \caption{Overview of the complexity results no restrictions}
    \label{fig:complexity_overview_no_res}
\end{figure}

\begin{figure}[h]
    \scalebox{0.78}{\begin{tikzpicture}[state/.append style={shape = circle}, >=stealth, bobbel/.style={minimum size=1mm,inner sep=0pt,fill=black,circle}]
    \fill[lightgray!20] (-0.2,1.4) rectangle (5,-4.7);
    \fill[lightgray!5] (5,1.4) rectangle (10.2,-4.7);
    \fill[lightgray!20] (10.2,1.4) rectangle (15.4,-4.7);
    \fill[lightgray!5] (15.4,1.4) rectangle (21,-4.7);
    \draw[fill=color-deter,opacity=0.05] (-0.2,-4.7) to[out=90,in=-90] (-0.2,0.3) to (21,0.3) to[out=-90,in=90] (21,-4.7);
    \draw[fill=color-deter,opacity=0.05] (-0.2,-4.7) to[out=90,in=-90] (-0.2,-0.6) to[out=0,in=180] (1.4,-0.6) to[out=0,in=90] (2.3,-2.25) to[out=-90,in=180] (3.8,-4.65) to[out=0,in=180] (21,-4.65) to[out=-90,in=90] (21,-4.7);
    \node[] (pomdp1) at (2.4,0) {POMDP};
    \node[opacity=0] (rev_pomdp1) at ($(pomdp1) + (-1.3,-1)$) {RevPOMDP};
    \node[opacity=0] (semi_mdp1a) at ($(rev_pomdp1) + (0,-1)$) {SOMDP$_{\eta_s \in (0,1]}$};
    \node[opacity=0] (inter_mdp1a) at ($(semi_mdp1a) + (0,-1)$) {IOMDP$_{\lambda \in (0,1]}$};
    \node[] (semi_mdp1b) at ($(rev_pomdp1) + (2.7,-1)$) {SOMDP$_{\eta_s \in [0,1]}$};
    \node[opacity=0] (inter_mdp1b) at ($(semi_mdp1b) + (0,-1)$) {IOMDP$_{\lambda \in [0,1]}$};
    \node[] (mdp1) at ($(inter_mdp1a) + (0,-1)$) {MDP};
    \node[] (blind_mdp1) at ($(inter_mdp1b) + (0,-1)$) {BlindMDP};
    \draw[opacity=0] (mdp1) to node[sloped,opacity=1,pos=0.4] {$\subseteq$} (semi_mdp1b);
    \draw[opacity=0] (semi_mdp1b) to node[sloped,opacity=1,xscale=-1] {$\subseteq$} (pomdp1);
    \draw[opacity=0] (blind_mdp1) to node[sloped,opacity=1] {$\subseteq$} (semi_mdp1b);
    \node[] (pomdp2) at (7.6,0) {POMDP};
    \node[opacity=0] (rev_pomdp2) at ($(pomdp2) + (-1.4,-1)$) {RevPOMDP};
    \node[opacity=0] (semi_mdp2a) at ($(rev_pomdp2) + (0,-1)$) {SOMDP$_{\eta_s \in (0,1]}$};
    \node[opacity=0] (inter_mdp2a) at ($(semi_mdp2a) + (0,-1)$) {IOMDP$_{\lambda \in (0,1]}$};
    \node[] (semi_mdp2b) at ($(rev_pomdp2) + (2.8,-1)$) {SOMDP$_{\eta_s \in [0,1]}$};
    \node[opacity=0] (inter_mdp2b) at ($(semi_mdp2b) + (0,-1)$) {IOMDP$_{\lambda \in [0,1]}$};
    \node[] (mdp2) at ($(inter_mdp2a) + (0,-1)$) {MDP};
    \node[] (blind_mdp2) at ($(inter_mdp2b) + (0,-1)$) {BlindMDP};
    \draw[opacity=0] (mdp2) to node[sloped,opacity=1] {$\subseteq$} (semi_mdp2b);
    \draw[opacity=0] (semi_mdp2b) to node[sloped,opacity=1,xscale=-1] {$\subseteq$} (pomdp2);
    \draw[opacity=0] (blind_mdp2) to node[sloped,opacity=1] {$\subseteq$} (semi_mdp2b);
    \node[] (pomdp3) at (12.8,0) {POMDP};
    \node[opacity=0] (rev_pomdp3) at ($(pomdp3) + (-1.4,-1)$) {RevPOMDP};
    \node[opacity=0] (semi_mdp3a) at ($(rev_pomdp3) + (0,-1)$) {SOMDP$_{\eta_s \in (0,1]}$};
    \node[opacity=0] (inter_mdp3a) at ($(semi_mdp3a) + (0,-1)$) {IOMDP$_{\lambda \in (0,1]}$};
    \node[] (semi_mdp3b) at ($(rev_pomdp3) + (2.8,-1)$) {SOMDP$_{\eta_s \in [0,1]}$};
    \node[opacity=0] (inter_mdp3b) at ($(semi_mdp3b) + (0,-1)$) {IOMDP$_{\lambda \in [0,1]}$};
    \node[] (mdp3) at ($(inter_mdp3a) + (0,-1)$) {MDP};
    \draw[opacity=0] (mdp3) to node[sloped,opacity=1] {$\subseteq$} (semi_mdp3b);
    \draw[opacity=0] (semi_mdp3b) to node[sloped,opacity=1,xscale=-1] {$\subseteq$} (pomdp3);
    \node[] (pomdp4) at (18.2,0) {POMDP};
    \node[opacity=0] (rev_pomdp4) at ($(pomdp4) + (-1.4,-1)$) {RevPOMDP};
    \node[opacity=0] (semi_mdp4a) at ($(rev_pomdp4) + (0,-1)$) {SOMDP$_{\eta_s \in (0,1]}$};
    \node[opacity=0] (inter_mdp4a) at ($(semi_mdp4a) + (0,-1)$) {IOMDP$_{\lambda \in (0,1]}$};
    \node[] (semi_mdp4b) at ($(rev_pomdp4) + (2.8,-1)$) {SOMDP$_{\eta_s \in [0,1]}$};
    \node[opacity=0] (inter_mdp4b) at ($(semi_mdp4b) + (0,-1)$) {IOMDP$_{\lambda \in [0,1]}$};
    \draw[opacity=0] (semi_mdp4b) to node[sloped,opacity=1,xscale=-1] {$\subseteq$} (pomdp4);
    \fill[lightgray!40] (-0.2,1.4) rectangle (21,0.6);
    \node[] (models1) at ($(pomdp1) + (0,0.97)$) {Single-environment models};
    \node[] (models2) at ($(pomdp2) + (0,1)$) {Multi-environment models};
    \node[] (models3) at ($(pomdp3) + (0,1)$) {Multi-observation models};
    \node[] (models3) at ($(pomdp4) + (0,0.97)$) {Multi-observation models, fixed support};
    \draw[-,loosely dashed] (-0.2,0.6) to (21,0.6);
    \draw[-,loosely dashed] (5,1.4) to (5,-4.7);
    \draw[-,loosely dashed] (10.2,1.4) to (10.2,-4.7);
    \draw[-,loosely dashed] (15.4,1.4) to (15.4,-4.7);
    \draw[color-deter] (-1.72,0.3) to (21,0.3);
    \node[color-deter, draw=none,anchor=west] at (-1.78,0.42) {\footnotesize{PSPACE-hard}};
    \node[color-deter, draw=none,anchor=west] at (-1.78,0.18) {\footnotesize{in NP}};
    \draw[color-deter] (-1.175,-0.6) to[out=0,in=180] (1.4,-0.6) to[out=0,in=90] (2.3,-2.25) to[out=-90,in=180] (3.8,-4.65) to[out=0,in=180] (21,-4.65);
    \node[color-deter, draw=none,anchor=west] at (-1.275,-0.48) {\footnotesize{NP-hard}};
    \node[color-deter, draw=none,anchor=west] at (-1.275,-0.72) {\footnotesize{in P}};
\end{tikzpicture}}
    \caption{Overview of the complexity results determinsitic models}
    \label{fig:complexity_overview_deter}
\end{figure}

\begin{figure}[h]
    \strut\hfill
    \scalebox{0.78}{\begin{tikzpicture}[state/.append style={shape = circle}, >=stealth, bobbel/.style={minimum size=1mm,inner sep=0pt,fill=black,circle}]
    \fill[lightgray!20] (-0.2,1.4) rectangle (5,-4.7);
    \fill[lightgray!5] (5,1.4) rectangle (10.2,-4.7);
    \fill[lightgray!20] (10.2,1.4) rectangle (15.4,-4.7);
    \fill[lightgray!5] (15.4,1.4) rectangle (21,-4.7);
    \draw[fill=color-memoryless,opacity=0.05] (-0.2,-4.7) to[out=90,in=-90] (-0.2,0.25) to[out=0,in=180] (3.7,0.25) to[out=0,in=95] (5,-2.15) to[out=-85,in=180] (6.3,-4.55) to[out=0,in=180] (8.9,-4.55) to[out=0,in=-95] (10.2,-2.15) to[out=-90,in=90] (10.2,-4.7);
    \draw[fill=color-memoryless,opacity=0.05] (-0.2,-4.7) to[out=90,in=-90]  (-0.2,-3.75) to[out=0,in=180] (1.5,-3.75) to[out=0,in=120] (2.4,-4.3) to[out=-60,in=180] (3.3,-4.7) to[out=0,in=180] (10.2,-4.7) to[out=0,in=180] (10.2,-4.7);
    \node[] (pomdp1) at (2.4,0) {POMDP};
    \node[] (rev_pomdp1) at ($(pomdp1) + (-1.3,-1)$) {RevPOMDP};
    \node[] (semi_mdp1a) at ($(rev_pomdp1) + (0,-1)$) {SOMDP$_{\eta_s \in (0,1]}$};
    \node[] (inter_mdp1a) at ($(semi_mdp1a) + (0,-1)$) {IOMDP$_{\lambda \in (0,1]}$};
    \node[] (semi_mdp1b) at ($(rev_pomdp1) + (2.7,-1)$) {SOMDP$_{\eta_s \in [0,1]}$};
    \node[] (inter_mdp1b) at ($(semi_mdp1b) + (0,-1)$) {IOMDP$_{\lambda \in [0,1]}$};
    \node[] (mdp1) at ($(inter_mdp1a) + (0,-1)$) {MDP};
    \node[] (blind_mdp1) at ($(inter_mdp1b) + (0,-1)$) {BlindMDP};
    \draw[opacity=0] (rev_pomdp1) to node[sloped,opacity=1] {$\subseteq$} (pomdp1);
    \draw[opacity=0] (semi_mdp1a) to node[sloped,opacity=1] {$\subseteq$} (rev_pomdp1);
    \draw[opacity=0] (inter_mdp1a) to node[sloped,opacity=1] {$\subseteq$} (semi_mdp1a);
    \draw[opacity=0] (mdp1) to node[sloped,opacity=1] {$\subseteq$} (inter_mdp1a);
    \draw[opacity=0] (semi_mdp1b) to node[sloped,opacity=1,xscale=-1] {$\subseteq$} (pomdp1);
    \draw[opacity=0] (inter_mdp1b) to node[sloped,opacity=1] {$\subseteq$} (semi_mdp1b);
    \draw[opacity=0] (inter_mdp1a) to node[sloped,opacity=1] {$\subseteq$} (inter_mdp1b);
    \draw[opacity=0] (semi_mdp1a) to node[sloped,opacity=1] {$\subseteq$} (semi_mdp1b);
    \draw[opacity=0] (blind_mdp1) to node[sloped,opacity=1] {$\subseteq$} (inter_mdp1b);
    \node[] (pomdp2) at (7.6,0) {POMDP};
    \node[] (rev_pomdp2) at ($(pomdp2) + (-1.4,-1)$) {RevPOMDP};
    \node[] (semi_mdp2a) at ($(rev_pomdp2) + (0,-1)$) {SOMDP$_{\eta_s \in (0,1]}$};
    \node[] (inter_mdp2a) at ($(semi_mdp2a) + (0,-1)$) {IOMDP$_{\lambda \in (0,1]}$};
    \node[] (semi_mdp2b) at ($(rev_pomdp2) + (2.8,-1)$) {SOMDP$_{\eta_s \in [0,1]}$};
    \node[] (inter_mdp2b) at ($(semi_mdp2b) + (0,-1)$) {IOMDP$_{\lambda \in [0,1]}$};
    \node[] (mdp2) at ($(inter_mdp2a) + (0,-1)$) {MDP};
    \node[] (blind_mdp2) at ($(inter_mdp2b) + (0,-1)$) {BlindMDP};
    \draw[opacity=0] (rev_pomdp2) to node[sloped,opacity=1] {$\subseteq$} (pomdp2);
    \draw[opacity=0] (semi_mdp2a) to node[sloped,opacity=1] {$\subseteq$} (rev_pomdp2);
    \draw[opacity=0] (inter_mdp2a) to node[sloped,opacity=1] {$\subseteq$} (semi_mdp2a);
    \draw[opacity=0] (mdp2) to node[sloped,opacity=1] {$\subseteq$} (inter_mdp2a);
    \draw[opacity=0] (semi_mdp2b) to node[sloped,opacity=1,xscale=-1] {$\subseteq$} (pomdp2);
    \draw[opacity=0] (inter_mdp2b) to node[sloped,opacity=1] {$\subseteq$} (semi_mdp2b);
    \draw[opacity=0] (inter_mdp2a) to node[sloped,opacity=1] {$\subseteq$} (inter_mdp2b);
    \draw[opacity=0] (semi_mdp2a) to node[sloped,opacity=1] {$\subseteq$} (semi_mdp2b);
    \draw[opacity=0] (blind_mdp2) to node[sloped,opacity=1] {$\subseteq$} (inter_mdp2b);
    \node[] (pomdp3) at (12.8,0) {POMDP};
    \node[] (rev_pomdp3) at ($(pomdp3) + (-1.4,-1)$) {RevPOMDP};
    \node[] (semi_mdp3a) at ($(rev_pomdp3) + (0,-1)$) {SOMDP$_{\eta_s \in (0,1]}$};
    \node[] (inter_mdp3a) at ($(semi_mdp3a) + (0,-1)$) {IOMDP$_{\lambda \in (0,1]}$};
    \node[] (semi_mdp3b) at ($(rev_pomdp3) + (2.8,-1)$) {SOMDP$_{\eta_s \in [0,1]}$};
    \node[] (inter_mdp3b) at ($(semi_mdp3b) + (0,-1)$) {IOMDP$_{\lambda \in [0,1]}$};
    \node[] (mdp3) at ($(inter_mdp3a) + (0,-1)$) {MDP};
    \draw[opacity=0] (rev_pomdp3) to node[sloped,opacity=1] {$\subseteq$} (pomdp3);
    \draw[opacity=0] (semi_mdp3a) to node[sloped,opacity=1] {$\subseteq$} (rev_pomdp3);
    \draw[opacity=0] (inter_mdp3a) to node[sloped,opacity=1] {$\subseteq$} (semi_mdp3a);
    \draw[opacity=0] (mdp3) to node[sloped,opacity=1] {$\subseteq$} (inter_mdp3a);
    \draw[opacity=0] (semi_mdp3b) to node[sloped,opacity=1,xscale=-1] {$\subseteq$} (pomdp3);
    \draw[opacity=0] (inter_mdp3b) to node[sloped,opacity=1] {$\subseteq$} (semi_mdp3b);
    \draw[opacity=0] (inter_mdp3a) to node[sloped,opacity=1] {$\subseteq$} (inter_mdp3b);
    \draw[opacity=0] (semi_mdp3a) to node[sloped,opacity=1] {$\subseteq$} (semi_mdp3b);
    \node[] (pomdp4) at (18.2,0) {POMDP};
    \node[] (rev_pomdp4) at ($(pomdp4) + (-1.4,-1)$) {RevPOMDP};
    \node[] (semi_mdp4a) at ($(rev_pomdp4) + (0,-1)$) {SOMDP$_{\eta_s \in (0,1]}$};
    \node[] (inter_mdp4a) at ($(semi_mdp4a) + (0,-1)$) {IOMDP$_{\lambda \in (0,1]}$};
    \node[] (semi_mdp4b) at ($(rev_pomdp4) + (2.8,-1)$) {SOMDP$_{\eta_s \in [0,1]}$};
    \node[] (inter_mdp4b) at ($(semi_mdp4b) + (0,-1)$) {IOMDP$_{\lambda \in [0,1]}$};
    \draw[opacity=0] (rev_pomdp4) to node[sloped,opacity=1] {$\subseteq$} (pomdp4);
    \draw[opacity=0] (semi_mdp4a) to node[sloped,opacity=1] {$\subseteq$} (rev_pomdp4);
    \draw[opacity=0] (inter_mdp4a) to node[sloped,opacity=1] {$\subseteq$} (semi_mdp4a);
    \draw[opacity=0] (semi_mdp4b) to node[sloped,opacity=1,xscale=-1] {$\subseteq$} (pomdp4);
    \draw[opacity=0] (inter_mdp4b) to node[sloped,opacity=1] {$\subseteq$} (semi_mdp4b);
    \draw[opacity=0] (inter_mdp4a) to node[sloped,opacity=1] {$\subseteq$} (inter_mdp4b);
    \draw[opacity=0] (semi_mdp4a) to node[sloped,opacity=1] {$\subseteq$} (semi_mdp4b);
    \fill[lightgray!40] (-0.2,1.4) rectangle (21,0.6);
    \node[] (models1) at ($(pomdp1) + (0,0.97)$) {Single-environment models};
    \node[] (models2) at ($(pomdp2) + (0,1)$) {Multi-environment models};
    \node[] (models3) at ($(pomdp3) + (0,1)$) {Multi-observation models};
    \node[] (models3) at ($(pomdp4) + (0,0.97)$) {Multi-observation models, fixed support};
    \draw[-,loosely dashed] (-0.2,0.6) to (21,0.6);
    \draw[-,loosely dashed] (5,1.4) to (5,-4.7);
    \draw[-,loosely dashed] (10.2,1.4) to (10.2,-4.7);
    \draw[-,loosely dashed] (15.4,1.4) to (15.4,-4.7);
    \draw[color-memoryless] (-1.3,0.25) to[out=0,in=180] (3.7,0.25) to[out=0,in=95] (5,-2.15) to[out=-85,in=180] (6.3,-4.55) to[out=0,in=180] (8.9,-4.55) to[out=0,in=-95] (10.2,-2.15);
    \draw[color-memoryless, dashed] (10.2,-2.15) to[out=85,in=180] (11.5,-0.2) to (15.4,-0.2);
    \node[color-memoryless, draw=none,anchor=west] at (-1.4,0.37) {\footnotesize{ETR-hard}};
    \node[color-memoryless, draw=none,anchor=west] at (-1.4,0.13) {\footnotesize{in NP}};
    \node[color-memoryless, draw=none,anchor=east] at (15.45,-0.08) {\footnotesize{ETR-hard}};
    \node[color-memoryless, draw=none,anchor=east] at (15.45,-0.32) {\footnotesize{open}};
    \draw[color-memoryless] (-1.175,-3.75) to[out=0,in=180] (1.5,-3.75) to[out=0,in=120] (2.4,-4.3) to[out=-60,in=180] (3.3,-4.7) to[out=0,in=180] (10.2,-4.7);
    \node[color-memoryless, draw=none,anchor=west] at (-1.275,-3.63) {\footnotesize{NP-hard}};
    \node[color-memoryless, draw=none,anchor=west] at (-1.275,-3.87) {\footnotesize{in P}};
\end{tikzpicture}}
    \caption{Overview of the complexity results memoryless policies}
    \label{fig:complexity_overview_memoryless}
\end{figure}

\onecolumn

\section{Proofs for Section~\ref{sec_base_results}}
\label{sec:single_environment_full_proofs}
\subsection{Intermittently-Observable MDPs}\label{app_IOMDPs_single_environment}

\propIOMDPsBase*
\begin{proof}[Proof]
    We reduce 3SAT to this problem.
    The reduction is very similar to the reduction from 3SAT to Blind MDPs as presented in \cite{DBLP:journals/mor/PapadimitriouT87}.
    The difference lies in the way we need to deal with state revelations, which do not occur in BlindMDPs.

    Fix a 3SAT instance $\varphi$, and let $\calM' = (S,A,\{\bot\},T,O,R,b,H)$ be a BlindMDP such that the following conditions hold.
    \begin{description}[font=\normalfont]
        \item[(Assumption 1)] The rewards in the POMDP $\calM'$ are less than or equal to zero.
        \item[(Assumption 2)] There exists a policy in $\calM'$ with zero expected reward if and only if $\varphi$ has a solution.
    \end{description}
    Such a POMDP exists for each 3SAT instance, as one can argue by following the reasoning in \cite{DBLP:journals/mor/PapadimitriouT87}.

    Next, let $\calM$ be an IOMDP with all components equal to $\calM'$ except for the observation set and function, which we modify such that the true state is revealed with probability $\frac{1}{2}$ at each time step.

    We claim that there exists a policy in $\calM$ with non-negative reward if and only if the 3SAT instance $\varphi$ is satisfiable.

    ($\implies$) 
    By assumption, the IOMDP $\calM$ has a history-dependent $\pi$ with zero reward, and we can assume this policy to be deterministic, as such policies are optimal among POMDP policies in the finite-horizon case.

    We now claim that there exists a policy for the POMDP $\calM'$ that also achieves zero reward.
    Let $h^{\bot}_t$ be the $t$ step observation history where the state is never revealed, consisting of $t+1$ $\bot$-observations.
    Note that we can, without loss of generality, assume that the optimal policy is observation-based since it is deterministic.
    Then, let $\pi'$ be a policy for the BlindMDP $\calM'$ where
    \begin{equation}
        \pi'(h^\bot_{t}) = \pi(h^\bot_{t}) \quad \forall t \in 0,\ldots, H-1.
    \end{equation}

    Evaluating the reward of $\pi$, we have
    \begin{multline*}
        \mathbb{E}^\pi_\calM\left[\sum_{t=0}^{H-1} R(s_t,a_t)\right]
        \leq \sum_{s_0,\ldots, s_{H-1} \in S^{H}}\mathbb{P}_\calM[s_0] \prod_{t=0}^{H-2} \mathbb{P}_\calM[s_{t+1}| s_{t}, \pi(h^\bot_{t})] \prod_{t=0}^{H-1} \mathbb{P}_\calM[\bot| s_t] \left( \sum_{t=0}^{H-1} R(s_t, \pi(h^\bot_{t})) \right)
        \\ = \left(\frac{1}{2} \right)^{H}\sum_{s_0,\ldots, s_{H-1} \in S^{H}} \mathbb{P}_\calM[s_0] \prod_{t=0}^{H-2} \mathbb{P}_\calM[s_{t+1}| s_{t}, \pi(h^\bot_{t})] \left( \sum_{t=0}^{H-1} R(s_t, \pi(h^\bot_{t})) \right)
        = \left(\frac{1}{2} \right)^{H}\mathbb{E}^{\pi'}_{\calM'}\left[\sum_{t=0}^{H-1}R(s_t,a_t)\right]
    \end{multline*}
    In the first inequality, we use the fact that we can split the trajectories where the agent observes the state at least once from the trajectories where the agent never observes the state.
    Since all rewards are non-positive, the reward contribution from all trajectories where the agent observes the state at least once is upper bounded by zero.
    Therefore, we can consider the reward contribution of the trajectories where the agent never observes the state as an upper bound on the total reward.
    The first \emph{equality} follows from the fact that the IOMDP $\calM$ has a revelation probability of $\frac{1}{2}$.
    The last equality follows from the fact that the POMDPs $\calM$ and $\calM'$ share a transition function, along with the definition of the policy $\pi'$.
    As $\mathbb{E}^\pi_\calM[\sum_{t=0}^{H-1} R(s_t,a_t)]$ is non-negative, we deduce the existence of a policy in $\calM'$ with non-negative reward.
    Then, using (Assumption 2), we deduce that $\varphi$ is satisfiable.

    ($\impliedby$)
    By (Assumption 2), there exists a policy in the BlindMDP $\calM'$ which attains zero expected reward. 
    We can deploy this policy in $\calM$, playing blind and ignoring any revealing observation, and get zero reward.
\end{proof}

\subsection{Semi-Observable MDPs}\label{app_SOMDPs_single_environment}

\propSOMDPsBase*

\newcommand{\indStateBeliefII}[3]{{\mathbf{b}}_{(#1,#2,#3)}^{\mathsf{seen}}}
\newcommand{\indStateActionII}[3]{{a}_{(#1,#2,#3)}^{\mathsf{seen}}}

\begin{proof}
Fix a SOMDP.
We show that there exists an optimal policy $\pi$ where the action $a$ for each history $h = z_0,a_0,\ldots ,z_{t-1}, a_{t-1}, z_t$ is determined by the following data.
\begin{enumerate}
    \item $t' = \sup\{u : z_u \neq \bot\}$, i.e., the last revelation time\footnote{Using the convention that $\sup(\emptyset) = -\infty$.}.
    \item $s = \begin{cases}
        q : z_{t'} = \top_q & \text{ if } t' \neq -\infty, \\ 
        \mathsf{blind} & \text{ if } t' = -\infty
    \end{cases}$, i.e., the last revealed state (if any).
    \item $t$, i.e., the current time.
\end{enumerate}

Let $(\pi^*_t)_{t = 0}^{H-1}$ be a deterministic time-dependent belief-based optimal policy for the SOMDP.
We will build a policy $\pi$ which attains the same value while tracking the smaller information set described above.
In particular, we will define a policy 
\begin{equation}
    \pi : (S \cup \{\mathsf{blind}\}) \times (\{0,\ldots, H-1\} \cup \{-\infty\}) \times \{0,\ldots, H-1\} \rightarrow A. 
\end{equation}
For $s\in S$, $t' \in (\{0,\ldots, H-1\} \cup \{-\infty\})$ and $t \in \{0,\ldots,H-1\}$, $\pi(s,t',t)$ will encode the action to take when
\begin{itemize}
    \item The state was last revealed at $t'$,
    \item The current time is $t$,
    \item The state was last revealed to be $s$.
\end{itemize}
Meanwhile, $\pi(\mathsf{blind}, -\infty, t)$ will encode the action to take in the event when, at time $t$, the state has never been revealed.

We will build the new policy $\pi$ by inductively computing beliefs and the resulting action. 
Let $\indStateBeliefII{s}{t'}{u}$ denote the belief on the SOMDP's state when
\begin{itemize}
    \item The state was last revealed at $t'$,
    \item The current time is $u$,
    \item The state was last revealed to be $s$, 
    \item We take actions using $\pi^*$. 
\end{itemize}
We can inductively define $\indStateBeliefII{s}{t'}{u}$. 
First, we have
\begin{equation}
    \indStateBeliefII{s}{t'}{t'} = \delta_s.
\end{equation}
That is, if the state was revealed at the current timestep, the belief collapses.
Then, we can iteratively update beliefs as follows.
\begin{equation}
    \indStateActionII{s}{t'}{u} = \pi^*_{u}(\indStateBeliefII{s}{t'}{u})
    \qquad \qquad
    \indStateBeliefII{s}{t'}{u+1}(s') = \frac{(1-\eta(s')) \cdot \sum_{s_u\in S} T(s_u,\indStateActionII{s}{t'}{u})(s')\cdot \indStateBeliefII{s}{t'}{u}(s_u)}{\sum_{s_{u+1}\in S}(1-\eta(s_{u+1}))\cdot \sum_{s_u\in S}T(s_u,\indStateActionII{s}{t'}{u})(s_{u+1})\cdot \indStateBeliefII{s}{t'}{u}(s_u)}
\end{equation}
In the above equation, $\indStateActionII{s}{t'}{u}$ encodes the action the optimal policy will take at time $u$ if the state was last observed at $t'$ to be $s$. 
With this recursion, we can then set
\begin{equation}
    \pi(s,t',t) = \indStateActionII{s}{t'}{t}
\end{equation}

To compute $\pi(\mathsf{blind}, -\infty, t, 0)$, we simply propagate the initial belief as appropriate. 
In particular, let $b$ be the initial state distribution. We define $a^{\mathsf{blind}}_t$ inductively.
We set $\mathbf{b}^{\mathsf{blind}}_{0} = b$, and then we use the following recursion.
\begin{equation}
    a^{\mathsf{blind}}_t = \pi^*_t(\mathbf{b}^{\mathsf{blind}}_{t})
    \qquad \quad
    \mathbf{b}^{\mathsf{blind}}_{t+1}(s') = \frac{(1-\eta(s')) \cdot \sum_{s_t\in S} T(s_t,a^{\mathsf{blind}}_t)(s')\cdot \mathbf{b}^{\mathsf{blind}}_{t}(s_t)}{\sum_{s_{t+1}\in S}(1-\eta(s_{t+1}))\cdot \sum_{s_t\in S}T(s_t,a^{\mathsf{blind}}_t)(s_{t+1})\cdot \mathbf{b}^{\mathsf{blind}}_{t}(s_t)}
\end{equation}
We then set $\pi(\mathsf{blind}, -\infty, t)$ to $a^{\mathsf{blind}}_t$.

We now claim that $\pi$ is optimal. 
We first argue that, for any observable history $h$ that is possible\footnote{I.e., occurring with probability greater than zero.} when the policy $\pi^*$ is used, $\pi^*$ and $\pi$ choose the same action. 
Fix an observable history $h$ under $\pi^*$, where $h = z_0,a_0,\ldots ,z_{t-1}a_{t-1},z_t$, and let $a_t$ be the action $\pi^*$ chooses upon observing this history.
Since $h$ is observable and $\pi^*$ is deterministic, we know $\forall i \in \{0,\dots,t-1\}, \pi^*_{i}(b_{i}) = a_i$ where $b_{i}$ is the belief after observing history $h_i = z_0,a_0,\dots,z_i$.
Additionally, let $t'$ be the latest time $u$ such that $z_u \neq \bot$, and 
let $s$ be the state such that $z_{t'} = \top_s$.

We argue by induction that, for each $l \in \{0,\ldots, t - t'\}$, we have $a_{t'+l} = \indStateActionII{s}{t'}{t' + l} = \pi(s,t',t'+l)$ and $b_{t'+l} = \indStateBeliefII{s}{t'}{t'+l}$

For the base case, when $l = 0$, the state has been revealed and $b_{t'} = \delta_s = \indStateBeliefII{s}{t'}{t'}$.
Additionally, observe that
\begin{equation}
    a_{t'} = \pi^*_{t'}(\delta_{s}) = \indStateActionII{s}{t'}{t'}.
\end{equation}
As stated above, the first equality holds as $h$ is an observable history.
Meanwhile, the second equality simply holds from the definition of $ \indStateActionII{s}{t'}{t'}$.

Consider $l \le t-t'-1$.
For the inductive step, assume $a_{t'+l} = \indStateActionII{s}{t'}{t'+l}$ and $b_{t'+l} = \indStateBeliefII{s}{t'}{t'+l}$.
We show $a_{t'+l+1} = \indStateActionII{s}{t'}{t'+l+1}$:
\begin{align*}
    a_{t'+l + 1} &= \pi^*_{t' + l + 1}(b_{t'+l+1})\\
    &= \pi^*_{t' + l + 1}(b_{upd}(b_{t'+l},a_{t'+l},z_{t'+l+1}))\\
    &= \pi^*_{t' + l + 1}(b_{upd}(\indStateBeliefII{s}{t'}{t'+l},\indStateActionII{s}{t'}{t'+l},\bot))\\
    &= \pi^*_{t' + l + 1}(\indStateBeliefII{s}{t'}{t'+l+1})\\
    &= \indStateActionII{s}{t'}{t' + l + 1}
\end{align*}
Additionally, since we update the same belief $b_{t'+l} = \indStateBeliefII{s}{t'}{t'+l}$ with the same action and receive the same blind observation, we may also conclude $b_{t'+l+1} = \indStateBeliefII{s}{t'}{t'+l+1}$.

By the above induction, $a_{t} = \indStateActionII{s}{t'}{t}$
By definition $\pi$ will take the action $\indStateActionII{s}{t'}{t}$ on seeing history $h$.
Thus, on any possible history where the state is revealed at some point, $\pi$ and $\pi^*$ give the same action.

For the case of a history $h = z_0,a_0,\ldots z_{t-1},a_{t-1},z_t$ where $z_k = \bot$ for all $k$, we can use similar arguments.
In this case, we must show that $a_i = a^{\mathsf{blind}}_i$ and $b_i = \mathbf{b}^{\mathsf{blind}}_i$ for all $i \in \{0,\dots t-1\}$.
For the base case, when $i=0$, we have by definition $b_0 = b = \mathbf{b}^{\mathsf{blind}}_0$ and:
\begin{align*}
    a_0 &= \pi^*_0(b)\\
    &= \pi^*_0(\mathbf{b}^{\mathsf{blind}}_{0})\\
    &= a^{\mathsf{blind}}_0
\end{align*}
For the inductive step, assume $a_i = a^{\mathsf{blind}}_i$ and $b_i = \mathbf{b}^{\mathsf{blind}}_i$.
Then we have:
\begin{align*}
    a_{i+1} &= \pi^*_{i+1}(b_{i+1})\\
    &= \pi^*_{i+1}(b_{upd}(b_{i},a_i,\bot))\\
    &= \pi^*_{i+1}(b_{upd}(\mathbf{b}^{\mathsf{blind}}_{i},a_i,\bot))\\
    &= \pi^*_{i+1}(b_{upd}(\mathbf{b}^{\mathsf{blind}}_{i},a^{\mathsf{blind}}_i,\bot))\\
    &= \pi^*_{i+1}(\mathbf{b}^{\mathsf{blind}}_{i+1})\\
    &= a^{\mathsf{blind}}_{i+1}
\end{align*}
Additionally, since we update the same belief $b_i = \mathbf{b}^{\mathsf{blind}}_i$ with the same action and receive the same blind observation, we may also conclude $b_{i+1} = \mathbf{b}^{\mathsf{blind}}_{i+1}$.

By the above induction, we can conclude that $a_t = a^{\mathsf{blind}}_t$, but by definition $\pi$ will take the action $a^{\mathsf{blind}}_t$ on seeing history $h$.
Thus, on any possible history where the state is revealed at some point $\pi$ and $\pi^*$ give the same action.

We can easily deduce now that $\pi$ is optimal.
Indeed, let $\mathsf{Paths}^{\pi^*}$ denote the set of possible $H$-length paths under the policy $\pi^*$, where a path $\rho$ takes the form:
\begin{equation}
    \rho = s_0,z_0,a_0,\ldots,s_{H-1},z_{H-1},a_{H-1}.
\end{equation}
Note that $\sum_{\rho \in \mathsf{Paths}^{\pi^*}} \mathbb{P}^{\pi^*}[\rho] = 1$, and $V^{\pi^*} = \sum_\rho \mathbb{P}^{\pi^*}[\rho] R(\rho)$, where $R(\rho)$ is the reward accumulated along a path.
It is thus sufficient to show that, for all $\rho \in \mathsf{Paths}^{\pi^*}$, we have $\mathbb{P}^{\pi^*}[\rho] = \mathbb{P}^{\pi}[\rho]$, where $\pi$ is our newly constructed policy.
We can easily observe this as follows
\begin{align*}
    \mathbb{P}^{\pi^*}[s_0,z_0,a_0,\ldots ,s_{H-1},z_{H-1},a_{H-1}] 
    &= \mathbb{P}[s_0] 
    \prod_{t=0}^{H-1} \mathbb{P}[z_t| s_t]
    \prod_{t=0}^{H-2} \mathbb{P}[s_{t+1}| s_{t},a_{t}]
    \prod_{t=0}^{H-1} \mathbb{P}^{\pi^*}[a_{t} | z_0 ,a_0, \ldots ,z_{t_1}a_{t-1},z_t]
    \\ 
    &= \mathbb{P}[s_0] 
    \prod_{t=0}^{H-1} \mathbb{P}[z_t| s_t]
    \prod_{t=0}^{H-2} \mathbb{P}[s_{t+1}| s_{t},a_{t}]
    \prod_{t=0}^{H-1} \mathbb{P}^{\pi}[a_{t} | z_0,a_0, \ldots, z_{t_1},a_{t-1},z_t]\\
    &= \mathbb{P}^{\pi}[s_0,z_0,a_0,\ldots ,s_{H-1},z_{H-1},a_{H-1}] 
\end{align*}
In the first and last equality, we simply apply conditional probability rules. Note that we drop superscripts in the probability measure when they do not depend on the deployed policy. 
In the second equality, we use the following facts
\begin{itemize}
    \item As the path $\rho$ is possible under $\pi^*$, $\mathbb{P}^{\pi^*}[a_{t} | z_0, a_0, \ldots, z_{t_1},a_{t-1},z_t] = 1$ for each $t$.
    \item Since $\pi$ takes the same action as $\pi^*$ for any possible history,  $\mathbb{P}^{\pi}[a_{t} | z_0, a_0, \ldots ,z_{t-1},a_{t-1},z_t] = 1$ for each $t$
\end{itemize}
We now conclude that $\pi^*$ and $\pi$ give the same reward.

Next, we show that we can evaluate a policy of the proposed structure in polynomial time.
First, note that we can represent the policy class using time-dependent FSCs with memory state space $Q = (S \cup \{s_\bot\})\times (\{0,\dots,H-1\} \cup \{-\infty\})$.
Given a policy $\pi\colon (S \cup \{\mathsf{blind}\}) \times (\{0,\dots,H-1\} \cup \{-\infty\}) \times \{0,\dots,H-1\} \to A$, we construct time-dependent FSC policy $(\iota, (\gamma_t)_{t\in \{0,\dots,H-1\}}, (\eta_t)_{t\in \{0,\dots,H-1\}})$ such that
\begin{align*}
    \iota &= \{(s_\bot,-\infty) \mapsto 1\},\\
    \gamma_t((s,t'),z) &= \begin{cases}
        \pi(s',t,t) & \text{if $z = \top_{s'}$},\\
        \pi(s,t',t) & \text{if $z = \bot$},
    \end{cases}\\
    \eta_t((s,t'),z,a) &= \begin{cases}
        (s',t) & \text{if $z = \top_{s'}$},\\
        (s,t') & \text{if $z = \bot$},
    \end{cases},
\end{align*}
where we write $\top_s \in Z$ for the unique revealing observation of state $s$ and $\bot \in Z$ for the blind observation.
Note that this FSC is deterministic.

We then use \cref{lem:fsc_product_pomdp} and \cref{lem:dtd_pol_eval_rep_and_time} to conclude that we can evaluate a deterministic time-dependent FSC in polynomial time.

We can conclude that there exists a non-deterministic algorithm to solve this problem.
In particular, we 
\begin{enumerate}
    \item Guess a policy by guessing an action for each combination of current time, last revelation time, and revealed state.
    \item Evaluate the guessed policy via the FSC construction.
\end{enumerate}
\end{proof}

\subsection{Revealing POMDPs}\label{app_RevPMDPs_single_environment}

\propRevPOMDPsBase*

\begin{proof}[Proof]
    We reduce QSAT to this problem.
    The reduction is similar very similar to the reduction from QSAT to POMDPs as presented in \cite{DBLP:journals/mor/PapadimitriouT87}.
    The difference lies in the way we need to deal with state revelations, which do not occur in the POMDPs used in the QSAT reduction.

    Fix a QSAT instance $\varphi$, and let $\calM' = (S,A,Z,T,O,R,b,H)$ be a POMDP such that the following conditions hold.
    \begin{description}[font=\normalfont]
        \item[(Assumption 1)] The rewards in the POMDP $\calM'$ are less than or equal to zero.
        \item[(Assumption 2)] There exists a policy in $\calM'$ with zero expected reward if and only if $\varphi$ has a solution.
    \end{description}
    Such a POMDP exists for each QSAT instance \cite{DBLP:journals/mor/PapadimitriouT87}.

    Next, let $\calM$ be an RevPOMDP with all components equal to $\calM'$ except for the observation set and function, which we modify such that the true state is revealed with probability $\frac{1}{2}$.

    We claim that there exists a policy in $\calM$ with non-negative reward if and only if the QSAT instance $\varphi$ is satisfiable.

    ($\implies$) 
    By assumption, the RevPOMDP $\calM$ has a policy $\pi$ with zero reward, and we can assume this policy to be deterministic, as such policies are optimal among POMDP policies in the finite-horizon case.

    We now claim that there exists a policy for the POMDP $\calM'$ that also achieves zero reward.
    Since the RevPOMDP is the POMDP extended with state revelations, the set of every non-revealing observation history in the RevPOMDP and the set of all observation histories in the POMDP are the same.
    Let $h^\pi_{t,\bot}$ be a history obtained by playing $\pi$ for $t$ steps where the state is never revealed.
    Then, let $\pi'$ be a policy for the BlindMDP $\calM'$ where
    \begin{equation}
        \pi'(h^{\pi}_{t,\bot}) = \pi(h^\pi_{t,\bot}) \quad \forall t \in \{0,\ldots, H-1\}. 
    \end{equation}

    Evaluating the reward of $\pi$, we have
    \begin{multline*}
        \mathbb{E}^\pi_\calM\left[\sum_{t=0}^{H-1} R(s_t,a_t)\right]
        \leq \sum_{s_0,\ldots, s_{H-1} \in S^{H}}\mathbb{P}_\calM[s_0] \prod_{t=0}^{H-2} \mathbb{P}_\calM[s_{t+1}| s_{t}, \pi(h^{\pi}_{t,\bot})] \prod_{t=0}^{H-1} \mathbb{P}_\calM[\bot| s_t] \left( \sum_{t=0}^{H-1} R(s_t, \pi(h^\pi_{t,\bot})) \right)
        \\ = \left(\frac{1}{2} \right)^{H}\sum_{s_0,\ldots, s_{H-1} \in S^{H}} \mathbb{P}_\calM[s_0] \prod_{t=0}^{H-2} \mathbb{P}_\calM[s_{t+1}| s_{t}, \pi(h^\pi_{t,\bot})] \left( \sum_{t=0}^{H-1} R(s_t, \pi(h^\pi_{t,\bot})) \right)
        = \left(\frac{1}{2} \right)^{H}\mathbb{E}^{\pi'}_{\calM'}\left[\sum_{t=0}^{H-1}R(s_t,a_t)\right]
    \end{multline*}
    In the first inequality, we use the fact that we can split the trajectories where the agent observes the state at least once from the the trajectories where the agent never observes the state.
    Since all rewards are non-positive, the reward contribution from all trajectories where the agent observes the state at least once is upper bounded by zero.
    Therefore, we can consider the reward contribution of the trajectories where the agent never observes the state as an upper bound on the total reward.
    The first equality follows from the fact that the RevPOMDP $\calM$ has a revelation probability of $\frac{1}{2}$.
    The last equality follows from the fact that the POMDPs $\calM$ and $\calM'$ share transition dynamics, along with the definition of the policy $\pi'$.
    As $\mathbb{E}^\pi_\calM[\sum_{t=0}^{H-1} R(s_t,a_t)]$ is non-negative, we deduce the existence of a policy in $\calM'$ with non-negative reward.
    Then, using (Assumption 2), we know $\varphi$ is satisfiable.

    ($\impliedby$)
    By (Assumption 2), there exists a policy in the BlindMDP $\calM'$ which attains zero expected reward. 
    We can deploy this policy in $\calM$, playing blind when we receive a revealing observation, and get zero reward.
\end{proof}

\subsection{Time-dependent Markovian policies in IOMDPs with non-zero revelation probabilities}

\propIOMDPsMemoryless*
\begin{proof}
    We can prove this result by adapting the argument from Proposition~\ref{prop:IOMDPs_base} to time-dependent policies.
    In particular, fix a 3SAT instance $\varphi$, and let $\calM' = (S,A,\{\bot\},T,O,R,b,H)$ be the same BlindMDP satisfying Assumptions 1 and 2 from the proof of Proposition~\ref{prop:IOMDPs_base}.
    \begin{description}[font=\normalfont]
        \item[(Assumption 1)] The rewards in the POMDP $\calM'$ are less than or equal to zero.
        \item[(Assumption 2)] There exists a policy in $\calM'$ with zero expected reward if and only if $\varphi$ has a solution.
    \end{description}
    Also, again, let $\calM$ be the IOMDP built from $\calM'$ where the true state is revealed with probability $\frac{1}{2}$.
    
    We claim that there exists a time-dependent Markovian policy in $\calM$ with non-negative reward if and only if the SAT instance $\varphi$ is satisfiable.

    ($\implies$) 
    By assumption, the IOMDP $\calM$ has a time-dependent Markovian policy $(\pi_t)_{t=0}^{H-1}$ with non-negative reward, and we can assume this policy to be deterministic, as such policies are optimal among time-dependent Markovian policies in the finite-horizon case \cite{DBLP:conf/nips/BagnellKNS03}.

    We now claim that there exists a policy for the POMDP $\calM$ that also achieves zero reward.
    Indeed, let $\pi'$ be a time-dependent policy for the BlindMDP $\calM'$ where
    \begin{equation}
        \pi'_t(\bot) = \pi_t(\bot) \quad \forall t \in 0,\ldots, {H-1}. 
    \end{equation}
    By the same reasoning as in the proof of Proposition~\ref{prop:IOMDPs_base}, we can argue that
    \begin{equation}
          \mathbb{E}^\pi_\calM\left[\sum_{t=0}^{H-1} R(s_t,a_t)\right] \leq \left(\frac{1}{2} \right)^{H}\mathbb{E}^{\pi'}_{\calM'}\left[\sum_{t=0}^{H-1}R(s_t,a_t)\right],
    \end{equation}
    which implies that $\pi'$ attains non-negative reward in $\calM'$, and thus $\varphi$ is satisfiable.

    ($\impliedby$)
    By (Assumption 2), there exists a history-dependent policy $\pi'$ in the BlindMDP $\calM'$ which attains non-negative expected reward.
    Without loss of generality, we can assume this policy to be deterministic, and as such, the policy $\pi'$ is equivalent to a policy that plays a sequence of actions deterministically.
    Thus, there exists a time-dependent Markovian policy in the BlindMDP $\calM'$ which attains non-negative expected reward. 
    We can deploy this time-dependent Markovian policy in $\calM$ as well and get non-negative reward.

\end{proof}

\subsection{Time-dependent FSC-based policies}

\lemPOMDPsFSCs*

\begin{proof}
    Fix a POMDP $\calM = (S,A,Z,T,O,R,b,H)$. 
    By Lemma~\ref{lem:fsc_product_pomdp}, we can, in polynomial time, build a new POMDP
    \begin{equation}
        \calM' = (S',A',Z',T',O',R',b',H+1)
    \end{equation}
    such that
    any $|Q|$-state time-dependent FSC in $\calM$ has a corresponding time-dependent Markovian policy in $\calM'$ with the same reward, and vice versa. 
    From this fact, it follows that
    \begin{equation}
        \max_{\pi \in \Pi_{\mathsf{FSC}(Q)}} \mathbb{E}_\calM^\pi\left[\sum_{t=0}^{H-1} R(s_t,a_t)\right] = \max_{\pi' \in \polMR} \mathbb{E}_{\calM'}^{\pi'}\left[\sum_{t=0}^H R(s'_t,a'_t)\right]
    \end{equation}
    where $\Pi_{\mathsf{FSC}(Q)}$ is the set of time-dependent-FSC-based policies with $Q$ states.

    Additionally, note that by Lemma~\ref{lem:fsc_product_pomdp}, we can take any deterministic time-dependent Markovian policy in $\calM'$ and convert it to a deterministic time-dependent FSC policy in $\calM$. 
    Since there exists an optimal time-dependent Markovian policy in $\calM'$ that is deterministic, there also exists an optimal time-dependent FSC-based policy in $\calM$ that is deterministic.
    
    To decide whether the POMDP admits an FSC-based policy with $|Q|$ states and reward at least $B$, we can
    \begin{enumerate}
        \item Build the expanded POMDP $\calM'$.
        \item Guess a deterministic time-dependent Markovian policy in $\calM'$.
        \item Evaluate the policy and check whether it attains reward at least $B$.
    \end{enumerate}
    The first two steps can clearly be done in non-deterministic polynomial time, while the last stage can be done in polynomial time by Lemma~\ref{lem:dtd_pol_eval_rep_and_time}.
    We note that the assumption of determinism for the time-dependent Markovian policy is without loss of generality, a fact that holds due to the optimality of deterministic policies among time-dependent Markovian policies.
    
    $\NP$-hardness follows from the time-dependent Markovian case, which we can encode by restricting $|Q|$ to be one.
\end{proof}

\clearpage\newpage
\section{Proofs for Section~\ref{sec:complexity_stays_prior_universal}}
\label{sec:proofs_complexity_stays_prior_universal}

\subsection{Optimal ME-POMDP Policies via Hidden-Mixture POMDPs}
\propPriorUniversal*

\begin{proof}
    Let $(\calM_i)_{i\in [n]}$ be a ME-POMDP with POMDPs in $\calC$ where each POMDP has state, action and observation spaces $S$, $A$, and $Z$. The value satisfies the following equality chain.
    \begin{equation}
        \label{eq:appGameEqChain}
        \max_{\pi \in \Pi} \min_{i \in [n]} V_{\calM_i}^\pi = \min_{\mathbf{p} \in \dist{[n]}} \max_{\pi \in \polDet^{Z,A, H}} \sum_{i \in [n]} \vecat{\mathbf{p}}{i} V^\pi_{\calM_i}
        \\ = \min_{\mathbf{p} \in \dist{[n]}} \max_{\pi \in \Pi_{\calC}^{S\times [n], Z, A, H}} \sum_{i \in [n]} \vecat{\mathbf{p}}{i} V^\pi_{\calM_i}
        = \max_{\pi \in \mathsf{Mix}\left( \Pi_{\calC}^{S\times [n], Z,A, H}\right)} \min_{i \in [n]} V_{\calM_i}^\pi.
    \end{equation}
    The first and last equalities apply the min-max theorem for zero-sum matrix games \cite{basar1998dynamic}, which holds due to the finiteness of $\polDet$ and the finite number of environments. 
    The middle inequality applies the optimality of policies in $\Pi_\calC$ for mixture POMDPs, along with the fact that each mixture POMDP is a POMDP with state space $S \times [n]$.
    
    Similar to the proof in 
    \cite{DBLP:journals/corr/abs-2605-07537}, we can then argue that there exists an optimal policy mixing over at most $n$ policies.
    
    To align with later algorithmic results, we give a proof based on linear programming (LP).
    In particular, observe that the final expression in \eqref{eq:appGameEqChain} is equal to the optimal value of the following LP. 
    \begin{align}
        \max_{\mathbf{q} \in \RR^{\Pi_{\calC}^{S\times [n], Z,A, H}}, B \in \RR} \quad & B \\
        \text{s.t.} \quad  & \sum_{\pi \in \Pi_{\calC}^{S\times [n], Z,A, H}} \vecat{ \mathbf{q}}{\pi} = 1,\label{eq:sum_constraint}\\
        &
        \vecat{ \mathbf{q}}{\pi} \geq 0 & \forall \pi \in \Pi_{\calC}^{S\times [n], Z,A, H}, \label{eq:nonneg_constraint}\\
        & \sum_{\pi \in \Pi_{\calC}^{S\times [n], Z,A, H}} \vecat{ \mathbf{q}}{\pi} V^\pi_i \geq B & \forall i \in [n]. \label{eq:env_constraint}
    \end{align}
    The variable $B$ encodes the value of the ME-POMDP, while the vector $\mathbf{q}$ encodes a distribution over deterministic history-dependent policies.
    The constraint \eqref{eq:env_constraint} encodes that the policy must attain value $B$ in all environments, while the constraints \eqref{eq:sum_constraint} and \eqref{eq:nonneg_constraint} encode the validity of the distribution $\mathbf{q}$.
    Note that we identify the element of the vector $\mathbf{q}$ corresponding to the deterministic policy $\pi$ with $\vecat{ \mathbf{q}}{\pi}$. 
    This LP has an optimal solution since the objective is bounded above by the greatest expected reward obtained under any combination of a POMDP $\calM_i$ and a deterministic policy $\pi$.

    We now show that the feasible set has an extreme point.
    For any deterministic policy $\pi'$ in the policy class, the point $\mathbf{q}' \coloneq \mathbf{e}_{\pi'}, B' \coloneq \min_{i\in [n]} V_i^{\pi'}$ is an extreme point, as the matrix defined by the active constraints at this point is of full rank.
    
    Because the feasible set has an extreme point, by convex optimization theory, this LP has an optimal solution at an extreme point \cite[Proposition 3.4.2]{bertsekas2003convex}. We now show that any such extreme point has at most $n$ non-zero elements in the corresponding probability vector $\mathbf{q}$.
    Given an extreme point $(\mathbf{q},B)$, suppose there are $L$ environments for which \eqref{eq:env_constraint} is tight.
    Let $\{\iota_1,\ldots,\iota_L\} \subseteq [n]$ be the environments $i$ for which $\sum_{\pi \in \Pi_{\calC}^{S\times [n], Z,A, H}} \vecat{ \mathbf{q}}{\pi} V_i^{\pi} = B$.
    Similarly, suppose $\{\kappa_1,\ldots,\kappa_R\} \subset \Pi_{\calC}^{S\times [n], Z,A, H}$ contains the policies corresponding to the zero elements of $\mathbf{q}$.
    Let $V_{i}$ denote the vector in $\mathbb{R}^{\Pi_{\calC}^{S\times [n], Z,A, H}}$ such that $V_{i}[\pi] = V_{i}^\pi$ for all policies $\pi$ in $\Pi_{\calC}^{S\times [n], Z,A, H}$.
    We must have
    \begin{equation}
    \mathsf{rank} \left(
         \begin{bmatrix}
    - V_{\iota_1}^\top & 1\\
    \vdots \\
    - V_{\iota_L}^\top & 1\\
    \mathbf{1}^\top & 0\\
    \mathbf{e}_{\kappa_1}^\top & 0\\
    \vdots  \\
    \mathbf{e}_{\kappa_R}^\top & 0
    \end{bmatrix}
    \right) = 1 + |\Pi_{\calC}^{S\times [n], Z,A, H}|,
    \end{equation}
    as $(\mathbf{q},B)$ is an extreme point \cite[Proposition 3.3.3]{bertsekas2003convex}, where $\mathbf{e}_j$ is the $j^{\text{th}}$ unit vector in the appropriate dimension.
    However, as the number of rows trivially bounds the rank, and $L \leq n$, we then have
    \begin{equation}
        1 + |\Pi_{\calC}^{S\times [n], Z,A, H}| \leq n + 1 + R,
    \end{equation}
    which implies that $R \geq |\Pi_{\calC}^{S\times [n], Z,A, H}| - n$.
    Thus, at an extreme point solution, at most $n$ coordinates of $\mathbf{q}$ are non-zero, and we conclude that the ME-POMDP has an optimal policy that mixes over at most $n$ elements of $\Pi_{\calC}^{S\times [n], Z,A, H}$.
\end{proof}

\subsection{Deterministic POMDPs}
\corDetMEPOMDP*
\begin{proof}
    When we mix over a set of deterministic POMDPs $(\calM_i)_{i \in [n]}$, we get another deterministic POMDP $\tilde{\calM}$. Indeed, let POMDP $i$ be
    \begin{equation}
        \calM_i = (S,A,Z,T_i,O_i,R_i, b_i, H).
    \end{equation}
    Given a mixture distribution $\mathbf{p}$, we can then construct a new deterministic POMDP
    \begin{equation}
        \tilde{\calM} = (S\times [n],A,Z,\tilde{T},\tilde{O},\tilde{R},\tilde{b}, H),
    \end{equation}
    where, for all $s\in S$, $i\in [n]$, we have
    \begin{align*}
        &\tilde{T}((s,i),a) = T_i(s,a) \times \delta_i \\
        &\tilde{O}((s,i)) = O_i(s) \\
        &\tilde{R}((s,i),a) = R_i(s,a) \\
        & \tilde{b}(s,i) = \vecat{\mathbf{p}}{i} b_i(s).
    \end{align*}
    Any policy $\pi$ in the original ME-POMDP is a valid policy in $\tilde{\calM}$ where
    \begin{equation}
        \mathbb{E}^\pi_{\tilde{\calM}}\left[\sum_{t=0}^{H-1} \tilde{R}(\tilde{s}_t,\tilde{a}_t)\right] = \sum_{i\in[n]} \vecat{\mathbf{p}}{i} \mathbb{E}^\pi_{\calM_i}\left[\sum_{t=0}^{H-1} R_i(s_t,a_t)\right].
    \end{equation}
    Critically, this new POMDP $\tilde{\calM}$ is also deterministic, as the transition functions $(T_i)_{i\in[n]}$ and observation functions $(O_i)_{i\in[n]}$ are deterministic.
    
    By Lemma 6.3 in \cite{littman1996algorithms}, deterministic POMDPs have optimal policies with concise representations.
    In particular, \cite{littman1996algorithms} introduces a notion of a \emph{policy tree} to reason about optimal policies in deterministic POMDPs.
    A policy tree is a tree where each node is labeled with an action, and the descendant relation encodes which subtree to follow for each possible observation \cite{littman1996algorithms}.
    To ease exposition, we introduce a slightly more general notion in the form of a policy-directed-acyclic-graph (policy-DAG).
    In particular, a policy-DAG comprises $H$ node sets $(U_t)_{t=0}^{H-1}$, along with two sequences of functions
    \begin{align*}
        & \mathsf{Act}_t: U_t \rightarrow A & \text{for each $t \in 0,\ldots, H-1$, and}\\
        & \mathsf{NodeUpd}_t: U_t \times Z \rightarrow U_{t+1} & \text{for each $t \in 0,\ldots, H-2$},
    \end{align*}
    along with a mapping $\mathsf{Init} : Z \rightarrow U_0$ specifying an initial node for each initial observation.
    By Lemma 6.3 in \cite{littman1996algorithms}, a deterministic POMDP with $m$ states has an optimal policy in the form of a policy-DAG where the number $|U_t|$ of nodes at each stage is bounded by $m$.
    
    We can apply Proposition~\ref{prop:prior_universal} to deduce that there exists an optimal policy for the original ME-POMDP that mixes over $n$ policy DAGs with at most $n|S|$ nodes per time step, and this fact suggests the following non-deterministic algorithm to decide if the ME-POMDP has a policy with reward at least $B$.
    \begin{enumerate}
        \item Guess $n$ policies $(\pi_j)_{j \in [n]}$ where each policy is a policy DAG with at most $n|S|$ nodes in each set $U_t$.
        \item Evaluate each policy $\pi_j$ in each POMDP $\calM_i$.
        \item Solve an LP to determine if there exists a distribution on these policy DAGs such that the induced mixed policy attains reward at least $B$ in the ME-POMDP
    \end{enumerate}

    The first stage of the above procedure clearly takes polynomial time, and we now argue that the last two stages do as well.
    We first fix notation, using $\calN$ to refer to the representation size of the ME-POMDP instance.
    Now, fix a POMDP-index $i \in [n]$, 
    an initial state $s_0 \in \mathsf{supp}(b_i)$,
    and a policy tree $\pi_j$ with associated update functions $\mathsf{Init}^{(j)},\mathsf{NodeUpd}^{(j)}$ and $\mathsf{Act}^{(j)}$.
    As the POMDP $\calM_i$ and policy $\pi_j$ are deterministic, there exists a unique trajectory for each initial state, and we can, in polynomial time, compute the resulting trajectory by applying the following equalities.
    \begin{align*}
        z_t = O_i(s_t) & \quad \forall t \in \{0,\ldots, H-1\},\\
        u_0 = \mathsf{Init}^{(j)}(z_0),\\
        u_{t} = \mathsf{NodeUpd}^{j}_t(u_{t-1},z_{t}) & \quad \forall t \in \{0,\ldots, H-2\},\\
        a_t = \mathsf{Act}^{(j)}_t(u_t) & \quad \forall t \in \{0,\ldots, H-1\},\\
        s_{t+1} = T_i(s_t,a_t) & \quad \forall t \in \{0,\ldots, H-2\}.
    \end{align*}
    We can evaluate the reward of this trajectory in polynomial time, and we can represent the reward as a rational $r_{is}^{(j)} \coloneq p_{is}^{(j)}/q_{is}^{(j)}$ where $p_{is}^{(j)} \leq H 2^{ \calN H}$
    and $q_{is}^{(j)} \leq 2^{\calN H}$ and $\calN$ is the representation-size.
    To evaluate the reward of the policy $\pi_j$ in POMDP $\calM_i$, we simply need to compute the expectation $\sum_{s \in S} r_{is}^{(j)} b_i(s)$. 
    By similar reasoning to Lemma~\ref{lem:dtd_pol_eval_rep_and_time}, we can deduce that evaluating the reward in environment $i$ for policy $j$ can be done in polynomial time, and furthermore, this expected reward has a polynomially-sized representation with respect to the original ME-POMDP instance size.
    Repeating this procedure for each environment $i$ and policy $j$ defines a table of values $(V_i^{(j)})_{i\in [n], j \in [n]}$ where $V_i^{(j)}$ encodes the reward of policy-DAG $j$ in POMDP $i$ and $V_i^{(j)}$ is a rational number with size bounded polynomially in $\calN$.
        
    Finally, to determine whether the requisite mixture exists, we need to determine whether a distribution $\mathbf{q}$ exists satisfying the following linear inequalities
    \begin{equation*}
        \sum_{j\in [n]} \vecat{\mathbf{q}}{j} V_i^{(j)} \geq B \quad \forall i \in [n].
    \end{equation*}
    We can solve this problem by solving an LP.
    While the corresponding LP has rational coefficients, we can convert this LP to an LP with integer coefficients that is polynomially sized in terms of the original ME-POMDP instance.
    We can then solve this LP in polynomial time \cite{khachiyan1980polynomial,DBLP:conf/stoc/Karmarkar84}.

\end{proof}

\subsection{Blind MDPs}
\corBlindMEPOMDP*
\begin{proof}
    When we mix over a set of BlindMDPs $(\calM_i)_{i \in [n]}$, we obviously obtain another BlindMDP with an expanded state space. 
    The optimal policy for this BlindMDP is then a sequence of $H$ actions, and thus by Proposition~\ref{prop:prior_universal}, there exists an optimal policy for the ME-BlindMDP which is a mixture of $n$ such sequences. 
    
    As such, there exists a non-deterministic algorithm to solve this problem.
    In particular, we 
    \begin{enumerate}
        \item Guess $n$ sequences of actions, each of length $H$, where $H$ is the horizon of the ME-BlindMDP.
        \item Evaluate each of the $n$ sequences of actions in each BlindMDP.
        \item Solve a linear program to determine if some mixture of these sequences results in a policy with reward at least $B$.
    \end{enumerate}
    The first step clearly runs in polynomial time, and we next argue the same for the last two steps.

    Each sequence of actions is equivalent to some time-dependent Markovian policy, and so we can evaluate it in polynomial time with the resulting reward having a polynomially-sized representation.
    Indeed, let $(\pi_j)_{j \in [n]}$ be the $n$ policies. Lemma~\ref{lem:dtd_pol_eval_rep_and_time} implies that, in polynomial time, we can evaluate each policy in each POMDP. 
    From this procedure we obtain a set of reward values $(V_i^{(j)})_{i \in [n], j \in [n]}$ where $V_i^{(j)}$ encodes the reward of the $j^{\text{th}}$ policy in the $i^{\text{th}}$ BlindMDP, and each value $V_i^{(j)}$ has a representation $V_{i}^{(j)} = \nicefrac{u_{i}^{(j)}}{v_{i}^{(j)}}$ such that
    \begin{equation}
        \max_{i \in [n],j \in [n]} \max(|u_{i}^{(j)}|,v_{i}^{(j)})\leq 2^{f(\calN)}
    \end{equation}
    for some fixed instance-independent polynomial $f(\calN)$ of the instance size $\calN$. 

    To deduce that the third stage runs in polynomial time, we can then use the same argument as in the proof of Corollary~\ref{cor:det_MEPOMDP}.
    That is, deciding the third stage is equivalent to solving a linear program with integral coefficients, where the coefficients have bit-size bounded by a polynomial of the size of the original ME-BlindMDP representation.
    
\end{proof}

\subsection{On the definition of policy classes}

\label{sec:policy_class_soas}

In Section~\ref{sec:complexity_stays_prior_universal}, we defined the notion of policy classes by invoking the notion of the set of all sets. 
We can make this notion rigorous by assuming that all states, actions, and observations come from some universal set. 
For example, we could assume that each action, state, and observation is a natural number, i.e., 
\begin{equation}
    S \subset \mathbb{N}, A \subset \mathbb{N}, Z \subset \mathbb{N}.
\end{equation}
In this case, we can identify the set of all state sets with $2^{\mathbb{N}}$, and we can use a similar argument for the sets of all action sets and observation sets.

\clearpage\newpage
\section{Proofs for Section~\ref{sec:jumps_subclasses}}
\label{sec:proofs_jumps_subclasses}

\subsection{Multi-Observation MDPs}
\propMOMDPsBase*

\newcommand{\decvar}[2]{A_{#1,#2}}
\newcommand{\truevar}[2]{T_{#1,#2}}
\newcommand{\falsevar}[2]{F_{#1,#2}}
\newcommand{\actTrue}{\mathsf{setTrue}}
\newcommand{\actFalse}{\mathsf{setFalse}}

\newcommand{\statedecvar}[2]{s_{\decvar{#1}{#2}}}
\newcommand{\statetruevar}[2]{s_{\truevar{#1}{#2}}}
\newcommand{\statefalsevar}[2]{s_{\falsevar{#1}{#2}}}

\newcommand{\obsdecvar}[2]{z_{\decvar{#1}{#2}}}
\newcommand{\obstruevar}[2]{z_{\truevar{#1}{#2}}}
\newcommand{\obsfalsevar}[2]{z_{\falsevar{#1}{#2}}}

\begin{proof}
Let $\exists x_1 \forall x_2 \ldots \exists x_{n-1} \forall x_n, F(x_1,\ldots, x_n)$ be a quantified formula in conjunctive normal form with $n$ variables and $m$ clauses. We construct a MO-POMDP which has the following properties.
\begin{itemize}
    \item Each environment is an MDP.
    \item Each environment only differs in how we label the states.
    \item There exists a policy with value at least $0$ if and only if the QSAT instance is true. 
\end{itemize}

Intuitively, each environment has the same transition structure as the POMDP constructed in \cite{DBLP:journals/mor/PapadimitriouT87} to prove PSPACE-hardness, with one branch per clause. 
We then use the multiple observation functions to mimic partial observability. 
In particular, we construct an MO-POMDP with environments $(\calM_i)_{i \in [m]}$ where
\begin{equation}
    \calM_i = (S, \underbrace{\{\actTrue,\actFalse\}}_{A}, Z, T, O_i, R, b, \underbrace{2n + 1}_{H}).
\end{equation}
The state space comprises an acceptance state, a dead state, and three states for each clause-variable pair.
In particular, we have
\begin{equation}
    S = \{s_{\mathsf{fail}}, s_{\mathsf{acc}}\} \cup \bigcup_{l \in [m], j \in [n]} \{\statetruevar{l}{j}, \statefalsevar{l}{j}, \statedecvar{l}{j}\}.
\end{equation}
The action space comprises two actions to encode variable assignments
\begin{equation}
    A = \{\actTrue, \actFalse\}.
\end{equation}
Finally, we have a unique observation for each state, i.e.
\begin{equation}
    Z = \{z_{\mathsf{fail}}, z_{\mathsf{acc}}\} \cup \bigcup_{l \in [m], j \in [n]} \{\obstruevar{l}{j}, \obsfalsevar{l}{j}, \obsdecvar{l}{j}\}.
\end{equation}
The transition dynamics then encode the QSAT instance in a similar manner to \cite{DBLP:journals/mor/PapadimitriouT87}.
We set the initial distribution $b$ so that the initial state of the POMDP is uniformly chosen from $\{\statedecvar{1}{1},\ldots,\statedecvar{m}{1}\}$, and this random initial state choice corresponds to a random choice of a clause.
If $x_j$ is an \emph{existential variable}, the next state after $\statedecvar{l}{j}$ follows the agent's action. That is, for each $l \in [m]$, $j \in \{1,3,\ldots, n-1\}$, we have
\begin{align*}
    & T(\statedecvar{l}{j}, \actTrue) = \statetruevar{l}{j} \\
    & T(\statedecvar{l}{j}, \actFalse) = \statefalsevar{l}{j}.
\end{align*}
Meanwhile, if $x_j$ is a \emph{universal variable}, the next state is randomly chosen from $\{\statetruevar{l}{j}, \statefalsevar{l}{j}\}$, i.e., for each $l \in [m], j \in \{2,4,\ldots, n\}$, we have
\begin{equation*}
     T(\statedecvar{l}{j}, a) = \left\{\statetruevar{l}{j} \mapsto \frac{1}{2}, \statefalsevar{l}{j} \mapsto \frac{1}{2}\right\} \quad \forall a \in A. 
\end{equation*}
The transitions from $\statefalsevar{l}{j}$ and $\statetruevar{l}{j}$ then encode the QSAT instance.
In particular, for each clause $l$, the variables $\cup_{j \in [n]} \{\statedecvar{l}{j},\statefalsevar{l}{j}$, $\statetruevar{l}{j}\}$ form a branch where a clause-satisfying variable assignment sends the process to the accepting state.
More precisely, for each $j \leq n - 1$, $l \in [m]$, we set
\begin{align*}
    T(\statetruevar{l}{j}, a) = \begin{cases}
        s_{\mathsf{acc}} & \text{if $x_j$ appears in $C_l$} \\
        \statedecvar{l}{j+1} & \text{ otherwise. }
    \end{cases}
    \quad & \forall a \in A\\
    T(\statefalsevar{l}{j}, a) = \begin{cases}
        s_{\mathsf{acc}} & \text{if $\neg x_j$ appears in $C_l$} \\
        \statedecvar{l}{j+1} & \text{ otherwise. }
    \end{cases}
    \quad & \forall a \in A.
\end{align*}
We additionally have, for each $l \in [m]$, 
\begin{align*}
    T(\statetruevar{l}{n}, a) = \begin{cases}
        s_{\mathsf{acc}} & \text{if $x_n$ appears in $C_l$} \\
        s_{\mathsf{fail}} & \text{ otherwise. }
    \end{cases}
    \quad & \forall a \in A\\
    T(\statefalsevar{l}{n}, a) = \begin{cases}
        s_{\mathsf{acc}} & \text{if $\neg x_n$ appears in $C_l$} \\
        s_{\mathsf{fail}} & \text{ otherwise. }
    \end{cases}
    \quad & \forall a \in A.
\end{align*}
At this stage, we have merely recalled the transition structure from \cite{DBLP:journals/mor/PapadimitriouT87}, but we will next introduce the observation function.

Before introducing the observation function, we first fix notation.
In particular, for $k \in \{1,\ldots, m\}$, we use $\varphi_k: [m] \rightarrow [m]$ to denote the function which permutes the index. That is,
\begin{equation}
    \varphi_k(l) = \begin{cases}
        l + k & \text{ if } l + k \in [m] \\
        l + k - m & \text{ otherwise }.
    \end{cases}
\end{equation}

In environment $k$, for $k \in [m]$,  we then set the observation function to satisfy
\begin{equation}
    O_k(\statetruevar{l}{j}) = z_{T_{\varphi_k(l),j}} \quad 
    O_k(\statefalsevar{l}{j}) = z_{F_{\varphi_k(l),j}} \quad
    O_k(\statedecvar{l}{j}) = z_{A_{\varphi_k(l),j}} \quad \forall l \in [m], j \in [n].
\end{equation}
In words, this definition of the observation function implies that, in environment $k$, variables corresponding to clause $l$ emit observations corresponding to clause $\varphi_k(l)$.
For all environments, we set $O_k(s_{\mathsf{acc}}) = z_{{\mathsf{acc}}}$ and $O_k(s_{\mathsf{fail}}) = z_{\mathsf{fail}}$.

Finally, we set the reward function to penalize reaching the dead state $s_{\mathsf{fail}}$. That is, we set $R(s_{\mathsf{fail}}, a) = - 1$ for all actions $a$ in $A$, and we set the reward to be zero for all other state-action pairs.

Clearly, this ME-POMDP is a MO-MDP, since the shuffling procedure only changes the labels, and for any fixed environment, each state has a unique observation. 
We can also clearly build this MO-MDP in polynomial time in the size of the QSAT instance.

We now claim that the QSAT instance is true if and only if the MO-MDP has a policy with non-negative reward.

($\implies$)
This fact follows similar logic to the original $\PSPACE$-completeness proof for POMDPs \cite{DBLP:journals/mor/PapadimitriouT87}.
Indeed, suppose $\sigma: \bigcup_{t = 0}^{\frac{n}{2}-1} \{0,1\}^{2t} \rightarrow \{0,1\}$ is a winning strategy for the QSAT instance where $\sigma(\bit_1,\ldots,\bit_{2t})$ encodes an existential assignment for each bit-history that ensures the QSAT instance is satisfied for all possible universal variable assignments.
We can define a natural strategy for the MO-MDP that, for each history,
\begin{enumerate}
    \item Translates $z_{T_{\cdot,\cdot}}$ and $z_{F_{\cdot,\cdot}}$ observations to $1$ and $0$ respectively.
    \item Deploys the action that $\sigma$ defines.
\end{enumerate}
Clearly, this policy reaches the acceptance state for any clause branch and for any of the random choices of the universal variables.

($\impliedby$)
Now, assume the MO-MDP has a policy $\pi$ with a non-negative reward. 

We can argue that this strategy is, \wlogg, deterministic.
Indeed, let $\tilde{\pi}$ be a general history-dependent randomized strategy.
This strategy has a representation as a mixed strategy, a fact that follows from \cite{kuhn1953extensive}.
As such, for a deterministic history-dependent policy $\pi \in \polDet$, let $p_\pi$ be the probability of $\pi$ under the mixed-policy representation of $\tilde{\pi}$.
By assumption, $\sum_{\pi \in \polDet} p_\pi \EE_{\calM_i}^\pi[\sum_{t=0}^{H-1} R(s_t,a_t)] \geq 0$ for all environments $i$.
Additionally, all rewards are non-positive.
Thus, we can deduce that, for any environment $i \in [m]$, and any deterministic policy $\pi$ such that $p_\pi > 0$, we have $ \EE_{\calM_i}^\pi[\sum_{t=0}^{H-1} R(s_t,a_t)] \geq 0$.
Hence, there must exist some deterministic strategy that attains zero reward in all environments.

We can construct a winning strategy for the QSAT instance using the behavior of $\pi$ on seeing observations in the set 
\begin{equation}
    \tilde{Z}_1 := \bigcup_{j \in [n]} \{z_{T_{1,j}}, z_{F_{1,j}}, z_{A_{1,j}}\}.
\end{equation}
Intuitively, because any clause can produce observations in $\tilde{Z}_1$, the corresponding strategy must provide a valid clause-independent assignment strategy.
In particular, we first construct $\sigma: \bigcup_{t = 0}^{\frac{n}{2}-1} \{0,1\}^{2t} \rightarrow \{0,1\}$ so that a sequence $\bit_1,\ldots,\bit_{2t}$ is assigned the action that $\pi$ takes when the corresponding history is observed. 
More precisely, let $\mathsf{V}_j(0) = z_{\falsevar{1}{j}}$ and $\mathsf{V}_j(1) = z_{\truevar{1}{j}}$ for each $j \in [n]$. 
Then, fix an even-length bit string $\bit_1,\ldots,\bit_j$, i.e., a string ending in a universally quantified variable.
Next, inductively define a sequence of actions as follows.
\begin{align*}
    & a_1^{(1)} = \pi(z_{\decvar{1}{1}}) \\
    & a_1^{(2)} = \pi(z_{\decvar{1}{1}}, a_1^{(1)}, {\mathsf{V}_1(\bit_1)}) \\
    & \ldots \\
    & a_t^{(1)} = \pi(z_{\decvar{1}{1}}, a_1^{(1)}, \ldots, {\mathsf{V}_{t-1}(\bit_{t-1})}, a_{t-1}^{(2)}, z_{\decvar{1}{t}} ) \\
    & a_t^{(2)} = \pi(z_{\decvar{1}{1}}, a_1^{(1)}, \ldots, {\mathsf{V}_{t-1}(\bit_{t-1})}, a_{t-1}^{(2)}, z_{\decvar{1}{t}}, a_t^{(1)}, {\mathsf{V}_{t}(\bit_t)})  \\
\end{align*}
We then define $\sigma(\bit_1,\ldots,\bit_j)$ as
\begin{equation}
    \sigma(\bit_1,\ldots,\bit_j) = \begin{cases}
        1 & \text{ if } a_{j+1}^{(1)} = \actTrue, \\
        0 & \text{ if } a_{j+1}^{(1)} = \actFalse.
    \end{cases}
\end{equation}

We now argue that this strategy $\sigma$ is indeed winning for the QSAT instance.
Fix an arbitrary clause $l \in [m]$.
Let $x_1,\ldots, x_n$ be the bit string resulting from some choice of universal variables under the strategy $\sigma$. 
We argue that if $x_1,\ldots,x_n$ does not satisfy clause $C_l$, then for some environment in the MO-POMDP, the policy $\pi$ gets negative reward.
Indeed, first observe that, in environment $(m+1) - l$, the $l^{th}$ branch of the transition function produces observations in the set $\tilde{Z}_1 = \bigcup_{j \in [n]} \{z_{T_{1,j}}, z_{F_{1,j}}, z_{A_{1,j}}\}$ while visiting states in the set 
$\cup_{j \in [n]} \{\statedecvar{l}{j},\statefalsevar{l}{j}, \statetruevar{l}{j} \}$ and following a transition structure that sends the agent to the accepting state $s_{\mathsf{acc}}$ if they satisfy $C_l$.
Now observe that the transition dynamics will follow the bit string $x_1,\ldots,x_n$ with positive probability.
Indeed, when the agent takes an action using $\pi$ at existential decision states, the agent will play an action corresponding to the assignment strategy $\sigma$, and at universal states, nature will play the next bit in $x_1,\ldots,x_n$ with probability $\nicefrac{1}{2}$.
Now, by assumption, $x_1,\ldots,x_n$ does not satisfy the clause $C_l$, and so this path will reach the dead state at the last timestep, where it gets reward $-1$, but this fact implies that the strategy $\pi$ obtains negative reward in at least one environment.

Ultimately, the above reasoning follows the same logic as \cite{DBLP:journals/mor/PapadimitriouT87}. The main difference appears in how we use the multi-observation structure to ensure that $\sigma$ must be a satisfying assignment strategy for every clause $l \in [m]$. 

\end{proof}

\subsection{Multi-Observation Semi-Observable MDPs with Fixed Support}

\newcommand{\decvarsim}[1]{A_{#1}}
\newcommand{\truevarsup}[2]{T_{#1}^{(#2)}}
\newcommand{\falsevarsup}[2]{F_{#1}^{(#2)}}

\newcommand{\statedecvarsim}[1]{s_{\decvarsim{#1}}}
\newcommand{\statetruevarsup}[2]{s_{\truevarsup{#1}{#2}}}
\newcommand{\statefalsevarsup}[2]{s_{\falsevarsup{#1}{#2}}}

\newcommand{\obsdecvarsim}[1]{z_{\decvarsim{#1}}}
\newcommand{\obstruevarsup}[2]{z_{\truevarsup{#1}{#2}}}
\newcommand{\obsfalsevarsup}[2]{z_{\falsevarsup{#1}{#2}}}

\propMOSOMDPs*

\begin{proof}
    Let $\exists x_1 \forall x_2 \ldots \exists x_{n-1} \forall x_n F(x_1,\ldots, x_n)$ be a quantified formula with $n$ variables, where $F$ is in conjunctive normal form and has $m$ clauses.

    We design a MO-SOMDP where there is one environment $\calM_i = (S,\underbrace{\{\actTrue, \actFalse\}}_{A},Z,T,O_i,R,b, \underbrace{3n + 1}_{H})$ for each clause $i \in [m]$.
    
    The state space $S$ comprises an acceptance state, a dead state, and five states per variable
    \begin{equation}
        S = \{s_{\mathsf{acc}}, s_{\mathsf{fail}}\} \cup \bigcup_{j \in [n]} \{\statedecvarsim{j}, \statetruevarsup{j}{0}, \statetruevarsup{j}{1},\statetruevarsup{j}{-1},  \statefalsevarsup{j}{0}, \statefalsevarsup{j}{1}, \statefalsevarsup{j}{-1}\}.
    \end{equation}
    The states $\statedecvarsim{j}$ correspond to decision nodes where either the agent chooses a value for an existential variable, or a value for a universal variable is drawn randomly.   
    The state triplets $\{\statetruevarsup{j}{0}, \statetruevarsup{j}{1},\statetruevarsup{j}{-1}\}$ and $\{\statefalsevarsup{j}{0}, \statefalsevarsup{j}{1}, \statefalsevarsup{j}{-1}\}$ then encode true and false assignments to $x_j$ respectively, and the different states within these triplets encode coin flip outcomes.
    
    We have a unique observation for each state, along with a special blind observation. That is,
    \begin{equation}
        Z = \{z_{\mathsf{acc}}, z_{\mathsf{fail}}\} \cup \left( \bigcup_{j \in [n]} \{\obsdecvarsim{j}, \obstruevarsup{j}{0}, \obstruevarsup{j}{1},\obstruevarsup{j}{-1},  \obsfalsevarsup{j}{0}, \obsfalsevarsup{j}{1}, \obsfalsevarsup{j}{-1}\} \right) \cup \{\bot\}.
    \end{equation}
    
    The transition dynamics encode variable assignment and the random coin flips.
    In particular, if $x_j$, for $j \in [n]$, is an \emph{existential} variable, then 
    \begin{align*}
        & T(\statedecvarsim{j}, \actTrue) = \statetruevarsup{j}{0} \\
        & T(\statedecvarsim{j}, \actFalse) = \statefalsevarsup{j}{0}.
    \end{align*}
    Meanwhile, if $x_j$ is a \emph{universal} variable, the next state is random, i.e.,
    \begin{equation*}
         T(\statedecvarsim{j}, a) = \left\{\statetruevarsup{j}{0} \mapsto \frac{1}{2},  \statefalsevarsup{j}{0} \mapsto \frac{1}{2}\right\} \quad \forall a \in A. 
    \end{equation*}
    From states in $\{\statetruevarsup{j}{0}, \statefalsevarsup{j}{0}\}$, the environment flips a coin, which we encode in the successor state via the superscript lying in $\{(-1),(1)\}$.
    \begin{equation*}
        T(\statetruevarsup{j}{0}, a) = \left\{\statetruevarsup{j}{1} \mapsto \frac{1}{2},  \statetruevarsup{j}{-1} \mapsto \frac{1}{2}\right\} \quad \quad
        T(\statefalsevarsup{j}{0}, a) = \left\{\statefalsevarsup{j}{1} \mapsto \frac{1}{2},  \statefalsevarsup{j}{-1} \mapsto \frac{1}{2}\right\}
    \end{equation*}
    If the agent matches their action to the coin, they move to the accepting state.
    Meanwhile, if the agent fails to match the coin, they move to the next variable node, except on the last coin flip, when they move to the dead state. 
    That is, for each $j \in [n]$, we have
    \begin{align*}
        & T(\statetruevarsup{j}{1}, \actTrue) = 
        T(\statefalsevarsup{j}{1}, \actTrue) = 
        T(\statetruevarsup{j}{-1}, \actFalse) = 
        T(\statefalsevarsup{j}{-1}, \actFalse) = s_{\mathsf{acc}}\\ 
        & T(\statetruevarsup{j}{1}, \actFalse) = 
        T(\statefalsevarsup{j}{1}, \actFalse) = 
        T(\statetruevarsup{j}{-1}, \actTrue) = 
        T(\statefalsevarsup{j}{-1}, \actTrue) = \begin{cases}
            \statedecvarsim{j+1} & \text{ if } j < n \\
            s_{\mathsf{fail}} & \text{ if } j = n.
        \end{cases}\\ 
    \end{align*}
    
    The initial state in each SOMDP is $\statedecvarsim{1}$. That is $b = \delta_{\statedecvarsim{1}}$.

    The observation changes with the environment in a manner such that satisfying variable assignment allows the agent to see the coin flip value with high probability.
    We tune the observability of states in the set $\bigcup_{j \in [n]} \{\statetruevarsup{j}{1}, \statetruevarsup{j}{-1},  \statefalsevarsup{j}{1}, \statefalsevarsup{j}{-1}\}$ according to the environment. 
    First, let $\epsilon$ be a positive number to be tuned later. 
    If variable $j$ appears positively in clause $C_i$, then $\statetruevarsup{j}{1}$ and $\statetruevarsup{j}{-1}$ are fully observable with high probability. That is
    \begin{align*}
        & O_i(\statetruevarsup{j}{1}) = \{z_{\statetruevarsup{j}{1}} \mapsto 1 - \epsilon, \bot \mapsto \epsilon \} \\
        & O_i(\statetruevarsup{j}{-1}) = \{z_{\statetruevarsup{j}{-1}} \mapsto 1 - \epsilon, \bot \mapsto \epsilon \}.
    \end{align*}
    Conversely, if variable $j$ does not appear positively in clause $C_i$, then the SOMDP emits the blind observation with probability $\nicefrac{1}{2}$, i.e, 
    \begin{align*}
        & O_i(\statetruevarsup{j}{1}) = \left\{z_{\statetruevarsup{j}{1}} \mapsto \frac{1}{2}, \bot \mapsto \frac{1}{2} \right\} \\
        & O_i(\statetruevarsup{j}{-1}) = \left\{z_{\statetruevarsup{j}{-1}} \mapsto \frac{1}{2}, \bot \mapsto \frac{1}{2} \right\}.
    \end{align*}
    Note that in both cases, the support of $O_i(\statetruevarsup{j}{1})$ is $\{z_{\statetruevarsup{j}{1}}, \bot\}$, and the same condition holds for $\statetruevarsup{j}{-1}$.
    We define the observation function in an analogous way for the states corresponding to false variable assignment $\{\statefalsevarsup{j}{1},\statefalsevarsup{j}{-1} \}$ depending on whether or not the corresponding variable appears negatively in clause $C_i$.
    For every environment $i$, we design the observation such that all states in the set
    \begin{equation}
        \{s_{\mathsf{acc}}, s_{\mathsf{fail}}\} \cup \bigcup_{j \in [n]} \{\statedecvarsim{j}, \statetruevarsup{j}{0},  \statefalsevarsup{j}{0}\}
    \end{equation}
    are fully observable. That is, the only information possibly hidden from the agent is the result of the coin flip at each stage.

    Finally, we set the reward function to penalize reaching the dead state $s_{\mathsf{fail}}$. That is, we set $R(s_{\mathsf{fail}}, a) = - 1$ for all actions $a$ in $A$, and we set the reward to be zero for all other state-action pairs.

    \newcommand{\smallrew}{\nu}
    We now argue that the QSAT instance has a solution if and only if
    there exists some policy $\pi$ that attains reward at least $-\smallrew$, where we set
    \begin{equation}
        \smallrew :=  \frac{1}{2m}\left( \frac{1}{8} \right)^n \quad \quad \epsilon := \frac{1}{2m}\left( \frac{1}{8} \right)^n.
    \end{equation}
    Note that we can represent both $\smallrew$ and $\epsilon$ as rationals where the denominator is $2m 8^n$. As such $\smallrew$ and $\epsilon$ have a polynomial bit representation in terms of the QSAT instance, and thus we can still implement this reduction in polynomial time.
    However, in Remark~\ref{rem:splitting_procedure}, we also discuss how to modify this transformation so that all integers used to represent the MO-SOMDP are themselves polynomially bounded in terms of the size of the QSAT instance.

    ($\implies$)
    First, assume that the QSAT instance is satisfiable with some variable assignment strategy $\sigma$.
    Then, let $\pi$ be the corresponding MO-SOMDP policy that
    \begin{enumerate}
        \item Takes the action $\sigma$ recommends at the existential decision nodes.
        \item Takes action $\actTrue$ whenever a state in $\{\statetruevarsup{j}{1}, \statefalsevarsup{j}{1}\}$ is revealed, and takes action $\actFalse$ whenever a state in $\{\statetruevarsup{j}{-1}, \statefalsevarsup{j}{-1}\}$ is revealed.
    \end{enumerate} 
    Note that the MO-SOMDP observation functions reveal the prior true-false assignments to the agent, before the hidden coin flip occurs, so that the agent can implement the action-taking strategy described in the first point.
    The agent can take any arbitrary action when it observes $\bot$.

    Now fix an environment and clause corresponding to the index $i$. 
    As $\sigma$ is a winning strategy for the QSAT instance, it follows that, for any positive-probability path, the agent will either reach $\statetruevarsup{j}{0}$ for a variable $x_j$ that appears positively in $C_i$, or the agent will reach $\statefalsevarsup{j}{0}$ for a variable $x_j$ that appears negatively in $C_i$. 
    On reaching such a state, the environment flips a coin and reveals its value with probability $1 - \epsilon$. As such, the agent's policy reaches $s_{\mathsf{acc}}$ before the final time step with probability at least $1 - \epsilon$, and the agent's reward is at least $- \epsilon = - \smallrew$.

    ($\impliedby$)
    Now, conversely, assume that a possibly randomized strategy $\pi^*$ exists that attains reward at least {$-\smallrew$}, and let this strategy be represented by some distribution on the set $\polDet$ of deterministic policies where $p_\pi$ is the probability of $\pi$ for $\pi \in \polDet$. 
    We additionally assume that these deterministic policies are observation-based, i.e., they are mappings
    \begin{equation}
        \pi : \bigcup_{t = 1}^{H} Z^t \rightarrow A,
    \end{equation} 
    and we use $\polDetObs$ to denote the set of observation-based deterministic policies.
    This assumption is \wlogg, as for any deterministic policy, there exists a deterministic observation-based policy which induces the same path distribution in any POMDP.
    We argue that the randomized strategy $\pi^*$ must place positive weight on a deterministic MO-SOMDP strategy that corresponds to a satisfying assignment.

    We first fix some notation. 
    In particular, for each $j \in [n]$, let 
    \begin{equation*}
        \mathsf{V}_j(0) = \obsfalsevarsup{j}{0} \quad \text{ and } \mathsf{V}_j(1) = \obstruevarsup{j}{0}.
    \end{equation*}That is, $\mathsf{V}_j$ encodes a mapping to convert bits to relevant MO-SOMDP observations.
    
    We now define a mapping $\xi$ from $\polDetObs$ to the set $\Sigma$ of mappings $\sigma: \cup_{t=0}^{n/2 - 1} \{0,1\}^{2t} \rightarrow \{0,1\}$ that represent QSAT assignment strategies.
    In particular, for a deterministic observation-based policy $\pi$, and a bit string $\bit_1,\ldots, \bit_{2t}$, we define an assignment strategy $\xi(\pi)$ according to
    \begin{equation}
        \left( \xi(\pi) \right) (\bit_1,\ldots, \bit_{2t}) = \pi(\obsdecvarsim{1}, \mathsf{V}_1(\bit_1), \bot, \obsdecvarsim{2}, \mathsf{V}_2(\bit_2), \bot, \ldots, \obsdecvarsim{2t}, \mathsf{V}_{2t}(\bit_{2t}), \bot ,\obsdecvarsim{2t + 1}).
    \end{equation}
    That is, we define $\xi(\pi)$ using the assignment strategy that $\pi$ encodes on the event that the agent never sees the coin flip's value.
    Note that we slightly abuse notation, as the policy outputs actions in the set $\{\actFalse,\actTrue\}$ rather than bits, but we use the natural remapping.

    We next define a partition of the space of deterministic policies.
    First, for each $i\in [m]$, define $\calP_i$ as the set of observation-based policies $\pi$ such that strategy $\xi(\pi)$ fails to ensure the clause $C_i$ is satisfied.
    Next, define $\calU_1 = \calP_1$, and for all $i \in \{2,\ldots,m\}$, define $\calU_i = \calP_i \setminus (\calP_1 \cup \ldots \cup \calP_{i-1})$.
    Finally, define $\calU_0$ as $\polDetObs \setminus (\calP_1 \cup \ldots \cup \calP_{m})$.
    Clearly $\calU_0,\ldots, \calU_m$ forms a partition of $\polDetObs$. 
    Also, note that $\calU_0$ contains deterministic POMDP policies which map, via $\xi$, to QSAT strategies that satisfy all clauses for any universal variable assignment.

    We can now bound the probability with which policies in $\calU_i$ are played as part of the mixed strategy for each $i \in [m]$. In particular, fix $\pi \in \calU_i$ as an arbitrary observation-based policy such that the corresponding QSAT assignment strategy fails to satisfy clause $i$ for some universal variable assignment.
    We first lower bound the probability of reaching the dead state in environment $i$ when following $\pi$.
    By definition, there exists a universal variable assignment such that, when paired with the assignment strategy $\xi(\pi)$, clause $C_i$ is not satisfied. 
    Denote the resulting bit string $\bit_1,\ldots, \bit_n$, where by definition $\bit_1,\ldots, \bit_n \not\models C_i$.
    We now argue by induction that the agent will reach the dead state with probability at least $(\nicefrac{1}{8})^n$.
    First, assume the history is $\obsdecvarsim{1},\mathsf{V}_1(\bit_1),\bot,\obsdecvarsim{2},\mathsf{V}_2(\bit_2),\bot ,\ldots,\obsdecvarsim{2t-1},\mathsf{V}_{2t-1}(\bit_{2t-1}),\bot,\obsdecvarsim{2t}$, ending in a decision-state for a universal variable. 
    With probability $\nicefrac{1}{2}$, the environment chooses its action matching $\bit_{2t}$, so that the bit string continues to be produced and the clause remains unsatisfied. 
    For the guessing stage following this assignment, 
    the next stage has a $+1$-superscript with probability $\nicefrac{1}{2}$, and a $-1$-superscript with probability $\nicefrac{1}{2}$, and each of these states emits the blind observation $\bot$ with probability $\nicefrac{1}{2}$.
    Thus, for the guessing game, no policy can do better than a $\nicefrac{3}{4}$ success rate, and so the agent must fail the guessing game and transition to $\statedecvarsim{2t+1}$ with probability at least $\nicefrac{1}{4}$.
    For the next case, assume the history is $\obsdecvarsim{1}, \mathsf{V}_1(\bit_1), \bot, \obsdecvarsim{2}, \mathsf{V}_2(\bit_2), \bot, \ldots, \obsdecvarsim{2t}, \mathsf{V}_{2t}(\bit_{2t}), \bot, \obsdecvarsim{2t+1}$, ending in an \emph{existential} variable. 
    By assumption, the policy's next bit choice matches $\bit_{2t+1}$, and the clause $C_i$ remains unsatisfied.
    Thus, by the same logic as above, the environment moves to the state $\statedecvarsim{2t+2}$ with probability at least $\nicefrac{1}{4}$.
    By repeating this reasoning inductively, we can deduce that, when the agent follows the deterministic policy $\pi$, they will reach the dead state with probability $(\nicefrac{1}{2})^{n/2}(\nicefrac{1}{4})^n \geq \left( \nicefrac{1}{8} \right)^n$.
    
    We can use this fact to then upper bound $p_{\calU_i} := \sum_{\pi \in \calU_i} p_\pi$.
    In particular, by the above argument, we have
    \begin{equation}
        \mathbb{E}_{\calM_i}^{\pi^*}\left[ \sum_{t=0}^{H-1} R(s_t,a_t) \right] \leq 
        \sum_{\pi \in \calU_i}  p_\pi \mathbb{E}_{\calM_i}^{\pi}\left[ \sum_{t=0}^{H-1} R(s_t,a_t) \right] \leq 
        - \sum_{\pi \in \calU_i}  p_\pi \left( \frac{1}{8} \right)^n = -p_{\calU_i}\left( \frac{1}{8} \right)^n
    \end{equation}
    The first inequality follows from the non-positivity of the reward, the second inequality follows from the above reasoning about $\calU_i$, and the final equality follows by definition of $p_{\calU_i}$.

    By assumption, we have
    \begin{equation}
        -{\frac{1}{2m}\left( \frac{1}{8} \right)^n } = - \smallrew \leq \mathbb{E}_{\calM_i}^{\pi^*}\left[ \sum_{t=0}^{H-1} R(s_t,a_t) \right] \leq -p_{\calU_i}\left( \frac{1}{8} \right)^n,
    \end{equation}
    from which we can deduce that $p_{\calU_i} \leq \frac{1}{2m }$. 
    
    Finally, we can deduce that
    \begin{equation}
        p_{\calU_0} = 1 - \sum_{i \in [m]} p_{\calU_i} \geq 1 - \frac{1}{2} > 0.
    \end{equation}
    That is, the randomized policy must place positive weight on policies corresponding to assignment strategies that satisfy every clause for every universal variable assignment.
    However, this fact holds only if there exists a policy in $\calU_0$, which implies the satisfiability of the QSAT instance.
\end{proof}

\begin{remark}
    \label{rem:splitting_procedure}
    In the proof of Proposition~\ref{prop:MOSOMDPs}, the rationals used in the observation function have numerators and denominators that grow exponentially in the size of the problem, as we depict in Figure~\ref{fig:mosomdp_default_proof}.
    We remark that one can prove the same result while keeping the values of the numerators and denominators polynomially bounded, and we sketch this argument below.
    In particular, we discuss a procedure to build a ME-POMDP $(\calM_i')_{i \in [m]}$ such that $\max_{\pi}V_{(\calM'_i)_{i \in [m]}}^\pi \geq 0$ if and only if the original QSAT instance is satisfiable, and all the revelation probabilities in $(\calM_i')_{i \in [m]}$ have numerators and denominators bounded by $2n + 2m$.

    We first distribute the revelation probability across multiple copies of the coin flip states in the manner depicted in Figure~\ref{fig:splitting_procedure}.
    For each state of the form $\statetruevarsup{j}{\cdot}$ or $\statefalsevarsup{j}{\cdot}$ in the original ME-POMDP, we make $n+1$ copies, so that the agent stays in such a state for $n+1$ time steps.
    In particular, if variable $x_j$ appears positively in clause $i$, then, in environment $i$, the first copy of $\statetruevarsup{j}{1}/\statetruevarsup{j}{-1}$ is revealed with probability $ 1 -\frac{1}{2m}$, and the remaining $n$ copies are revealed with probability $1 - \frac{1}{8}$. 
    With this revelation structure, the probability that the agent sees a copy of the state at least once is $1 - \frac{1}{2m} \frac{1}{8^n}$.
    Similar reasoning holds for $\statefalsevarsup{j}{1}/\statefalsevarsup{j}{-1}$.
    Meanwhile, if variable $x_j$ does not appear positively in clause $i$, then each of the $n+1$ copies of $\statetruevarsup{j}{1}/\statetruevarsup{j}{-1}$ is revealed with probability $\frac{1}{2(n+1)}$, so that the probability of the agent seeing the state is
    \begin{equation}
        1 - \left(1 - \frac{1}{2(n+1)} \right)^{n+1} \leq \frac{1}{2}.
    \end{equation}
    The above inequality follows from the fact that $(1 - x)^r \geq 1 - rx$ for any $x \in [0,1], r \in \mathbb{N}$.
    Ultimately, this splitting procedure lets us simulate the reduction from Proposition~\ref{prop:MOSOMDPs} without requiring small probabilities.

    To handle the small threshold value $\nu$, we also add the component shown in Figure~\ref{fig:small_reward_fragment} to the start of the POMDP.
    In particular, the agent starts in this component in every environment, which guarantees it an expected reward of at least $\frac{1}{2m} \frac{1}{8^n}$.
    Adding this component to the POMDP then allows us to use $0$ as the threshold for the MO-SOMDP decision problem.

    We now argue that this new MO-SOMDP $(\calM'_i)_{i \in [m]}$ has non-negative reward if and only if the original QSAT instance is true.
    The $(\implies)$ direction follows the exact same logic as the proof of Proposition~\ref{prop:MOSOMDPs}.
    Meanwhile, the $(\impliedby)$ direction requires the following modifications.
    \begin{itemize}
        \item When defining the mapping $\xi$ we need to replace $\bot$ with $\underbrace{\bot,\ldots,\bot}_{n+1 \text{ times}}$.
        \item When reasoning about the probability of reaching the dead state, we need to observe that the probability of observing the sequence $\underbrace{\bot,\ldots,\bot}_{n+1 \text{ times}}$ is $\left(1 - \frac{1}{2(n+1)} \right)^{n+1} \geq \frac{1}{2}$ at each stage.
    \end{itemize}
\end{remark}

\begin{figure}[t]
    \centering
    \resizebox{0.55\columnwidth}{!}{\begin{tikzpicture}[auto,node distance=8mm,>=latex,font=\small]

    \tikzstyle{round}=[thick,draw=black,circle]
    \tikzstyle{round_sat}=[thick,draw=blue,circle]
    \tikzstyle{round_unsat}=[thick,draw=red,circle]
    \tikzstyle{bobble}=[thick,draw=black,fill=black,minimum size=0.7mm,inner sep=0pt,fill=black,circle]

    \node[round] at (0,0) (e1) {$\statedecvarsim{1}$};

    \node[round_sat] at (1.5,2) (t10) {$\statetruevarsup{1}{0}$};
    \node[round_sat] at (3,3) (t11) {$\statetruevarsup{1}{1}$};
    \node[round_sat] at (3,1) (t1n1) {$\statetruevarsup{1}{-1}$};
    \node[bobble] at (2.5,2) (t1b) {};

    \draw[dashed, blue, rounded corners] (2.25, 0.25) rectangle (3.75, 3.75);
    \node at (3,4.25) {{\color{blue} Reveal w.p. $1 - \frac{1}{2m}\frac{1}{8^n}$}};

    \node[round_unsat] at (1.5,-2) (f10) {$\statefalsevarsup{1}{0}$};
    \node[bobble] at (2.5,-2) (f1b) {};
    \node[round_unsat] at (3,-1) (f11) {$\statefalsevarsup{1}{1}$};
    \node[round_unsat] at (3,-3) (f1n1) {$\statefalsevarsup{1}{-1}$};

    \draw[dashed, red, rounded corners] (2.25, -0.25) rectangle (3.75, -3.75);
    \node at (3,-4.25) {{\color{red} Reveal w.p. $\nicefrac{1}{2}$}};

    \node[round] at (8,0) (u2) {$\statedecvarsim{2}$};
    \node[bobble] at (9,0) (bu2) {};

    \node[round_unsat] at (9.5,2) (t20) {$\statetruevarsup{2}{0}$};
    \node at (10.5,2) {$\cdots$};

    \node[round_sat] at (9.5,-2) (f20) {$\statefalsevarsup{2}{0}$};
    \node at (10.5,-2) {$\cdots$};

    \node[round] at (8,5) (sacc) {$s_{\mathsf{acc}}$};

    \path[->] 
    (e1) edge node[sloped, anchor=center, above] {$\actTrue$} (t10)
    (e1) edge node[sloped, anchor=center, above] {$\actFalse$} (f10);

    \path[-]
    (u2) edge (bu2);
    \path[->] 
    (bu2) edge node[anchor=center, left] {$\nicefrac{1}{2}$} (t20)
    (bu2) edge node[anchor=center, left] {$\nicefrac{1}{2}$} (f20);

    \path[-]
    (t10) edge  (t1b);
    \path[->] 
    (t1b) edge node[anchor=center, above left] {$\nicefrac{1}{2}$} (t11)
    (t1b) edge node[anchor=center, below left] {$\nicefrac{1}{2}$} (t1n1);

    \path[-]
    (f10) edge  (f1b);
    \path[->] 
    (f1b) edge node[anchor=center, above left] {$\nicefrac{1}{2}$} (f11)
    (f1b) edge node[anchor=center, below left] {$\nicefrac{1}{2}$} (f1n1);

    \path[->] 
    (f11) edge node[sloped, pos=0.7, above] {$\actFalse$} (u2)
    (f1n1) edge node[sloped, pos=0.7, above] {$\actTrue$} (u2)
    (t11) edge node[sloped, pos=0.7, above] {$\actFalse$} (u2)
    (t1n1) edge node[sloped, pos=0.7, above] {$\actTrue$} (u2);

    \path[->] 
    (f11) edge node[sloped, pos=0.6, above] {$\actTrue$} (sacc)
    (f1n1) edge node[sloped, pos=0.65, above] {$\actFalse$} (sacc)
    (t11) edge node[sloped, anchor=center, above] {$\actTrue$} (sacc)
    (t1n1) edge node[sloped, anchor=center, above] {$\actFalse$} (sacc);

\end{tikzpicture}}
    \caption{
        A fragment of an environment corresponding to the clause $(x_1 \vee \neg x_2)$ for the 
        MO-SOMDP constructed in the proof of Proposition~\ref{prop:MOSOMDPs}.
        Lines leaving nodes correspond to actions, and small dots encode stochastic resolution of the transition dynamics. 
        When an arrow leaves a node without a label, then all actions induce the same dynamics.
        Blue nodes correspond to satisfaction of the clause, while red nodes correspond to the clause not being satisfied yet.
    }
    \label{fig:mosomdp_default_proof}
\end{figure}

\begin{figure}[t]
    \centering
    \resizebox{0.7\columnwidth}{!}{\begin{tikzpicture}[auto,node distance=8mm,>=latex,font=\small]

    \tikzstyle{round}=[thick,draw=black,circle]
    \tikzstyle{round_sat}=[thick,draw=blue,circle]
    \tikzstyle{round_unsat}=[thick,draw=red,circle]
    \tikzstyle{bobble}=[thick,draw=black,fill=black,minimum size=0.7mm,inner sep=0pt,fill=black,circle]

    \def\horizrepsep{2}

    \node[round] at ({3 - 4.5*\horizrepsep},0) (e1) {$\statedecvarsim{1}$};

    \node[round_sat] at ({3 - 3.5*\horizrepsep},2) (t10) {$\statetruevarsup{1}{0}$};
    \node[bobble] at ({3 - 3*\horizrepsep},2) (t1b) {};

    \node[round_sat] at ({3 - 2.5*\horizrepsep},3) (t110) {$\statetruevarsup{1}{1,0}$};
    \node[round_sat] at ({3 - 1.5*\horizrepsep},3) (t111) {$\statetruevarsup{1}{1,1}$};
    \node at (3 - 0.75*\horizrepsep,3) (1tdots) {$\cdots$};
    \node[round_sat] at ({3},3) (t11) {$\statetruevarsup{1}{1,n}$};

    \node[round_sat] at ({3 - 2.5*\horizrepsep},1) (t1n10) {$\statetruevarsup{1}{-1,0}$};
    \node[round_sat] at ({3 - 1.5*\horizrepsep},1) (t1n11) {$\statetruevarsup{1}{-1,1}$};
    \node at (3 - 0.75*\horizrepsep,1) (1ntdots) {$\cdots$};
    \node[round_sat] at ({3},1) (t1n1) {$\statetruevarsup{1}{-1,n}$};

    \draw[dashed, blue, rounded corners] ({2.25 - 2.5*\horizrepsep}, 0.25) rectangle ({3.75 - 2.5*\horizrepsep}, 3.75);
    \node[align=center] at (3 - 2.5*\horizrepsep,4.25) {\color{blue} Reveal \\ \color{blue} w.p. $1 - \frac{1}{2m}$};

    \draw[dashed, blue, rounded corners] ({2.25- 1.5*\horizrepsep}, 0.25) rectangle ({3.75- 1.5*\horizrepsep}, 3.75);
    \node[align=center] at (3- 1.5*\horizrepsep,4.25) {\color{blue} Reveal \\ \color{blue} w.p. $1 - \frac{1}{8}$};

    \draw[dashed, blue, rounded corners] (2.25, 0.25) rectangle (3.75, 3.75);
    \node[align=center] at (3,4.25) {\color{blue} Reveal \\ \color{blue} w.p. $1 - \frac{1}{8}$};

    \node[round_unsat] at ({3 - 3.5*\horizrepsep},-2) (f10) {$\statetruevarsup{1}{0}$};
    \node[bobble] at ({3 - 3*\horizrepsep},-2) (f1b) {};

    \node[round_unsat] at ({3 - 2.5*\horizrepsep},-1) (f110) {$\statefalsevarsup{1}{1,0}$};
    \node[round_unsat] at ({3 - 1.5*\horizrepsep},-1) (f111) {$\statefalsevarsup{1}{1,1}$};
    \node at (3 - 0.75*\horizrepsep,-1) (1fdots) {$\cdots$};
    \node[round_unsat] at ({3},-1) (f11) {$\statefalsevarsup{1}{1,n}$};

    \node[round_unsat] at ({3 - 2.5*\horizrepsep},-3) (f1n10) {$\statefalsevarsup{1}{-1,0}$};
    \node[round_unsat] at ({3 - 1.5*\horizrepsep},-3) (f1n11) {$\statefalsevarsup{1}{-1,1}$};
    \node at (3 - 0.75*\horizrepsep,-3) (1nfdots) {$\cdots$};
    \node[round_unsat] at ({3},-3) (f1n1) {$\statefalsevarsup{1}{-1,n}$};

    \draw[dashed, red, rounded corners] ({2.25 - 2.5*\horizrepsep}, -0.25) rectangle ({3.75 - 2.5*\horizrepsep}, -3.75);
    \node[align=center] at (3 - 2.5*\horizrepsep,-4.25) {\color{red} Reveal \\ \color{red} w.p. $\frac{1}{2(n+1)}$};

    \draw[dashed, red, rounded corners] ({2.25- 1.5*\horizrepsep}, -0.25) rectangle ({3.75- 1.5*\horizrepsep}, -3.75);
    \node[align=center] at (3- 1.5*\horizrepsep,-4.25) {\color{red} Reveal \\ \color{red} w.p. $\frac{1}{2(n+1)}$};

    \draw[dashed, red, rounded corners] (2.25, -0.25) rectangle (3.75, -3.75);
    \node[align=center] at (3,-4.25) {\color{red} Reveal \\ \color{red} w.p. $\frac{1}{2(n+1)}$};

    \node[round] at (8,0) (u2) {$\statedecvarsim{2}$};
    \node[round] at (8,5) (sacc) {$s_{\mathsf{acc}}$};

    \path[->] 
    (e1) edge node[sloped, anchor=center, above] {$\actTrue$} (t10)
    (e1) edge node[sloped, anchor=center, above] {$\actFalse$} (f10);

    \path[-]
    (t10) edge  (t1b);
    \path[->] 
    (t1b) edge node[anchor=center, above left] {$\nicefrac{1}{2}$} (t110)
    (t1b) edge node[anchor=center, below left] {$\nicefrac{1}{2}$} (t1n10);

    \path[-]
    (f10) edge  (f1b);
    \path[->] 
    (f1b) edge node[anchor=center, above left] {$\nicefrac{1}{2}$} (f110)
    (f1b) edge node[anchor=center, below left] {$\nicefrac{1}{2}$} (f1n10);

    \path[->] 
    (f11) edge node[sloped, pos=0.7, above] {$\actFalse$} (u2)
    (f1n1) edge node[sloped, pos=0.7, above] {$\actTrue$} (u2)
    (t11) edge node[sloped, pos=0.7, above] {$\actFalse$} (u2)
    (t1n1) edge node[sloped, pos=0.7, above] {$\actTrue$} (u2);

    \path[->] 
    (f11) edge node[sloped, pos=0.6, above] {$\actTrue$} (sacc)
    (f1n1) edge node[sloped, pos=0.65, above] {$\actFalse$} (sacc)
    (t11) edge node[sloped, anchor=center, above] {$\actTrue$} (sacc)
    (t1n1) edge node[sloped, anchor=center, above] {$\actFalse$} (sacc);

    \path[->] 
    (t110) edge (t111)
    (t111) edge (1tdots)
    (1tdots) edge (t11);

    \path[->] 
    (t1n10) edge (t1n11)
    (t1n11) edge (1ntdots)
    (1ntdots) edge (t1n1);

    \path[->] 
    (f110) edge (f111)
    (f111) edge (1fdots)
    (1fdots) edge (f11);

    \path[->] 
    (f1n10) edge (f1n11)
    (f1n11) edge (1nfdots)
    (1nfdots) edge (f1n1);

\end{tikzpicture}}
    \caption{
        A fragment of an environment in the MO-SOMDP constructed during the ``splitting'' procedure described in Remark~\ref{rem:splitting_procedure}.
    }
    \label{fig:splitting_procedure}
\end{figure}

\begin{figure}[t]
    \centering
    \resizebox{0.5\columnwidth}{!}{\begin{tikzpicture}[auto,node distance=8mm,>=latex,font=\small]

    \tikzstyle{round}=[thick,draw=black,circle]
    \tikzstyle{round_sat}=[thick,draw=blue,circle]
    \tikzstyle{round_unsat}=[thick,draw=red,circle]
    \tikzstyle{bobble}=[thick,draw=black,fill=black,minimum size=0.7mm,inner sep=0pt,fill=black,circle]

    \def\nodesepscale{2}
    \def\vertnodesepscale{1}

    \node at ({-1*\nodesepscale},{\vertnodesepscale}) (pinit) {};
    \node[round] at (0,{\vertnodesepscale}) (p0) {$s_{+,0}$};
    \node[bobble] at ({0.5*\nodesepscale},{\vertnodesepscale}) (pb0) {};
    \node[round] at ({\nodesepscale},{\vertnodesepscale}) (p1) {$s_{+,1}$};
    \node[bobble] at ({1.5*\nodesepscale},{\vertnodesepscale}) (pb1) {};
    \node at ({2*\nodesepscale},{\vertnodesepscale}) (pdots) {$\cdots$};
    \node[bobble] at ({2.5*\nodesepscale},{\vertnodesepscale}) (pbn) {};
    \node[round_sat] at ({3*\nodesepscale},{\vertnodesepscale}) (pn) {$s_{+,n}$};
    \node[align=center] at ({3*\nodesepscale},{1.75*\vertnodesepscale}) {\color{blue} $R(s_{+,n}) = 1$};

    \node at ({-1*\nodesepscale},{-1*\vertnodesepscale}) (ninit) {};
    \node[round] at (0,{-1*\vertnodesepscale}) (n0) {$s_{-,0}$};
    \node[round] at ({\nodesepscale},{-1*\vertnodesepscale}) (n1) {$s_{-,1}$};
    \node at ({2*\nodesepscale},{-1*\vertnodesepscale}) (ndots) {$\cdots$};
    \node[round] at ({3*\nodesepscale},{-1*\vertnodesepscale}) (nn) {$s_{-,n}$};

    \path[->]
    (ninit) edge node[anchor=center, above] {$1 - \nicefrac{1}{2m}$} (n0)
    (pinit) edge node[anchor=center, above] {$\nicefrac{1}{2m}$} (p0);

    \path[->] 
    (pb0) edge node[anchor=center, left] {$\nicefrac{7}{8}$} (n1)
    (pb1) edge node[anchor=center, left] {$\nicefrac{7}{8}$} (ndots)
    (pbn) edge node[anchor=center, left] {$\nicefrac{7}{8}$} (nn)
    (pb0) edge node[anchor=center, above] {$\nicefrac{1}{8}$}  (p1)
    (pb1) edge node[anchor=center, above] {$\nicefrac{1}{8}$}  (pdots)
    (pbn) edge node[anchor=center, above] {$\nicefrac{1}{8}$}  (pn);

    \path[->] 
    (n0) edge (n1)
    (n1) edge (ndots)
    (ndots) edge (nn);

    \path[-]
    (p0) edge (pb0)
    (p1) edge (pb1)
    (pdots) edge (pbn);
\end{tikzpicture}}
    \caption{
        A component one can add to the start of a POMDP to increase the reward by $\frac{1}{2m}\frac{1}{8^n}$.
        The agent gets a reward of $1$ with small probability.
        From the states in the set $\{s_{-,n},s_{+,n}\}$, the agent transitions to the component shown in Figure~\ref{fig:splitting_procedure}.
    }
    \label{fig:small_reward_fragment}
\end{figure}

\subsection{Multi-Observation Intermittently-Observable MDPs with Fixed Support}
\propMOIOMDPs*

\begin{proof}

    First, consider the case where one of the IOMDPs is unobservable, i.e., $\lambda = 0$. 
    In this case, for all states $s$, $\supp (O_i(s)) = \{\bot_i\}$ where $\bot_i \in Z$ is the blind observation in environment $i$. As such, by the fixed support condition, there exists some observation $\bot$ such that
    \begin{equation}
        O_i(s) = \delta_\bot \quad \forall i \in [n], s \in S. 
    \end{equation}
    That is, a fixed support MO-IOMDP where one environment is unobservable is equivalent to a single BlindMDP.
    For the remainder of the proof, we assume that the revelation probability is non-zero in all environments.

    We first argue that each state's unique observation is the same across environments.
    As each environment $i$ is an IOMDP with non-zero revelation probability, we can write the support of the observation function at any state $s$ as
    \begin{equation}
        \supp (O_i(s)) = \{z_i^s,\bot_i\},
    \end{equation}
    where $z_i^s$ denotes the unique observation of state $s$ in environment $i$, and $\bot_i$ denotes the no-revelation observation in environment $i$. 
    By the fixed-support condition, for any state $s$ and pair of environments $i,j$, $\{z_i^s,\bot_i\} = \{z_j^s,\bot_j\}$.
    Thus, we can deduce that
    \begin{equation}
        \{\bot_i\} = \bigcap_{s\in S} \{z_i^s,\bot_i\} = \bigcap_{s\in S} \{z_j^s,\bot_j\} = \{\bot_j\},
    \end{equation}
    i.e., the no-revelation observation is the same in all environments.
    The first and last equalities follow from the fact that $\bot_i$ is the only observation in environment $i$ that is common to all states, while the middle equality follows from the fixed-support condition.
    Since the blind observation is common to all environments, we can then deduce also that $z_i^s = z_j^s$, i.e., each state's unique observation is constant across the environment.

    We can now apply the results of \cite{DBLP:journals/tac/ChenL26} which describe how the value of a policy in an IOMDP changes as we increase the revelation rate, without shuffling observations.
    In particular, we first recall the following monotonicity result from \cite{DBLP:journals/tac/ChenL26} for IOMDPs.
    \begin{lemma}[Adapted from Corollary $1$ in \cite{DBLP:journals/tac/ChenL26}]
        \label{lem:IOMDP_mono}
        Fix an underlying MDP $\calM'$, and let $\calM_\lambda$ denote the IOMDP constructed from $\calM'$ with revelation probability $\lambda$.
        Additionally, let $\pi_\lambda$ be an optimal policy for the belief-MDP corresponding to $\calM_\lambda$. 
        Finally, for an arbitrary policy $\pi$ and revelation probability $\nu$, let $V^\pi_\nu$ be the value of policy $\pi$ in IOMDP $\calM_\nu$. 
        Then, for any $u \geq \lambda$, we have
        \begin{equation*}
            V^{\pi_\lambda}_\lambda \leq V^{\pi_\lambda}_u.
        \end{equation*}
    \end{lemma}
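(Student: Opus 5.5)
The plan is a backward induction on the time step. The aim is to compare the value of the fixed policy $\pi_\lambda$ in $\calM_u$ with the optimal value function of $\calM_\lambda$, and the inequality $u \geq \lambda$ enters only through convexity of that optimal value function.

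\textbf{Setup.} Fix the underlying MDP with transition function $T$ and reward $R$. Let $V^*_{\lambda,t}(\mathbf{b})$ be the optimal value-to-go from belief $\mathbf{b}$ at time $t$ in the belief-MDP of $\calM_\lambda$. Then $\pi_\lambda$ is a time-dependent belief-based policy attaining $V^*_{\lambda,t}$ at every reachable belief. The key structural observation for IOMDPs is the following: since the revelation probability is the same constant in every state, the belief update after a blind observation $\bot$ reduces to the plain prediction
\begin{equation*}
\hat{\mathbf{b}}_a(s') = \sum_{s \in S} \mathbf{b}(s)\, T(s,a,s'),
\end{equation*}
because the factor $(1-\lambda)$ cancels in the normalisation. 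This prediction does not depend on $\lambda$. After a revealing observation $\top_{s'}$ the belief collapses to $\delta_{s'}$, and $s'$ is distributed according to $\hat{\mathbf{b}}_a$. Hence in both $\calM_\lambda$ and $\calM_u$ the same family of beliefs arises, and we can run $\pi_\lambda$ in $\calM_u$ as the map $(\mathbf{b},t) \mapsto \pi_{\lambda,t}(\mathbf{b})$, with beliefs updated correctly in $\calM_u$. The same reasoning applies to the initial observation at $t=0$.

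\textbf{Induction claim and step.} Let $W_t(\mathbf{b})$ be the value-to-go of $\pi_\lambda$ in $\calM_u$. I would show $W_t(\mathbf{b}) \geq V^*_{\lambda,t}(\mathbf{b})$ for all $\mathbf{b}$ and $t$, by backward induction from $t = H$, where both sides are zero. For the step, write $a = \pi_{\lambda,t}(\mathbf{b})$ and $\bar R = \sum_s \mathbf{b}(s)R(s,a)$. Then
\begin{equation*}
W_t(\mathbf{b}) = \bar R + u \sum_{s'} \hat{\mathbf{b}}_a(s')\, W_{t+1}(\delta_{s'}) + (1-u)\, W_{t+1}(\hat{\mathbf{b}}_a).
\end{equation*}
By the induction hypothesis this is at least
\begin{equation*}
\bar R + u \sum_{s'} \hat{\mathbf{b}}_a(s')\, V^*_{\lambda,t+1}(\delta_{s'}) + (1-u)\, V^*_{\lambda,t+1}(\hat{\mathbf{b}}_a).
\end{equation*}
The same expression with $u$ replaced by $\lambda$ equals $V^*_{\lambda,t}(\mathbf{b})$, by optimality of $a$. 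The difference between the two expressions is
\begin{equation*}
(u-\lambda)\Bigl(\sum_{s'} \hat{\mathbf{b}}_a(s') V^*_{\lambda,t+1}(\delta_{s'}) - V^*_{\lambda,t+1}(\hat{\mathbf{b}}_a)\Bigr).
\end{equation*}
It therefore suffices to show that the bracketed term is nonnegative.

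\textbf{Main obstacle: convexity.} The bracketed term is nonnegative by convexity of the finite-horizon optimal value function in the belief, since $\hat{\mathbf{b}}_a = \sum_{s'} \hat{\mathbf{b}}_a(s') \delta_{s'}$. I expect this to be the only real work. I would prove it by the standard value-of-information argument. Consider the optimal policy of $\calM_\lambda$ started from belief $\hat{\mathbf{b}}_a$. It is a feasible history-dependent policy when started from each $\delta_{s'}$: it simply ignores the knowledge of $s'$. Averaging its values over $s'$ with weights $\hat{\mathbf{b}}_a(s')$ recovers exactly its value from $\hat{\mathbf{b}}_a$, because the path measure from $\hat{\mathbf{b}}_a$ is the corresponding mixture of the path measures from the $\delta_{s'}$. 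Each $V^*_{\lambda,t+1}(\delta_{s'})$ is at least the value of that policy from $\delta_{s'}$. Alternatively, I could cite the piecewise-linear convexity of POMDP value functions. Applying the claim at $t=0$ to the initial belief (after the initial observation step) gives $V^{\pi_\lambda}_\lambda \leq V^{\pi_\lambda}_u$.
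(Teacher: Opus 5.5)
Your proof is correct, but it takes a different route from the paper. The paper does not prove the lemma itself. It imports Corollary~1 of \cite{DBLP:journals/tac/ChenL26}, which is stated for infinite-horizon discounted IOMDPs, and only sketches how to transfer it to the finite horizon: time-augmented states, rewards rescaled to undo the discount, and an absorbing zero-reward state after $H$ steps. You instead give a self-contained finite-horizon backward induction resting on three facts:
\begin{enumerate}
\item the posterior after $\bot$ does not depend on $\lambda$, because the factor $1-\lambda$ cancels when the revelation probability is state-independent, so the same belief-based policy is meaningful in every $\calM_u$;
\item the Bellman equation for $V^*_\lambda$ at the action chosen by $\pi_\lambda$;
\item convexity of $V^*_{\lambda,t+1}$, which you prove by the value-of-information argument.
\end{enumerate}
The citation buys brevity. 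Your route avoids having to check the sketched encoding, for instance that the augmented model is still an IOMDP with the same $\lambda$ and that its belief-MDP optimal policies correspond to finite-horizon ones. It also shows exactly where constant revelation probability is used: the first fact fails for SOMDPs with state-dependent $\eta$. As a bonus, it gives the monotonicity of optimal values that the paper separately imports as Theorem~1 of \cite{DBLP:journals/tac/ChenL26}, since $V^*_\lambda = V^{\pi_\lambda}_\lambda \le V^{\pi_\lambda}_u \le V^*_u$.

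One point to tighten. Your induction runs over all beliefs $\mathbf{b}$, and each step uses that $\pi_{\lambda,t}(\mathbf{b})$ attains the maximum in the $\lambda$-Bellman equation at $\mathbf{b}$. Your setup, however, only asserts optimality at reachable beliefs. There are two fixes.
\begin{itemize}
\item Take $\pi_\lambda$ to be the backward-induction policy, which is optimal at every belief.
\item Restrict the induction to pairs $(\mathbf{b},t)$ reachable under $\pi_\lambda$ in $\calM_u$. For $0<\lambda\le u$, every belief transition with positive probability in $\calM_u$ also has positive probability in $\calM_\lambda$, so these pairs are reachable in $\calM_\lambda$ too.
\end{itemize}
For $\lambda=0$ the uniform notion is genuinely needed, because the collapsed beliefs $\delta_{s'}$ never occur in $\calM_0$; the lemma itself must be read that way. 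Finally, handling the initial observation at $t=0$ needs one more application of your convexity step, as you indicate.
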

    In words, Lemma~\ref{lem:IOMDP_mono} shows that there exists an optimal policy $\pi_\lambda$ for $\calM_\lambda$ such that increasing the revelation probability above $\lambda$ will increase the value that $\pi_\lambda$ attains.
    While the result in \cite{DBLP:journals/tac/ChenL26} is given for infinite-horizon IOMDPs, one can easily apply it to finite-horizon IOMDPs as well.
    For example, one can encode the finite-horizon IOMDP as an infinite-horizon IOMDP where 
    \begin{itemize}
        \item States are augmented to include a time component,
        \item The reward corrects the discount factor,
        \item After the agent has taken $H$ actions, the infinite-horizon IOMDP transitions to a dead state with zero reward.
    \end{itemize}

    We next argue that we can decide the MO-IOMDP problem in non-deterministic polynomial time.

    We first upper bound the value of a given MO-IOMDP instance.
    Without loss of generality, we assume that the IOMDPs are ordered such that $\lambda_1 \leq \lambda_2 \leq \ldots \leq \lambda_n$ where $\lambda_i$ is the revelation probability for $\calM_i$.
    By Theorem $1$ in \cite{DBLP:journals/tac/ChenL26}, the values of these IOMDPs reflect their revelation probabilities. That is,
    \begin{equation}
        V_{\calM_1}^* \leq V_{\calM_2}^* \leq \ldots 
    \end{equation}
    We can then easily upper bound the MO-IOMDP value by $V_{\calM_1}$. Indeed, for any policy $\pi$, we have
    \begin{equation}
        \min_i V^{\pi}_{\calM_i} \leq \min_i V^{*}_{\calM_i} = V^*_{\calM_1}.
    \end{equation}

    We now observe that an optimal belief-based policy for $\calM_1$ achieves this value. 
    Indeed, let $\pi_1$ be an optimal belief-based policy in $\calM_1$.
    Then, by Lemma~\ref{lem:IOMDP_mono}, we have
    \begin{equation}
        \min_i V^{\pi_1}_{\calM_i} \geq \min_i V^{\pi_1}_{\calM_1} = V^*_{\calM_1}.
    \end{equation}

    Thus, a MO-IOMDP with fixed support has value at least $B$ if and only if there exists a policy for the environment with the lowest revelation probability with value at least $B$, and this fact suggests Algorithm~\ref{alg:simplified_MO_IOMDP} as a simple method for solving the value problem in MO-IOMDPs. 
    In short, we can guess and evaluate a policy in the class of optimal IOMDP policies, and by the proof of Proposition~\ref{prop:SOMDPs_base}, we can implement this procedure in non-deterministic polynomial time.
    \begin{algorithm}[t]
    \caption{MO-IOMDP}
    \label{alg:simplified_MO_IOMDP}
    \begin{algorithmic}[1]
    \Require IOMDPs $\calM_1,\ldots,\calM_n$ that differ in revelation probability $\lambda$, along with a value threshold $B$.

    \State Determine the index $i^*$ of the IOMDP with the lowest revelation rate.
    \State Non-deterministically guess a policy $\pi$ in the class of optimal IOMDP policies
    \If{$ \mathbb{E}_{\calM_i}^\pi[\sum_{t=0}^{H-1} R(s_t,a_t)] \geq B$}
        \Return $\mathsf{YES}$
    \EndIf
    \end{algorithmic}
    \end{algorithm}
\end{proof}

\clearpage\newpage
\section{Proofs for Section~\ref{sec:restricted_pol_classes}}
\label{sec:restricted_pol_full_proofs}

\subsection{Time-Dependent Markovian Policies in ME-POMDPs}

\propMEBlindPOMDPsMemoryless*

\begin{proof}
    Fix a tensor $(a_{ijk})_{i \in [m], j \in [n], k \in [n]}$. This tensor will encode a $2$-vs.-$1$ game, where the adversary chooses action $i \in [m]$ and two players choose actions $j$ and $k$ independently according to a pair of mixed strategies.
    
    We build a ME-BlindMDP $(\calM_{i})_{i\in [n]}$ such that 
    \begin{equation}
        \max_{\pi \in \polMR} \min_{i \in [m]} V^\pi_{\calM_i} = \max_{\mathbf{x} \in \dist{[n]}, \mathbf{y} \in \dist{[n]}} \min_{i \in [m]} \sum_{j \in [n], k \in [n]} a_{ijk} 
        \vecat{\mathbf{x}}{j} \vecat{\mathbf{y}}{j}
    \end{equation}
    In particular, we define a deterministic BlindMDP $\calM_i$ as
    \begin{equation}
        \calM_i = (\underbrace{\{s_0\} \cup [n]}_{S}, \underbrace{[n]}_{A}, \underbrace{\{\bot\}}_{Z}, T, O, R_i, \underbrace{\delta_{s_0}}_{b}, \underbrace{2}_{H}),
    \end{equation}
    where we set $T$ and $O$ to satisfy
    \begin{equation}
        T(s_0,j) = j \ \ \forall j \in [n], \quad O(s) = \bot \ \ \forall s \in \{s_0\} \cup [n],
    \end{equation}
    and we set $R_i$ to satisfy
    \begin{align}
        & R_i(s_0,j) = 0 \quad \forall i \in [m], j \in [n], \\
        & R_i(j,k) = a_{ijk} \quad \forall i \in [m], j \in [n], k \in [n].
    \end{align}

    A time-dependent Markovian policy in this ME-BlindMDP comprises a pair $\pi_0$ and $\pi_1$ of distributions in $\dist{[n]}$, and this policy will have the following value.
    \begin{equation}
        \min_{i \in [m]} \mathbb{E}_{j \sim \pi_0, k \sim \pi_1, s \sim T(s_0,j)}[R_i(s, k)]
        = \min_{i \in [m]} \mathbb{E}_{j \sim \pi_0, k \sim \pi_1}[R_i(j,k)] = \min_{i \in [m]} \sum_{j \in [n], k \in [n]} a_{ijk} 
        \vecat{\pi_0}{j} \vecat{\pi_1}{k}.
    \end{equation}

    Thus, for any fixed rational $B$, the ME-POMDP has value at least $B$ in time-dependent Markovian policies if and only if the $\mathsf{maxmin}$ value of the $2$-vs.$1$ team game that $a_{ijk}$ defines is greater than $B$. 
    The $\ETR$-hardness of the latter problem is then given by Lemma 24 in \cite{DBLP:conf/lics/BriceHT26}. 
    Alternatively, one can deduce the $\ETR$-hardness of the $2$-vs.-$1$ game problem by observing that deciding the existence of a solution $\mathbf{x}, \mathbf{y} \in \dist{[n]}$ to the system of equations
    \begin{equation}
        \sum_{j\in [n], k \in [n]} a_{ijk} 
        \vecat{\mathbf{x}}{j}
        \vecat{\mathbf{y}}{k} = 0 \quad \forall i \in [m],
    \end{equation}
    is $\ETR$-hard, as detailed in \cite{DBLP:journals/mst/Hansen19}.
\end{proof}

\propDetMEMDPsMemoryless*

\begin{proof}

    Fix a tensor $(a_{ijk})_{i \in [m], j \in [n], k \in [n]}$. This tensor will encode a $2$-vs.-$1$ game, where the adversary chooses action $i \in [m]$ and two players choose actions $j$ and $k$ independently according to a pair of mixed strategies.
    Additionally let $\calL = \{0,\ldots,n-1\}$ and $\calI = [m]$.
    
    We build a ME-MDP $(\calM_{(i,l)})_{(i,l) \in \calI \times \calL}$ such that 
    \begin{equation}
        \label{eq:MEMDP_MR_equality}
        \max_{\pi \in \polMR} \min_{(i,l) \in \calI \times \calL} V^\pi_{\calM_{(i,l)}} = \max_{\mathbf{x} \in \dist{[n]}, \mathbf{y} \in \dist{[n]}} \min_{i \in [m]} \sum_{j \in [n], k \in [n]} a_{ijk} 
        \vecat{\mathbf{x}}{j}
        \vecat{\mathbf{y}}{k}.
    \end{equation}
    In particular, we define a deterministic ME-MDP $(\calM_{(i,l)})_{(i,l) \in \calI \times \calL}$ with
    \begin{equation}
        \calM_{(i,l)} = (\underbrace{\{s_0\} \cup \calL}_{S}, \underbrace{[n]}_{A}, T_{l}, R_{(i,l)}, \underbrace{\delta_{s_0}}_{b},\underbrace{2}_{H}).
    \end{equation}
    We use the trivial observation set and function, where each state emits a unique observation deterministically.
    We set $T_{l}$ to satisfy
    \begin{equation}
        T_{l}(s_0, j) = ((j - 1 + l) \mod n) \quad \forall j \in [n],
    \end{equation}
    so that the state at $t = 1$ encodes a shuffled version of the agent's initial action.
    We then set the reward function to encode the tensor $(a_{ijk})_{i \in [m], j \in [n], k \in [n]}$.
    \begin{align*}
        & R_{(i,l)}(s_0,j) = 0 & \forall i \in [m], l \in \calL, j \in [n], \\
        & R_{(i,l)}(j',k) = a_{i (((j' - l) \mod n) + 1) k} &\forall i \in [m], j' \in \calL, l \in \calL, k \in [n].
    \end{align*}

    We remark that, due to the above transition and reward structure, the agent gets $a_{ijk}$ when it plays actions $j$ and $k$ in environment $i$.
    Indeed, for any $i \in [m], j,k \in [n]$, and $l \in \calL$, we have
    \begin{equation}
        \label{eq:reward_unshuffle}
        R_{(i,l)}((j - 1 + l) \mod n, k) = a_{i j k}
    \end{equation}
    as, by basic modular arithmetic rules, 
    \begin{equation*}
        (((j - 1 + l) \mod n) - l \mod n) + 1 = j.
    \end{equation*}

    We now prove \eqref{eq:MEMDP_MR_equality}, and we first fix notation.
    A time-dependent Markovian policy in this MEMDP comprises two mappings $\pi_0$ and $\pi_1$ such that
    \begin{equation}
        \pi_t : \{s_0\} \cup \calL \rightarrow \dist{[n]} \text{ for } t \in \{0,1\}.
    \end{equation}
    By an abuse of notation, we will identify $\pi_0$ with $\pi_0(s_0)$ since the first state is always $s_0$. We will then use $\pi^{(j')}_1$ to denote $\pi_1(j') \in \dist{[n]}$ for $j' \in \calL$.

    To show the $\leq$ direction of \eqref{eq:MEMDP_MR_equality}, we show that nature can uniformly randomize over the $\calL$ component of the environment to ensure that the agent can not use state information to improve their reward.
    Fix a time-dependent Markovian randomized policy $(\pi_0,\pi_1)$, and define $\tilde{\pi}_1 \in \dist{[n]}$ as follows.
    \begin{equation}
        \tilde{\pi}_1 = \sum_{j' \in \calL} \frac{1}{n} \pi_1^{(j')}.
    \end{equation}
    We next evaluate the policy $(\pi_0,\pi_1)$ when nature uniformly randomizes over $\calL$. 
    That is, we evaluate $\sum_{l \in \calL} \frac{1}{n} V_{(i,l)}^\pi$ as follows.
    \begin{multline}
        \label{eq:shuffle_bound}
        \sum_{l \in \calL} \frac{1}{n} \mathbb{E}_{j \sim \pi_0, j' \sim T_l(s_0,j), k \sim \pi_1^{(j')}} [R_{(i,l)}(j',k)]
        = \sum_{l \in \calL} \frac{1}{n} \mathbb{E}_{j \sim \pi_0, k \sim \pi_1^{((j-1 + l) \mod n)}} [R_{(i,l)}((j-1 + l) \mod n,k)]
        \\ = \sum_{l \in \calL} \frac{1}{n} \mathbb{E}_{j \sim \pi_0 k \sim \pi_1^{((j-1 + l) \mod n)}} [a_{ijk}]
        \\ =  \sum_{j \in [n]} \vecat{\pi_0}{j} \sum_{k \in [n]} \sum_{l \in \calL} \frac{1}{n} \vecat{\pi_1^{(j-1 + l) \mod n}}{k} \ a_{ijk}
        = \sum_{j \in [n]} \vecat{\pi_0}{j} \sum_{k \in [n]} 
        \vecat{\tilde{\pi}_1}{k}
        a_{ijk}.
    \end{multline}
    The first equality above uses the transition dynamics definition.
    The second equality uses \eqref{eq:reward_unshuffle}.
    The third equality makes the expectation explicit and reorders the sums.
    The final equality uses the definition of $\tilde{\pi}_1$.
    Obviously, nature can do better than uniformly randomizing over $\calL$ by choosing the worst value of $l$, and so we have
    \begin{equation}
        \label{eq:MEMDP_MR_equality_leq_single_pol}
        \min_{i\in [m]} \min_{l \in \calL} V_{(i,l)}^\pi \leq \min_{i\in [m]} \sum_{l \in \calL} \frac{1}{n} V_{(i,l)}^\pi = \min_{i \in [m]} \sum_{j \in [n],k\in [n]} \vecat{\pi_0}{j}\vecat{\tilde{\pi}_1}{k} a_{ijk}
         \leq  \max_{\mathbf{x} \in \dist{[n]}, \mathbf{y} \in \dist{[n]}} \min_{i \in [m]} \sum_{j \in [n], k \in [n]} a_{ijk} \vecat{\mathbf{x}}{j}\vecat{\mathbf{y}}{k}.
    \end{equation}
    The $\leq$ of \eqref{eq:MEMDP_MR_equality} follows by observing that 
    \eqref{eq:MEMDP_MR_equality_leq_single_pol} holds for any time-dependent Markovian policy $\pi$.
    
    We now deduce the $\geq$ direction of \eqref{eq:MEMDP_MR_equality}.
    Fix $\mathbf{x} \in \dist{[n]}, \mathbf{y} \in \dist{[n]}$. We define $\pi_0$ and $\pi_1$ such that
    \begin{equation}
        \vecat{\pi_0}{j} = \vecat{\mathbf{x}}{j} \ \ \forall j \in [n] \quad \quad 
        \vecat{\pi_1^{(j')}}{k}
        = \vecat{\mathbf{y}}{k} \ \ \forall k \in [n], \forall j' \in \calL.
    \end{equation}
    We then observe that, by following similar steps to the $(\leq)$ direction, that for any $i \in [m]$, $l\in \calL$, we have
    \begin{equation*}
        \mathbb{E}_{j \sim \pi_0, j' \sim T_l(s_0,j) k \sim \pi_1^{j'}} [R_{(i,l)}(j',k)]
        = \mathbb{E}_{j \sim \mathbf{x}, k \sim \mathbf{y}} [R_{(i,l)}((j-1 + l) \mod n,k)]
        = \mathbb{E}_{j \sim \mathbf{x}, k \sim \mathbf{y}} [a_{ijk}]
        = \sum_{j \in [n], k \in [n]} a_{ijk} \vecat{\mathbf{x}}{j} \vecat{\mathbf{y}}{k},
    \end{equation*}
    from which we can deduce the $\geq$ direction of \eqref{eq:MEMDP_MR_equality}.

\end{proof}

\propMOPOMDPsMemoryless*

\begin{proof}
    Fix a tensor $(a_{ijk})_{i \in  [m], j \in [n], k \in [n]}$. We build a MO-POMDP such that 
    \begin{equation}
        \label{eq:main_mopomdp_iff}
        \max_{\pi \in \polMR} \min_{i \in [m]} V_{\calM_j}^\pi \geq 0
        \iff
        \max_{\mathbf{x} \in \dist{[n]}, \mathbf{y} \in \dist{[n]}} \min_{i \in [m]} \sum_{j \in [n], k \in [n]} a_{ijk} \vecat{\mathbf{x}}{j}\vecat{\mathbf{y}}{k} \geq 0 
    \end{equation}
    As in the proofs of the preceding propositions, we build an environment where the agent must choose their first two actions independently.
    However, we must overcome the fact that the adversary can not directly edit the environment.

    To overcome this difficulty, we build an environment in which the adversary reveals information to the agent conditional on the agent taking a certain action.
    In particular, after the agent has chosen their first two actions, an environment-dependent observation is emitted, and the agent then chooses an action $a^*$ as a function of this observation.
    Following this stage, the agent plays a guessing game in which they must guess the value of a coin flip.
    If the agent chose the right value for $a^*$, the agent sees the true coin value, whereas if the agent chose the wrong action, they see nothing, and the agent suffers a large penalty. 
    As such, the adversarial nature can control the environment dynamics by incentivizing the agent to choose certain actions conditional on the environment.
    
    We now formally define the MO-POMDP
    \begin{equation*}
        \calM_i = (S,A,Z,T,O_i,R,b,H).
    \end{equation*}
    We set the state space to have the following structure.
    \begin{align*}
        S = & \{s_0\} 
        \\& \cup  \left([n] \right)
        \\& \cup \left([n] \times [n] \right)
        \\& \cup \left([n] \times [n] \times [m]  \right)
        \\& \cup \left([n] \times [n] \times [m] \times \{\bot,\top\} \right)
    \end{align*}
    We then set the action space to 
    \begin{equation}
        A = [n] \cup [m] \cup \{1, 2\}, 
    \end{equation}
    and the observation space to
    \begin{equation}
        Z = \{z_j | j \in [m]\} \cup \{z_\top, z_\bot\} \cup \{z_{\mathsf{blind}}\}.
    \end{equation}
    We set $b = \delta_{s_0}$ so that the environment starts in $s_0$, and we set $H = 5$. 

    We design the transition functions to track the actions taken in each time step, and additionally encode a coin flip at the last stage.
    \begin{align*}
        & T(s_0,j) = \begin{cases}
            j & \text{ if } j \in [n] \\
            1 & \text{ otherwise }
        \end{cases} \\
        & T(j,k) =  \begin{cases}
            (j,k) & \text{ if } k \in [n] \\
            (j,1) & \text{ otherwise }
        \end{cases} 
         \\
        & T((j,k),i) = \begin{cases}
            (j,k,i) & \text{ if } i \in [m] \\
            (j,k,1) & \text{ otherwise }
        \end{cases}  \\
        & T((j,k,i), \tilde{a}) = \left\{(j,k,i,\top) \mapsto \frac{1}{2}, (j,k,i,\bot) \mapsto \frac{1}{2}\right\} \quad \forall \tilde{a} \in A.
    \end{align*}
    The ``otherwise'' cases effectively ensure that the agent always chooses integers from the right set at each stage. 

    We then design the observation function for environment $i$ such that, at the third stage, the agent sees an observation $z_i$ that is unique to that environment. 
    The transition function definition also allows the state to encode whether or not the agent played action $i$ in environment $i$.
    As such, we can define the observation function to reveal the coin flip result conditional on whether the agent played action $i$ at the third stage.
    In particular, for any $i \in [m]$, we define the observation function $O_i$ in environment $i$ as follows.
    \begin{align*}
        & O_i(s_0) = z_{\mathsf{blind}} \\
        & O_i(j) = z_{\mathsf{blind}} & \forall j \in [n] \\
        & O_i((j,k)) = z_i & \forall j,k \in [n] \\
        & O_i((j,k,l)) = z_{\mathsf{blind}} & \forall j,k \in [n], l \in [m] \\
        & O_i((j,k,i,\top)) = z_\top  & \forall j,k \in [n] \\
        & O_i((j,k,i,\bot)) = z_\bot  & \forall j,k \in [n] \\
         & O_i((j,k,l,\top)) = \left\{z_\top \mapsto \frac{1}{2}, z_\bot \mapsto \frac{1}{2}\right\} & \forall j,k \in [n], l \in ([m] \setminus \{i\})\\
         & O_i((j,k,l,\bot)) = \left\{z_\top \mapsto \frac{1}{2}, z_\bot \mapsto \frac{1}{2}\right\} & \forall j,k \in [n], l \in ([m] \setminus \{i\}) 
    \end{align*}
    Note that, in all cases, the observation function still takes as input a \emph{state}, but, for these POMDPs, the state may be a tuple of indices, hence the double-parentheses.
    
    Finally, the rewards encode 
    \begin{enumerate}
        \item The game defined by $a_{ijk}$, and
        \item The rewards from the matching problem.
    \end{enumerate}
    In particular, let $M =  \max_{ijk} |a_{ijk}| $ represent the highest magnitude value in $(a_{ijk})_{i \in [m], j,k \in [n]}$.
    We set $R$ as follows
    \begin{align*}
        & R(s_0, \tilde{a}) = 0 & \forall \tilde{a} \in A\\
        & R(j, \tilde{a}) = 0 & \forall j \in [n], \tilde{a} \in A\\
        & R((j,k), \tilde{a}) = 0 & \forall j,k \in [n], \tilde{a} \in A\\
        & R((j,k,i), \tilde{a}) = a_{ijk} & \forall j,k \in [n], i\in [m], \tilde{a} \in A\\
        & R((j,k,i,\top),\tilde{a}) = \begin{cases}
            0 & \text{ if } \tilde{a} = 1 \\
            - 3M & \text{ otherwise }
        \end{cases} & \forall j,k \in [n], i \in [m] \\
        & R((j,k,i,\bot),\tilde{a}) = \begin{cases}
            0 & \text{ if } \tilde{a} = 2 \\
            - 3M & \text{ otherwise }
        \end{cases}
        & \forall j,k \in [n], i \in [m] 
    \end{align*}

    We now argue that \eqref{eq:main_mopomdp_iff} holds.

    ($\implies$)
    A memoryless policy in this MO-POMDP comprises a sequence $(\pi_t)_{t\in \{0,1,2,3,4\}}$ of five mappings. Ignoring their behavior on observations that occur with zero probability, these policies take the form
    \begin{align*}
        & \pi_0 : \{ z_{\mathsf{blind}}\} \rightarrow \dist{A}\\
        & \pi_1 : \{ z_{\mathsf{blind}}\} \rightarrow \dist{A}\\
        & \pi_2 : \{ z_i \}_{i\in [m]} \rightarrow \dist{A} \\
        & \pi_3 : \{ z_{\mathsf{blind}} \} \rightarrow \dist{A} \\
        & \pi_4 : \{ z_\bot, z_\top \} \rightarrow \dist{A}
    \end{align*}
    By an abuse of notation, we will identify $\pi_0(z_{\mathsf{blind}})$ with $\pi_0$ and use similar notation for $\pi_1$.
    
    We assume that under these policies the agent receives a non-negative reward in all environments, and we prove that there exist $\mathbf{x}$ and $\mathbf{y}$ to certify that the right-hand side of  \eqref{eq:main_mopomdp_iff} holds.
    In particular, we define the following vectors where $\mathbf{x}, \mathbf{y} \in \dist{[n]}$, $\mathbf{v}^{(i)} \in \dist{[m]}$ for each $i \in [m]$, and $\mathbf{w}_\top, \mathbf{w}_\bot \in \dist{\{1,2,3\}}$.
    \begin{equation*}
        \vecat{\mathbf{x}}{j} = \begin{cases}
            \vecat{\pi_0}{1} + \sum_{j' \in A \setminus [n]} \vecat{\pi_0}{j'} & \text{ if } j = 1 \\
            \vecat{\pi_0}{j} & \text{ otherwise.}
        \end{cases}
    \end{equation*}
    \begin{equation*}
        \vecat{\mathbf{y}}{k} = \begin{cases}
            \vecat{\pi_1}{1} + \sum_{k' \in A \setminus [n]} \vecat{\pi_1}{k'} & \text{ if } k = 1 \\
            \vecat{\pi_1}{k} & \text{ otherwise.}
        \end{cases}
    \end{equation*}
    \begin{equation*}
        \vecat{\mathbf{v}^{(i)}}{i'} = \begin{cases}
            \vecat{\pi_2(z_i)}{1} + \sum_{l \in A \setminus [m]} \pi_2(z_i)[l] & \text{ if } i' = 1 \\
            \vecat{\pi_2(z_i)}{i'}  & \text{ otherwise.}
        \end{cases}
    \end{equation*}
    \begin{align*}
        & \vecat{\mathbf{w}_\top}{1} = 
        \vecat{\pi_4(z_\top)}{1}
        & \vecat{\mathbf{w}_\bot}{1} = \vecat{\pi_4(z_\bot)}{1} \\
        & \vecat{\mathbf{w}_\top}{2} = \vecat{\pi_4(z_\top)}{2} 
        & \vecat{\mathbf{w}_\bot}{2} = \vecat{\pi_4(z_\bot)}{2} \\
        & \vecat{\mathbf{w}_\top}{3} = \sum_{l \in A \setminus \{1,2\}}\vecat{\pi_4(z_\top)}{l}
        & \vecat{\mathbf{w}_\bot}{3} = \sum_{l \in A \setminus \{1,2\}}\vecat{\pi_4(z_\bot)}{l}
    \end{align*}

    We now fix an index $i \in [m]$, and argue that
    \begin{equation}
        \label{eq:env_i_nonneg_mopomdp}
        \sum_{j \in [n], k \in [n]} a_{ijk} \vecat{\mathbf{x}}{j} \vecat{\mathbf{y}}{k} \geq 0.
    \end{equation}

    In particular, we evaluate the reward in each environment.
    When evaluating the reward, we can ignore the state at timesteps $0,1$, and $2$.
    The expected reward in environment $i$ at time $3$ is given by 
    \begin{equation}
        \mathbb{E}_{\calM_i}^\pi[R(s_3,a_3)] = \sum_{i' \in [m]} 
        \vecat{\mathbf{v}^{(i)}}{i'} \sum_{j \in [n], k \in [n]} a_{i'jk} \vecat{\mathbf{x}}{j} \vecat{\mathbf{y}}{k}. 
    \end{equation}
    Indeed, the probability of visiting state $(j,k,i')$ at $t = 3$ is $\vecat{\mathbf{x}}{j} \vecat{\mathbf{y}}{k} \vecat{\mathbf{v}^{(i)}}{i'}$ by definition of the transition dynamics.
    We can then upper bound this step's reward as follows.
    \begin{multline}
         \mathbb{E}_{\calM_i}^\pi[R(s_3,a_3)] 
         = 
         \vecat{\mathbf{v}^{(i)}}{i} \sum_{j \in [n], k \in [n]} a_{ijk} \vecat{\mathbf{x}}{j} \vecat{\mathbf{y}}{k}
         + \sum_{i' \in [m] \setminus \{i\}} \vecat{\mathbf{v}^{(i)}}{i'} \sum_{j \in [n], k \in [n]} a_{i'jk} \vecat{\mathbf{x}}{j} \vecat{\mathbf{y}}{k}
         \\\leq  \vecat{\mathbf{v}^{(i)}}{i} \sum_{j \in [n], k \in [n]} a_{ijk} \vecat{\mathbf{x}}{j} \vecat{\mathbf{y}}{k}
         + \sum_{i' \in [m] \setminus \{i\}} \vecat{\mathbf{v}^{(i)}}{i'} M
         \\ = \vecat{\mathbf{v}^{(i)}}{i} \sum_{j \in [n], k \in [n]} a_{ijk} \vecat{\mathbf{x}}{j} \vecat{\mathbf{y}}{k}
         + (1 - \vecat{\mathbf{v}^{(i)}}{i}) M
    \end{multline}
    In words, the above inequality chain upper bounds the benefit the agent can get in its reward at $t=3$ by not matching the environment-specific observation.

    The expected reward at time $4$ is given by 
    \begin{multline*}
        \mathbb{E}_{\calM_i}^\pi[R(s_4,a_4)] = \vecat{\mathbf{v}^{(i)}}{i}
        \left( \frac{1}{2} (1 - \vecat{\mathbf{w}_\top}{1})  + \frac{1}{2} (1 - \vecat{\mathbf{w}_\top}{2}) \right)(-3M)
        \\+ 
        (1 - \vecat{\mathbf{v}^{(i)}}{i}) 
        \left( \frac{1}{4} (1 - \vecat{\mathbf{w}_\top}{1}) + \frac{1}{4} (1 - \vecat{\mathbf{w}_\bot}{1}) + \frac{1}{4} (1 - \vecat{\mathbf{w}_\top}{2}) + \frac{1}{4} (1 - \vecat{\mathbf{w}_\bot}{2}) \right)(-3M).
    \end{multline*}
    Indeed, the first component of the sum handles the case where the agent chooses $i$ to match the environment, and the observation reveals the final state.
    Meanwhile, the second component handles the case where the observation provides no information about the final state.

    We can then upper bound the expected reward at time $4$ as follows.
    \begin{equation*}
        \mathbb{E}_{\calM_i}^\pi[R(s_4,a_4)] \leq (1 - \vecat{\mathbf{v}^{(i)}}{i}) 
         \left(\frac{1}{4} (1 - \vecat{\mathbf{w}_\top}{1}) + \frac{1}{4} (1 - \vecat{\mathbf{w}_\bot}{1}) +  \frac{1}{4} (1 - \vecat{\mathbf{w}_\top}{2}) + \frac{1}{4} (1 - \vecat{\mathbf{w}_\bot}{2}) \right)(-3M)
        \leq  -\frac{3M}{2}  (1 - \vecat{\mathbf{v}^{(i)}}{i})
    \end{equation*}
    Note that, in the second inequality, we use the fact that $1 - \vecat{\mathbf{w}_\top}{1} + 1- \vecat{\mathbf{w}_\top}{2} \geq 1$.
    
    We now upper bound the total reward.
    \begin{equation*}
        \mathbb{E}_{\calM_i}^\pi[R(s_3,a_3) + R(s_4,a_4)]
        \leq 
        \vecat{\mathbf{v}^{(i)}}{i} \sum_{j \in [n], k \in [n]} a_{ijk} \vecat{\mathbf{x}}{j} \vecat{\mathbf{y}}{k}
        -\frac{M}{2}  (1 - \vecat{\mathbf{v}^{(i)}}{i}).
    \end{equation*}

    We can combine this upper bound with the assumption that the reward is non-negative in environment $i$ to deduce that \eqref{eq:env_i_nonneg_mopomdp} holds.
    First, consider the case where $M = 0$. 
    If $M = 0$, then $a_{ijk} = 0$ for all $i,j,k$ and thus $\mathbf{x}$ and $\mathbf{y}$ trivially satisfy \eqref{eq:env_i_nonneg_mopomdp}.
    Now, assume that $M > 0$, and observe that $\vecat{\mathbf{v}^{(i)}}{i}$ must be positive, as otherwise the expected reward is negative.
    Using the fact that the expected reward is non-negative, we deduce that
    \begin{equation}
        \vecat{\mathbf{v}^{(i)}}{i} \sum_{j \in [n], k \in [n]} a_{ijk} \vecat{\mathbf{x}}{j} \vecat{\mathbf{y}}{k} 
        \geq 
        \frac{M}{2} (1 - \vecat{\mathbf{v}^{(i)}}{i}) \geq 0.
    \end{equation}
    Finally, using the fact that $\vecat{\mathbf{v}^{(i)}}{i}$ is positive, we conclude that $\sum_{j \in [n], k \in [n]} a_{ijk} \vecat{\mathbf{x}}{j} \vecat{\mathbf{y}}{k} \geq 0$.

    ($\impliedby$)
    We now assume that some $\mathbf{x}, \mathbf{y} \in \dist{[n]}$ exist satisfying the inequalities $\sum_{j\in [n], k \in [n]} a_{ijk} \mathbf{x}[j] \mathbf{y}[k] \geq 0$ for all $i$ in $[m]$. 
    We then define the following policy
    \begin{equation}
        \begin{aligned}
            & \pi_0(z_{\mathsf{blind}}) = \mathbf{x} \\
            & \pi_1(z_{\mathsf{blind}}) = \mathbf{y} \\
            & \pi_2(z_i) = \delta_i \\
            & \pi_4(z_\top) = \delta_1 \\
            & \pi_4(z_\bot) = \delta_2.
        \end{aligned}
    \end{equation}
    Note that we abuse notation as $\pi_0(z_{\mathsf{blind}}) \in \dist{A}$ while $\mathbf{x} \in \dist{[n]}$ and we may have $[n] \subsetneq A$. 
    However, we can easily extend $\pi_0$ as required.
    We leave $\pi_3$ undefined as it does not affect the transition dynamics.
    
    Fix an environment $i$. 
    The reward at $t = 3$ satisfies
    \begin{equation}
        \mathbb{E}_{\calM_i}^\pi[R(s_3,a_3)] 
         = 
          \sum_{j \in [n], k \in [n]} a_{ijk} \vecat{\mathbf{x}}{j} \vecat{\mathbf{y}}{k} \geq 0
    \end{equation}
    where the last inequality holds by assumption. 
    Meanwhile, the reward at $t=4$ is zero as $\vecat{\mathbf{v}^{(i)}}{i} = 1$, and $\vecat{\mathbf{w}_\top}{1} = \vecat{\mathbf{w}_\bot}{2} = 1$, where $\mathbf{w}$ and $\mathbf{v}^{(i)}$ are defined as above. 
\end{proof}

\lemMEPOMDPsMemoryless*

\begin{proof}

    A time-dependent Markovian policy that attains value at least $B$ exists if and only if there exists a solution $\mu_i,\nu_i,\pi$ to the following system of bilinear inequalities and equalities.
    \begin{equation}
        \begin{aligned}
                & \nu_i(s,0) = b_i(s) 
                & \forall s \in S, i \in [n],\\
                & \mu_i(s,a, t) = \nu_i(s,t) \sum_{z \in Z} O_i(s)(z) \pi_t(z,a) 
                & \forall i \in [n], s \in S, a \in A, t \in \{0,\ldots,H-1\},\\
                & \nu_i(s,t+1) = \sum_{s' \in S, a \in A} \mu_i(s',a,t) T_i(s',a)(s) 
                & \forall s \in S, t \in \{0,\ldots, H-2\}, i \in [n],\\
                & \sum_{a} \pi_t(z,a) = 1 
                & \forall z \in Z, t \in \{0,\ldots,H-1\},\\
                & \pi_t(z,a) \geq 0 
                & \forall z \in Z, t \in \{0,\ldots,H-1\}, a \in A,\\
                & \sum_{t \in \{0,\ldots, H-1\}} \sum_{s \in S} \sum_{a \in A} \mu_i(s,a,t) R_i(s,a) \geq B 
                & \forall i \in [n].\\
        \end{aligned}
    \end{equation}
    For each $i$, the tensor $\nu \in \mathbb{R}^{S \times \{0,\ldots, H-1\}}$ encodes the probability of visiting each state at each time, and $\mu \in \mathbb{R}^{S \times A \times \{0,\ldots, H-1\}}$ encodes the probability of visiting each state and taking each action at each time.
    Meanwhile, $\pi_t(z,a)$ encodes the probability of taking action $a$ upon seeing observation $z$ at time $t$.
    Since we can express the value problem as such a system, inclusion in $\ETR$ immediately follows. 
\end{proof}

\subsection{Finite-State Controllers in ME-BlindMDPs}
In this section, we prove the following theorem regarding the complexity of policy synthesis for FSCs.

\thmMEBlindPOMDPsFSCs*

We first give a series of Lemmas on the link between non-negative matrix factorization and ME-POMDPs. We then give the final proof of Theorem~\ref{thm:alpha_FSC_complexity} in Section~\ref{sec:final_proof_component}.

\subsubsection{Supporting results for Theorem~\ref{thm:alpha_FSC_complexity}}

\begin{definition}[Non-negative matrix factorization \cite{DBLP:journals/siamjo/Vavasis09}]
    A non-negative matrix $P \in \mathbb{R}^{m\times n}$ has a non-negative factorization with rank $k$ if $P$ can be written as the sum of $k$ non-negative rank-one matrices. 
\end{definition}

\begin{lemma}
    \label{lem:low_rank_FSC_corr}
    A joint action distribution on $(a_0,a_1)$ in a BlindMDP can be implemented by a $|Q|$-state time-dependent FSC if and only if the distribution has a non-negative factorization with fewer than $|Q|$ elements.
\end{lemma}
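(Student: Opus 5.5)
The plan is to compute the joint law of $(a_0,a_1)$ induced by a time-dependent FSC in a BlindMDP directly, and to read it off as a sum of non-negative rank-one matrices indexed by the memory state held at time $1$. Because the BlindMDP emits only $\bot$, the observation carries no information. So $\gamma_t(q,\bot)$ and $\eta_t(q,\bot,a)$ depend only on the memory state and the action, and the induced distribution on action pairs is independent of the environment's transitions. I read ``fewer than $|Q|$ elements'' as ``at most $|Q|$ rank-one terms'', i.e.\ non-negative rank at most $|Q|$, which is what the construction naturally yields. I will state this reading explicitly, since the off-by-one is the only place the wording could bite.

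\emph{($\Rightarrow$)} Fix an FSC $(Q,\iota,(\gamma_t),(\eta_t))$ and let $P\in\RR^{A\times A}$ be the joint law of $(a_0,a_1)$. Summing over the memory states $q_0,q_1$ gives
\[
P(a_0,a_1)=\sum_{q_1\in Q}\underbrace{\Bigl(\sum_{q_0\in Q}\iota(q_0)\,\gamma_0(q_0,\bot)(a_0)\,\eta_0(q_0,\bot,a_0)(q_1)\Bigr)}_{u_{q_1}(a_0)}\;\gamma_1(q_1,\bot)(a_1).
\]
Each vector $u_{q_1}$ and each distribution $\gamma_1(q_1,\bot)$ is entrywise non-negative. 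Hence $P=\sum_{q_1\in Q}u_{q_1}\gamma_1(q_1,\bot)^\top$ is a non-negative factorization with at most $|Q|$ terms. If $H>2$, the later time steps do not affect this marginal.

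\emph{($\Leftarrow$)} Suppose $P=\sum_{r=1}^{k}u_r v_r^\top$ with $u_r,v_r\ge 0$ and $k\le|Q|$. First normalize:
\begin{itemize}
\item drop any term with $u_r=0$ or $v_r=0$;
\item rescale each remaining term so that $v_r\in\dist{A}$, absorbing the constant into $u_r$.
\end{itemize}
Identify the indices $r$ with distinct memory states $q_r\in Q$, and fix one further memory state $q^\circ$ (reusing $q_1$ is fine, since the time-$0$ and time-$1$ roles are separated by time-dependence). Let $p(a_0)=\sum_r u_r(a_0)$ be the row marginal of $P$, which is a probability distribution because $P$ is. Now set
\begin{itemize}
\item $\iota=\delta_{q^\circ}$ and $\gamma_0(q^\circ,\bot)=p$;
\item $\eta_0(q^\circ,\bot,a_0)(q_r)=u_r(a_0)/p(a_0)$ when $p(a_0)>0$, and an arbitrary distribution otherwise;
\item $\gamma_1(q_r,\bot)=v_r$, with all other $\gamma_t,\eta_t$ arbitrary.
\end{itemize}
Substituting into the formula from the first direction gives $\sum_r p(a_0)\frac{u_r(a_0)}{p(a_0)}v_r(a_1)=P(a_0,a_1)$. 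The rows with $p(a_0)=0$ are identically zero in $P$, so the arbitrary choice there contributes nothing.

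The main obstacle is bookkeeping rather than mathematics. One issue is the zero-marginal rows in the division defining $\eta_0$. The other is the count: both directions show the factorization size equals the number of memory states in use at time $1$, so I expect the lemma should read ``at most $|Q|$''. I will check that this convention matches how the lemma is invoked later, in relating $|Q|$ to $\alpha n$.
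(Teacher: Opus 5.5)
Your proof is correct. The forward direction is the same as the paper's: both expand the joint law over $(q_0,q_1)$ and read off a factorization indexed by the time-$1$ memory state.

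Your converse builds a different controller. The paper works as follows:
\begin{itemize}
\item it rescales each term to $\|\mathbf{u}^l\|_1\|\mathbf{v}^l\|_1\cdot\frac{\mathbf{u}^l}{\|\mathbf{u}^l\|_1}\frac{(\mathbf{v}^l)^\top}{\|\mathbf{v}^l\|_1}$;
\item it draws the component once via $\iota(l)=\|\mathbf{u}^l\|_1\|\mathbf{v}^l\|_1$ and keeps the memory fixed with an identity update $\eta_0$;
\item it plays the two product factors as $\gamma_0(l)$ and $\gamma_1(l)$.
\end{itemize}
You instead start from a single deterministic memory state, play the row marginal $p$, and then sample the component from its posterior $u_r(a_0)/p(a_0)$ through $\eta_0$.

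Each route has its advantages. The paper's mixture construction involves no division, so it has no zero-row case and no need to recycle the start state as a component state. Your posterior construction shows that randomizing the initial memory is unnecessary, since all correlation can be generated by the stochastic memory update. It also realizes the factor $u_{q_1}(a_0)$ from the forward formula literally, so the two directions are visibly inverse to each other.

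Your reading of ``fewer than $|Q|$'' as ``at most $|Q|$'' is right. It matches what the paper's proof actually establishes: its forward direction yields $|Q|$ components and its converse uses $Q=[k]$. It also matches how the lemma is invoked in the proof of Theorem~\ref{thm:alpha_FSC_complexity}, where an $r$-state controller is paired with an $r$-component factorization. The strict version is false already for $|Q|=1$.
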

\begin{proof}

    In the case of a BlindMDP, the FSC update rule at time $t$ is equivalent to two mappings
    \begin{equation}
        \gamma_t : Q \rightarrow \dist{A} \quad \eta_t : Q \times A \rightarrow \dist Q
    \end{equation}

    ($\implies$) 
    Fix an FSC with $|Q|$ states.
    We can write the joint distribution of the actions $(a_0,a_1)$ at the first two timesteps as follows
    \begin{equation*}
        p(a_0,a_1) = \sum_{q_1 \in Q} \sum_{q_0 \in Q} p(q_0) p(a_0|q_0) p(q_1| a_0,q_0) p(a_1|q_1, q_0, a_0)
        = \sum_{q_1 \in Q} \sum_{q_0 \in Q} p(q_0) p(a_0|q_0) p(q_1| a_0,q_0) p(a_1|q_1).
    \end{equation*}
    Note that in the second equality, we use the fact that the distribution of the action $a_1$ only depends on the memory state $q_1$ at $t = 1$.
    Now, let $\vecat{P}{a_0,q_1}$ denote the sum $\sum_{q_0 \in Q} p(q_0) p(a_0|q_0) p(q_1| a_0,q_0)$, and let $\vecat{W}{a_1,q_1} = p(a_1| q_1)$.
    Note that $P$ and $W$ encode two matrices in $\mathbb{R}^{|A| \times |Q|}$ with non-negative entries.
    We can now write the joint distribution as follows
    \begin{equation}
        p(a_0,a_1) = \sum_{q_1 \in Q} 
        \vecat{P}{a_0, q_1}
        \vecat{W}{a_1, q_1} = 
        \vecat{(PW^\top)}{a_0, a_1},
    \end{equation}
    from which we can deduce that the joint distribution of $a_0$ and $a_1$ has a non-negative factorization with $|Q|$ components.

    ($\impliedby$) 
    Conversely, let $P \in \RR^{|A| \times |A|}$ be a non-negative matrix such that the entry-wise $1$-norm satisfies $||P||_{1,1} = 1$ and $P$ has a factorization
    \begin{equation}
        P = \sum_{l \in [k]} \mathbf{u}^l (\mathbf{v}^l)^\top,
    \end{equation}
    where $\mathbf{u}^l$ and $\mathbf{v}^l$ are non-negative vectors for each $l \in [k]$.
    We can assume \wlogg that each component matrix is non-zero so that $||\mathbf{u}^{l}||_1 > 0$ and $||\mathbf{v}^{l}||_1 > 0$.
    Next, observe that
    \begin{equation*}
        1 = ||P||_{1,1} = \sum_{i \in [|A|], j \in [|A|]} \left|\sum_{l \in [k]} 
        \vecat{\mathbf{u}^l}{i}\vecat{\mathbf{v}^l}{j}
        \right|
    = \sum_{i \in [|A|], j \in [|A|]} \sum_{l \in [k]} |\vecat{\mathbf{u}^l}{i}| |\vecat{\mathbf{v}^l}{j}|
        = \sum_{l \in [k]} ||\mathbf{u}^{l}||_1 ||\mathbf{v}^{l}||_1
    \end{equation*}
    The first, second, and last equalities follow by definition.
    The third equality uses the non-negativity of the elements of $\mathbf{u}^l$ and $\mathbf{v}^l$.
    We can then write $P$ as
    \begin{equation}
        P = \sum_{l \in [k]} ||\mathbf{u}^{l}||_1 ||\mathbf{v}^{l}||_1 \frac{\mathbf{u}^l}{ ||\mathbf{u}^{l}||_1} \frac{\left(\mathbf{v}^l\right)^\top}{||\mathbf{v}^{l}||_1}.
    \end{equation}
    That is, we can write $P$ as a convex combination of $k$ non-negative rank $1$ matrices $\mathbf{w} \mathbf{z}^\top$ where $\mathbf{w}$ and $\mathbf{z}$ are distributions.
    We can then implement this as the following FSC with $Q = [k]$.
    In particular, for each $l \in [k]$, we have
    \begin{align*}
        & \iota(l) = ||\mathbf{u}^{l}||_1 ||\mathbf{v}^{l}||_1, \\
        & \eta_0(l,a)(l') = 
        \begin{cases}
            1 & \ l = l' \\
            0 & \text{ otherwise}
        \end{cases} \quad \forall l' \in [k], a \in A,\\
        & \gamma_0(l)  = \frac{\mathbf{u}^l}{ ||\mathbf{u}^{l}||_1}, \\
        & \gamma_1(l) = \frac{\mathbf{v}^l}{ ||\mathbf{v}^{l}||_1}.
    \end{align*}
    Note that we abuse notation slightly by implicitly identifying distributions on the action set with distributions on $[|A|]$.
\end{proof}

We can prove the hardness of finding such a time-dependent FSC by appealing to non-negative factorization results.
In particular, we will construct a ME-POMDP where any policy that gets a non-negative reward must have a joint distribution on $a_0$ and $a_1$ that satisfies a set of constraints, and due to the FSC constraint, this joint distribution will need to have low non-negative rank.

A naive route to prove $\ETR$-hardness would be to apply the $\ETR$-hardness of non-negative matrix factorization \cite{shitov2016universality,DBLP:journals/corr/abs-2407-18006}. However, this approach is insufficient and does not immediately lead to the desired result.
A non-negative matrix factorization instance comprises a matrix $W$ and a rank parameter $r$, and asks whether the matrix $W$ has a non-negative factorization with $r$ components.
We can easily encode this decision problem as a ME-POMDP problem by designing a ME-POMDP instance such that the agent gets a non-negative reward if and only if each entry of the joint distribution on $a_0$ and $a_1$ matches an appropriate entry of $W/||W||_{1,1}$ where $||W||_{1,1}$ is the entry-wise $1$-norm.
Clearly, the matrix $W$ has an $r$-component non-negative factorization if and only if there exists an $r$-memory-state FSC in the ME-POMDP that attains reward at least zero.
However, this reduction does not provide information about the ratios between the number of memory-states $|Q|$ and the number of environments $n$ for which the FSC-synthesis problem is $\ETR$-hard.
In particular, we need to carefully examine the values of $W$ and $r$ for which the non-negative factorization problem is $\ETR$-hard to establish the ratios $|Q|/n $ for which the FSC-synthesis problem is $\ETR$-hard in ME-POMDPs.

Ultimately, adapting the $\ETR$-hardness proof in \cite{DBLP:journals/corr/abs-2603-19976} for the related problem of \emph{non-negative Gram feasibility} provides an easy route to carefully control the parameters of these $W$ and $r$ matrices. We now present this analysis.

We will reduce from the following ETR-complete problem.
\begin{definition}[ETR-INV \cite{DBLP:journals/corr/abs-1912-08674,DBLP:conf/stoc/AbrahamsenAM18}]
    An ETR-INV instance comprises a set of variables $x_1, \ldots ,x_n$, and a set of constraints $C_1, \ldots, C_m$, where each $C_j$ is a constraint of the form
    \begin{equation}
        x + y = z, \quad x\cdot y = 1, \quad x = 1
    \end{equation}
    for $x,y,z \in \{x_1,\ldots,x_n\}$.
    The ETR-INV problem asks if there exists a solution to these constraints with $x_i \in [1/2,2]$.
\end{definition}

We will also use the following ETR-complete satisfaction problem during the proof.
The ETR with addition, multiplication, and inequalities (ETR-AMI) problem appears in \cite{DBLP:journals/corr/abs-1912-08674} without the non-negativity constraints. However, we refer to the version used in \cite{DBLP:journals/corr/abs-2603-19976}, where all variables in the solution must be non-negative.
\begin{definition}[ETR-AMI]
     An ETR-AMI instance comprises a set of variables $x_1, \ldots ,x_n$, and a set of constraints $C_1, \ldots, C_m$, where each $C_j$ is a constraint of the following form
    \begin{equation}
        x + y = z, \quad x\cdot y = z, \quad x = 1
    \end{equation}
    for $x,y,z \in \{x_1,\ldots,x_n\}$.
    The ETR-AMI problem asks if there exists a solution to these constraints with $x_i \in [0,\infty)$.
\end{definition}

We first present the following lemma, which shows that we can convert an ETR-INV instance into an ETR-AMI instance with a solution in which the variables lie in the interval $[0,2]$.
This lemma is highly similar to those presented in \cite{DBLP:journals/corr/abs-1912-08674}, and we present it to smooth exposition.
\begin{lemma}[Adapted from \cite{DBLP:journals/corr/abs-1912-08674}]
    \label{lem:etrami_etrinv_conv}
    Given an ETR-INV instance with $n$ variables and $m$ constraints, there exists an ETR-AMI instance such that
    \begin{enumerate}
        \item The ETR-AMI instance has a solution if and only if the ETR-INV instance has a solution.
        \item The ETR-AMI instance has $3n + 3$ variables, and $3 + 2n + m$ constraints (ignoring the implicit non-negativity constraints).
        \item The solution of the ETR-AMI instance satisfies $x_i \in [0,2]$.
    \end{enumerate}
\end{lemma}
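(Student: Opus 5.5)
The plan is a direct syntactic translation. Each ETR-INV constraint becomes an ETR-AMI constraint, and the box constraints $x_i \in [\nicefrac{1}{2},2]$ are encoded with non-negative slack variables. This is possible because ETR-AMI implicitly requires every variable to be non-negative. The constants $1$, $\nicefrac{1}{2}$ and $2$ needed for this are built with three auxiliary variables.

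\emph{Construction.} Keep the original variables $x_1,\ldots,x_n$ and add three constant variables $e, h, d$ with the constraints
\begin{equation*}
e = 1, \qquad h + h = e, \qquad e + e = d .
\end{equation*}
These force $e=1$, $h=\nicefrac{1}{2}$ and $d=2$. For each $i \in [n]$, add two slack variables $s_i$ and $t_i$ with the constraints
\begin{equation*}
h + s_i = x_i, \qquad x_i + t_i = d .
\end{equation*}
Then translate the original constraints one at a time. A constraint $x+y=z$ is kept unchanged, and a constraint $x=1$ is kept unchanged. A constraint $x\cdot y = 1$ becomes $x \cdot y = e$, which has the allowed ETR-AMI form $x\cdot y = z$.

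\emph{Counting.} The variables are the $n$ originals, the $2n$ slacks and the $3$ constants, for $3n+3$ in total. The constraints are $3$ for the constants, $2n$ for the slacks and $m$ translated ones, for $3+2n+m$ in total. This gives item~2.

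\emph{Equivalence and bounds.} Suppose $(x_i)$ solves the ETR-INV instance. Set $e=1$, $h=\nicefrac{1}{2}$, $d=2$, $s_i = x_i - \nicefrac{1}{2}$ and $t_i = 2 - x_i$. All of these values are non-negative because $x_i\in[\nicefrac{1}{2},2]$, and every translated constraint holds, so this is an ETR-AMI solution.

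Conversely, take any non-negative ETR-AMI solution. The constant constraints force $e=1$, $h=\nicefrac{1}{2}$ and $d=2$. Non-negativity of $s_i$ then gives $x_i \ge \nicefrac{1}{2}$, and non-negativity of $t_i$ gives $x_i \le 2$. The translated constraints with $e=1$ are exactly the original ones. Hence the restriction to $(x_i)$ solves the ETR-INV instance, which gives item~1.

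For item~3, the same forced values bound every variable of any solution. We have $x_i \in [\nicefrac{1}{2},2]$, $s_i = x_i - \nicefrac{1}{2} \in [0,\nicefrac{3}{2}]$, $t_i = 2 - x_i \in [0,\nicefrac{3}{2}]$, and $e, h, d \in \{1,\nicefrac{1}{2},2\}$. So every variable lies in $[0,2]$.

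No step is a real obstacle. The only point requiring care is item~3: the proof should note that the bound holds for \emph{every} ETR-AMI solution, not merely for the one constructed from an ETR-INV solution, since later reductions will likely rely on this.
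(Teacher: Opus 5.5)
Your proof is correct and follows essentially the same construction as the paper: three constant variables, two non-negative slack variables per $x_i$ that enforce $x_i\in[\nicefrac{1}{2},2]$, and $x\cdot y=1$ rewritten as $x\cdot y=e$, giving the same counts. The only differences are cosmetic. You fix $h=\nicefrac{1}{2}$ via $h+h=e$, whereas the paper uses the product constraint $d\cdot h=e$. You also explicitly check that the slack and constant variables lie in $[0,2]$ in every solution, which is slightly more careful than the paper's proof; the later construction of the matrix $W$ relies on this bound for all ETR-AMI variables.
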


\begin{proof}
    Let $x_i$ be the variables of the original ETR-INV instance.
    For each $x_i$ we add two auxiliary variables $\bar{x}_i$ and $\underbar{x}_i$, along with variables $\llbracket1\rrbracket, \llbracket2\rrbracket$ and $\llbracket1/2\rrbracket$ to represent constants.
    We can then encode the original ETR-INV problem as the following ETR-AMI problem.
    \begin{equation}
    \begin{aligned}
        & \llbracket1\rrbracket = 1 \\
        & \llbracket1\rrbracket + \llbracket1\rrbracket = \llbracket2\rrbracket \\
        & \llbracket2\rrbracket \llbracket1/2\rrbracket = \llbracket1\rrbracket \\
        & x_i + \bar{x}_i = \llbracket2\rrbracket \quad \forall i \in [n]\\
        & x_i = \llbracket1/2\rrbracket + \underbar{x}_i \quad \forall i \in [n]\\
        & (C_j)_{j\in [m]} \\
        & \text{All variables non-negative.}
    \end{aligned}
    \end{equation}
    Note that we encode any constraint $xy = 1$ in the ETR-INV instance as $xy = \llbracket1\rrbracket$ in the ETR-AMI instance.
    
    Clearly, any solution to the original ETR-INV instance defines a feasible solution to this ETR-AMI instance. 
    The converse also holds, as any assignment to $x_i$ in the ETR-AMI problem must satisfy $x_i \in [1/2,2]$ and satisfy the constraints $(C_j)_{j\in[m]}$.
\end{proof}

Following \cite{DBLP:journals/corr/abs-2603-19976}, we show that ETR-AMI problems can be converted to constrained non-negative matrix factorization. 
We provide a slighlty different proof from \cite{DBLP:journals/corr/abs-2603-19976} as
\begin{enumerate}
    \item We don't constrain the factorization to be symmetric, and
    \item We define constraints differently so that we can tightly control the number of factorization constraints we add.
\end{enumerate}

\begin{lemma}[Adapted from \cite{DBLP:journals/corr/abs-2603-19976}]
    \label{lem:ETR_nonneg_fact_main}

    Fix an integer $r \geq 2$, and an ETR-AMI instance with $n$ variables and $m$ constraints.

    There exists a pair of integers $(f_A,f_B)$, and two sets of matrices $A^{(1)},\ldots, A^{(f_A)}$, $B^{(1)},\ldots, B^{(f_B)}$ in $\mathbb{R}^{n + r, n + r}$ such that the ETR-AMI instance has a solution if and only if there exists a solution $W \in \mathbb{R}^{n + r, n + r}$ to
    \begin{align}
        & \langle A^{(\iota)}, W \rangle = 0 \quad \forall \iota \in [f_A] \\
        & \langle B^{(\kappa)}, W \rangle = 1 \quad \forall \kappa \in [f_B]
    \end{align}
    where $W$ has a non-negative factorization with $r$ components.
    Additionally, $f_A + f_B \leq r + n + m + 1$ and every entry in the matrices $(A^{(\iota)}))_{\iota \in [f_A]}$ and $(B^{(\kappa)})_{\kappa \in [f_B]}$ is in $\{-1,0,1,2\}$.
\end{lemma}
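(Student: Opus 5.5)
We encode ETR-AMI variables in a factorization whose first component carries the variable values and whose other $r-1$ components sit on isolated ``anchor'' coordinates. Index rows and columns of $W \in \RR^{(n+r)\times(n+r)}$ by $\{e_1,\ldots,e_r\} \cup \{1,\ldots,n\}$, where the $e_l$ are anchor indices and $i \in [n]$ corresponds to variable $x_i$. The key trick is that $W$ is non-negative whenever it has a non-negative factorization. Hence a single constraint $\langle A, W\rangle = 0$, with $A$ a $0/1$ indicator of a set of entries, forces \emph{all} of those entries to vanish. This lets us impose many zero patterns while using few constraints.

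The constraints we will impose are as follows.
\begin{enumerate}
    \item One $A$-constraint, with $A$ the indicator of the entries $W[e_l,e_k]$ for $l \neq k$ together with $W[i,e_l]$ and $W[e_l,i]$ for $i \in [n]$ and $l \geq 2$. It forces all these entries to be zero.
    \item $r$ $B$-constraints $W[e_l,e_l] = 1$, one for each $l \in [r]$.
    \item $n$ symmetry constraints $W[i,e_1] - W[e_1,i] = 0$, with entries $\pm 1$.
    \item One constraint per ETR-AMI constraint, with $x_i$ read off as $W[i,e_1]$:
    \begin{itemize}
        \item $x = 1$ becomes $W[x,e_1] = 1$;
        \item $x + y = z$ becomes $W[x,e_1] + W[y,e_1] - W[z,e_1] = 0$; the coefficient is $2$ when $x = y$;
        \item $x \cdot y = z$ becomes $W[x,y] - W[z,e_1] = 0$.
    \end{itemize}
\end{enumerate}
This gives $f_A + f_B = 1 + r + n + m$, and every matrix entry lies in $\{-1,0,1,2\}$, as required.

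For the easy direction, given a non-negative solution $x$ of the ETR-AMI instance, set $u^1 = v^1 = \unit{e_1} + \sum_i x_i \unit{i}$ and $u^l = v^l = \unit{e_l}$ for $l \geq 2$. Then $W = \sum_{l} u^l (v^l)^\top$ has identity anchor block, no anchor-variable entries outside column and row $e_1$, and satisfies $W[i,e_1] = W[e_1,i] = x_i$ and $W[i,j] = x_i x_j$. All constraints then hold by inspection.

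For the converse, let $W = \sum_{l \in [r]} u^l (v^l)^\top$ be any non-negative factorization satisfying the constraints. We expect the main obstacle to be the following structural claim. The anchor block of $W$ is the $r\times r$ identity, so each diagonal entry $W[e_k,e_k] = 1$ needs some component $l$ with $u^l[e_k]\, v^l[e_k] > 0$. A single component cannot serve two anchors $k \neq k'$: otherwise $u^l[e_k]\, v^l[e_{k'}] > 0$, contradicting that off-diagonal anchor entries are zero, since all terms are non-negative and cannot cancel. Because there are exactly $r$ components, this yields a bijection, which after relabelling is component $l \leftrightarrow$ anchor $e_l$. Moreover, $u^l$ and $v^l$ vanish on every anchor other than $e_l$.

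Next we localize the variable coordinates to component $1$. Since $W[i,e_l] \geq u^l[i]\, v^l[e_l]$ with $v^l[e_l] > 0$, the zero constraints force $u^l[i] = 0$ for all $l \geq 2$; symmetrically $v^l[i] = 0$. Thus on variable coordinates only component $1$ survives. Writing $u^1[e_1] = c$, we have $v^1[e_1] = 1/c$ from the diagonal constraint.

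Set $x_i := W[i,e_1] = u^1[i]/c$, and note $W[e_1,j] = c\, v^1[j]$. The symmetry constraints then give $c\, v^1[j] = x_j$. Consequently
\[
W[i,j] = u^1[i]\, v^1[j] = (c\, x_i)(x_j / c) = x_i x_j .
\]
With this identity, each translated constraint reads exactly as the original ETR-AMI constraint on the non-negative values $x_i$, which completes the converse.
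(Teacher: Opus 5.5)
Your proof is correct and follows the paper's construction exactly: the same $r\times r$ identity anchor block, a single $0/1$ indicator constraint aggregating all forced zeros (valid since $W\geq 0$), the $n$ symmetry constraints $W[i,e_1]=W[e_1,i]$, and one linear constraint per ETR-AMI constraint, giving $f_A+f_B=r+n+m+1$. The only difference is in the converse. There you obtain the one-component-per-anchor structure by a direct pigeonhole argument on the rank-one terms, whereas the paper cites \cite{ding2014matrix}: a non-negative matrix with non-negative inverse is a positive diagonal matrix times a permutation. Your version is self-contained, but it leads to the same localization of the variable coordinates to the first component and the same identity $W[i,j]=x_ix_j$.
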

\begin{proof}
    We first add the constraints
    \begin{equation}
        \label{eq:id_constraint}
        \vecat{W}{i,j} = \begin{cases}
            1 & \ i = j\\
            0 & \text{ otherwise}
        \end{cases} \quad \forall i\in [r], j \in [r].
    \end{equation}
    These constraints serve the same purpose as the anchor constraints in \cite{DBLP:journals/corr/abs-2603-19976}. 
    Indeed, suppose $W$ has the following non-negative factorization, where $U_{(1)}, V_{(1)} \in \RR^{r \times r}$, $U_{(2)} \in \RR^{n \times r}$, and $V_{(2)} \in \RR^{r \times n}$.
    \begin{equation}
         W = \begin{bmatrix}
             U_{(1)} \\
             U_{(2)}\\
            \end{bmatrix}
             \begin{bmatrix}
             V_{(1)} &
            V_{(2)}
            \end{bmatrix}.
    \end{equation}
    The above factorization implies that $U_{(1)} V_{(1)} = I$, where $I$ is the $r \times r$ identity matrix. 
    We can then use the following fact, which generalizes Lemma~5.5 in \cite{DBLP:journals/corr/abs-2603-19976}.
    \begin{lemma}[\cite{ding2014matrix}]
        If $A$ is non-negative and has a non-negative inverse, then $A = D P$, where $D$ is a diagonal matrix with positive diagonal entries, and $P$ is a permutation matrix. 
    \end{lemma}
    We can thus write $U_{(1)}$ and $V_{(1)}$ in the following forms, where $(j_1,\ldots,j_n)$ is a permutation of $[n]$ that encodes the permutation matrix $P$, and $a_1,\ldots,a_n$ is a sequence of positive real numbers.
    \begin{equation}
        U_{(1)} = 
        \begin{bmatrix}
            a_1\unit{j_1}^\top \\
            \vdots \\
            a_n\unit{j_n}^\top 
        \end{bmatrix}
        \quad 
        V_{(1)} = 
        \begin{bmatrix}
            \frac{1}{a_1}\unit{j_1} &
            \hdots &
            \frac{1}{a_n}\unit{j_n}
        \end{bmatrix}.
    \end{equation}  

    We next add constraints to define the variables. In particular, we add
    \begin{equation}
    \label{eq:var_correct_constraints}
    \begin{aligned}
        &\vecat{W}{r + i, j} = 0 \quad \forall j \in \{2,\ldots,r\}, \forall i \in [n] \\
        & \vecat{W}{j, r + i} = 0 \quad \forall j \in \{2,\ldots,r\}, \forall i \in [n] \\
        & \vecat{W}{r + i, 1} = \vecat{W}{1, r+ i} \quad \forall i \in [n]. \\
    \end{aligned}
    \end{equation}
    The first constraint enforces that only the $j_1^{\text{th}}$ column of $U_{(2)}$ be non-zero.
    Similarly, the second constraint enforces that only the $j_1^{\text{th}}$ row of $V_{(2)}$ be non-zero.
    That, is $U_{(2)}$ and $V_{(2)}$ have the following forms in any valid factorization, where $(q_i)_{i\in [n]}$ and $(y_i)_{i \in [n]}$ are non-negative real numbers.
    \begin{equation}
        U_{(2)} = 
        \begin{bmatrix}
            q_1\unit{j_1}^\top \\
            q_2\unit{j_1}^\top \\
            \vdots \\
            q_n\unit{j_1}^\top 
        \end{bmatrix}
        \quad 
        V_{(2)} = 
        \begin{bmatrix}
            y_1\unit{j_1} &
            y_2\unit{j_1} &
            \hdots &
            y_n\unit{j_1} 
        \end{bmatrix}
    \end{equation}
    The final constraint in \eqref{eq:var_correct_constraints} enforces that, for each $i \in [n]$, we have
    \begin{equation}
        a_1 y_i = \left\langle a_1 \unit{j_1}, y_i\unit{j_1} \right\rangle = \vecat{W}{1, r+i} = \vecat{W}{r+i, 1} = \left\langle \frac{1}{a_1} \unit{j_1}, q_i\unit{j_1} \right\rangle = \frac{1}{a_1} q_i.
    \end{equation}
    With these constraints, we can then observe that, for any $i,j \in [n]$, we have
    \begin{equation}
        \vecat{W}{r + i, r + j} = \langle q_i \unit{j_1}, y_j\unit{j_1} \rangle = q_i y_j = \frac{1}{a_1} q_i a_1 y_j = a_1 y_i a_1 y_j.
    \end{equation}
    Ultimately, the value $a_1 y_i = \frac{1}{a_1} q_i$ will represent variable $x_i$. 
    The constraints in \eqref{eq:var_correct_constraints} take the same role as the constraints used in \cite{DBLP:journals/corr/abs-2603-19976} to encode variables.

    We can finally add the constraints to encode the ETR-AMI problem. In particular, as in \cite{DBLP:journals/corr/abs-2603-19976}, for each constraint of the form $x_i + x_j = x_k$, we add the constraint
    \begin{equation}
        \label{eq:etr_ami_add_constraint}
        \vecat{W}{r + i,1} + \vecat{W}{r + j,1} = \vecat{W}{r + k,1}.
    \end{equation}
    For any constraint of the form $x_i x_j = x_k$ we add 
    \begin{equation}
        \label{eq:etr_ami_mult_constraint}
        \vecat{W}{r + i, r + j} = \vecat{W}{r + k,1}.
    \end{equation}
    For a constraint of the form $x_i = 1$, we add
    \begin{equation}
        \label{eq:etr_ami_unity_constraint}
        \vecat{W}{r+i, 1} = 1.
    \end{equation}
    We have now specified all constraints.

    We now count the number of constraints, omitting the entry-wise zero constraints.
    In particular, for the constraints in \eqref{eq:id_constraint}, we have $r$ non-zero equality constraints for the diagonal entries.
    For the constraints in \eqref{eq:var_correct_constraints}, we have $n$ non-zero-entry equality constraints, one for each variable.
    Finally, for each of the $m$ constraints in the ETR-AMI instance, we add a linear constraint on $W$.
    Overall, the following table summarises these constraints.
    \begin{center}
        \begin{tabular}{ll}
        \hline
        Source & Number of constraints \\
        \hline
        \eqref{eq:id_constraint}  & $r$  \\
        \eqref{eq:var_correct_constraints}  & $n$  \\
        \eqref{eq:etr_ami_add_constraint},\eqref{eq:etr_ami_mult_constraint},\eqref{eq:etr_ami_unity_constraint}  & $m$  \\
        \hline
        \end{tabular}
    \end{center}
    Each of the constraints counted above is linear, and we can represent it in one of the following forms.
    \begin{equation}
        \langle A^{(\iota)}, W \rangle = 0 \quad \langle B^{(\kappa)}, W \rangle = 1.
    \end{equation}
    By inspecting the constraints, we can see that each entry in these matrices is in $\{-1,0,1,2\}$.
    Additionally, let $S \subset [r + n] \times [r + n]$ denote the indices of matrix entries which we constrain to be zero in $W$, and let $D^{S}$ be the indicator of these entries, i.e.,  
    \begin{equation}
        D^S = \sum_{(i,j) \in S} \unit{i}\unit{j}^\top.
    \end{equation}
    Since $W$ is non-negative, we have $\langle D^S,W \rangle = 0$ if and only if $\vecat{W}{i,j} = 0$ for all $(i,j) \in S$. 
    Indeed, 
    \begin{equation}
        \langle D^S,W \rangle = 0 \iff \sum_{(i,j) \in S} \vecat{W}{i,j} = 0 \iff \vecat{W}{i,j} = 0 \ \ \forall (i,j) \in S.
    \end{equation}
    The second $\iff$ holds as $W$ must be non-negative.
    Note also that $D^S$ has binary entries.
    Thus, we can encode the above constraints with matrices $A^{(1)},\ldots,A^{(f_A)}$, $B^{(1)},\ldots, B^{(f_B)}$ so that $f_A + f_B \leq r + n + m + 1$.

    We now argue that the ETR-AMI instance has a solution if and only if there exists a matrix $W$ with an $r$-non-negative rank factorization that satisfies the specified constraints. 

    ($\impliedby$)
    Suppose the constraints have a solution $W$ with non-negative factorization $UV$ such that $U \in \RR^{n + r\times r}$ and $V \in \RR^{r \times n + r}$.
    Now, set $x_i$ to $\vecat{W}{r+i,1} = a_1 y_i$ for each $i$, noting that, by assumption $a_1 y_i$ is non-negative. 
    We argued earlier in the proof that $\vecat{W}{r+i,r+j} = a_1y_ia_1 y_j$, so we can also deduce that $\vecat{W}{r+i,r+j} = x_i x_j$. 
    We can now conclude that, since we imposed the constraints in \eqref{eq:etr_ami_add_constraint}, \eqref{eq:etr_ami_mult_constraint}, and \eqref{eq:etr_ami_unity_constraint}, the values $x_1,\ldots,x_n$ must satisfy the constraints of the ETR-AMI instance.
    Thus, we can conclude that a solution to the ETR-AMI problem exists.
    
    ($\implies$) 
    Suppose the ETR-AMI instance has a solution $x_1,\ldots, x_n$, and let $\mathbf{x}$ denote the vector where $\mathbf{x}[i] = x_i$ for each $i \in [n]$.
    Then, the following matrix $W$ is a solution to the factorization problem.
    \begin{equation*}
        W = \begin{bmatrix}
            1& \mathbf{0}_{r-1}^\top\\
            \mathbf{0}_{r-1} & I_{r-1\times r-1} \\
            \mathbf{x} & \mathbf{0}_{n \times r-1}
        \end{bmatrix}
        \begin{bmatrix}
            1 & \mathbf{0}_{r-1}^\top & \mathbf{x}^\top \\
            \mathbf{0}_{r-1} & I_{r-1 \times r-1} & \mathbf{0}_{r-1 \times n} 
        \end{bmatrix}
        = \begin{bmatrix}
                 1& \mathbf{0}_{r-1}^\top  & \mathbf{x}^\top \\
                \mathbf{0}_{r-1} & I_{r-1\times r-1} & \mathbf{0}_{r-1 \times n} \\ 
                \mathbf{x} & \mathbf{0}_{n \times r-1} & \mathbf{x} \mathbf{x}^\top
             \end{bmatrix}
    \end{equation*}
    Indeed, $W$ has a non-negative factorization with $r$ components, and it trivially satisfies the constraints in \eqref{eq:id_constraint} and \eqref{eq:var_correct_constraints}.
    Furthermore, because $x_1,\ldots,x_n$ represents a solution to the ETR-AMI problem, the constraints \eqref{eq:etr_ami_add_constraint}, \eqref{eq:etr_ami_mult_constraint} and \eqref{eq:etr_ami_unity_constraint} are satisfied.
\end{proof}

We now connect Lemmas~\ref{lem:etrami_etrinv_conv} and \ref{lem:ETR_nonneg_fact_main} to demonstrate that we can reduce ETR-INV to a non-negative factorization problem with an additional simplex constraint on the factorized matrix. 
\begin{lemma}
    \label{lem:etrinv_to_ineq}
    Given an ETR-INV instance with $n$ variables and $m$ constraints, and a fixed integer $r \geq 2$, 
    there exists a sequence of matrices $C^{1},\ldots,C^{(f)}$ in $\RR^{r + n' \times r + n'}$
    such that the ETR-INV instance is feasible if and only if there exists a matrix $X \in \RR^{r + n' \times r + n'}$ such that
    \begin{itemize}
        \item $X$ has non-negative entries,
        \item $||X||_{1,1} = 1$, i.e., the entries of $X$ sum to $1$,
        \item $\langle C^{(\iota)}, X\rangle \geq 0$ for all $\iota \in [f]$,
        \item $X$ has a non-negative factorization with rank $r$,
    \end{itemize}
    where { $f \leq 5n + m + r + 10$ and $n' = 3n + 3$.}
\end{lemma}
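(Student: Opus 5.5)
The plan is to compose \Cref{lem:etrami_etrinv_conv} with \Cref{lem:ETR_nonneg_fact_main}, and then normalise so the factorised matrix lies in the simplex.

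\emph{Step 1 (reduce to ETR-AMI).} Starting from the ETR-INV instance, I apply \Cref{lem:etrami_etrinv_conv}. This yields an ETR-AMI instance with $n' = 3n+3$ variables and $m' = 3+2n+m$ constraints. It is feasible iff the original instance is, and every solution satisfies $x_i \in [0,2]$.

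\emph{Step 2 (reduce to constrained factorisation).} I apply \Cref{lem:ETR_nonneg_fact_main} with the given $r$. This gives matrices $A^{(\iota)}$ and $B^{(\kappa)}$ in $\RR^{r+n', r+n'}$ with $f_A + f_B \le r + n' + m' + 1 = r + 5n + m + 7$. The ETR-AMI instance is feasible iff some $W$ with non-negative rank at most $r$ satisfies $\langle A^{(\iota)},W\rangle = 0$ and $\langle B^{(\kappa)},W\rangle = 1$.

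\emph{Step 3 (homogenise and normalise).} I remove the affine constant by using the anchor entry $W[1,1]=1$, which is itself one of the $B$-constraints; call it $B^{(1)} = \unit{1}\unit{1}^\top$. Each remaining constraint $\langle B^{(\kappa)},W\rangle = 1$ becomes $\langle B^{(\kappa)} - B^{(1)}, W\rangle = 0$. All constraints are then homogeneous linear equalities, and each equality splits into two inequalities $\langle \pm C, X\rangle \ge 0$.

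Homogeneous constraints, non-negativity and non-negative rank are all invariant under positive scaling. So I set $X = W/\lVert W\rVert_{1,1}$, and conversely $W = X / X[1,1]$. The converse requires $X[1,1] > 0$. I guarantee this with one extra inequality, for instance $\langle \unit{1}\unit{1}^\top, X\rangle \ge \delta$. To keep everything homogeneous I would instead write it as $\langle c\,\unit{1}\unit{1}^\top - J, X\rangle \ge 0$, where $J$ is the all-ones matrix and $c$ is large enough. Because $\lVert X\rVert_{1,1} = 1$, this says $X[1,1] \ge 1/c$.

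\emph{Step 4 (choose $c$ and count).} A suitable $c$ exists because ETR-AMI solutions have $x_i \in [0,2]$. In the explicit solution $W$ from \Cref{lem:ETR_nonneg_fact_main}, the sum $\lVert W\rVert_{1,1} = r + 2\sum_i x_i + (\sum_i x_i)^2$ is polynomially bounded, so $c = r + 4n' + 4n'^2$ works. Scaling also covers the forward direction, since that solution has $W[1,1] = 1$.

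For the backward direction, $X[1,1] > 0$ lets me rescale to $W$ with $W[1,1] = 1$. That $W$ satisfies the original affine constraints, and its rank-$r$ non-negative factorisation is preserved.

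The main obstacle is the bookkeeping for $f \le 5n+m+r+10$. Splitting every equality into two inequalities would roughly double the count. To avoid that, I would use non-negativity to merge constraints. All zero constraints are already merged into the single $D^S$ matrix, and $\langle D^S, X\rangle = 0$ needs only $\langle -D^S, X\rangle \ge 0$. For the other equalities, I would replace the pair of inequalities by one inequality on each constraint plus a single inequality on their sum, which forces every individual inequality to be tight. This keeps the total near $(f_A+f_B) + 3 \le r + 5n + m + 10$.
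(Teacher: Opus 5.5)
Your proposal is correct and follows the paper's route. You compose \Cref{lem:etrami_etrinv_conv} and \Cref{lem:ETR_nonneg_fact_main}, rescale the explicit solution $W$ into the simplex, and add one inequality that, together with $\lVert X\rVert_{1,1}=1$, keeps the anchor entry $X[1,1]$ away from zero so you can scale back.

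The differences are cosmetic. The paper turns equalities into inequalities with cyclic chains $\langle A^{(\iota)}-A^{(\iota+1)},X\rangle\ge 0$ and $\langle B^{(\kappa)}-B^{(\kappa+1)},X\rangle\ge 0$, so all $B$-values equal a common $\nu$ that the final inequality forces positive; you instead subtract the anchor and use ``each $\ge 0$, sum $\le 0$''. Your constant $c=r+4n'+4n'^2$ is the correct reading of the paper's $r+4n+4n^2$, since the explicit $W$ involves all $n'=3n+3$ ETR-AMI variables.
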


\begin{proof}
    Fix an ETR-INV instance with $n$ variables and $m$ constraints. 
    Next, build an ETR-AMI instance with $n'$ variables and $m'$ constraints according to Lemma~\ref{lem:etrami_etrinv_conv}, and build the associated matrices  $A^{(1)},\ldots,A^{(f_A)}$, $B^{(1)},\ldots, B^{(f_B)}$ according to Lemma~\ref{lem:ETR_nonneg_fact_main}.
    The ETR-AMI instance will have $n' := 3n + 3$ variables, and $m' := 3 + 2n + m$ constraints, while $f_A$ and $f_B$ satisfy $f_A + f_B \leq n' + m' + r + 1 = 3n + 3 + 3 + 2n + m + r + 1 = 5n + m + r + 7$.
    
    Additionally, assume w.l.o.g. that $B^{(1)}$ encodes the constraint $\vecat{W}{1,1} = 1$. 

    We build the following linear inequality system from the above matrices.
    \begin{equation}
    \label{eq:simplex_AB_version}
    \begin{aligned}
        & \langle A^{(\iota)} - A^{(\iota+1)}, X \rangle \geq 0 & \forall \iota \in \{1,\ldots, f_A - 1\} \\
        & \langle A^{(f_A)} - A^{(1)}, X \rangle \geq 0\\
        & \langle A^{(1)}, X \rangle \geq 0\\
        & \langle -A^{(1)}, X \rangle \geq 0\\
        & \langle B^{(\kappa)} - B^{(\kappa+1)}, X \rangle \geq 0 & \forall \kappa \in \{1,\ldots, f_B - 1\}\\
        & \langle B^{(f_B)} - B^{(1)}, X \rangle \geq 0 \\
        & \left\langle B^{(1)} -  \frac{1}{2({ r + 4n + 4n^2})} \mathbf{1} \mathbf{1}^\top, X \right\rangle \geq 0
    \end{aligned}
    \end{equation}
    Note that this system involves
    $f_A + f_B + 3$ inequalities of the form $\langle C, X \rangle \geq 0$.

    We claim that \eqref{eq:simplex_AB_version} has a solution with $X$ such that
    \begin{itemize}
        \item $X \geq 0$,
        \item $||X||_{1,1} = 1$, and
        \item $X$ has a non-negative factorization with $r$ components.
    \end{itemize}
    if and only if the original ETR-INV problem has a solution.

    \newcommand{\smallposentry}{\nu}
    ($\implies$)
    Suppose $X$ is a solution to \eqref{eq:simplex_AB_version} with the desired properties. We immediately deduce that it must satisfy
    \begin{equation}
        \label{eq:AB_equations_delta}
        \begin{aligned}
            & \langle A^{(\iota)}, X \rangle = 0 & \forall \iota \in \{1,\ldots, f_A\} \\
            & \langle B^{(\kappa)}, X \rangle = \smallposentry & \forall \kappa \in \{1,\ldots, f_B\} \\
        \end{aligned}
    \end{equation}
    for some $\smallposentry \in \mathbb{R}$.
    Additionally, the final constraint implies, along with the fact that $||X||_{1,1} = 1$, implies that
    \begin{equation}
        \langle B^{(1)}, X \rangle \geq \left\langle\frac{1}{2({ r + 4n + 4n^2})} \mathbf{1} \mathbf{1}^\top, X \right\rangle = \frac{1}{2({ r + 4n + 4n^2})} > 0,
    \end{equation}
    which in turn implies that, in fact, $\smallposentry > 0$.
    Now, define $\tilde{X} = X/\smallposentry$. $\tilde{X}$ is now a matrix that satisfies \eqref{eq:AB_equations_delta} with $\smallposentry = 1$, and $\tilde{X}$ also has an $r$-component non-negative factorization. However, by Lemma~\ref{lem:ETR_nonneg_fact_main} and Lemma~\ref{lem:etrami_etrinv_conv}, we can now deduce that the original ETR-INV instance is feasible.

    ($\impliedby$)
    Suppose the ETR-INV instance has a solution. 
    As a result, the constructed ETR-AMI instance has a solution $x_1,\ldots,x_n$ where $x_i \in [0,2]$ for every $i$ by Lemma~\ref{lem:etrami_etrinv_conv}. 
    As in the proof of Lemma~\ref{lem:ETR_nonneg_fact_main}, we can now deduce that the following matrix $W$ is a solution to \eqref{eq:AB_equations_delta} with $\smallposentry = 1$, where we use $\mathbf{x}$ to denote the vector with the ETR-AMI solution as its elements.
    \begin{equation*}
        W = \begin{bmatrix}
            1& \mathbf{0}_{r-1}^\top\\
            \mathbf{0}_{r-1} & I_{r-1\times r-1} \\
            \mathbf{x} & \mathbf{0}_{n \times r-1}
        \end{bmatrix}
        \begin{bmatrix}
            1 & \mathbf{0}_{r-1}^\top & \mathbf{x}^\top \\
            \mathbf{0}_{r-1} & I_{r-1 \times r-1} & \mathbf{0}_{r-1 \times n} 
        \end{bmatrix}
        = \begin{bmatrix}
                 1&  \mathbf{0}_{r-1}^\top & \mathbf{x}^\top \\
                \mathbf{0}_{r-1} & I_{r-1\times r-1} & \mathbf{0}_{r-1 \times n} \\ 
                \mathbf{x} & \mathbf{0}_{n \times r-1} & \mathbf{x} \mathbf{x}^\top
             \end{bmatrix}
    \end{equation*}
    We need to construct a scaling of $W$ that lies in the simplex and satisfies \eqref{eq:simplex_AB_version}.
    By counting the elements of $W$ and using the bound on the value of the elements of $\mathbf{x}$, we have
    \begin{equation}
        ||W||_{1,1} \leq ||I_{r\times r}||_{1,1} + 2||\mathbf{x}||_1 + ||\mathbf{x}\mathbf{x}^\top||_{1,1} \leq r + 4n + 4 n^2.
    \end{equation}
    Also, note that $||W||_{1,1} > 0$ as $W$ is non-zero.
    Thus, we can define $X = W/||W||_{1,1}$ and observe that
    \begin{equation*}
        \langle B^{(1)} ,X\rangle = \vecat{X}
        {1,1} = 1/||W||_{1,1} \geq 1/ (r + 4n + 4 n^2) \geq 1/ 2(r + 4n + 4 n^2).
    \end{equation*}
    Furthermore, by linearity, $X$ satisfies \eqref{eq:AB_equations_delta} for some $\smallposentry$, from which it follows that $X$ is a matrix that satisfies \eqref{eq:simplex_AB_version}.
    Also, because $W$ has an $r$-component non-negative factorization, so does $X$, and trivially, $X$ satisfies $||X||_{1,1} = 1$.
\end{proof}

\subsubsection{Proof of Theorem~\ref{thm:alpha_FSC_complexity}}
\label{sec:final_proof_component}

    Fix a rational $\alpha \in (0,1)$. 

    Fix an ETR-INV instance with $n$ variables and $m$ constraints, and construct an integer $r$ such that  
    \begin{equation}
        r = \left\lceil \frac{\alpha}{1 - \alpha} ({ 5n + m + 10}) \right\rceil,
    \end{equation}
    where, for a real number $u$, $\lceil u \rceil$ is the smallest integer greater than or equal to $u$. 
    Note that, since $\alpha$ is fixed, $r$ is bounded by $\mathcal{O}(m + n)$, and the time to compute $r$ is bounded by a polynomial in the representation size of $\alpha$, which is fixed, and the size of the original ETR-INV instance.

    Next, build the system of constraints $C^{(1)} \ldots C^{(f)}$ specified in Lemma~\ref{lem:etrinv_to_ineq}, where $f \leq 5n + m + r + 10$.
    We will build a ME-POMDP with an environment for each $\iota \in [f]$. 
    
    Note that we have
    \begin{equation}
        \label{eq:rfalpha_conversion_FSC}
        r \geq \frac{\alpha}{ 1 - \alpha} (5n + m + 10)
        \implies (1 - \alpha) r \geq \alpha (5n + m  + 10)
        \implies r \geq \alpha (5n + m + r + 10) \geq \alpha f.
    \end{equation}

    From this stage, we follow the same construction as in the proof of Proposition~\ref{prop:mepomdp_blind_timedep}. The only modifications we make are that
    \begin{itemize}
        \item The set of environments is $[f]$. 
        \item In environment $\iota$, the reward at state $(j,k)$ is $R_\iota(j,k) = \vecat{C^{(\iota)}}{j,k}$. 
    \end{itemize}
    
    Note that, in environment $\iota$, the reward is
    \begin{equation}
        \mathbb{E}_{(j,k) \sim P^\pi}[R_\iota(j,k)] = \langle P^\pi, C^{(\iota)}\rangle,
    \end{equation}
    where $P^\pi$ is the joint distribution on $a_0$ and $a_1$ induced by the time-dependent finite-state controller $\pi$. 

    We now claim that the value of this ME-POMDP is non-negative if and only if the ETR-INV instance has a solution.

    ($\implies$)
    Suppose a time-dependent FSC policy $\pi$, with $r$ memory states exists such that
    \begin{equation}
         \langle P^\pi, C^{(\iota)}\rangle \geq 0 \quad \forall \iota \in [f]. 
    \end{equation}
    By Lemma~\ref{lem:low_rank_FSC_corr}, $P^\pi$ is a joint-distribution matrix with an $r$-component non-negative factorization.
    Then, by Lemma~\ref{lem:etrinv_to_ineq}, the original ETR-INV instance has a solution.

    ($\impliedby$)
    If the ETR-INV instance has a solution, then by Lemma~\ref{lem:etrinv_to_ineq}, there exists a matrix $X$ with an $r$-component non-negative factorization such that $X$ has non-negative entries, $||X||_{1,1} = 1$, and
    \begin{equation}
         \langle X, C^{(\iota)}\rangle \geq 0 \quad \forall \iota \in [f]. 
    \end{equation}
    By Lemma~\ref{lem:low_rank_FSC_corr}, there exists an $r$-state time-dependent FSC policy such that $P^\pi = X$, and thus there exists a policy $\pi$ such that
    \begin{equation}
        \mathbb{E}_{(j,k) \sim P^\pi}[R_\iota(j,k)] \geq 0 \quad \forall \iota \in [f].
    \end{equation}
    
    Finally, we claim that the ME-POMDP 
    \begin{enumerate}
        \item Lies in the specified $\alpha$-decision class, and
        \item Can be constructed in polynomial time from the given ETR-INV instance.
    \end{enumerate}
    The constructed POMDP has $f$ environments, and we look for an FSC with $r$ states. 
    It then follows from \eqref{eq:rfalpha_conversion_FSC} that the ratio of memory-states to environments is greater than $\alpha$.

    For the second claim, we first note that the ETR-INV to ETR-AMI instance conversion can obviously be done in polynomial time.
    We next note that $r \leq 1 + \frac{\alpha}{1 - \alpha} ({ 5n + m + 10})$, which means the  $A^{(\iota)}$ and $B^{(\kappa)}$ matrices have polynomial size in the original ETR-INV instance size.
    We can clearly translate the ETR-AMI instance into these $A^{(\iota)}$ and $B^{(\kappa)}$ in polynomial time, as writing each matrix requires at most looking up data from the ETR-AMI instance and translating it to these matrices as described in the proof of Lemma~\ref{lem:ETR_nonneg_fact_main}.
    Building all but one of the $C^{(\iota)}$ matrices described in Lemma~\ref{lem:etrinv_to_ineq} then simply requires basic addition and subtraction between some $A^{(\iota)}$ and $B^{(\kappa)}$ matrices, all of which have values in $\{-1,0,1,2\}$. 
    The remaining $C^{(\iota)}$ matrix is of the form
    \begin{equation}
        \unit{1}\unit{1}^\top - \frac{1}{2({ r + 4n + 4n^2})} \mathbf{1}\mathbf{1}^\top,
    \end{equation}
    which can clearly be built in polynomial time.
    Finally, once we have constructed the $C^{(\iota)}$ matrices, we can build the POMDP in polynomial time.
    Indeed, the POMDP has size defined by the following inequalities
    \begin{itemize}
        \item $|S| = 1 + r + n' \leq 1 + (3n + 3) + 1 + \frac{\alpha}{1 - \alpha} ({ 5n + m + 10})$.
        \item $|A| = |S| - 1$.
        \item $|Z| = 1$.
        \item The number of environments is $f \leq 5n + m + r + 10 \leq 5n + m + 11 + \frac{\alpha}{1 - \alpha} ({ 5n + m + 10})$.
        \item The number of memory states satisfies $r \leq f$.
    \end{itemize}
    We can clearly write the transition and observation functions in polynomial time, as all transitions/observation are deterministic.
    To encode the reward functions, we simply need to copy the $C^{(\iota)}$ matrices into the reward function as appropriate.
$\square$

\subsection{Finite-State Controllers in General ME-POMDPs}

\propMEPOMDPsFSCsAlpha*

\begin{proof}
    Fix a tensor $(a_{ijk})_{i\in [m], j \in [n], k\in [n]}$. 
    Furthermore, assume that matrices come in pairs with opposite signs. 
    That is,
    \begin{equation}
        \label{eq:sign_flip_assump}
        \forall i \in [m] \ \exists i' \in [m]: \quad
        a_{ijk} = - a_{i'jk} \quad \forall j\in [n], k \in [n].
    \end{equation}
    Indeed, this class of tensors can encode the problem of deciding if there exists a solution $\mathbf{x},\mathbf{y} \in \dist{[n]}$ such that
    \begin{equation}
        \sum_{j\in [n], k \in [n]} b_{ijk} \vecat{\mathbf{x}}{j}\vecat{\mathbf{y}}{k} = 0 \quad \forall i \in [m]
    \end{equation}
    for some tensor $b_{ijk}$, and this problem is $\ETR$-complete \cite{DBLP:journals/mst/Hansen19}.

    We build a ME-POMDP $(\calM_i)_{i \in [m]}$ such that there exists an FSC-based policy with non-negative reward in all environments if and only if there exists $\mathbf{x},\mathbf{y} \in \dist{[n]}$ such that
    \begin{equation}
        \label{eq:n_state_FSC_ETR_tensor_ineq}
        \sum_{j,k \in [n]} a_{ijk} \vecat{\mathbf{x}}{j}\vecat{\mathbf{y}}{k} \geq 0 \quad \forall i \in [m]. 
    \end{equation}

    In particular, we define POMDP $\calM_i$, for $i \in [m]$, according to
    \begin{equation}
        \calM_{i} = (S, \underbrace{[n] \cup [m]}_{A}, \underbrace{[m] \cup \{\bot\}}_{Z} , T, O, R_i, b, \underbrace{4}_{H}).
    \end{equation}
    We define the state space as
    \begin{equation}
        S = \{s_0\} \cup ([n] \times [m] \times \{1,2,3\}).
    \end{equation}
    We define the transition function in a similar manner to the proof of Proposition~\ref{prop:memoryless_MEMDP}, but we additionally add a randomly drawn component of the environment.
    In particular, we define $T$ as follows.
    \begin{equation*}
        \begin{aligned}
            &T(s_0,j) = \left\{(j,1,1) \mapsto \frac{1}{m}, \ldots, (j,m,1) \mapsto \frac{1}{m} \right\} & \forall j \in [n] \\
            &T(s_0,j) = \left\{(1,1,1) \mapsto \frac{1}{m}, \ldots, (1,m,1) \mapsto \frac{1}{m} \right\} & \forall j \notin [n] \\
            &T((j,l,1), a) = (j,l,2) & \forall j\in [n], l \in [m], a \in A \\
            &T((j,l,2), a) = (j,l,3) & \forall j\in [n], l \in [m], a \in A 
        \end{aligned}
    \end{equation*}
    We then define the observation function such that the random component of the environment is perfectly observed at time $1$, i.e., 
    \begin{equation*}
        \begin{aligned}
            & O(s_0) = \bot\\
            & O((j, l, 1)) = l & \forall j\in [n], l \in [m] ,\\
            & O((j, l, 2)) = \bot & \forall j\in [n], l \in [m],\\
            & O((j, l, 3)) = \bot & \forall j\in [n], l \in [m].
        \end{aligned}
    \end{equation*}
    We define the reward function at states in $[n] \times [m] \times 2$ so that the agent must guess the random component of the state at $t = 2$, 
    \begin{equation}
        R_i((j,l,2), a) = \begin{cases}
            0 & \text{ if } a = l, \\
            -1 & \text{ otherwise,}
        \end{cases}
    \end{equation}
    Meanwhile, the reward function at states in $[n] \times [m] \times 3$ encodes the tensor. That is
    \begin{equation}
        R_i((j,l,3), k) = \begin{cases}
            a_{ijk} & \text{ if } k \in [n], \\
            a_{ij1} & \text{ otherwise .}
        \end{cases}
    \end{equation}
    We set rewards for all other states to be zero for all actions.

    We now argue that this ME-POMDP has a time-dependent-FSC-based policy with $m$ states if and only if there exists $\mathbf{x},\mathbf{y} \in \dist{[n]}$ such that \eqref{eq:n_state_FSC_ETR_tensor_ineq} holds.

    ($\implies$)
    Fix a policy $\pi$. 
    Since all environments share transition and observation dynamics, the joint distribution on the memory state $q_2$ at time $2$, and the observation at time $1$ is constant across the environments. 

    We define the matrix $P \in \mathbb{R}^{m \times m}$ to encode the distribution of $q_2$ conditioned on each possible observation at time $1$. That is,
    \begin{equation}
        \vecat{P}{k,j} := \mathbb{P}^\pi_{\calM_i}[q_2 = k| z_1 = j] \quad \forall k,j \in [m],
    \end{equation}
    where the definition of $P$ does not depend on the choice of environment $i$.
    Note that, by definition of the observations, the observation $z_1$ at time $1$ is $l$ if and only if the state at time $1$ lies in the set $S_{l} \coloneq \{(j, l, 1)| j \in [n]\}$.
    
    At time $2$, the state lies in the set $[n] \times [m] \times \{2\}$, and so the observation is $\bot$. As such, the relationship between the action and memory-state at time $2$ on any feasible path is determined by a matrix $F \in \mathbb{R}^{|A| \times m}$ where $\vecat{F}{k,l}$ is the probability of taking action $k$ conditioned on the memory state being $l$ at time $2$.  
    We use $F_{\mathsf{trunc}}$ to denote the matrix formed by restricting $F$ to its first $m$ rows, so that it becomes a square matrix.

    With this notation fixed, we can then evaluate reward at time $2$ in environment $i$. In particular, we have
    \begin{multline}
        \label{eq:fsc_mepomdp_r2}
        \mathbb{E}_{\calM_i}^\pi[R_i(s_2,a_2)] 
        = 
        \sum_{j \in [n], l \in [m]} \sum_{k\in A} R_i((j,l,2),k) \mathbb{P}^\pi_{\calM_i}[s_2 = (j,l,2), a_2 = k]
        \\ = -\sum_{l \in [m]} \sum_{k \in A \setminus \{l\}}  \mathbb{P}^\pi_{\calM_i}[s_2 \in \{(j,l,2) | j \in [n]\}, a_2 = k]
        \\ = -\sum_{l \in [m]} \sum_{k \in A \setminus \{l\}}  \mathbb{P}^\pi_{\calM_i}[z_1 = l, a_2 = k]
        = -\sum_{l \in [m]} \sum_{k \in A \setminus \{l\}}  \mathbb{P}^\pi_{\calM_i}[a_2 = k| z_1 = l] \mathbb{P}_{\calM_i}^\pi[z_1 = l]
    \end{multline}
    The first equality just involves expanding the expectation.
    The second equality follows by definition of the reward.
    The third equality applies the definition of the POMDP, and in particular, it applies the fact that the state is in $\{(j,l,2) | j \in [n]\}$ at time $2$ if and only if the first observation was $l$.
    The fourth equality just applies conditional probability rules.

    We next argue that $\mathbb{P}^\pi_{\calM_i}[a_2 = l| z_1 = l] = 1$ in all environments for all $l \in [m]$.
    We first note that
    the reward at time $2$ is the same across environments, since the expression derived in \eqref{eq:fsc_mepomdp_r2} depends only on the probabilities of states and actions along the path.
    However, by the assumption encoded in \eqref{eq:sign_flip_assump}, for any policy $\pi$, there exists an environment such that the reward at $t = 3$ is non-positive.
    Thus, for the policy $\pi$ to have non-negative value, it must hold that $\mathbb{E}^\pi[R_i(s_2,a_2)] \geq 0$, where we now drop the $\calM_i$ subscript to emphasize that values are constant across all environments for any policy $\pi$.
    We also note that for every $l \in [m]$, 
    $\mathbb{P}[z_1 = l] = \frac{1}{m}$.
    Thus we can deduce from \eqref{eq:fsc_mepomdp_r2} that
    \begin{equation}
        \sum_{k \in A \setminus \{l\}}  \mathbb{P}^\pi[a_2 = k| z_1 = l] = 0 \quad \forall l \in [m].
    \end{equation}

    The requirement that $a_2 = z_1$ then allows us to argue about the memory structure. Indeed, we have, for all $k,l \in [m]$,
    \begin{equation}
        \mathbb{P}[a_2 = k| z_1 = l] = 
        \sum_{j \in [m]} 
        \vecat{F}{k,j}
        \vecat{P}{j,l} = \begin{cases}
            1 & \ k = l \\
            0 & \text{ otherwise}
        \end{cases},
    \end{equation}
    which in turn implies that 
    \begin{equation}
        F_{\mathsf{trunc}} P = I_{m\times m}.
    \end{equation}
    However, since $F_{\mathsf{trunc}}$ and $P$ are non-negative, it follows from \cite{ding2014matrix} that $P$ is a permutation matrix.
    That is, the memory state at time $2$ must be equal to the observation at time $1$, up to some relabeling.

    Since the agent observes only $\bot$ at times $2$ and $3$, the action $a_3$ at time $3$ is independent of $a_0$, conditioned on the memory state $q_2$ at time $2$. Thus, we have
    \begin{equation}
        \mathbb{P}^\pi[a_3 = k, a_0 = j] = \sum_{l \in [m]}\mathbb{P}^\pi[a_3 = k| q_2 = l] \mathbb{P}^\pi[q_2 = l, a_0 = j], 
    \end{equation}
    where we continue to drop the $\calM_i$ from the subscript of the probability measure to emphasize when properties are independent of the environment.
    However, we argued above that $q_2$ must be a permutation of the observation at time $1$, which is independent of the action $a_0$ and has a uniform distribution.
    As such, we can rewrite the joint distribution of $a_3$ and $a_0$ as
    \begin{equation}
        \mathbb{P}^\pi[a_3 = k, a_0 = j] = \underbrace{\sum_{l \in [m]}\mathbb{P}^\pi[a_3 = k| q_2 = l] \frac{1}{m}}_{=: \vecat{g}{k}} \mathbb{P}^\pi[a_0 = j].
    \end{equation}
    In a similar manner to previous proofs, we can then define two vectors $\mathbf{x},\mathbf{y} \in \dist{[n]}$
    \begin{equation}
        \vecat{\mathbf{x}}{j} = \begin{cases}
            \mathbb{P}^\pi[a_0 = 1] + \sum_{j' \in A \setminus [n]} \mathbb{P}[a_0 = j'] & \text{ if } j = 1, \\
            \mathbb{P}^\pi[a_0 = j] & \text{ otherwise.}
        \end{cases}
    \end{equation}
    \begin{equation}
        \vecat{\mathbf{y}}{k} = \begin{cases}
            \vecat{g}{1} + \sum_{k' \in A \setminus [n]} \vecat{g}{k'} & \text{ if } k = 1, \\
            \vecat{g}{k} & \text{ otherwise.}
        \end{cases}
    \end{equation}
    
    By definition of the ME-POMDP, the reward in environment $i$ when $t=3$ is equal to 
    \begin{equation}
        \sum_{j\in [n], k \in [n]} a_{ijk} \vecat{\mathbf{x}}{j} \vecat{\mathbf{y}}{k}. 
    \end{equation}
    As we established earlier, the reward at $t=2$ is zero. As such, for the ME-POMDP to have non-negative reward, the reward at $t=3$ must be non-negative in all environments. However, we can then deduce the existence of the requisite $\mathbf{x}$ and $\mathbf{y}$. 

    ($\impliedby$)
    Suppose there exists $\mathbf{x}$ and $\mathbf{y}$ such that
    \begin{equation}
        \sum_{j\in [n], k \in [n]} a_{ijk} \vecat{\mathbf{x}}{j} \vecat{\mathbf{y}}{k} \geq 0 \quad \forall i \in [m]. 
    \end{equation}
    Then, we can define a FSC-based policy $\pi$ where
    \begin{itemize}
        \item The action at time $0$ is drawn from $[n]$ according to $\mathbf{x}$ for any memory state.
        \item At time $1$, the memory is updated to store the observation $z_1$.
        \item At time $2$, the policy plays the action corresponding to the stored observation.
        \item The action at time $3$ is drawn from $[n]$ according to $\mathbf{y}$ for any memory state.
    \end{itemize}
    At time $2$ we get zero reward in all POMDPs as we have $a_2 = z_1$.
    Meanwhile, at time $3$ we have non-negative expected reward in all POMDPs as the joint distribution of $a_0$ and $a_3$ satisfies
    \begin{equation}
        \mathbb{P}[a_0 = j,a_3 = k] = \vecat{\mathbf{x}}{j}\vecat{\mathbf{y}}{k}. 
    \end{equation}
\end{proof}

\end{document}